\def\ie{\emph{i.e.}}
\def\1{\bm{1}}
\DeclareMathAlphabet{\mathsfit}{\encodingdefault}{\sfdefault}{m}{sl}
\SetMathAlphabet{\mathsfit}{bold}{\encodingdefault}{\sfdefault}{bx}{n}
\newcommand{\R}{\mathbb{R}}
\newcommand{\normltwo}{L^2}
\newcommand{\norm}[3][2]{\Vert #3 \Vert^{#1}_{#2}}
\newcommand{\od}[2]{#1 \odot #2}
\newcommand{\ip}[2]{\langle #1 , #2 \rangle}
\newcommand{\EE}[1][]{ \mathbb{E}_{#1}}
\newcommand{\EEstr}[1][\xi_r|\xi_{1:r-1}]{\mathbb{E}_{#1}}
\newcommand{\defaultcolor}{\color{black}}
\newcommand{\pgreen}[1]{\color{green} #1 \defaultcolor}
\def\plemmalabelflag{0}
\newcommand{\plabel}[2][\plemmalabelflag]{
    \ifnum #1=1  
        \pgreen{#2}
    \else
    \fi
}
\newtheorem{theorem}{Theorem}
\newtheorem*{theoremnonum}{Theorem}
\newtheorem{lemma}{Lemma}
\newtheorem{assumption}{Assumption}
\newtheorem{remark}{Remark}
\newtheorem{corollary}{Corollary}
\def\methodname{FedLALR}
\begin{document}

\title{\methodname: Client-Specific Adaptive Learning Rates Achieve Linear Speedup for Non-IID Data}
%
%
%
%

\author{Hao~Sun,
        Li~Shen,
        Shixiang~Chen,
        Jingwei~Sun,
        Jing~Li,
        Guangzhong Sun, 
        and Dacheng Tao~\IEEEmembership{Fellow}

\IEEEcompsocitemizethanks{
\IEEEcompsocthanksitem{This work is supported by Science and Technology Innovation 2030-``Brain Science and Brain-like Research” Major Project (2021ZD0201402 and 2021ZD0201405). 
} 
\IEEEcompsocthanksitem Hao Sun, Jingwei Sun, Jing Li and Guangzhong Sun are with School of Computer Science and Technology, University of Science and Technology of China, Hefei, China, 230000.
(E-mail: ustcsh@mail.ustc.edu.cn, sunjw@ustc.edu.cn, lj@ustc.edu.cn, gzsun@ustc.edu.cn.)
\IEEEcompsocthanksitem Li Shen and Shixiang Chen are with JD Explore Academy, Beijing, 100000.
(E-mail: mathshenli@gmail.com, chenshxiang@gmail.com, dacheng.tao@gmail.com.)
\IEEEcompsocthanksitem Dacheng Tao is with The University of Sydney, Australia.
(E-mail: dacheng.tao@gmail.com)
}
\thanks{Manuscript received April 19, 2005; revised August 26, 2015.}}

%
%

\markboth{Journal of \LaTeX\ Class Files,~Vol.~14, No.~8, August~2015}%
{Shell \MakeLowercase{\textit{et al.}}: Bare Demo of IEEEtran.cls for Computer Society Journals}
%



\IEEEtitleabstractindextext{%
\begin{abstract}
Federated learning is an emerging distributed machine learning method, enables a large number of clients to train a model without exchanging their local data. The time cost of communication is an essential bottleneck in federated learning, especially for training large-scale deep neural networks. Some communication-efficient federated learning methods, such as FedAvg and FedAdam, share the same learning rate across different clients. But they are not efficient when data is heterogeneous. To maximize the performance of optimization methods, the main challenge is how to adjust the learning rate without hurting the convergence. In this paper, we propose a heterogeneous local variant of AMSGrad, named \methodname, in which each client adjusts its learning rate based on local historical gradient squares and synchronized learning rates. Theoretical analysis shows that our client-specified auto-tuned learning rate scheduling can converge and achieve linear speedup with respect to the number of clients, which enables promising scalability in federated optimization. We also empirically compare our method with several communication-efficient federated optimization methods. Extensive experimental results on Computer Vision (CV) tasks and Natural Language Processing (NLP) task show the efficacy of our proposed \methodname method and also coincides with our theoretical findings. 
\end{abstract}

\begin{IEEEkeywords}
Federated learning, Non-convex optimization, Non-IID, linear speedup.
\end{IEEEkeywords}}

\maketitle

\IEEEdisplaynontitleabstractindextext

%
\IEEEpeerreviewmaketitle

\section{Introduction}\label{sec:introduction}
Federated learning (FL) \cite{kairouz2019advances,wang2021field,yang2019federated} comes from distributed machine learning that allows multiple clients to train a model and communicate with a central server. The clients do not share their local data during the training period due to privacy concerns and data protection policies. 

With the number of clients increasing, the bottleneck of FL lies in the communications between clients and the central server. 
Efficient federated optimization is eager to be studied to relieve this pain point. 
One of the practical methods to reduce communication costs is training the local model for several steps in each client and exchanging information with the server at a low frequency.
FedAvg \cite{DBLP:conf/aistats/McMahanMRHA17,DBLP:conf/iclr/Stich19} is a representative method via updating the parameters using the stochastic gradient descent (SGD) method at the local step and exchanging parameters in a fixed period. Recent work \cite{DBLP:conf/icml/WoodworthPSDBMS20} finds that this method also meets the convergence and performs better than mini-batch SGD when the objective function is quadratic.
The other way is to accelerate the convergence rate by adopting the adaptive SGD methods to reduce the number of iterations,  
which are widely adopted to handle many tasks, such as the NLP and recommender system, and achieve faster convergence speed and better performance than vanilla SGD without manually tuning the learning rate.
\begin{figure}    
\centering
    \includegraphics[width=1.0\linewidth]{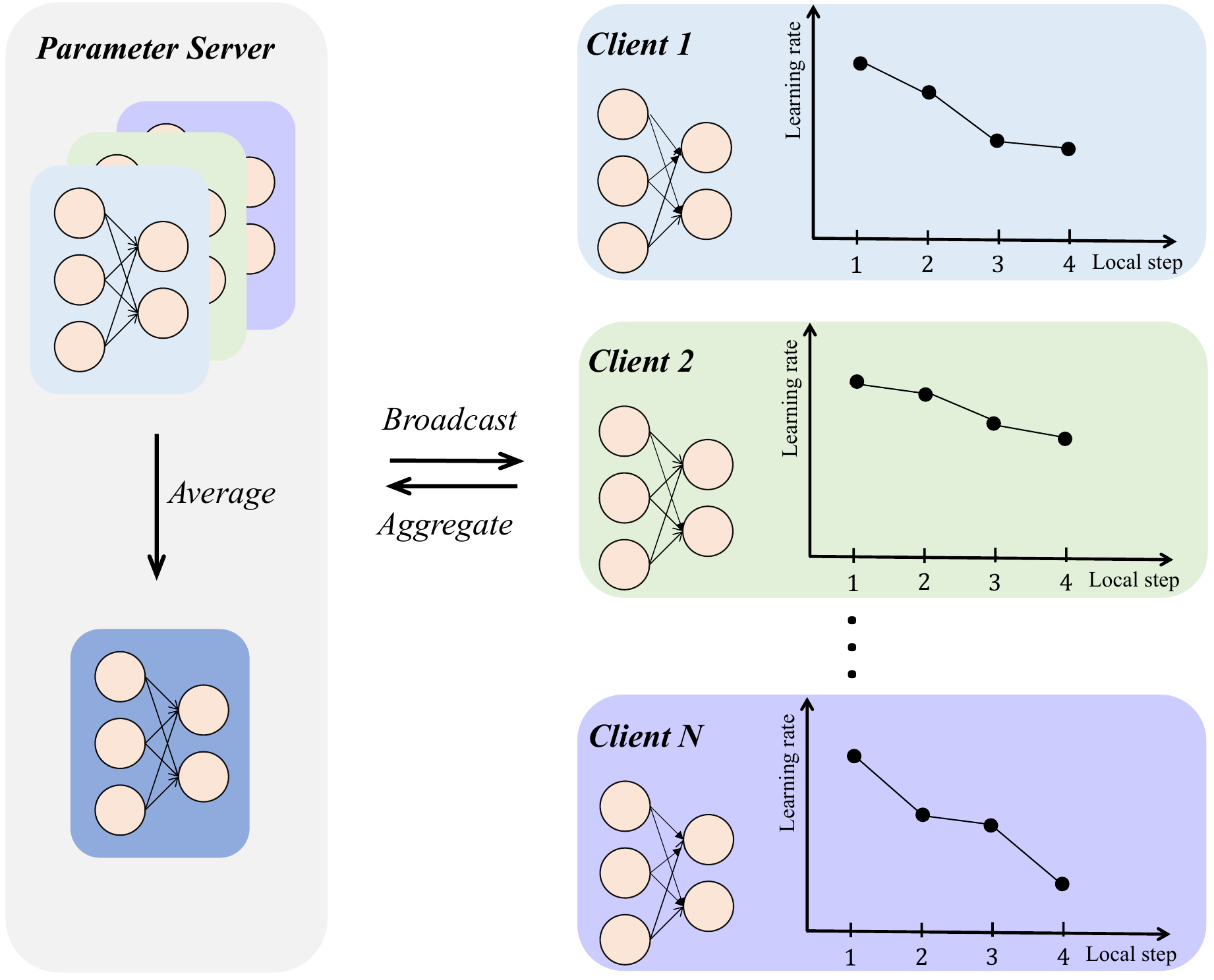}
    \caption{An overview of \methodname. The parameter server first broadcasts weight parameters to the selected clients. Then the clients train the received model with several local steps based on their local data and then send the parameters to the server. In \methodname, each client automatically tunes its learning rate with its local data distribution.}
    \label{fig:my_label}
\end{figure}

For the federated optimization, each client is associated with heterogeneous data. 
Each client has its unique objective function, which motivates several works to apply the adaptive method for federated optimization, e.g., FedAdam \cite{DBLP:conf/iclr/ReddiCZGRKKM21}, Local Adaalter \cite{DBLP:journals/corr/abs-1911-09030}.
Specifically, FedAdam applies the Adam method on the server-side.
The clients update their parameters using a fixed learning rate and the server collects the difference before updating the parameters. 
On the other hand, Local Adaalter tunes the learning rates for the clients during the training, which updates the learning rate at the synchronized period and requires it consistent in local steps. 
Both FedAdam and Local Adaalter share the same learning rate in the local steps, which could not sufficiently utilize the adaptive gradient method as demonstrated in \cite{DBLP:journals/corr/abs-2106-02305}.
Moreover, the theoretical analysis in \cite{DBLP:journals/corr/abs-2106-02305} is based on the full gradient, which could be  unavailable and thus do not fit the online learning  situation.
Besides, when the local dataset is very large, computing full gradient is extremely expensive which limits its application. More importantly, they do not establish the linear speedup for the proposed algorithms.

To reduce the effect of heterogeneity among local functions, we propose a \textbf{Fed}erated \textbf{L}ocal \textbf{A}daptive \textbf{L}earning \textbf{R}ate method based on AMSGrad \cite{DBLP:conf/iclr/ReddiKK18}, dubbed as \methodname.  
In the local steps, our method can automatically tune the learning rate based on the local training steps using AMSGrad. Compared to existing works, our method allows clients to adjust their learning rate in local steps to accelerate convergence by exploiting the curvature information with respect to the local data.
We prove that our method achieves \textbf{linear speedup}, namely our convergence rate can be linearly improved with respect to the number of clients. 
We emphasis that the main difficulty for analyzing the linear speedup property lies the inconsistency of the local learning rate in each local client due to local data heterogeneity. We separate the local learning rate and momentum by leveraging delay expectation technique and transpose this problem into two sub problems.
The first one is that the inconsistency of the local learning rate in each local client at every parameter optimization steps is still controlled by the algorithm and they will not diverge.
We prove that the local learning rate in each client is bounded in a small area that will not hurt converge. 
The second one is that the changing of the local learning rate is limited.
We find that the changing of the local learning rate is constraint by the optimization update rules when the bounded stochastic gradient assumption holds. 
The limited inconsistency of the adaptive local learning rate can be derived from previous two results as they are limited at every step and do not change much and they do not hurt the convergence. 

To further reduce the communication, we also extend \methodname\ with adaptive local interval as \cite{DBLP:journals/corr/abs-2006-02582}.  
Local updating with a large interval leads to low communication frequency, but it may be diverse in non-convex settings.
Taking adaptive local interval into account, we further prove that when the local interval is not greater than $O(\log(t))$, where $t$ is the global number of iterations, our method still converges and achieves linear speedup. At last, we apply our proposed \methodname\ to train several deep neural networks on various benchmarks. Experiments also demonstrate that client-specific adaptive learning rates can significantly improve the convergence speed. 
 
 To summarize, our contributions are listed as follows.
\begin{itemize}[leftmargin=*]
\item We develop a local adaptive SGD for communication-efficient federated optimization, dubbed \methodname, which allows a client to adapt the learning rate in local updating steps. 
To the best of our knowledge, it is the first study introducing the client-specific adaptive stochastic gradient descent method with convergence analysis.

\item We present a rigorous analysis for the convergence rate of \methodname\ under full clients participation, which achieves linear speedup with respect to the number of clients, \ie,  $O(\frac{1}{\sqrt{NKT}})$.
Besides, we prove that combining our method with adaptive local interval reduces the communication overhead and linear speedup still holds.
The theoretical results show that our method is efficient for federated learning while the number of clients is large.

\item We conduct extensive experiments on computer vision (CV) and natural language processing (NLP) tasks. The results show that our method achieves a faster convergence, which coincides with our theoretical findings.
\end{itemize}

\section{Related work}
\textbf{Adaptive SGD methods.}
Adaptive SGD methods are a class of gradient-based optimization methods that use history gradient to adjust the learning rate. 
Adagrad \cite{DBLP:journals/jmlr/DuchiHS11,DBLP:conf/colt/McMahanS10}  is the first adaptive algorithm, and it is better when the gradient is sparse. 
In the deep learning problem, the objective function is non-convex and the dimension of the parameter is high. AdaGrad accumulates all previous gradients leading the learning rate to decay rapidly.
Some variants are proposed such as Adadelta \cite{DBLP:journals/corr/abs-1212-5701}, Adam \cite{DBLP:journals/corr/KingmaB14} and Nadam \cite{dozat2016incorporating} which use exponential moving averages of squared history gradients to avoid learning rate decaying rapidly. 
Notably, Adam is the most popular adaptive stochastic gradient in practical applications due to its high performance.
Reddi et al. \cite{DBLP:conf/iclr/ReddiKK18} point out that Adam might diverge and propose AMSGrad to fix it. Zou et al. \cite{zou2019sufficient} and Chen et al. \cite{chen2021towards,chen2021cada,chen2021quantized} give a sufficient condition that guarantees the global convergence of Adam in the stochastic non-convex setting and extend Adam to the distributed Adam, respectively. 
Many recent works \cite{DBLP:journals/corr/abs-1808-05671,DBLP:conf/iclr/ChenLSH19} have also given theoretical analysis on these algorithms.

\textbf{Federated optimization.}
FedAvg \cite{DBLP:conf/aistats/McMahanMRHA17} is one of the most popular methods of reducing communication in federated learning, which updates its parameters with $H$ local steps then synchronizes with the central parameter server. Several works \cite{DBLP:conf/icml/YuJY19,DBLP:conf/iclr/Stich19,DBLP:conf/iclr/LiHYWZ20,DBLP:conf/iclr/LinSPJ20,DBLP:conf/icml/WoodworthPSDBMS20,DBLP:journals/tnse/ChenACW23} prove that this method can largely save the communications and converge in both the convex and non-convex settings.
A lot of works \cite{DBLP:conf/nips/PathakW20,DBLP:journals/corr/abs-2005-11418,DBLP:conf/nips/YuanM20,DBLP:journals/corr/abs-1909-04715}  have made significant progress in advancing the convergence analysis of federated learning.
Some works \cite{DBLP:conf/aistats/CharlesK21,DBLP:conf/icml/MalinovskiyKGCR20,DBLP:journals/tnse/ZengWPZ23} try to handle heterogeneous data.
Slomo \cite{DBLP:conf/iclr/WangTBR20} introduces a method that applies the momentum technique at the server-side and FedCM \cite{DBLP:journals/corr/abs-2106-10874} introduces a client-level momentum technique.
Adopting the proximal operator \cite{DBLP:conf/nips/WangLLJP20,DBLP:conf/mlsys/LiSZSTS20} during the local training is another way to deal with the problem caused by heterogeneous data. Recently, SCAFFOLD \cite{DBLP:conf/icml/KarimireddyKMRS20} can also achieve remarkable performance for the federated learning task by adopting variance reduction technique. 

In the federated learning approach, clients typically train using fixed local steps. However, some studies have discussed an adaptive interval method where the number of local steps can be adjusted during the training process.
In the work cited as \cite{DBLP:journals/corr/abs-2006-02582}, the authors discuss a scenario involving a local training process with varying intervals. Qin et al. \cite{9683272} delve into the impact of local steps in the context of Local Stochastic Gradient Descent (SGD), a type of federated learning. On the other hand, training with an adaptive interval bears resemblance to the adaptive batch size method, which involves training the model with varying batch sizes during different periods of training. Ma et al. \cite{9415152} incorporated this adaptive batch size method into edge computing. Meanwhile, in the study referenced as \cite{DBLP:journals/jpdc/ParkYYO22}, their method refines the batch size and local epoch, enhancing computational efficiency by eliminating stragglers. It also scales the local learning rate to boost the model's convergence rate and accuracy.

In addition, there also exist several works 
have been proposed to tackle the federated learning task with an adaptive learning rate. 
FedAdam \cite{DBLP:conf/iclr/ReddiCZGRKKM21} updates the parameter using Adam on the server-side. Local Adaalter \cite{DBLP:journals/corr/abs-1911-09030} adjusts its learning rate periodically by using AdaGrad. Chen et al. \cite{DBLP:journals/corr/abs-2109-05109} propose a similar method like Local Adaalter with a linear speedup convergence rate, which uses AMSGrad to adjust the learning rate periodically. \cite{DBLP:conf/iclr/ReddiCZGRKKM21,DBLP:journals/corr/abs-2009-06557} analyze server-side adaptive methods. Compared to existing work, our method can adjust the learning rate in the local steps to explore the curvature information with respect to the local heterogeneous data to accelerate the training speed.
\section{Methodology}
In this section, we describe the proposed \methodname. Below, we first present several preliminaries. 

\subsection{Preliminary}

 We consider the finite-sum optimization problem
\begin{equation}\label{problem:opt}
\min_{x} f(x) := \frac{1}{N}\sum_{i=1}^{N} f_{i}(x),
\end{equation}
where $f_{i}(x):=\EE[ \xi_{i} \sim D_{i}]{ \nabla f_{i}(x,\xi_{i})} $ denotes the local objective function at the client $i$, $\xi_{i}$ is a random variable obeying distribution $D_{i}$, and $N$ is the number of clients. Note that the distribution $D_{i}$ for $i=1,2,\cdots, N$ could be heterogeneous in this work. Here, we are particularly interested in the non-convex optimization, \ie, $f_{i}(x)$ being a non-convex function. 

{ \bf Notations}.\ 
We define a stochastic gradient $g_{t,k,i} = \nabla f(x_{t,k,i},\xi_{t,k,i})$, where $\xi_{t,k,i}$ is a data point sampled from node $i$ at global time $t$ and local time $k$. 
The expectation of the gradient is unbiased, i.e., $\mathbb{E}_{\xi_{i} \sim D_{i}} [\nabla f(x,\xi_{i})] = \nabla f_{i}(x)$.
The expectation of the global gradient is defined as $\nabla f(x) = \mathbb{E}_{i \sim N} \nabla f_{i}(x)$. 
We use $d$ to represent the dimension of the parameter $x$.
We use $\Vert a \Vert$ to denote the $\normltwo$ norm of vector $a$. We represent a Hadamard product as $a \odot b$, where $a$,$b$ are two vectors. 

\subsection{\methodname\ Algorithm} 
In this part, we will describe our method and explain how it reduces communication costs. 
We will discuss two situations, full client participation with a fixed interval and an adaptive interval, respectively. 

\begin{algorithm}[h]
\caption{\ \methodname}
\label{alg:local-AMSGrad-final}
\Indentp{-0.75em}
\KwIn{Initial parameters $x_{0}$, $m_{-1}=0$, $\hat{v}_{-1}=\epsilon^{2}$, learning rate $ \alpha$, momentum parameters $ \beta_{1} $, $\beta_{2}$.}
\KwOut{Optimized parameter $x_{T+1}$}
\Indentp{0.75em}
\For{iteration t $\in$ $\{0, 1, 2, ..., T-1 \}$}{
    \For{client\ $i  \in \{1, 2, 3, ..., N \}$  in\ parallel}{
        $x_{t,1,i}\!=\!x_{t}, m_{t,0,i}\!=\!m_{t-1}, v_{t,0,i}\!=\!\hat{v}_{t,0,i}\!=\!\hat{v}_{t-1}$\;
        \For{local iteration $k= 1, 2, ..., K_{t}$}{
            $g_{t,k,i}= \nabla f(x_{t,k,i},\xi_{t,k,i})$\;
            $m_{t,k,i} = \beta_1 m_{t,k-1,i} +(1- \beta_1 ) g_{t,k,i}$\;
            $v_{t,i} = \beta_2 v_{t,k-1,i} +(1- \beta_2) [g_{t,k,i}]^{2}$\;
            $\hat{v}_{t,k,i} = \max(\hat{v}_{t,k-1,i}, \;v_{t,k,i})$\;
            $\eta_{t,k}={1}{/}{\sqrt{\hat{v}_{t,k,i}}}$\;
            $x_{t,k+1,i} = x_{t,k,i} - \alpha m_{t,k,i} \odot \eta_{t,k,i}$\;
        }
    }
    At server:\\
    \Indp
    Receive $x_{t,K+1,i}, m_{t,K,i}, \hat{v}_{t,K,i}$ from clients;\\
    Update $x_{t+1} = \frac{1}{m}\sum_{i=1}^{m} x_{t,K+1,i} $;\\
    \Indp\Indp$m_{t} = \frac{1}{m}\sum_{i=1}^{m} m_{t,K,i} $;\\
    $\hat{v}_{t} = \frac{1}{m}\sum_{i=1}^{m} \hat{v}_{t,K,i}$;\\
   \Indm \Indm
    Broadcast $x_{t+1},m_{t},\hat{v}_{t}$ to clients;
}
\end{algorithm}

\subsubsection{Full client participation}
Compared with FedAvg, our \methodname\ (Algorithm \ref{alg:local-AMSGrad-final}) replaces the SGD with the AMSgrad.
In \methodname, clients can adjust their learning rate based on the local step and the local dataset. Here we consider the case that a fixed local interval is adopted by setting $K_{t}= K$ in Algorithm \ref{alg:local-AMSGrad-final}.

In Algorithm \ref{alg:local-AMSGrad-final}, each client has the same initial parameters $x_{1}$, $m_{0}=0$ and $\hat{v}_{0} = \epsilon^{2}$, where 
$ \epsilon^{2}$ is a  small positive scalar to avoid the denominator diminishing.
For the global iteration $t$, the clients start to process local updating in parallel.
When the client updates the parameters locally, it starts from the initial parameters or received parameters.
Each client $i$ computes the stochastic gradient $g_{t,k,i}=\nabla f_{i}(x_{t,k,i},\xi_{t,k,i})$ according to the i.i.d random variable $\xi_{t,k,i}$.
Then the parameters are updated by using AMSGrad on the local steps.
AMSGrad computes the momenta following 
\begin{align}
    m_{t,k,i} = \beta_1 m_{t,k-1,i} +(1- \beta_1 ) g_{t,k,i},
\end{align}
 and second order momenta following 
\begin{align}
v_{t,i} = \beta_2 v_{t,k-1,i} +(1- \beta_2) [g_{t,k,i}]^{2}.
\end{align}
And it updates the large second order momenta following
\begin{align}
    \hat{v}_{t,k,i} = \max(\hat{v}_{t,k-1,i}, \;v_{t,k,i}).
\end{align}
Then   the parameters are updated locally following 
\begin{align}
x_{t,k+1,i} = x_{t,k,i} - \alpha m_{t,k,i} \odot \frac{1}{\sqrt{\hat{v}_{t,k,i}}},
\end{align}
where $\sqrt{\cdot}$
is the element-wise square root.  After $K$ steps of local update steps, the client sends its information to the central parameter server including $x_{t,K+1,i}$, $m_{t,K,i}$, and $\hat{v}_{t,k,i}$.
After the server receives the information, it averages them by computing 
\begin{align}
\begin{cases}
x_{t+1} = \frac{1}{N}\sum_{i=1}^{N} x_{t,K+1,i}, \;\\
m_{t} = \frac{1}{N}\sum_{i=1}^{N} m_{t,K,i}, \;\\
\hat{v}_{t} = \frac{1}{N}\sum_{i=1}^{N} \hat{v}_{t,K,i}.
\end{cases}
\end{align}
Lastly, the central server broadcasts the averaged information to all clients and continues execution of the loop with the next iteration.

\subsubsection{Adaptive interval}
In this part, we consider adaptively tuning the local interval $K_t$. Recent works \cite{DBLP:journals/corr/abs-1711-01761,DBLP:conf/icml/JiWWZZL20} show that large mini-batches can improve the algorithm performance since a large mini-batch estimates the gradient more accurately. 
Local training with an adaptive interval shares a similar idea that more computation at the local steps could accelerate  convergence. Bijral et al. \cite{bijral2016data} claim that the interval should be small at the beginning stage, which yields a faster convergence, while large intervals reduce the communication rounds.

On the other hand, when the data is Non-IID, a large local step will encourage each client to converge to the local minima that varies across different clients.
This may lead the algorithm to a bad solution for the global model or even result in divergence. Therefore, it motivates us to  use a small interval $K_t$ to achieve a good initialization at the beginning stage and then gradually increase $K_t$ to stabilize the training process and reduce the communication cost. 
In the next section, we show that the \methodname\ with adaptive interval can also achieve linear speedup.


\begin{remark} 
To conclude this section, we have two comments on Algorithm \ref{alg:local-AMSGrad-final}. 
{\bf (i)}\ Compared with the FedAdam and Local Adaalter, \methodname\ adjusts the learning rate at the local training step and synchronizes it periodically, which could be more thorough and accurate to estimate the local learning rate by exploiting the data structure. 
FedAdam applies SGD in the local training step and uses the adaptive method, Adam, on the server-side updating. 
LocalAdaalter also applies SGD in the local training step while the learning rate is updated at every communication round and calculated by the server. 
{\bf (ii)}\ On the other hand, our proposed \methodname\ supports the adaptive interval update, which is more flexible and could further reduce the communication cost.
\end{remark}


\section{Convergence Analysis}
In this section, we establish the linear speedup for the \methodname\ algorithm in the difficult non-convex setting.  Below, we make several commonly used assumptions for characterizing the convergence of stochastic non-convex optimization. 

\subsection{Assumptions}
\begin{assumption}\label{Assumption: smoothness}
{\bf Smoothness}. For all $i\in [N]$, $f_i$ is differentiable and its gradient is L-lipschitz.
\end{assumption}
\begin{assumption}\label{Assumption:bounded variance}
{\bf Bounded variances}. Each gradient estimator is unbiased, i.e., $\mathbb{E} [g_{i,t}] = \nabla f_{i}(x_{i,t})$. And we assume there exists $\sigma$  satisfies  that $ \mathbb{E} \Vert g_{i,t}- \nabla f_{i}(x_{i,t}) \Vert^2 \leq \sigma^2 ,\forall i,t$.
\end{assumption}
\begin{assumption}\label{Assumption:bounded gd}
{\bf Bounded stochastic gradients}. Each coordinate of stochastic gradient $g_{i,t}$ is bounded, i.e., $|(g_{i,t})_{j}| \leq G_{\infty}$, or simply $\Vert g \Vert_{\infty} \leq G_{\infty}$ , and  the local gradient is also uniformly bounded: $\Vert \nabla f_{i}(x) \Vert_{\infty} \leq G_{\infty}$.
\end{assumption}

Please note in Assumption \ref{Assumption:bounded gd}, we use bounded stochastic gradient $g $ which is   stronger than bounded gradient $\Vert \nabla f(x) \Vert^2$.
Bounded stochastic gradient assumption is adopted in \cite{DBLP:conf/iclr/ChenLSH19}, \cite{DBLP:journals/corr/abs-1808-05671} and bounded gradient assumption is adopt in \cite{DBLP:conf/nips/ZaheerRSKK18}. Both of them are widely adopted in adaptive stochastic gradient methods.
Under the   finite-sum setting, these two are similar   in that one can be derived from the other one. 

\begin{remark}
Usually, date heterogeneity in federated learning in the stochastic non-convex setting is measured by 
$$\Vert \nabla f(x)-\nabla f_{i}(x) \Vert^2 \leq \sigma_{G}^{2},$$
where $\sigma_{G}$ is a constant that is the upper bound of the dataset heterogeneity. 
Here, we comment that the bounded stochastic gradient assumption implies the above data heterogeneity since 
$$\Vert \nabla f(x)-\nabla f_{i}(x) \Vert^2 \leq 2\Vert \nabla f(x)\Vert^2+ 2\Vert \nabla f_{i}(x) \Vert^2 \leq 4 d G_{\infty}^{2},$$
when we set $\sigma_{G}^{2} = 4 d G_{\infty}^{2}$.
$d$ is the dimension of the $x$.
\end{remark}

\subsection{Full clients participation}

The following theorem characterizes the linear speedup of \methodname\ in the stochastic non-convex setting.
\begin{theorem}[Full clients participation]
\label{full-client-theorem}
We update the parameters with full clients participation update. 
Under the Assumptions \ref{Assumption: smoothness},\ref{Assumption:bounded variance},\ref{Assumption:bounded gd},  $\alpha \leq \frac{3\epsilon}{20L}$ and $K_{t} = K$ is a fixed constant in Algorithm \ref{alg:local-AMSGrad-final}. We have
\begin{align*}
\mathbb{E}\left[\frac{\sum_{t=0}^{T-1}\sum_{k=1}^{K}\Vert \nabla f(\bar{x}_{t,k}) \Vert^2}{KT}\right] \leq \frac{2G_{\infty}(f(Z_{1}) - f^*)}{\alpha KT} + \Phi
\end{align*}
where $K$ is the period of the local updates, $T$ is the iteration number of the global synchronization,   $\bar{x}_{t,k}=\frac{1}{N}\sum_{i=1}^{N}x_{t,k,i}$, $N$ is the number of the clients, and
\begin{align*}
\Phi \!= & 2G_{\infty}\left( \Big(\frac{2 L^{2} \beta_{1}^{2} G_{\infty}^2 d}{(1 -\beta_{1})^{2} \epsilon^{4}}
\!+\!  \frac{K^{2} L^{2} G_{\infty}^{2}}{\epsilon^{4}}(1+ 4K^{2}(1\!-\!\beta_{1})^{2} d) \Big)\alpha^{2} \right.  \nonumber \\
&+ \Big((2-\beta_1 )\frac{G_{\infty}^{2} Kd(G_{\infty}^{2} -\epsilon^{2})}{ (1-\beta_1)\epsilon^{3}}
+ \frac{3d(G_{\infty}^2 - \epsilon^{2}) G^{2}_{\infty}}{2 \epsilon^3 (1 - \beta_{1})} \Big)\frac{1}{T}\nonumber\\
&+ \Big(\frac{5L G^{2}_{\infty} d(G_{\infty}^{2}-\epsilon^{2})^{2}}{8 \epsilon^{6} (1 - \beta_{1})^2}(  2\beta_1^2+(1 - \beta_{1})^2 ) \nonumber \\
&+ \left.\frac{5L K G_{\infty}^{2} d^{2}(G_{\infty}^2 - \epsilon^{2})^{2} }{2 \epsilon^6}  \Big) \frac{\alpha N}{T} +\frac{5L d\sigma^2}{4 \epsilon^2} \frac{\alpha}{N} \right).
\end{align*}
\end{theorem}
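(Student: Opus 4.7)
My plan is to follow the standard ``virtual sequence'' approach for federated adaptive methods, adapted to handle the client-specific adaptive preconditioners. First, I would introduce an auxiliary (virtual) averaged sequence that absorbs the momentum, of the form
\[
Z_{t,k} \;:=\; \bar{x}_{t,k} \;-\; \tfrac{\alpha\,\beta_1}{1-\beta_1}\,\bar{m}_{t,k-1}\odot\bar{\eta}_{t,k-1},
\]
where overbars denote averages over the $N$ clients. A direct computation using the AMSGrad recursion shows that
\[
Z_{t,k+1}-Z_{t,k} \;=\; -\tfrac{\alpha}{1-\beta_1}\!\left(\bar{m}_{t,k}\odot\bar\eta_{t,k}-\beta_1\,\bar m_{t,k-1}\odot\bar\eta_{t,k-1}\right),
\]
which, up to a controllable error induced by the drift of $\bar\eta$, behaves like an adaptive preconditioned SGD step on the averaged iterate. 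Applying $L$-smoothness of $f$ to $f(Z_{t,k+1})-f(Z_{t,k})$ yields the fundamental descent inequality that I would telescope across both $k=1,\dots,K$ and $t=0,\dots,T-1$; the telescoped left side produces the $(f(Z_1)-f^{*})/(\alpha KT)$ term, and the inner product with $\nabla f(Z_{t,k})$ must be massaged into $-\alpha\,\|\nabla f(\bar x_{t,k})\|^2/\sqrt{\hat v}$ plus error terms.

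Next, I would split the inner product $\langle\nabla f(Z_{t,k}), Z_{t,k+1}-Z_{t,k}\rangle$ as
\[
\langle\nabla f(\bar x_{t,k}),\,Z_{t,k+1}-Z_{t,k}\rangle + \langle\nabla f(Z_{t,k})-\nabla f(\bar x_{t,k}),\,Z_{t,k+1}-Z_{t,k}\rangle,
\]
handle the second piece with smoothness and the bound $\|Z_{t,k}-\bar x_{t,k}\|\le \tfrac{\alpha\beta_1}{1-\beta_1}\,G_\infty/\epsilon$, and further split the leading term as $\langle\nabla f(\bar x_{t,k}),-\alpha\,\bar g_{t,k}\odot\bar\eta_{t,k}\rangle$ plus deviation terms capturing (i) $\bar m_{t,k}$ vs.\ $\bar g_{t,k}$ (momentum lag), (ii) $\bar\eta_{t,k}$ vs.\ $\bar\eta_{t,k-1}$ (preconditioner lag), and (iii) $\bar g_{t,k}$ vs.\ $\nabla f(\bar x_{t,k})$ (client drift plus stochastic noise). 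Taking conditional expectations at the right stage --- this is the ``delay expectation'' idea mentioned in the introduction --- lets me replace $\bar g_{t,k}$ by $\frac{1}{N}\sum_i\nabla f_i(x_{t,k,i})$ while the preconditioner, evaluated at a delayed index so that it is measurable, factors out cleanly.

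The bulk of the work, and the main obstacle, is bounding the per-client preconditioner inconsistency $\eta_{t,k,i}-\bar\eta_{t,k}$ and controlling it alongside the client/momentum drifts. I would prove three supporting lemmas:
\begin{itemize}[leftmargin=*]
\item A uniform bound $\epsilon\le 1/\sqrt{\hat v_{t,k,i}}\le 1/\epsilon$ on every coordinate, using Assumption \ref{Assumption:bounded gd} and the initialization $\hat v_{-1}=\epsilon^2$.
\item A ``slow change'' lemma showing $\|\eta_{t,k,i}-\eta_{t,k-1,i}\|$ and $\|\eta_{t,k,i}-\eta_{t,k,j}\|$ are $O((G_\infty^2-\epsilon^2)/\epsilon^3)$ per coordinate, via the monotone increasing property $\hat v_{t,k,i}\ge \hat v_{t,k-1,i}\ge\hat v_{-1}$ and a telescoping argument $\sum_t(\hat v_{t}-\hat v_{t-1})\le G_\infty^2-\epsilon^2$ coordinate-wise, which contributes the $1/T$ terms in $\Phi$.
\item Standard ``local drift'' bounds $\mathbb{E}\|x_{t,k,i}-\bar x_{t,k}\|^2 = O(\alpha^2 K^2 G_\infty^2/\epsilon^2)$ and the analogous bound for $m_{t,k,i}-\bar m_{t,k}$, obtained by unrolling the $K$ local steps and using bounded stochastic gradients; these give rise to the $\alpha^2$ terms in $\Phi$.
\end{itemize}

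Finally, I would combine everything: the $\|Z_{t,k+1}-Z_{t,k}\|^2$ term from smoothness contributes $L\alpha^2/\epsilon^2$ multiplied by $\|\bar g_{t,k}\|^2$, which after taking expectation splits into a variance piece producing the $\tfrac{\alpha}{N}\cdot\sigma^2$ term (this is where the $1/N$ linear speedup enters), and a deterministic piece absorbed into the gradient-norm term provided $\alpha\le \tfrac{3\epsilon}{20L}$. Rearranging so that $\tfrac{\alpha}{\sqrt{\hat v}}\sum_{t,k}\|\nabla f(\bar x_{t,k})\|^2$ is on the left and using $1/\sqrt{\hat v_{t,k,i}}\ge 1/G_\infty$ to divide through yields the factor $2G_\infty$ and the stated bound, with $\Phi$ collecting all error terms from the three sources (preconditioner drift, local drift, momentum lag, stochastic noise). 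The hard technical nuisance throughout is that $\eta_{t,k,i}$ is random and coupled with $m_{t,k,i}$, so several terms require a delayed-expectation decomposition $\eta_{t,k,i} = \eta_{t,0,i} + (\eta_{t,k,i}-\eta_{t,0,i})$ to separate a measurable factor from a small residual before taking expectation.
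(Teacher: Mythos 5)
Your proposal follows essentially the same route as the paper's proof: the momentum-absorbing virtual sequence $Z$, the $L$-smoothness descent inequality with a delayed-preconditioner decomposition ($\theta_{r-1,i}$ vs.\ $\theta_{r,i}$) to make the expectation tractable, the monotonicity-plus-telescoping bound $\sum_t(\hat v_t-\hat v_{t-1})\le G_\infty^2-\epsilon^2$ coordinate-wise for the preconditioner-inconsistency terms, the $O(\alpha^2K^2)$ local-drift bound, the $\sigma^2\alpha/N$ variance term giving linear speedup, and the final division by $1/G_\infty$. The only detail to fix when writing this out is that the paper's $Z_r$ involves the average of the products $M_{r-1,i}\odot\theta_{r-1,i}$ rather than the product of the averages $\bar m\odot\bar\eta$ (only the former telescopes cleanly from $\bar X_r-\bar X_{r-1}$), and the lower bound on $1/\sqrt{\hat v}$ is $1/G_\infty$, not $\epsilon$.
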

\begin{corollary}[Linear speedup]
When taking base learning rate $\alpha = \min\left(\sqrt{\frac{N}{KT}},\frac{3\epsilon}{20L}\right)$, we have the convergence rate:
\begin{align}
\mathbb{E}\left[\frac{\sum_{t=0}^{T-1}\sum_{k=1}^{K}\Vert \nabla f(\bar{x}_{t,k}) \Vert^2}{KT}\right] = O\left(\frac{1}{\sqrt{NKT}}\right).
\end{align}
\end{corollary}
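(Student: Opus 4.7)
The plan is to substitute the prescribed base learning rate $\alpha = \min(\sqrt{N/(KT)}, 3\epsilon/(20L))$ into the bound of Theorem \ref{full-client-theorem} and verify that every term on the right-hand side is $O(1/\sqrt{NKT})$. Because Theorem \ref{full-client-theorem} requires $\alpha \leq 3\epsilon/(20L)$, I would first observe that once $T \geq 400 L^2 N/(9\epsilon^2 K)$ the minimum is attained by $\alpha = \sqrt{N/(KT)}$; this is the regime relevant for the asymptotic rate, and the other branch contributes an additive $O(1/(KT))$ term that is already faster than $1/\sqrt{NKT}$.

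Substituting $\alpha = \sqrt{N/(KT)}$ into the leading term yields
\begin{equation*}
\frac{2 G_\infty(f(Z_1) - f^*)}{\alpha KT} \;=\; \frac{2 G_\infty(f(Z_1) - f^*)}{\sqrt{NKT}},
\end{equation*}
which already matches the target rate. Next I would process $\Phi$ group by group: the $\alpha^2$ group scales as $N/(KT)$; the pure $1/T$ group (heterogeneity/initial discrepancy terms) scales as $1/T$; the $\alpha N/T$ group scales as $N^{3/2}/\sqrt{K T^{3}}$; and the $\alpha/N$ group containing the variance constant $\tfrac{5Ld\sigma^2}{4\epsilon^2}$ scales as $\sqrt{N/(KT)}/N = 1/\sqrt{NKT}$. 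The last of these is exactly the speedup rate.

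For the remaining three groups I would compare each with $1/\sqrt{NKT}$: group (i) needs $T \geq N^3/K$, group (ii) needs $T \geq NK$, and group (iii) needs $T \geq N^{2}$. Hence whenever $T$ dominates $N, K$ in the elementary polynomial sense above, all three groups are absorbed into the rate. I would therefore conclude by collecting the two dominant contributions, the initial-suboptimality term and the stochastic-variance term, so the overall bound is $O(1/\sqrt{NKT})$ as claimed.

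The step I expect to be the real ``main obstacle'' is purely bookkeeping rather than mathematical: the expression for $\Phi$ mixes many constants depending on $\beta_1, \beta_2, L, \epsilon, G_\infty, d$ and has terms of four different orders in $(\alpha, N, T)$, so care is needed to make sure the $\alpha/N$ variance term genuinely balances the $1/(\alpha KT)$ term at the stated choice of $\alpha$ and that no hidden $K$ or $N$ dependence spoils the rate. The essential insight, and the reason the choice $\alpha \asymp \sqrt{N/(KT)}$ appears, is that it is precisely the value equating the optimization term $1/(\alpha KT)$ with the noise term $\alpha/N$, both of which then evaluate to $1/\sqrt{NKT}$, while all local-drift and momentum-inconsistency terms are of strictly lower order.
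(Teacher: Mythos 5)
Your proposal is correct and follows essentially the same route as the paper: substitute $\alpha=\sqrt{N/(KT)}$ into the bound of Theorem \ref{full-client-theorem}, group the remaining terms by their order in $(N,K,T)$ (the paper collects them as $C_1/\sqrt{NKT}+C_2 N/(KT)+C_3/(KT)+C_4(N/(KT))^{1.5}$), and observe that the initial-suboptimality and variance terms both evaluate to $O(1/\sqrt{NKT})$ while the rest decay faster once $T$ is polynomially large in $N$ and $K$. Your treatment of the $\min$ and the explicit thresholds on $T$ are in fact slightly more careful than the paper's, which simply asserts the implication.
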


\begin{remark}[Communication complexity] To achieve an $O(\epsilon)$ accurate solution, our method has $O(\frac{1}{\sqrt{NKT}})$ convergence.
Then it needs $O(\frac{1}{NK \epsilon^{2}})$ iterations.
The communication complexity = number of communication rounds $\times$ number of communicated clients each communication rounds. So the communication complexity is $O(\frac{1}{K \epsilon^{2}})$.
\end{remark}
\begin{figure*}[ht]
   \centering
    \begin{subfigure}{0.24\linewidth}
    \includegraphics[width=\linewidth]{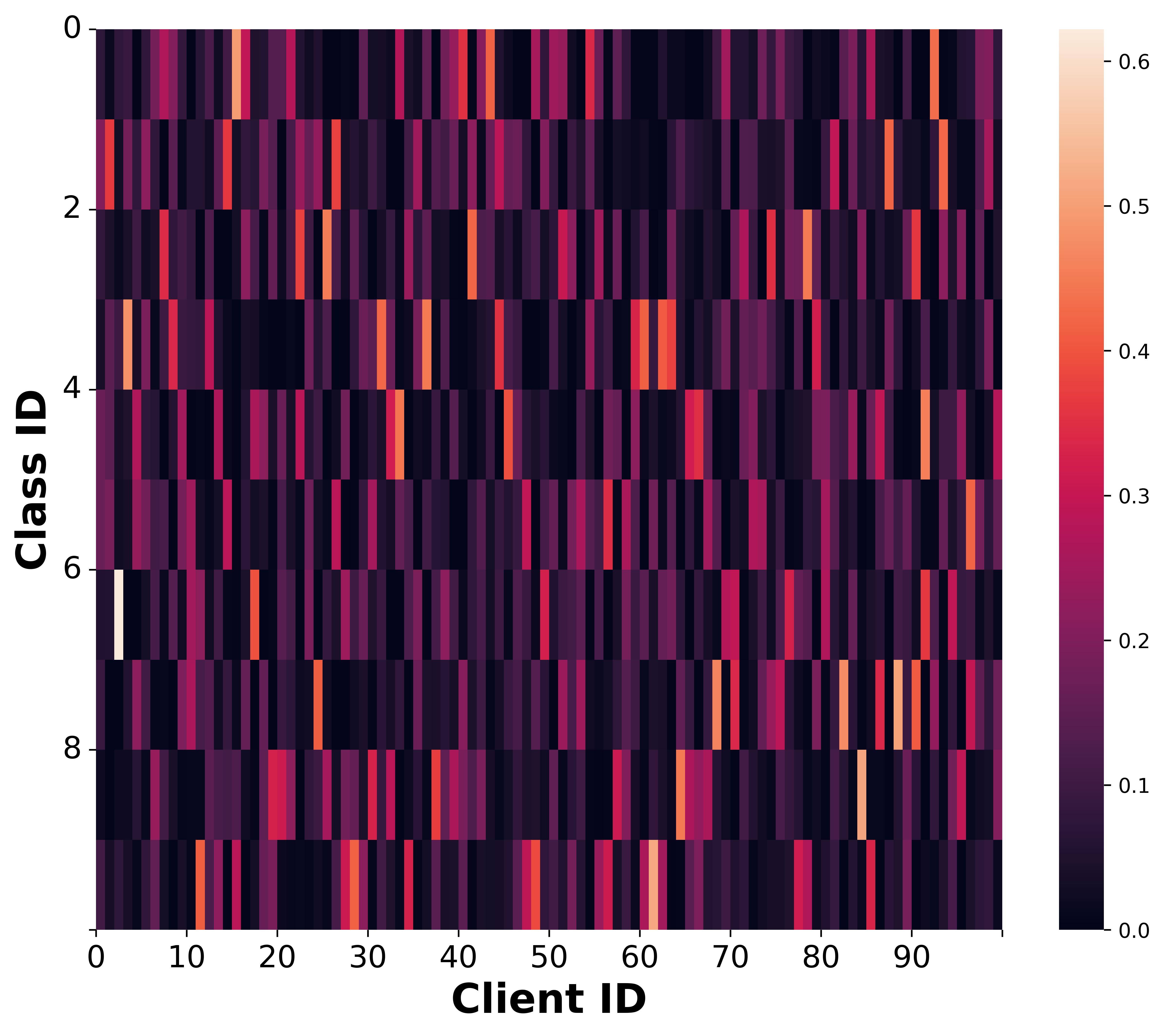}
    \caption{CIFAR10(0.6)}
    \label{fig:data-hetergeneity1}
   \end{subfigure}
    \begin{subfigure}{0.24\linewidth}
    \includegraphics[width=\linewidth]{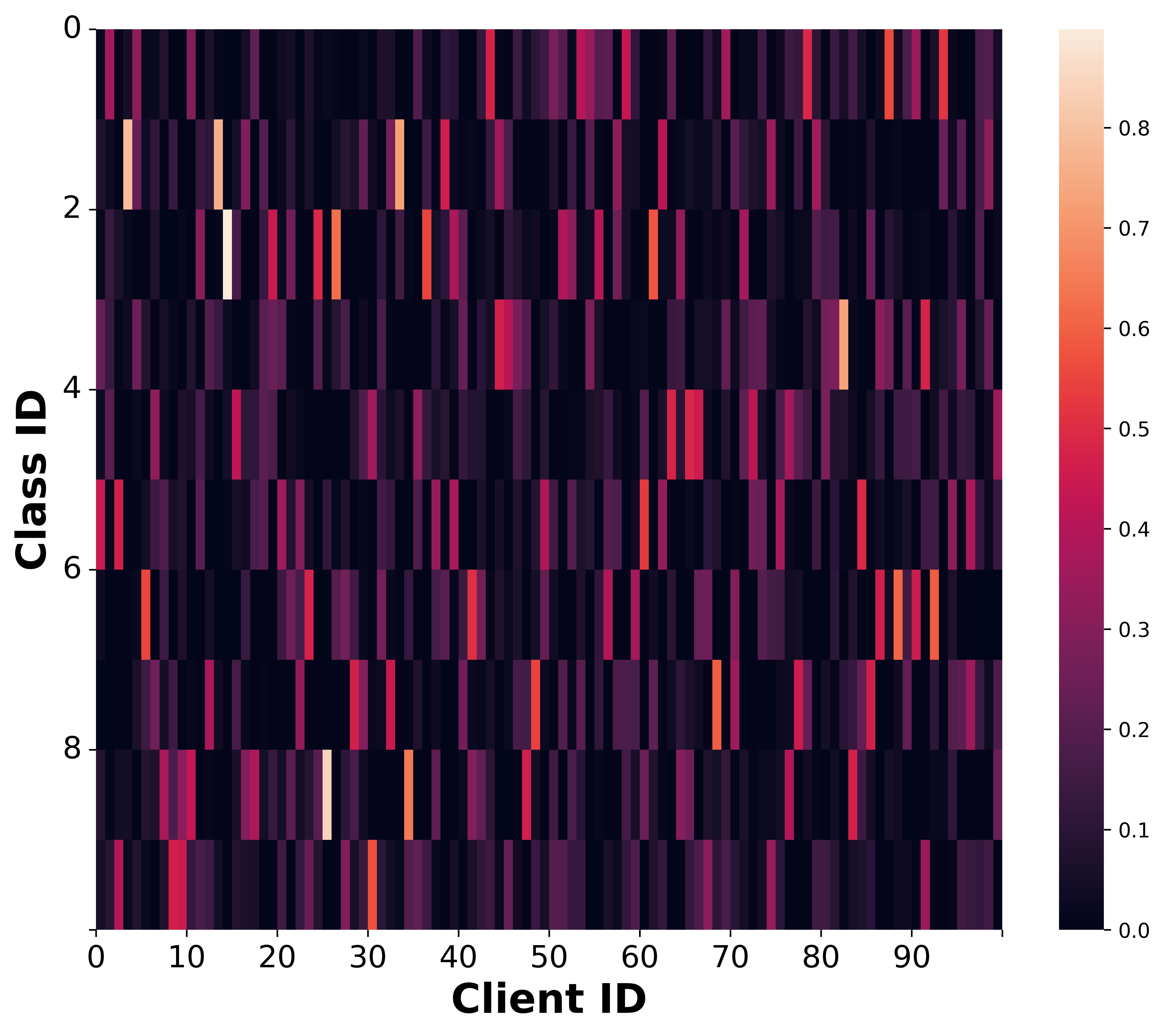}
    \caption{CIFAR10(0.3)}
    \label{fig:data-hetergeneity1-2}
   \end{subfigure}
    \begin{subfigure}{0.24\linewidth}
    \includegraphics[width=\linewidth]{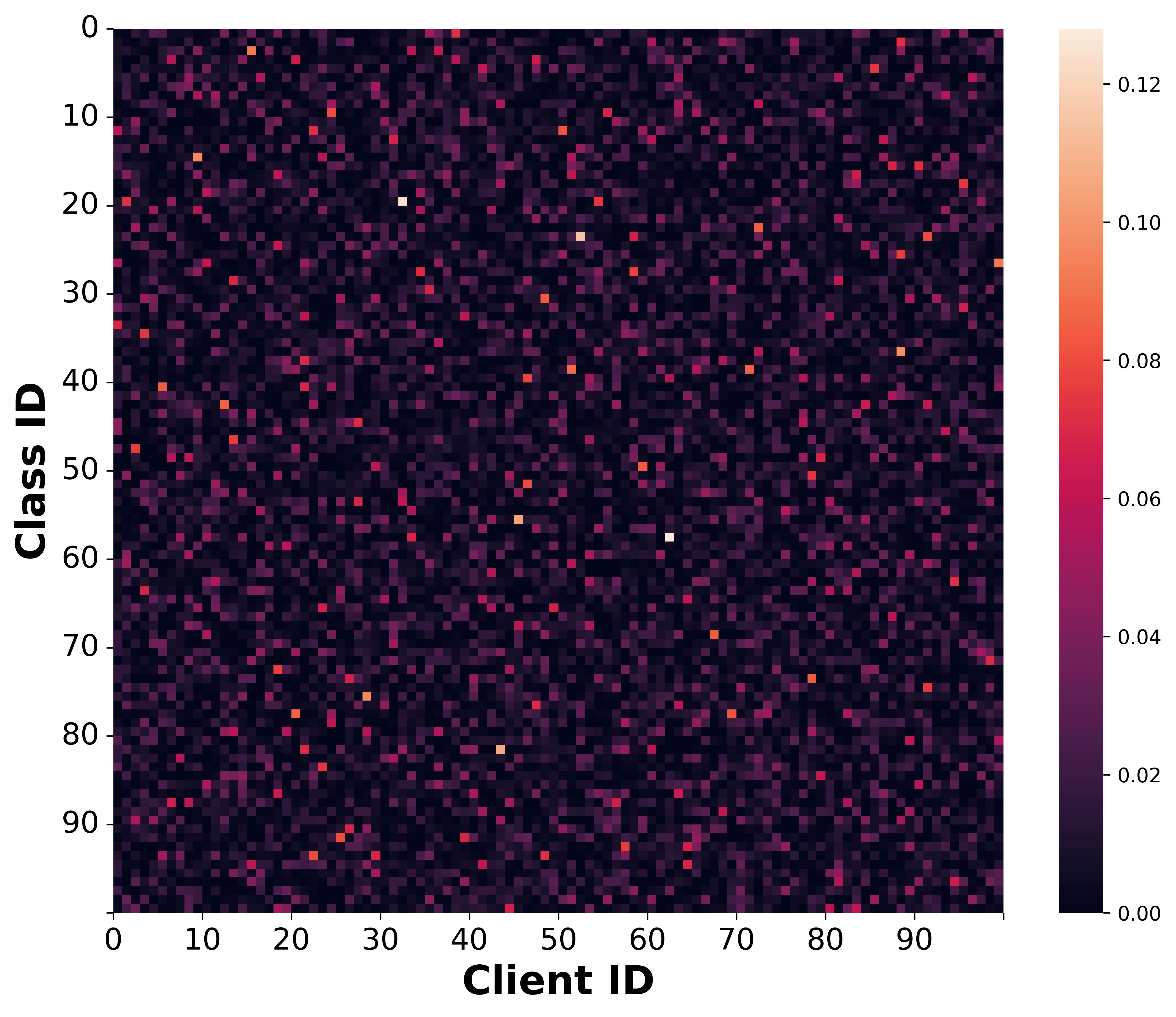}
    \caption{CIFAR100(0.6)}
    \label{fig:data-hetergeneity2}
   \end{subfigure}
       \begin{subfigure}{0.24\linewidth}
    \includegraphics[width=\linewidth]{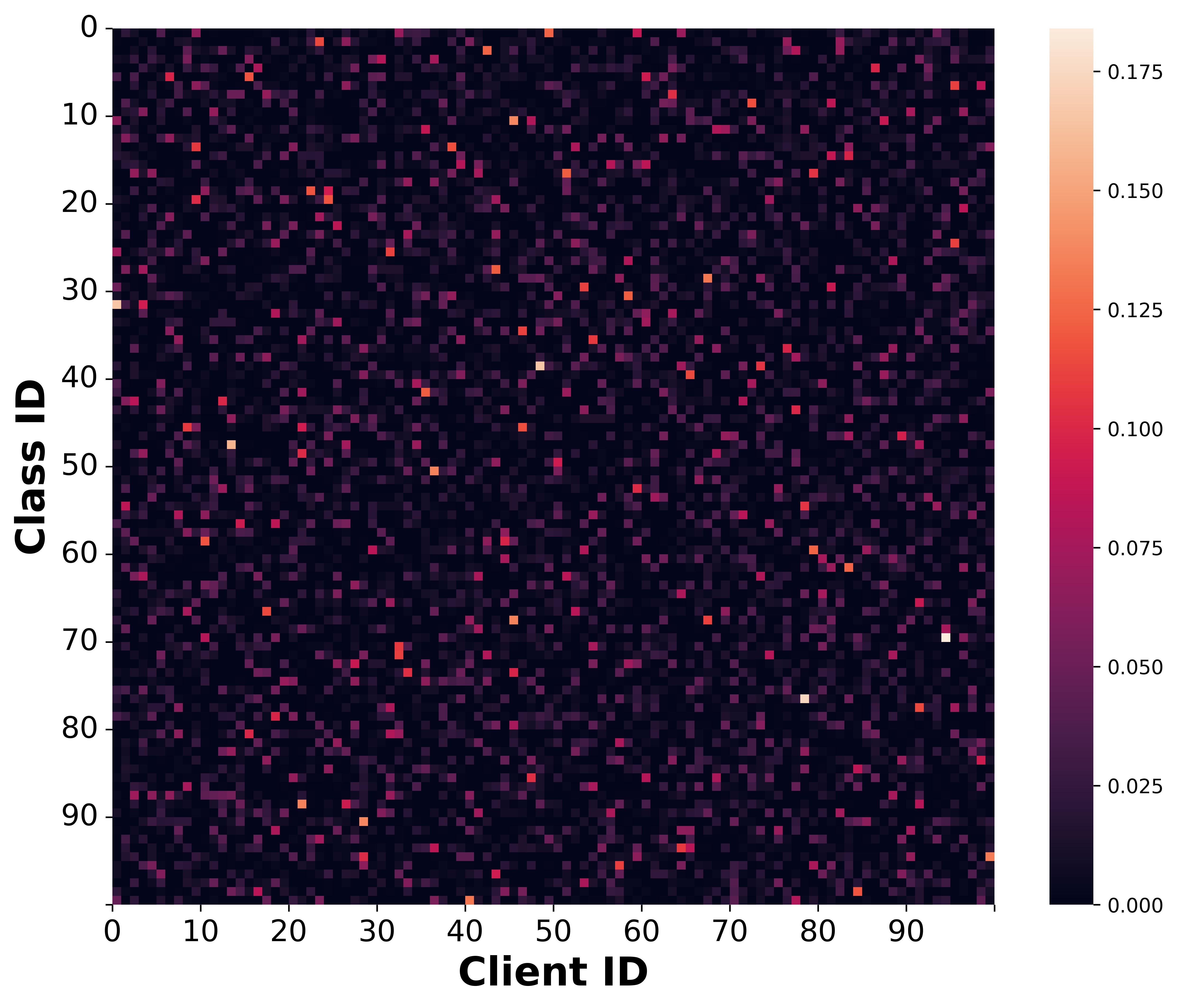}
    \caption{CIFAR100(0.3)}
    \label{fig:data-hetergeneity2-2}
   \end{subfigure}
\caption{Heterogeneity data partition using Dirichlet distribution with parameter 0.6 and 0.3 in CIFAR-10 and CIFAR-100.}
\label{fig:data-hetergeneity}
\end{figure*}
\begin{corollary}[Restart momentum]
In each global synchronization step, the clients do not send their momenta and the server does not receive and broadcast the momenta.
The client initializes their momenta with 0 at the start of the local steps.
Yu et al. \cite{DBLP:conf/icml/YuJY19} have proposed a similar strategy which is able to reduce communication cost.
In Algorithm \ref{alg:local-AMSGrad-final}, in the line 4, we set $m_{t,0,i} = 0$.
In this setting, the convergence rate can also achieve linear speedup.
We have 
\begin{align*}
\mathbb{E}\left[\frac{\sum_{t=0}^{T-1}\sum_{k=1}^{K}\Vert \nabla f(\bar{x}_{t,k}) \Vert^2}{KT}\right] = O\left(\frac{1}{\sqrt{NKT}}\right).
\end{align*}
In other words, the restart momenta strategy does not influence the domination item of convergence bound.
\end{corollary}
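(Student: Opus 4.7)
The plan is to mirror the proof of Theorem~\ref{full-client-theorem}, revisiting only those steps where the original analysis uses the continuity of the momentum across global rounds (i.e.\ $m_{t,0,i}=m_{t-1}$), and to show that replacing this by the reset $m_{t,0,i}=0$ does not damage the dominant $O(1/\sqrt{NKT})$ term.

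First, I would isolate the role played by the incoming momentum. The AMSGrad-style descent analysis introduces an auxiliary sequence of the form
\begin{equation*}
Z_{t,k} \;=\; \bar{x}_{t,k} \;+\; \frac{\beta_{1}}{1-\beta_{1}}\,\alpha\, m_{t,k-1}\odot\eta_{t,k-1},
\end{equation*}
chosen so that $Z_{t,k+1}-Z_{t,k}$ is, up to an $\eta$-perturbation, exactly the pure stochastic-gradient step. Under the continuous-momentum rule, $Z$ is well-defined across all $KT$ local steps. Under the restart rule, $Z$ has a jump at each round boundary whose magnitude is controlled by $\alpha\,\|m_{t-1,K,i}\|_{\infty}\,\|\eta_{t-1,K,i}\|_{\infty}$; by Assumption~\ref{Assumption:bounded gd} and the definition $\hat v\ge\epsilon^{2}$, this is at most $O(\alpha G_{\infty}/\epsilon)$ per coordinate. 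Using $L$-smoothness once per boundary to pass from $f(Z_{t,K+1})$ to $f(Z_{t+1,1})$ therefore contributes a cumulative error of order $\alpha^{2}T\cdot L G_{\infty}^{2}d/\epsilon^{2}$, which after dividing by the normalization $\alpha KT$ is absorbed into the $\alpha$-linear term already present in $\Phi$ and hence stays in the lower-order regime $o(1/\sqrt{NKT})$.

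Second, I would verify that every remaining bound used to build $\Phi$ carries over verbatim. All the error terms in $\Phi$ are constructed from $\|m_{t,k,i}\|_{\infty}$, $\|\hat v_{t,k,i}\|_{\infty}$, client--client inconsistency bounds, and the variance estimate from Assumption~\ref{Assumption:bounded variance}. Under restart, $m_{t,k,i}=(1-\beta_{1})\sum_{j=1}^{k}\beta_{1}^{k-j}g_{t,j,i}$, which is still a convex geometric combination of bounded gradients, so $\|m_{t,k,i}\|_{\infty}\le G_{\infty}$ as before. The client--client discrepancies $\|m_{t,k,i}-m_{t,k,j}\|$ and $\|\hat v_{t,k,i}-\hat v_{t,k,j}\|$ are no larger, and arguably smaller, because both momenta are reset to the same value ($0$ and $\hat v_{t-1}$ respectively) at $k=0$ of each round. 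Hence the coefficients multiplying $\alpha^{2}$, $1/T$, $\alpha N/T$, and $\alpha/N$ in $\Phi$ are unchanged in order.

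Third, keeping the learning-rate choice $\alpha=\min\bigl(\sqrt{N/(KT)},\,3\epsilon/(20L)\bigr)$, the $\frac{2G_{\infty}(f(Z_{1})-f^{\ast})}{\alpha KT}$ term and the variance term $\frac{5 L d\sigma^{2}}{4\epsilon^{2}}\cdot\frac{\alpha}{N}$ balance at $O(1/\sqrt{NKT})$, while the $\alpha^{2}$, $1/T$, and $\alpha N/T$ contributions (including the boundary correction produced by restart) are of lower order whenever $N\le KT$, which is the standard linear-speedup regime. This delivers the stated $O(1/\sqrt{NKT})$ rate.

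The main obstacle I expect is the clean handling of the auxiliary-sequence construction: I must check that the intra-round descent inequality $f(Z_{t,k+1})-f(Z_{t,k})\le\cdots$ derived in the proof of Theorem~\ref{full-client-theorem} still applies unchanged within each global round (which it does, since the recursion for $m_{t,k,i}$ inside a round is identical to the original), and then show that the inter-round jumps can be absorbed into an $O(\alpha^{2})$ per-round penalty, summable over $T$ rounds without damaging the leading term. A convenient device is to define $Z$ piecewise on each round and telescope $f$ only within pieces, reconnecting consecutive pieces by a single application of smoothness whose cost is exactly the $O(\alpha^{2}G_{\infty}^{2}/\epsilon^{2})$ boundary penalty identified above.
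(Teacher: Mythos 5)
Your strategy is genuinely different from the paper's, and as written it contains a gap in the boundary accounting. The paper never re-defines the auxiliary sequence piecewise and never pays a per-round reconnection penalty: it keeps the single global sequence $Z_r=\bar{X}_r+\frac{\beta_1}{1-\beta_1}(\bar{X}_r-\bar{X}_{r-1})$ of Lemma~\ref{lemma:init}, which is built from iterate differences only and therefore has no jump at synchronization, and observes that the initialization $m_{t,0,i}=0$ affects the final constants in exactly one place, namely the client-drift bound of Lemma~\ref{lemma:xbar-x}, where the term generated by $m_{t,0}$ in Formula~(\ref{form:exp91}) simply vanishes (so the constant only improves). Every other bound in the proof of Theorem~\ref{full-client-theorem} is reused verbatim. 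Your observations that $\Vert m_{t,k,i}\Vert_\infty\leq G_\infty$ is preserved and that client--client discrepancies do not grow are correct and consistent with this.

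The gap: under your piecewise construction the jump $Z_{t+1,1}-Z_{t,K+1}=\frac{\alpha\beta_1}{1-\beta_1}\cdot\frac{1}{N}\sum_i m_{t,K,i}\odot\eta_{t,K,i}$ has Euclidean norm $O(\alpha G_\infty\sqrt{d}/\epsilon)$, so the smoothness expansion used to reconnect consecutive pieces contains the first-order term $\langle\nabla f(Z_{t,K+1}),\,Z_{t+1,1}-Z_{t,K+1}\rangle$, which is $O(\alpha G_\infty^2 d/\epsilon)$ per boundary --- not $O(\alpha^2)$; only the quadratic remainder $\frac{L}{2}\Vert\cdot\Vert^2$ is $O(\alpha^2)$ as you claim. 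Summed over $T$ boundaries and divided by the normalizer $\alpha KT$, the first-order part leaves a non-vanishing residual of order $G_\infty^2 d/(\epsilon K)$, which destroys the $O(1/\sqrt{NKT})$ rate. Rescuing the piecewise route would require exploiting the structure of $\langle\nabla f,\,m\odot\eta\rangle$ (e.g., absorbing it into the $\Vert\nabla f\Vert^2$ term on the left-hand side under a condition relating $K$ and $\beta_1$) rather than bounding it crudely. The cleaner path is the paper's: keep the global $Z_r$, and re-examine only the one algebraic step that actually uses the momentum-averaging rule, namely the cancellation $\sum_i(m_{t,0,i}-m_{t-1,K,i})=0$ in step (a) of Lemma~\ref{lemma:init}, together with Lemma~\ref{lemma:xbar-x}.
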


\begin{corollary}[Maximize the second order momentum]
In Algorithm \ref{alg:local-AMSGrad-final}, the central server updates the second-order momenta $\hat{v}_{t}$ by averaging the collected second-order momenta from the clients which reads 
$$ \hat{v}_{t} = \frac{1}{N}\sum_{i=1}^{N} \hat{v}_{t,K,i}.$$
We can also replace the average operator as maximization:
$$ \hat{v}_{t} =  \underset{i}{max}(\hat{v}_{t,K,i}).$$
This method can also achieve linear speedup.
\end{corollary}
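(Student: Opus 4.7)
The plan is to follow the structure of the proof of Theorem~\ref{full-client-theorem} and only re-verify the handful of inequalities that depend on how $\hat{v}_t$ is aggregated across clients. The server's aggregation enters the convergence analysis through only three quantities: (i) the per-round coordinate-wise boundedness $\epsilon^2 \le \hat{v}_t \le G_\infty^2$, (ii) the global monotonicity $\hat{v}_t \ge \hat{v}_{t-1}$ used in the telescoping argument on $\sum_t \|\hat{v}_{t+1} - \hat{v}_t\|_1$, and (iii) the client-level learning-rate inconsistency $\|\eta_{t,k,i} - \eta_t\|$, where $\eta_t := 1/\sqrt{\hat{v}_t}$. If each of these is re-established for the coordinate-wise max at worst up to a constant factor, the rest of the proof transfers verbatim, only the numerical constants buried in $\Phi$ shift, and the dominant $O(1/\sqrt{NKT})$ rate is preserved.

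First I would verify the three invariants under the max aggregation. Assumption~\ref{Assumption:bounded gd} yields $\epsilon^2 \le \hat{v}_{t,k,i} \le G_\infty^2$ coordinate-wise for every client, and a coordinate-wise max inherits exactly the same envelope. For global monotonicity, each client starts round $t$ with $\hat{v}_{t,0,i} = \hat{v}_{t-1}$ and the local rule $\hat{v}_{t,k,i} = \max(\hat{v}_{t,k-1,i}, v_{t,k,i})$ is non-decreasing in $k$, so $\hat{v}_{t,K,i} \ge \hat{v}_{t-1}$ and hence $\max_i \hat{v}_{t,K,i} \ge \hat{v}_{t-1}$. Combined with the upper bound, this delivers the telescoping $\sum_{t=0}^{T-1} \langle \mathbf{1}, \hat{v}_{t+1} - \hat{v}_t\rangle \le d(G_\infty^2 - \epsilon^2)$ that the original proof relies on.

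Second I would re-derive the client inconsistency bound. Within round $t$, the chain $\hat{v}_{t-1} \le \hat{v}_{t,k,i} \le \hat{v}_{t,K,i} \le \hat{v}_t$ holds coordinate-wise, with the last inequality being precisely the defining property of the max. Consequently $\eta_t \le \eta_{t,k,i} \le \eta_{t-1}$ and the pointwise gap is uniformly bounded by $1/\epsilon - 1/G_\infty$. Moreover, the monotonicity of $\{\eta_t\}$ across rounds enables a telescoping $\sum_t (\eta_{t,k,i} - \eta_t) \le \sum_t (\eta_{t-1} - \eta_t) \le 1/\epsilon$, which is the direct substitute for the analogous bound in the averaging case and controls both sub-problems identified in the introduction — limited per-step inconsistency of the local learning rate and limited change of it across synchronizations.

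The main obstacle will be that the averaging variant enjoys a symmetry among $\bar{x}_{t+1}$, $\bar{m}_t$, and $\hat{v}_t$, all being sample means across clients, which the max aggregation breaks. In the descent-lemma expansion of $f(\bar{x}_{t+1}) - f(\bar{x}_t)$, a cross term of the form $\langle \nabla f(\bar{x}_t), \bar{m}_t \odot (\bar{\eta}_t - \eta_t) \rangle$ that vanished in expectation previously no longer does. The plan is to bound it using the universal inconsistency from step three together with $\|m_{t,K,i}\|_\infty \le G_\infty$ (immediate from Assumption~\ref{Assumption:bounded gd}), producing an extra additive contribution of order $\alpha G_\infty^2 / \epsilon$ which, after telescoping in $t$ and dividing by $T$, enters only the subdominant entries of $\Phi$. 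Choosing $\alpha = \min(\sqrt{N/(KT)}, 3\epsilon/(20L))$ as in the Linear Speedup Corollary then preserves the $O(1/\sqrt{NKT})$ rate and completes the argument.
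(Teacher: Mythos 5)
Your proposal is correct and follows essentially the same route as the paper: the aggregation rule enters the convergence analysis only through the coordinate-wise boundedness $\epsilon^{2}\le\hat v\le G_{\infty}^{2}$, the monotonicity/telescoping of $\hat v$ across rounds, and the client-level deviation of $\hat V_{r,i}$ from the round-initial value. Since $\max_{i}\hat v_{t,K,i}\ge\frac{1}{N}\sum_{i}\hat v_{t,K,i}$, the round-initial value still dominates the auxiliary average $\bar{\hat V}_{tK}$, so the paper simply re-derives the two lemmas bounding $\sum_{r,i}\Vert\hat V_{r,i}-\hat V_{r-1,i}\Vert_{1}$ and $\sum_{r,i}\Vert\hat V_{r,i}-\hat V_{r-1,i}\Vert^{2}$ with $\bar{\hat V}_{tK}$ replacing $\hat v_{t,0,i}$ in the triangle inequality, obtains identical constants, and concludes — exactly your "transfer verbatim up to constants" plan. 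The one place you diverge is the "main obstacle" you anticipate: the cross term $\langle\nabla f(\bar x_{t}),\bar m_{t}\odot(\bar\eta_{t}-\eta_{t})\rangle$ does not actually arise. The descent-lemma decomposition never uses any identity between the broadcast learning rate and the average of the clients' learning rates; the auxiliary $\bar\theta_{r}$ is an average \emph{defined for the analysis} regardless of the algorithm, and its deviation from $\theta_{r,i}$ is always bounded through $\Vert\hat V_{r,i}-\bar{\hat V}_{r}\Vert_{1}$ rather than assumed to vanish. The only term that vanishes by symmetry (step (a) of Lemma~\ref{lemma:init}) is $\sum_{i}(m_{t,0,i}-M_{r-1,i})\odot\eta_{t,0,i}$, which relies on the momentum being averaged and on $\eta_{t,0,i}$ being identical across clients after broadcast — both still true under max aggregation of $\hat v$. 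Your proposed fix for the non-existent term is harmless (the telescoped $\sum_{t}(\eta_{t-1}-\eta_{t})\le 1/\epsilon$ bound you set up would indeed make it subdominant), but be aware that an un-telescoped per-step additive $O(\alpha)$ contribution would \emph{not} be benign: with $\alpha=\sqrt{N/(KT)}$ it scales as $N/\sqrt{NKT}$ and would destroy linear speedup, which is why the paper is careful to route every aggregation-dependent term through a telescoping sum over the whole horizon.
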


\subsection{Adaptive interval}
\begin{theorem}[Adaptive interval]
\label{adaptive-interval-theorem}
Under the Assumptions \ref{Assumption: smoothness},\ref{Assumption:bounded variance},\ref{Assumption:bounded gd}, we take $\alpha = \min(\sqrt{\frac{N}{\sum_{t=0}^{T-1} K_{t}}},\frac{3\epsilon}{20L})$ and full clients participation in Algorithm \ref{alg:local-AMSGrad-final}. The adaptive local update is set as $ K_{t}<O(log t)$. We have
\begin{align*}
\mathbb{E}\left[\frac{\sum_{t=0}^{T-1}\sum_{k=1}^{K_{t}}\Vert \nabla f(\bar{x}_{t,k}) \Vert^2} {\sum_{t=0}^{T-1} K_{t}}\right] = O\left(\frac{1}{\sqrt{N\sum_{t=0}^{T-1} K_{t}}}\right),
\end{align*}
where $N$ is the number of the clients, $K_{t}$ is the period of the local updates, $T$ is the iteration number of the global synchronization. 
\end{theorem}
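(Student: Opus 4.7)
The plan is to adapt the argument of Theorem~\ref{full-client-theorem} by carrying the round-dependent interval $K_{t}$ through every estimate, and then to exploit the $K_{t}=O(\log t)$ cap to keep the residual terms small. First I would re-examine the descent lemma for the averaged iterate $\bar{x}_{t,k}=\frac{1}{N}\sum_{i}x_{t,k,i}$: smoothness of $f$, unbiasedness of the stochastic gradient, the two-sided sandwich $\alpha/G_{\infty}\leq \alpha/\sqrt{\hat v_{t,k,i}}\leq \alpha/\epsilon$ on the effective per-coordinate step size, and the delayed-expectation trick that decouples the local adaptive learning rate from the momentum all carry over verbatim, because these arguments operate inside a single synchronization round and never appealed to the rounds having equal length.

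Next I would re-derive the client-drift bound $\mathbb{E}\Vert \bar{x}_{t,k}-x_{t,k,i}\Vert^{2}$. In the fixed-$K$ proof this grows like $\alpha^{2}K^{2}$ at the last local step and contributes $K^{4}$-type terms once it is squared and re-inserted into the descent inequality. Tracking $K_{t}$ in place of $K$, and telescoping $f(\bar{x}_{t,1})-f(\bar{x}_{t+1,1})$ over $t=0,\dots,T-1$, yields an inequality of the schematic form
\begin{align*}
\mathbb{E}\!\left[\frac{\sum_{t,k}\Vert \nabla f(\bar{x}_{t,k})\Vert^{2}}{\sum_{t}K_{t}}\right]
&\lesssim \frac{f(x_{0})-f^{\ast}}{\alpha\sum_{t}K_{t}}+\alpha^{2}\frac{\sum_{t}K_{t}^{p}}{\sum_{t}K_{t}}\\
&\quad+\frac{\alpha}{N}+\frac{\text{l.o.t.}}{\sum_{t}K_{t}},
\end{align*}
where the exponent $p\in\{2,3,4\}$ absorbs the various $K_{t}$-dependent coefficients already present in the expression $\Phi$ of Theorem~\ref{full-client-theorem}.

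The third step is to substitute $K_{t}\leq C\log t$ together with $\alpha=\min\!\bigl(\sqrt{N/\sum_{t}K_{t}},\,3\epsilon/(20L)\bigr)$. Since $\sum_{t=0}^{T-1}K_{t}^{p}=O(T(\log T)^{p})$ while $\sum_{t=0}^{T-1}K_{t}=\Theta(T\log T)$, the ratio $\sum K_{t}^{p}/\sum K_{t}$ is only polylogarithmic in $T$ and $\alpha^{2}=N/\sum K_{t}$. A direct calculation then shows that the client-drift contribution is $O\!\bigl(N\,\mathrm{polylog}(T)/\sum_{t}K_{t}\bigr)$, which is $o(1/\sqrt{N\sum_{t}K_{t}})$; the variance term $\alpha/N = 1/\sqrt{N\sum_{t}K_{t}}$ therefore dominates and delivers the claimed rate. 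The first term, $(f(x_{0})-f^{\ast})/(\alpha\sum_{t}K_{t})$, is handled by the minimum in the definition of $\alpha$ exactly as in the corollary to Theorem~\ref{full-client-theorem}.

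The main obstacle I anticipate is the drift bookkeeping in the second step, because several estimates in the fixed-$K$ proof implicitly treat $K$ as a uniform worst-case constant that is pulled out of sums over $t$. I would resolve this by bounding each round's drift by its own $K_{t}$ and noting that the AMSGrad-specific ingredients -- the monotonicity of $\hat v_{t,k,i}$ in $k$ and its sandwich between $\epsilon^{2}$ and $G_{\infty}^{2}$ -- are genuinely round-local and hence insensitive to variation of $K_{t}$ across $t$. Once this is carried out cleanly, the $K_{t}<O(\log t)$ restriction is precisely the threshold that keeps the added overhead polylogarithmic and therefore preserves the $1/\sqrt{N\sum_{t}K_{t}}$ linear-speedup rate.
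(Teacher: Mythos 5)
Your proposal is correct and follows essentially the same route as the paper: rerun the fixed-interval analysis with cumulative time indices, observe that all within-round AMSGrad estimates are round-local, re-derive the drift and $\hat v$-difference lemmas with round-dependent intervals, and use $K_t=O(\log t)$ together with $\alpha=\sqrt{N/\sum_t K_t}$ to make every $K$-dependent residual polylogarithmic relative to $\sum_t K_t=\Theta(T\log T)$, so the $\alpha/N$ variance term dominates. The only cosmetic difference is that the paper bounds each round's drift by the worst-case (largest) interval $K_{T-1}$ rather than carrying $\sum_t K_t^{p}$ per round, which yields the same polylog conclusion.
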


\begin{remark}[Communication complexity]
\label{adaptive-interval-remark}
Our method has a convergence rate of  $O\left(\frac{1}{\sqrt{N\sum_{t=0}^{T-1} K_{t}}}\right)$.
We take $K_{t}=\log(t)$ for simplicity. 
To achieve an $O(\epsilon)$ accurate solution, it needs $\frac{1}{N \epsilon^{2} W(\frac{1}{N \epsilon^{2} })}$ iterations where $W(\cdot)$ is Lambert W-Function. 
So the communication complexity is $O\left(\frac{1}{ \epsilon^{2} W(\frac{1}{N \epsilon^{2} })}\right)$. 
\end{remark}
\section{Experiments}

\begin{figure*}[t]
   \centering
       \begin{subfigure}{0.24\linewidth}
    \includegraphics[width=\linewidth]{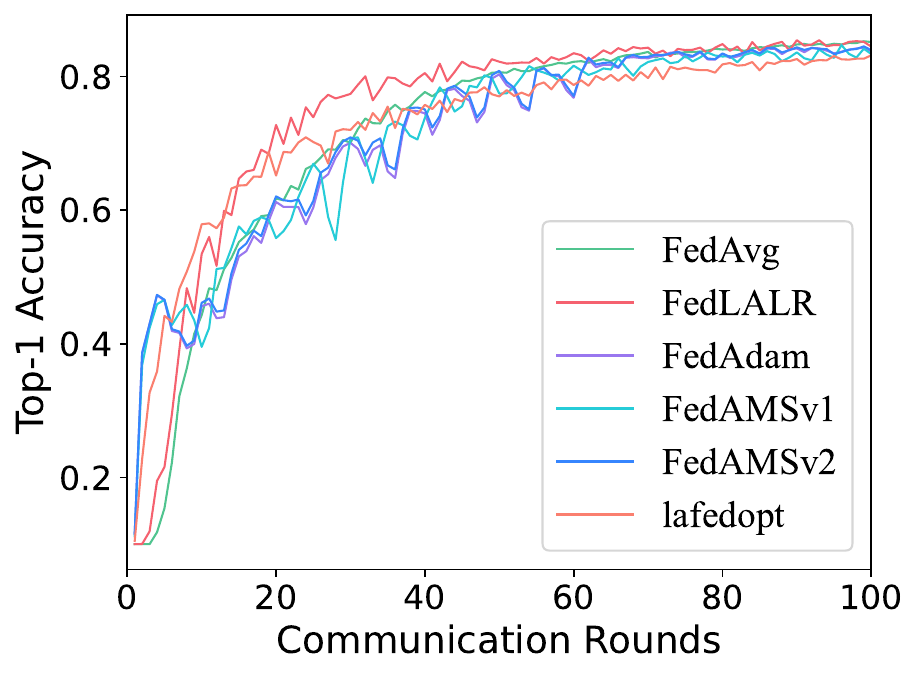}
    \caption{CIFAR10 (non-IID 0.6)}
   \end{subfigure}\!\!
    \begin{subfigure}{0.24\linewidth}
    \includegraphics[width=\linewidth]{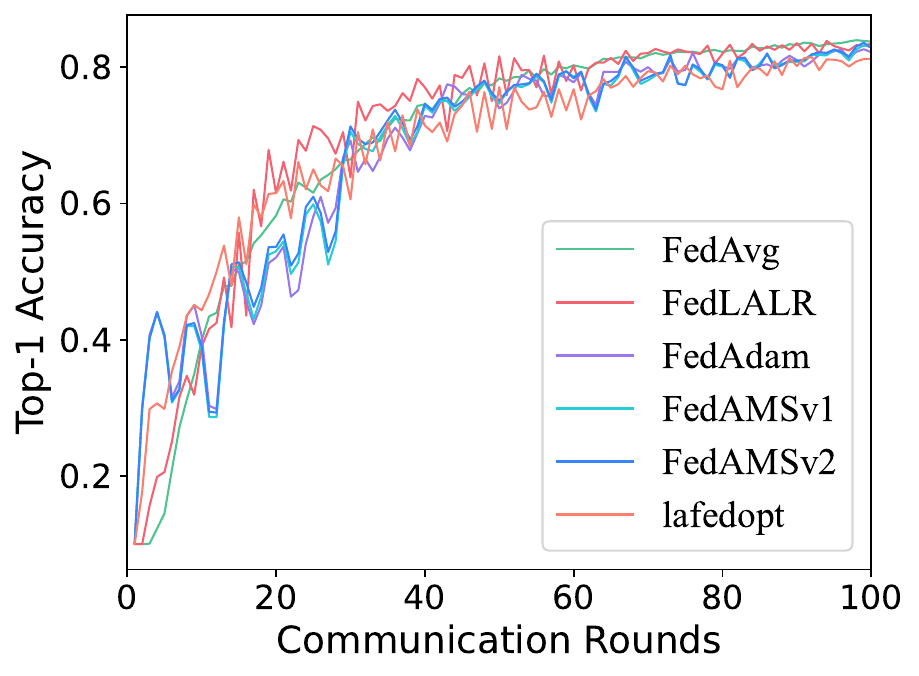}
    \caption{CIFAR10 (non-IID 0.3)}
   \end{subfigure}\!\!
\label{fig:10-100-D6}
   \begin{subfigure}{0.24\linewidth}
    \includegraphics[width=\linewidth]{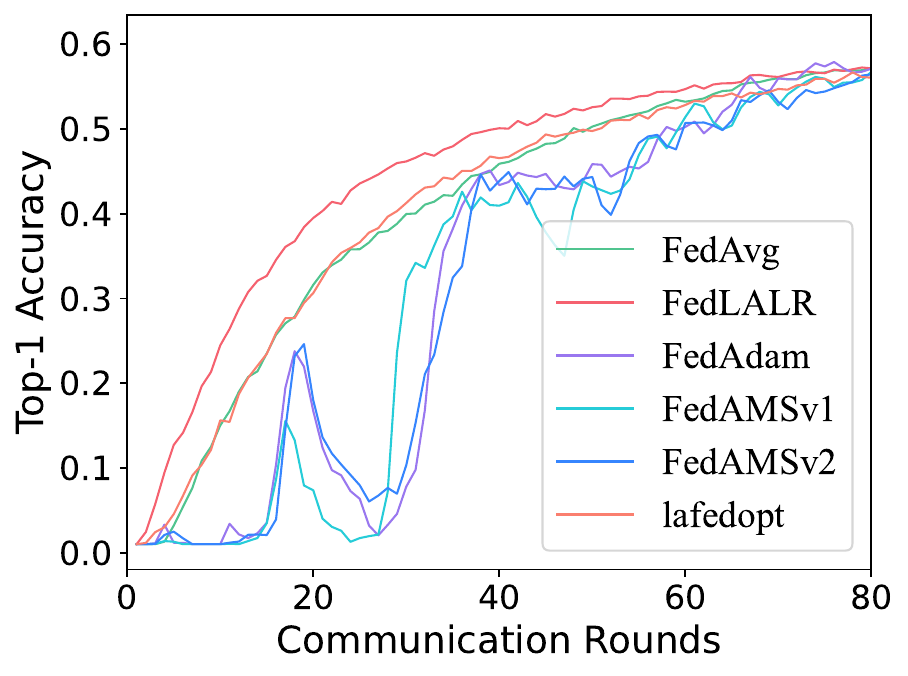}
    \caption{CIFAR100 (non-IID 0.6)}
   \end{subfigure}\!\!
    \begin{subfigure}{0.24\linewidth}
    \includegraphics[width=\linewidth]{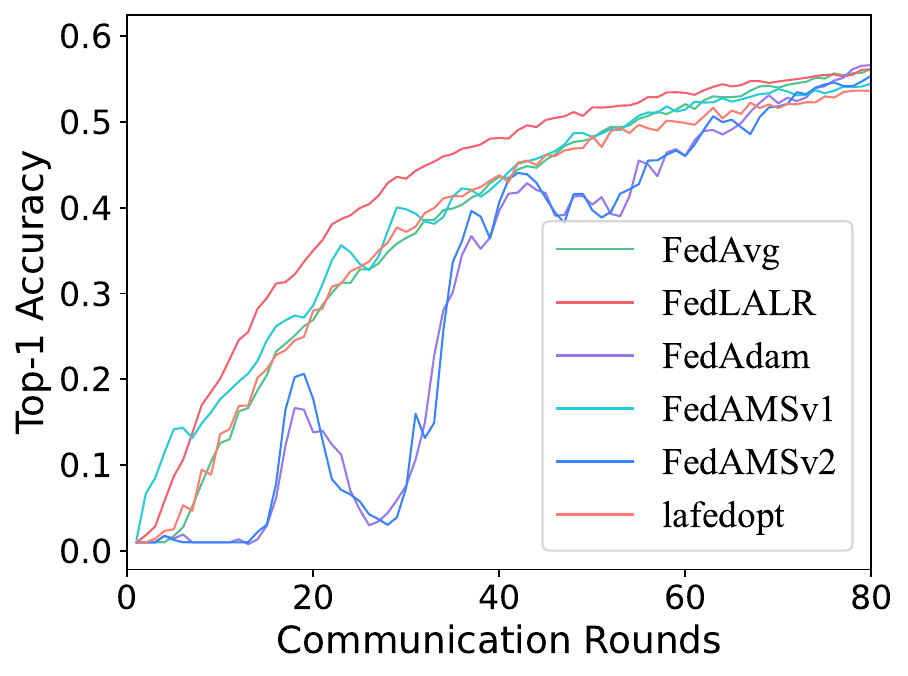}
    \caption{CIFAR100 (non-IID 0.3)}
   \end{subfigure}\!\!
\caption{The top-1 accuracy v.s. communication rounds. The dataset is split into 100 parties based on the Dirichlet distribution with the parameter 0.6 and 0.3, respectively. The two figures on the left come from the CIFAR-10 dataset, while the two on the right are from the CIFAR-100 dataset.  The server chooses 50 clients to participate in the training at each communication round.}
\label{fig:10-100-D3}
\end{figure*}

\begin{figure}[h]
   \centering
    \begin{subfigure}{0.48\linewidth}
    \includegraphics[width=\linewidth]{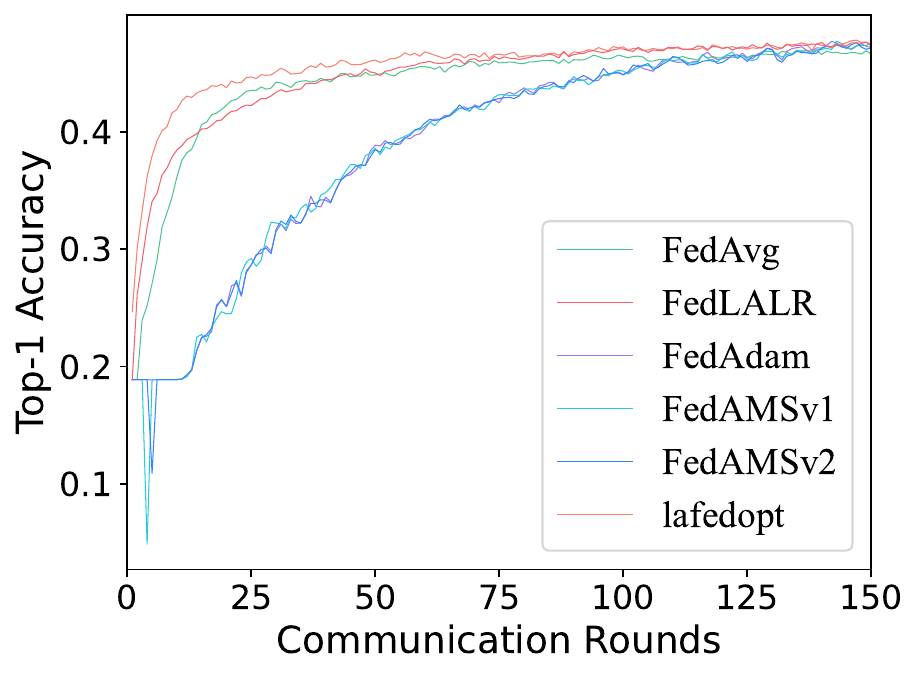}
    \caption{Test accuracy under Non-IID setting}
   \end{subfigure}
    \begin{subfigure}{0.48\linewidth}
    \includegraphics[width=\linewidth]{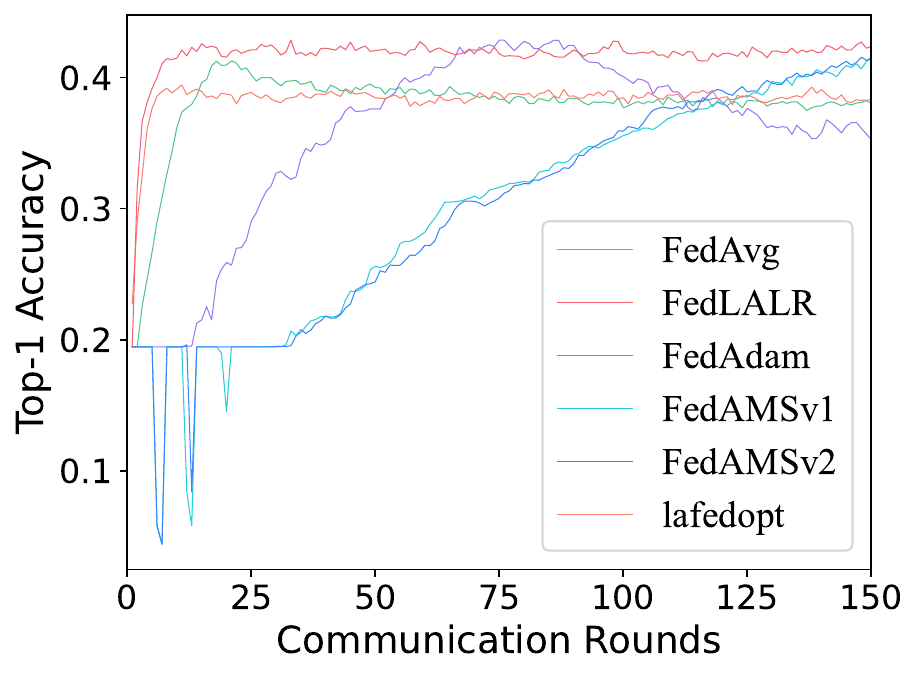}
    \caption{Test accuracy under IID setting}
   \end{subfigure}
\caption{The loss and accuracy  v.s.  communication rounds. We choose 100 roles from the Shakespeare dataset as our clients. For the IID data distribution, we mix data across the clients. The server chooses 10 clients to participate in the training at each communication rounds.}
\label{fig:shakespeare}
\end{figure}

In this section, we demonstrate the efficacy of the proposed \methodname\ by applying it to solve federated learning problems on CV and NLP tasks.

\subsection{Implementation details}

\textbf{Dataset}
We conduct experiments on two tasks, an image classifier task and a language model task.
In the image classifier task, we evaluate two benchmarks, the CIFAR-10 and CIFAR-100 \cite{krizhevsky2009learning}, which contain 50000 images with 10 classes and 100 classes, respectively. To generate a non-IID dataset, we use Dirichlet distribution same as \cite{DBLP:conf/cvpr/LiHS21,DBLP:conf/iclr/AcarZNMWS21}.
In Figure \ref{fig:data-hetergeneity}, we split the data into 100 parties and draw from the Dirichlet distribution with the parameter of 0.3 and 0.6, respectively.  In the language model task, the Shakespeare dataset is collected from the work of William Shakespeare. Each client represents a speaking role.

\textbf{Baselines.} 
In our study, we conducted a comparative analysis of our approach with four existing methods: FedAvg \cite{DBLP:conf/aistats/McMahanMRHA17}, FedAdam \cite{DBLP:conf/iclr/ReddiCZGRKKM21}, FedAMS \cite{wang2022communication}, and LaFedOPT \cite{DBLP:journals/corr/abs-2106-02305}. Among these methods, FedAvg stands out as one of the most widely adopted federated optimization techniques. FedAdam, on the other hand, is recognized for its effectiveness in adaptive federated learning, as it automatically adjusts the learning rate on the server. Similarly, the FedAMS method utilizes the AMSGrad algorithm to implement an adaptive learning rate strategy during the server training phase. Notably, the FedAMS method has two variants FedAMSv1 and FedAMSv2 that differ in the computation of epsilon. Furthermore, during the client training period, Wang et al. propose a local adaptive federated optimization method referred to as LAFedOPT.

\textbf{Network architectures.} 
For the CV task, we use Conv-Mixer \cite{trockman2022patches} as the deep neural network.
The kernel size is 5, the patch size is 2 and the number of repetitions of the ConvMixer layer is 8.
The other task is training a language model on the Shakespeare dataset based on the LEAF \cite{DBLP:journals/corr/abs-1812-01097} with a stacked LSTM, the same as \cite{DBLP:conf/iclr/AcarZNMWS21}.

\textbf{Hyperparameter setting.}
To show the  effect of heterogeneity, 
the CIFAR10\&100 datasets are partitioned based on the Dirichlet distribution with the parameter of 0.6 and 0.3.
We carefully tune the hyperparameter, including the base learning rate, weight decay parameter and the frequency of learning rate decay, to achieve reasonable results and report the tuned parameters as follows.
The batch size is set to 50, the learning rate decay is set to 0.998, and the weight decay is set to 0.001.
For the CIFAR10 dataset, we set the momentum parameter $(\beta_{1},\beta_{2})$ as $(0.9,\ 0.99)$ for FedAdam, FedAMSv1, and FedAMSv2 except \methodname\ with $(0.9,\ 0.995)$ and LaFedOPT with $(0.8,\ 0.999)$.
The $\epsilon$ is set to 0.01, 1e-4, 1e-2, 1e-4 and 1e-8 for FedAdam, FedAMSv1, FedAMSv2, LAFedOPT and \methodname,  
 respectively. 
 The local learning rate is set to 1.0 for FedAvg, 1e-4 for LAFedOPT, 2e-3 for \methodname\ and 0.1 for FedAdam, FedAMSv1, and FedAMSv2,. 
 The global learning rate of FedAMSv1 and FedAMSv2 is set to 0.1, and it is set to 1.0 for FedAdam.
 For the CIFAR100 dataset, the momentum parameter changes to $(0.9,\ 0.999)$ and $(0.9,\ 0.995)$ for LAFedOPT and \methodname. 
 The local learning rate is set to 2.0 for FedAvg, 1e-3 for LAFedOPT, 2e-3 for \methodname\ and 1.0 for FedAdam, FedAMSv1, and FedAMSv2.
  The global learning rate of FedAdam, FedAMSv1 and FedAMSv2 is set to 0.1. 

In this paper, we conducted the training process with a considerable number of participating clients and a high number of local epochs. Specifically, we set the communication rounds to 100 to accommodate this extensive participation. The total training data has been iterated through approximately 250 epochs to achieve comprehensive learning.

For the Shakespeare dataset, both the hyperparameter with the non-IID setting and the IID setting are the same. We set batch size as 100, and weight decay parameter as 1e-4. The local learning rate is 1.0 for FedAvg, 5e-2 for \methodname, 1e-3 for LAFedOPT, and 0.1 for FedAdam, FedAMSv1, and FedAMSv2.
  The global learning rate of FedAdam, FedAMSv1 and FedAMSv2 is set to 0.1.
  The momentum parameter $(\beta_{1},\beta_{2})$ is set as $(0.9,\ 0.998)$ for FedAdam, FedAMSv1, and FedAMSv2 except \methodname\ with $(0.8,\ 0.998)$ and LaFedOPT with $(0.5,\ 0.999)$, respectively. The local epoch is set to 5.

\subsection{CV tasks}
Results in Figures \ref{fig:10-100-D3} show the performance curves under the setting with 100 local clients and Dirichlet distribution parameter being 0.6 and 0.3, respectively.  At each communication round, the server chooses 50 clients to update the model parameter. The local training epoch is set up to 5.  

The results show that our optimization method converges the fastest in the six algorithms.  
The experiment also shows that the result of our method achieves competitive generalization. 
In the early period of the communication rounds, our method converges fast. At the end of the train steps, our results indicate that \methodname\ gets slightly better performance than other methods. 

\begin{figure*}[ht]
   \centering
    \begin{subfigure}{0.24\linewidth}
    \includegraphics[width=\linewidth]{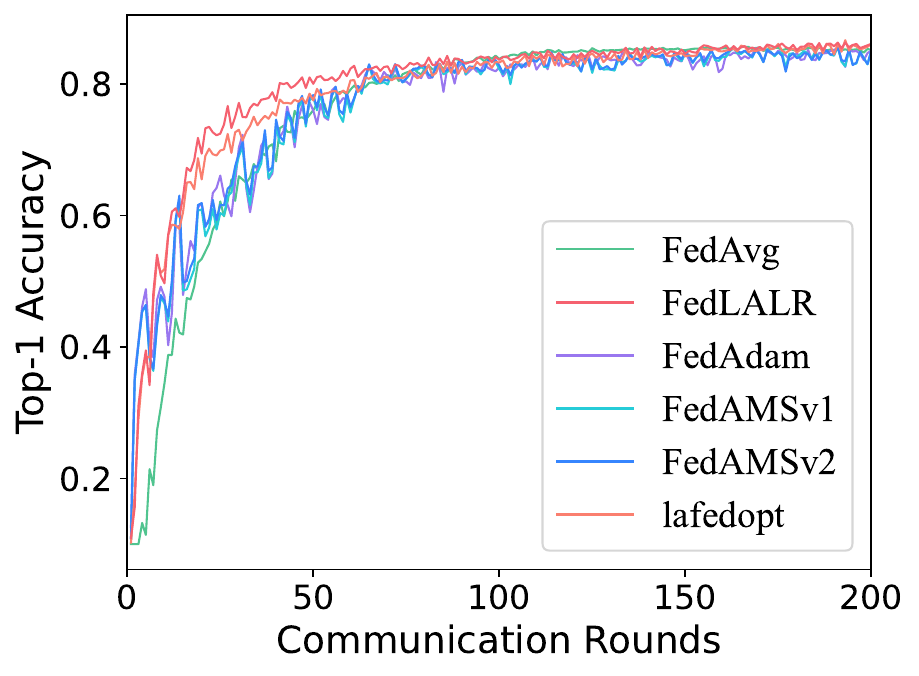}
    \caption{CIFAR10 (25 clients)}
   \end{subfigure}
    \begin{subfigure}{0.24\linewidth}
    \includegraphics[width=\linewidth]{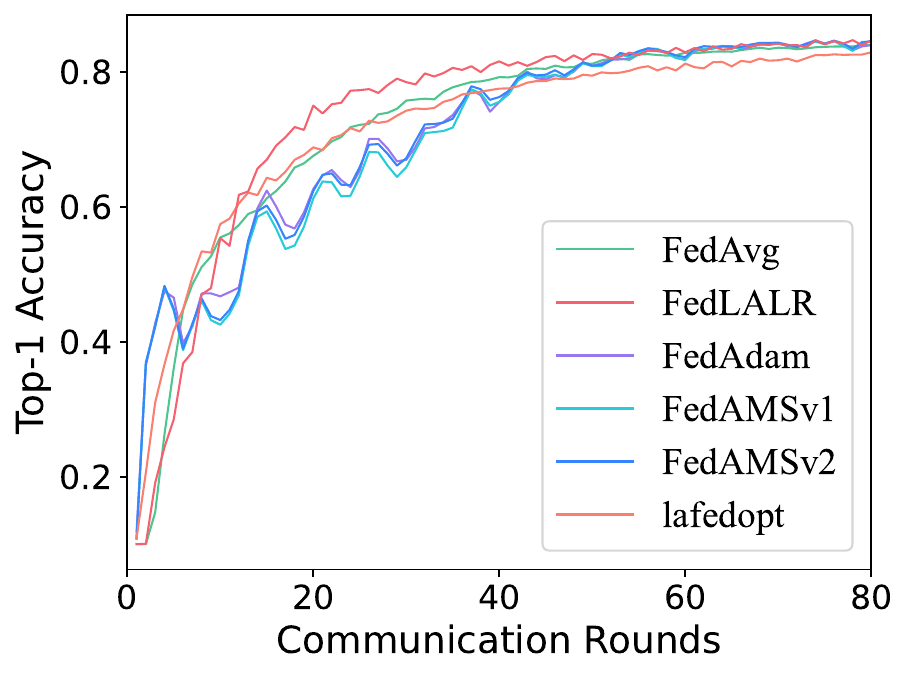}
    \caption{CIFAR10 (75 clients)}
   \end{subfigure}
    \begin{subfigure}{0.24\linewidth}
    \includegraphics[width=\linewidth]{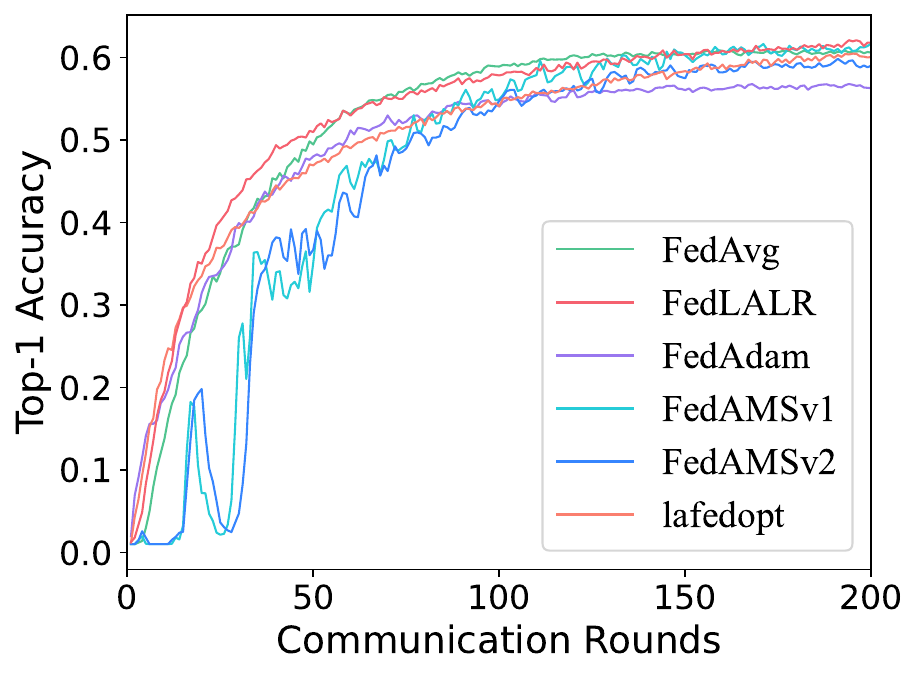}
    \caption{CIFAR100 (25 clients)}
   \end{subfigure}
    \begin{subfigure}{0.24\linewidth}
    \includegraphics[width=\linewidth]{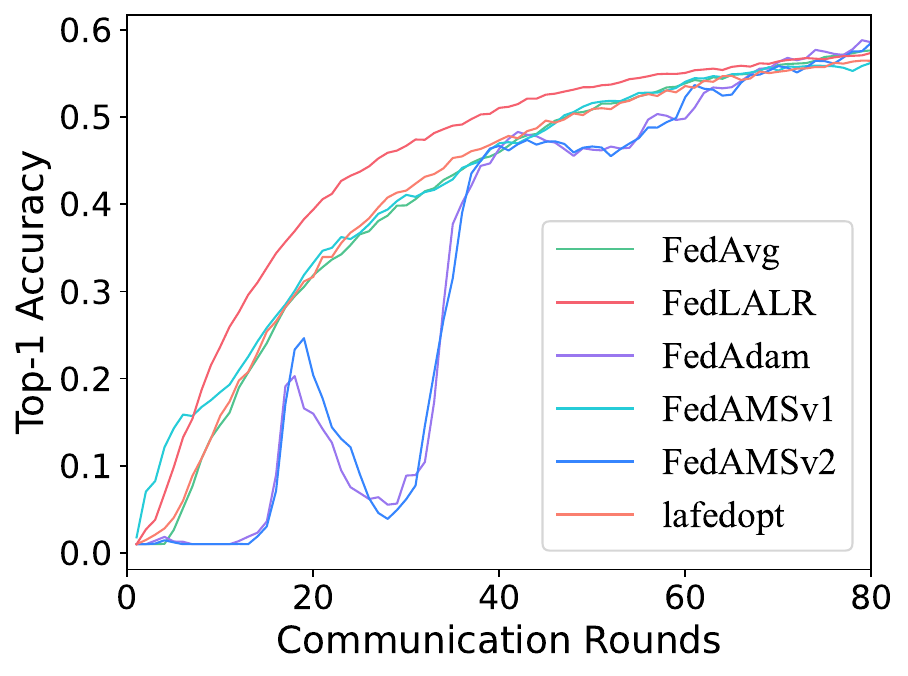}
    \caption{CIFAR100 (75 clients)}
   \end{subfigure}
\caption{The top-1 accuracy v.s. communication rounds. Two figures on the left were generated using the CIFAR-10 dataset, with different numbers of clients: 25 and 75, respectively. On the right, there are two additional figures generated using the CIFAR-100 dataset, with the same client numbers: 25 and 75.}
\label{fig:10-50-D6}
\end{figure*}

\begin{figure*}[h]
   \centering
     \begin{subfigure}{0.23\linewidth}
     \includegraphics[width=\linewidth]{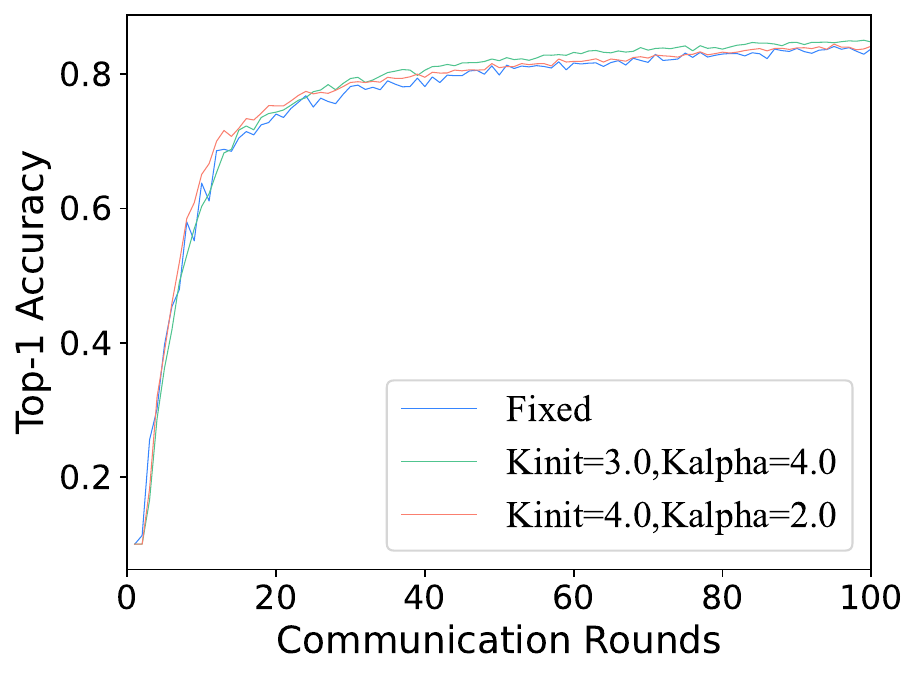}
      \caption{CIFAR-10(Acc)}
   \end{subfigure}
     \begin{subfigure}{0.23\linewidth}
      \includegraphics[width=\linewidth]{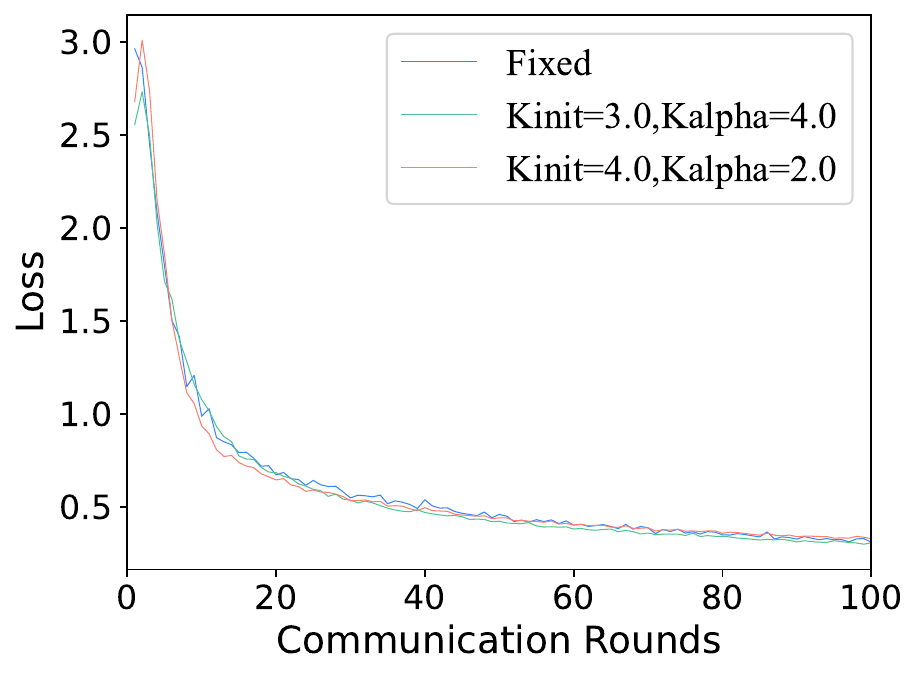}
      \caption{CIFAR-10(Loss)}
   \end{subfigure}
     \begin{subfigure}{0.23\linewidth}
      \includegraphics[width=\linewidth]{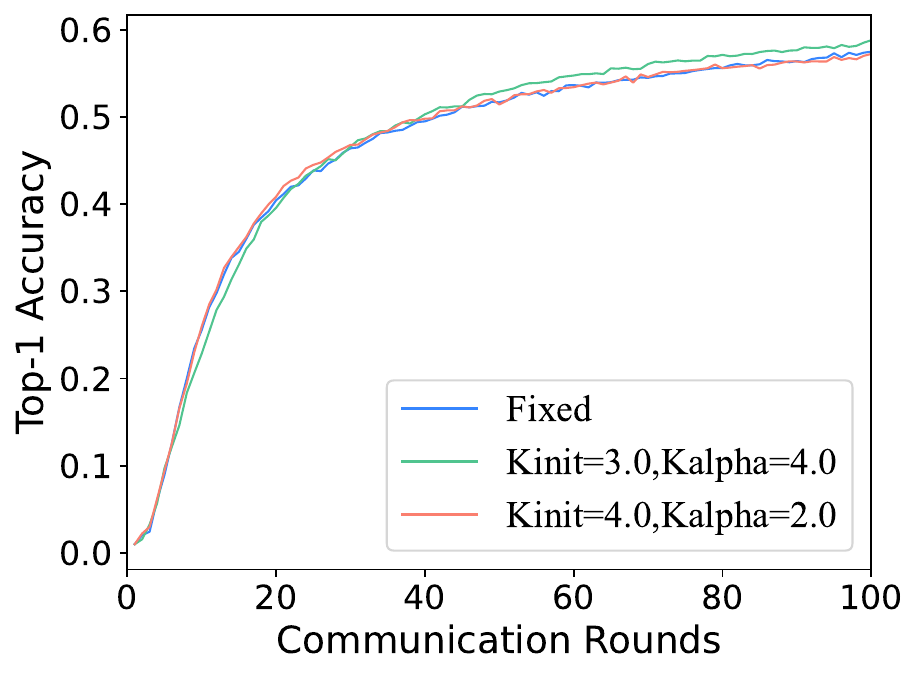}
      \caption{CIFAR-100(Acc)}
   \end{subfigure}
     \begin{subfigure}{0.23\linewidth}
      \includegraphics[width=\linewidth]{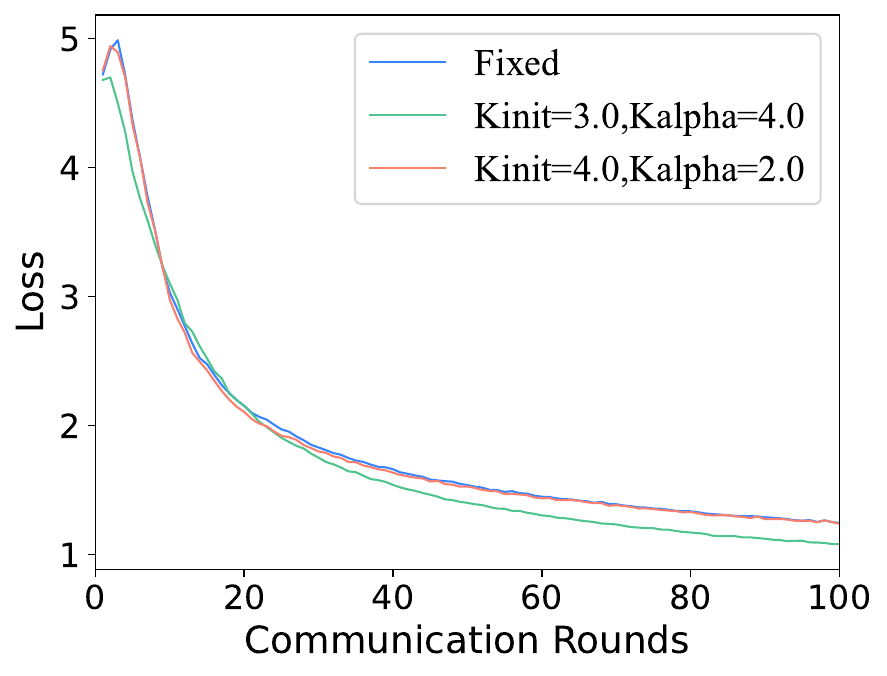}
      \caption{CIFAR-100(Loss)}
   \end{subfigure}
\caption{The top-1 accuracy v.s. communication rounds. We choose 50 clients at each communication round and adaptively change the local interval. We compare optimization methods with a fixed interval and methods with an adaptive interval.}
\label{fig:adap}
\end{figure*}

\subsection{NLP task}
This dataset is built from the works of Shakespeare. Each client corresponds to a speaking role. For the Non-IID setting, each client has its data of the role. For the IID settings, the data is mixed and then split into pieces before distributing. For the Non-IID setting, we choose 100 roles for each clients. 
In each round, we choose 10 clients to participate in the communication. For the IID setting, 10 clients with mixed data are chosen randomly to participate in the local training step. 

The results are shown in Figure \ref{fig:shakespeare}, which illustrates that our method gets the best result both on the Non-IID and IID settings. 
Under the Non-IID setting, both \methodname and LAfedopt excel in convergence rate and accuracy when compared to other baseline methods. While our method converges slightly slower than LAfedopt, it achieves superior accuracy. Both FedAMSv1 and FedAMSv2 exhibit slow convergence and display instability during the initial phases. FedAvg, though converging quickly, falls short in its final performance compared to other methods.
In the IID setting, our method stands out, boasting the highest accuracy and convergence rate. While FedAvg, LAfedopt, and FedAdam demonstrate rapid convergence, their ultimate accuracy does not match up to our method. Additionally, the trajectories of FedAMSv1 and FedAMSv2 show inconsistencies, lacking smoothness.
These results indicate that our method performs well in NLP tasks.

\subsection{Ablation study}

In this section, we conduct several ablation studies to further demonstrate the efficacy of the proposed \methodname.

\textbf{Heterogeneity.}\ 
In this section, we evaluate our method under two distinct heterogeneous settings. Specifically, we examine the effects of using parameters 0.3 and 0.6 for the Dirichlet distribution to dictate data participation. As illustrated in Figure \ref{fig:data-hetergeneity}, a lower parameter value for the Dirichlet distribution implies greater data heterogeneity, meaning that the data assigned to each client may encompass fewer labels in the classification task. Results specific to the CV task are depicted in Figure \ref{fig:10-100-D3}.

Observing the Accuracy-Communication Rounds curve, our method demonstrates rapid growth among the six evaluated methods across both heterogeneous settings. For the CIFAR-10 dataset, our approach achieves superior accuracy. Meanwhile, in the CIFAR-100 dataset, our method's curve rises more swiftly compared to the curves of other methods. Notably, our technique proves more stable than both FedAdam, FedAMSv1, and FedAMSv2, as the latter three exhibit significant accuracy drops during training.
This suggests that our method is well-equipped to handle situations where the data is highly heterogeneous.


\textbf{Scalability.}\ 
In this section, we assess the scalability of our method by involving varying numbers of clients in the training process. We partition the CIFAR-10 and CIFAR-100 datasets into 100 groups using the Dirichlet distribution with a parameter of 0.6. We then select 25 clients and 75 clients to participate in the training during each communication round.

When opting for 25 clients per communication round, the total number of communication rounds extends to 200. Conversely, with 75 participating clients, the total number of communication rounds reduces to 80 based on similar calculations. The results are presented in Figure \ref{fig:10-50-D6}.

Notably, the accuracy curve of our method exhibits the steepest ascent in both CIFAR-10 and CIFAR-100 datasets. When compared to experiments involving 50 clients, our method produces equivalent results. This indicates that our method maintains its performance when scaling from 25 to 75 clients, underscoring its robustness in accommodating varying numbers of clients.

\begin{figure*}
    \centering
     \begin{subfigure}{0.24\linewidth}
    \includegraphics[width=\linewidth]{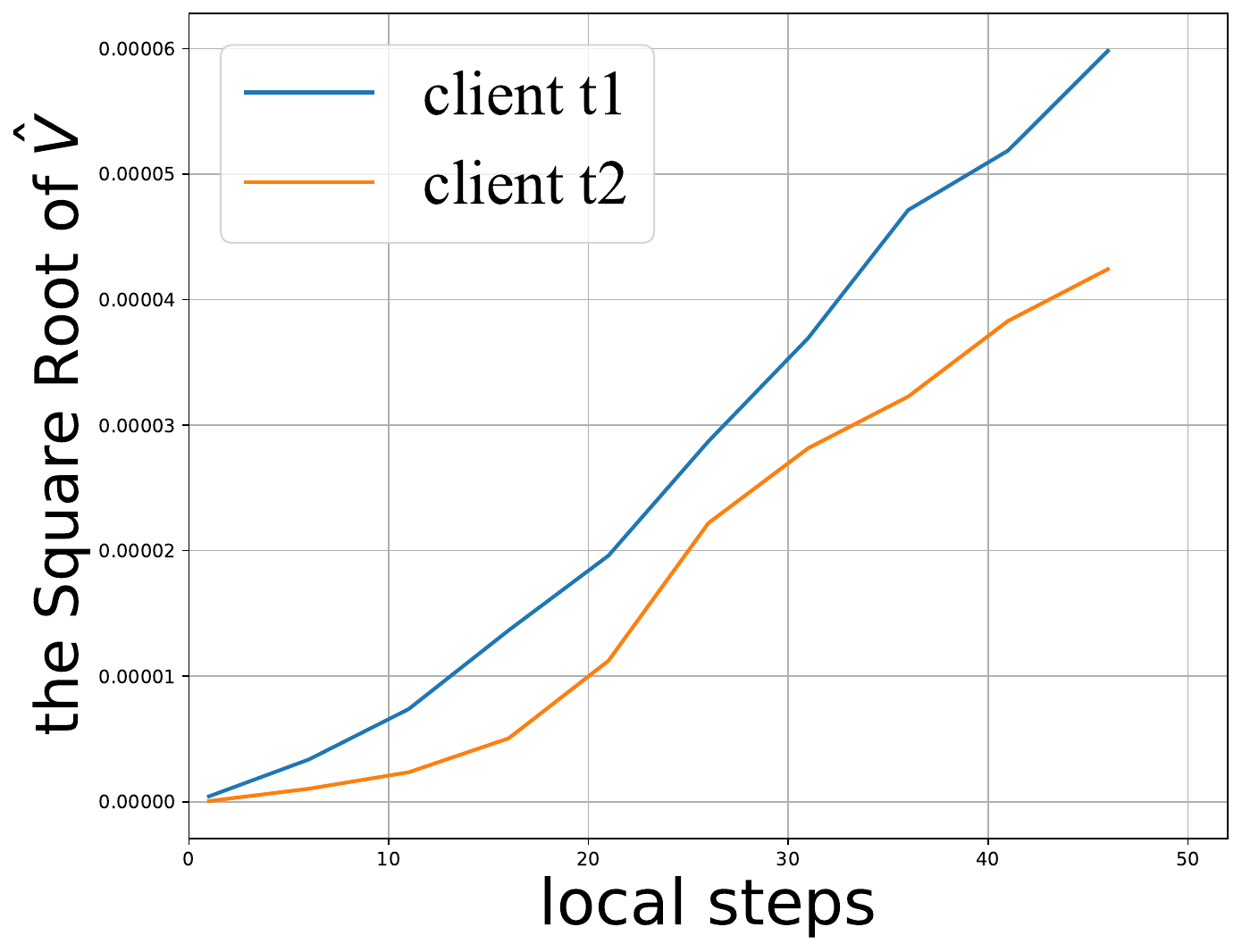}
   \end{subfigure}
   \begin{subfigure}{0.24\linewidth}
    \includegraphics[width=\linewidth]{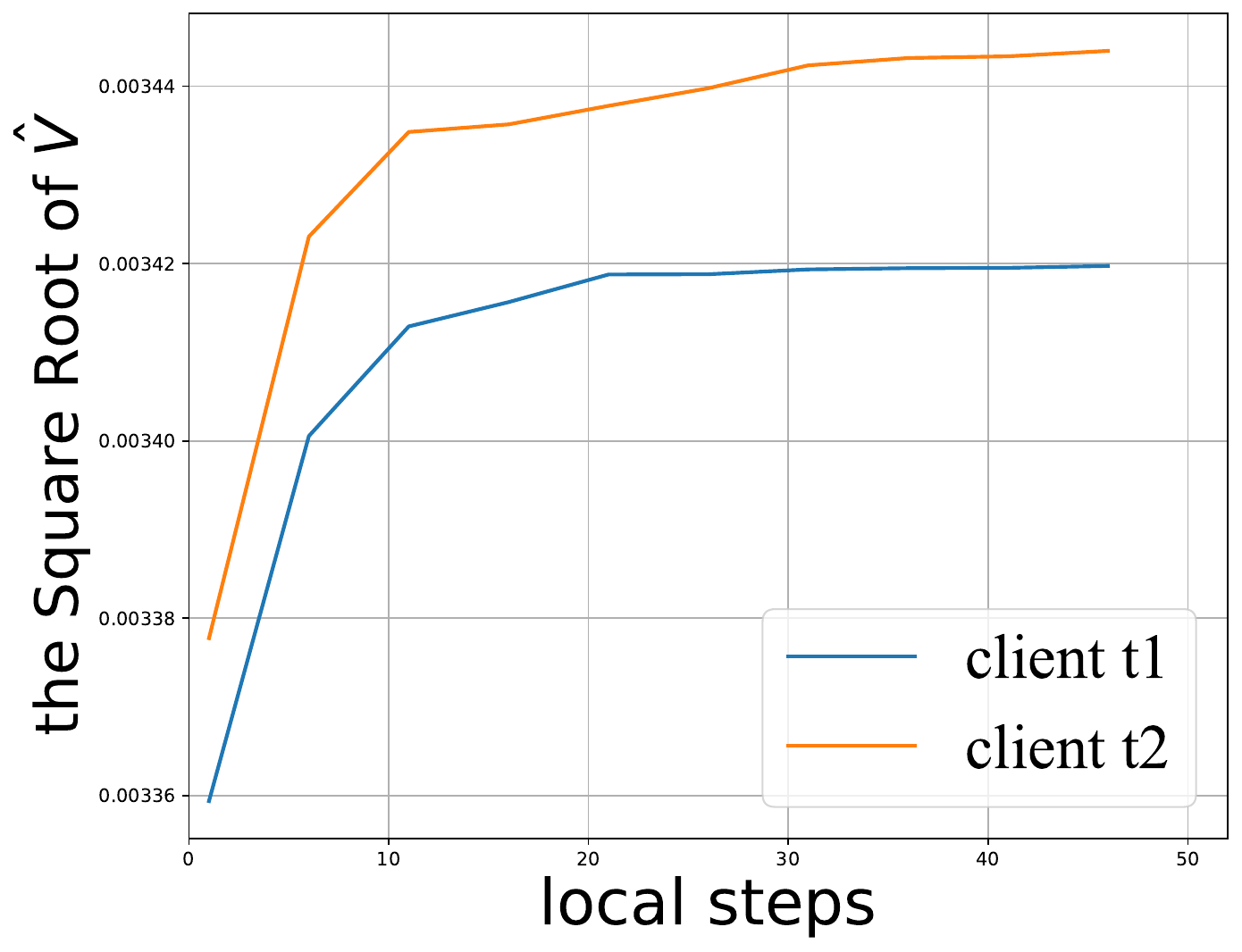}
   \end{subfigure}
   \begin{subfigure}{0.24\linewidth}
    \includegraphics[width=\linewidth]{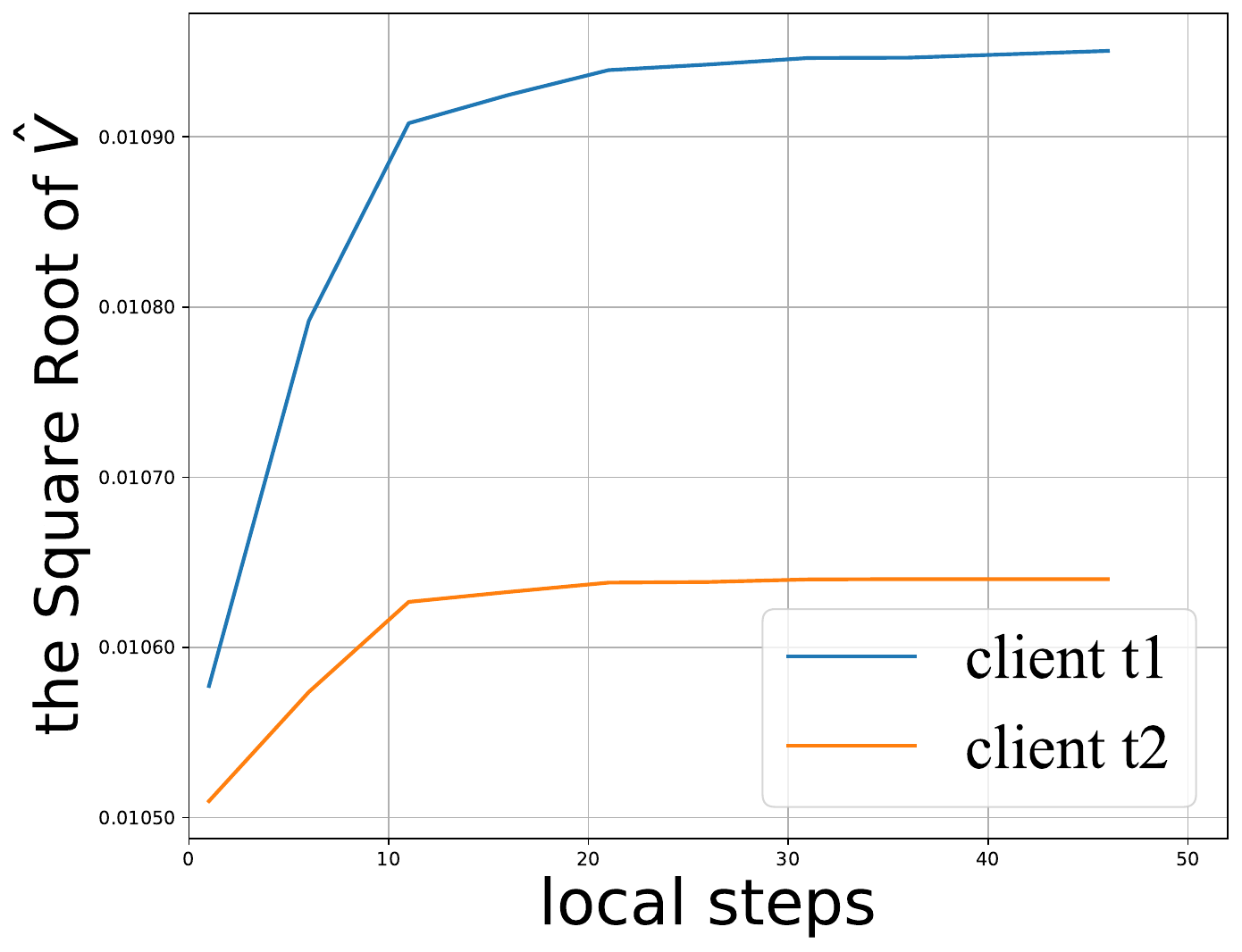}
   \end{subfigure}
   \begin{subfigure}{0.24\linewidth}
    \includegraphics[width=\linewidth]{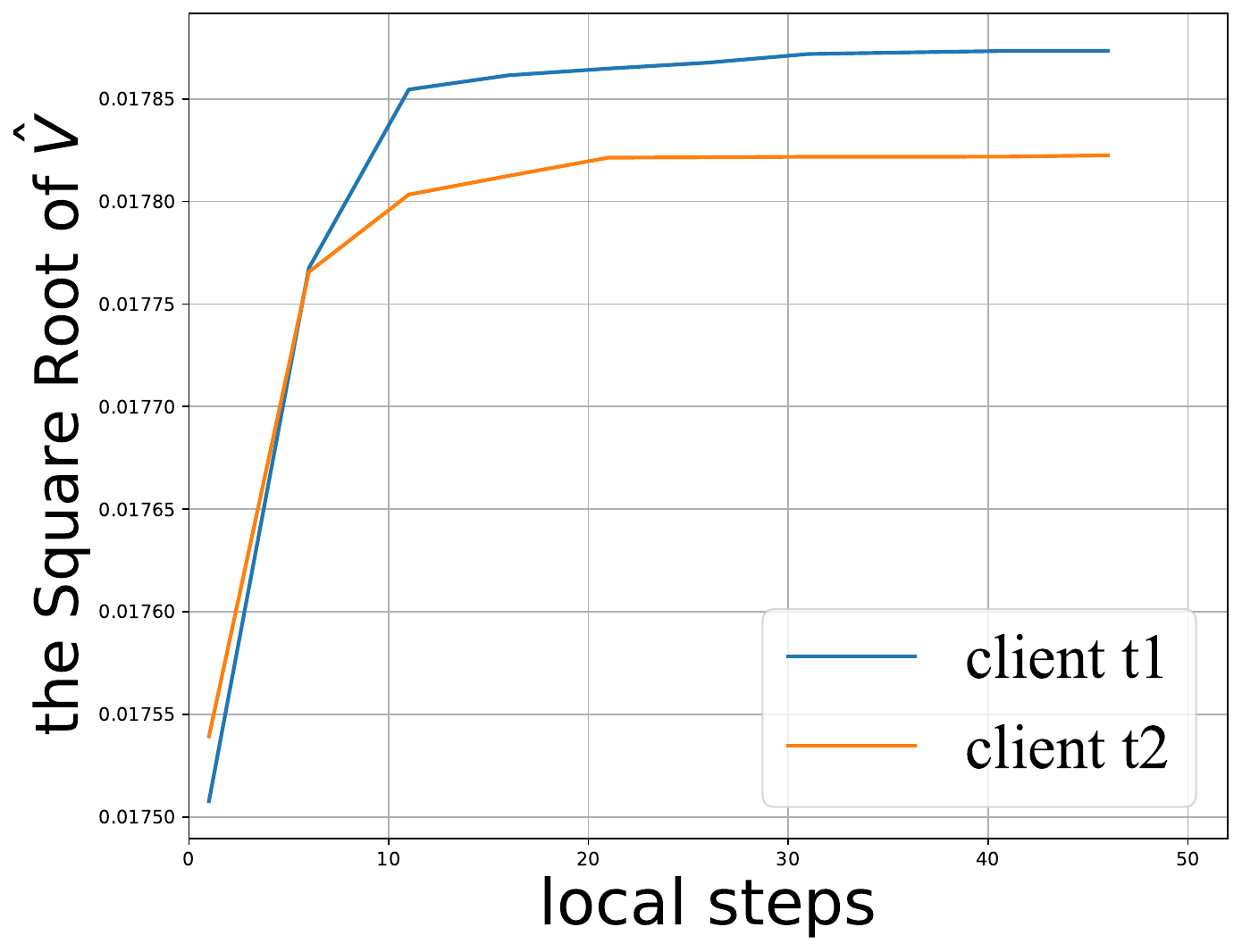}
   \end{subfigure}
    \caption{The square norm of $\hat{v}$ in local training at $1$-th, $31$-th, $61$-th, and $91$-th communication round, respectively.}
    \label{fig:v}
\end{figure*}

\textbf{Adaptive interval.} \ 
We test adaptive interval settings on the CIFAR10 dataset and CIFAR100 dataset with 100 parties and 50 clients participation to show the effectiveness of the adaptive local interval. We set $K_{t}=K_{initial}+\lfloor \log_{K_{\alpha}}{t}\rfloor$ based on the findings of Theorem \ref{adaptive-interval-theorem} and Remark \ref{adaptive-interval-remark}. In our experiments, we evaluate two distinct adaptive interval settings. In the first setting, $k_{init}$ is set to 3, and $k_{init}$ is set to 4.0. In the second setting, $k_{init}$ is set to 4.0 and $k_{init}$ is set to 2.0.
We also conduct an experiment with a fixed local interval number set at 10 to serve as a comparison benchmark.
We'd like to highlight that for the setting where  $k_{init}=4.0,k_{alpha}=2.0$, the number of the local interval is 10 when the number of communication rounds ranges between 64 and 100. The results of these experiments are reported in Figure \ref{fig:adap}. 

Our findings indicate that the adaptive interval method surpasses the fixed interval method in performance across two datasets. Specifically, when comparing the $k_{init}=4.0,k_{alpha}=2.0$ setting with the fixed local interval, the adaptive interval approach exhibits similar performance to the fixed method. However, it's noteworthy that the loss curve for the adaptive interval descends more rapidly during the initial communication rounds, eventually stabilizing to a comparable final loss. This suggests that the adaptive technique gains an early advantage from its initial smaller intervals.

Furthermore, when comparing the two adaptive interval configurations, the data indicates that the setting with $k_{init}=3.0,k_{alpha}=4.0$ yields better results in terms of both loss and accuracy metrics. This suggests that with careful tuning, the adaptive method can enhance accuracy and expedite the reduction in loss.

\textbf{Local learning rate trajectory.}\ 
According to our method, $\hat{v}$ automatically controls the  learning rate. Here, we drew the variation of  square norm of $\hat{v}$  from two random clients  during   local training.  It is recorded at   communication rounds of 1, 31, 61, and 91  on the CIFAR10 of 50 participation with 100 clients when the Dirichlet parameter is set to 0.6. The results are shown in Figure \ref{fig:v}. Two clients have different $\hat{v}$ due to the data heterogeneity.
During the local training steps, the learning rate tunes in each client. As these $\hat{v}$ increase with a different speed, the learning rate decreases with a different speed in each client.

\section{Conclusion}
In this paper, we propose \methodname\ method for federated learning, which can automatically adjust the learning rate at local steps to exploit the curvature information related to local data distribution.  Moreover, \methodname\ is proposed with a fixed interval and adaptive interval, respectively. We theoretically analyze the convergence rate of our proposed \methodname\ for the difficult non-convex stochastic setting, which indicates that our approach achieves linear speedup with both a fixed interval and adaptive interval, with respect the number of clients. Extensive experiments on the CV task and the NLP task show our method converges much faster, which also coincides with our theoretical analysis. Although \methodname\ converges faster than FedAdam thanks to the local adaptive learning rate, they show comparable generalization performance. Generally, fast convergence does not imply a high generalization accuracy. We left it a future research direction to analyze the generalization ability for \methodname, especially on large scale federated learning tasks.

\ifCLASSOPTIONcaptionsoff
  \newpage
\fi



{
\bibliographystyle{IEEEtran}
\bibliography{reference}
}


%







\clearpage

\onecolumn
\appendices

The supplementary material is organized as follows. In appendix \ref{appendix1}, we prove the Theorem 1 while in the subsection \ref{appendix1-1} and subsection \ref{appendix1-2}, there are helpful derivations and technical lemmas. In appendix \ref{appendix2}, we prove the Corollary 2 and Corollary 3. In the appendix \ref{appendix3}, we prove the Theorem 2 and Remark 4.

\section{Appendix 1: Proof of Theorem 1 (Full clients participation)}
\label{appendix1}
In this part, we first introduce some notations. In the time window $[tK+1,(t+1)K],$ we denote   $ X_{r,i}:=  x_{t,k,i}$ for   $r=t  K +k$, $1\leq k \leq K$.
Similarly, we define $M_{r,i} = m_{t,k,i}$, $G_{r,i} = g_{t,k,i}$,$V_{r,i}=v_{t,k,i}$,  $\hat{V}_{r,i}=\hat{v}_{t,k,i}$ and  $\theta_{r,i} = \eta_{t,k,i}$.  Using these, we also denote $ \bar{X}_{r} =\frac{1}{N}\sum_{i=1}^{N}X_{r,i}$
 and $(\bar{\theta}_r)_{j} = \frac{1}{\sqrt{\frac{1}{N}\sum_{i=1}^{N}(\hat{V}_{r,i})_{j}}}$.
\begin{lemma}
Define  
\begin{align}
Z_{r} = \bar{X}_{r} + \frac{\beta_{1}}{1 -\beta_{1}}(\bar{X}_{r} - \bar{X}_{r-1}),
\end{align}
  we have
\begin{align}
Z_{r+1}-Z_{r}= \frac{\alpha \beta_1}{1 - \beta_{1}} \frac{1}{N} \sum_{i} M_{r-1,i} \odot (\theta_{r-1,i} - \theta_{r,i}) - \alpha \frac{1}{N} \sum_{i} G_{r,i} \odot \theta_{r,i} \nonumber \\
-\frac{\alpha \beta_{1}}{1 -\beta_{1}}\frac{1}{N}\sum_{i}\od{( m_{t,k-1,i} -M_{t-1,i}) }{(\theta_{r,i}-\eta_{t,k-1,i})}.
\end{align}
\label{lemma:init}
\plabel{lemma:init}
\end{lemma}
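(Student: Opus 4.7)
The plan is to obtain the identity by a direct algebraic manipulation: expand $Z_{r+1}-Z_r$ from its definition, substitute the one–step local update rule together with the AMSGrad momentum recursion, and then telescope and regroup the pieces so that a momentum term multiplied by a difference of learning rates falls out.

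Concretely, I would first write
\begin{align*}
Z_{r+1}-Z_r \;=\; \tfrac{1}{1-\beta_1}(\bar{X}_{r+1}-\bar{X}_r)\;-\;\tfrac{\beta_1}{1-\beta_1}(\bar{X}_r-\bar{X}_{r-1}),
\end{align*}
which follows directly from the definition $Z_r=\bar{X}_r+\tfrac{\beta_1}{1-\beta_1}(\bar{X}_r-\bar{X}_{r-1})$. Next I would invoke the local update $X_{r+1,i}=X_{r,i}-\alpha M_{r,i}\odot\theta_{r,i}$ (averaged over $i$) to replace $\bar{X}_{r+1}-\bar{X}_r$ by $-\alpha \tfrac{1}{N}\sum_i M_{r,i}\odot\theta_{r,i}$, and then expand $M_{r,i}=\beta_1 M_{r-1,i}+(1-\beta_1)G_{r,i}$. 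The $(1-\beta_1)G_{r,i}\odot\theta_{r,i}$ piece, after division by $1-\beta_1$, immediately produces the term $-\alpha\tfrac{1}{N}\sum_i G_{r,i}\odot\theta_{r,i}$, which is the second term of the claimed identity.

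The remaining pieces are of the form $-\tfrac{\alpha\beta_1}{1-\beta_1}\tfrac{1}{N}\sum_i M_{r-1,i}\odot\theta_{r,i}$ and $-\tfrac{\beta_1}{1-\beta_1}(\bar{X}_r-\bar{X}_{r-1})$. For the second of these I would again apply the update rule at step $r-1$, which inside a single local window ($k\ge 2$) gives $\bar{X}_r-\bar{X}_{r-1}=-\alpha\tfrac{1}{N}\sum_i m_{t,k-1,i}\odot\eta_{t,k-1,i}$; combined with the previous piece this yields the clean cross term $\tfrac{\alpha\beta_1}{1-\beta_1}\tfrac{1}{N}\sum_i M_{r-1,i}\odot(\theta_{r-1,i}-\theta_{r,i})$, which is the first term of the claim.

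The main obstacle, and the source of the third term in the statement, is the synchronization boundary $r=tK+1$. At that point the local momentum $m_{t,0,i}$ is overwritten by the broadcast average $M_{t-1,i}$, so the naive identification $M_{r-1,i}=m_{t,k-1,i}$ no longer holds and the step $\bar{X}_r-\bar{X}_{r-1}$ cannot be replaced by the simple $-\alpha\tfrac{1}{N}\sum_i M_{r-1,i}\odot\theta_{r-1,i}$ expression. The trick is to add and subtract the server-averaged momentum $M_{t-1,i}$ inside the bracket and then re-pair terms so that the discrepancy between the true pre-sync local momentum $m_{t,k-1,i}$ and the broadcast one $M_{t-1,i}$ is isolated, multiplied by the learning-rate change $(\theta_{r,i}-\eta_{t,k-1,i})$; this produces the correction $-\tfrac{\alpha\beta_1}{1-\beta_1}\tfrac{1}{N}\sum_i(m_{t,k-1,i}-M_{t-1,i})\odot(\theta_{r,i}-\eta_{t,k-1,i})$, which vanishes on purely local steps and carries the boundary effect otherwise. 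Once this correction is inserted, the three promised terms emerge from straightforward regrouping, so the whole proof reduces to careful bookkeeping around the synchronization round.
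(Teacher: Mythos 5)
Your plan follows the paper's proof essentially step for step: expand $Z_{r+1}-Z_r$ from the definition, substitute the averaged update rule, unfold the momentum recursion, and isolate the synchronization-boundary discrepancy by an add-and-subtract of momenta. Two points need tightening. First, the recursion you quote, $M_{r,i}=\beta_1 M_{r-1,i}+(1-\beta_1)G_{r,i}$, is only valid off the boundary; the algorithm's recursion is $M_{r,i}=\beta_1 m_{t,k-1,i}+(1-\beta_1)G_{r,i}$, and at $k=1$ one has $m_{t,0,i}=m_{t-1}=\frac{1}{N}\sum_j M_{r-1,j}\neq M_{r-1,i}$, which is exactly where the extra term originates. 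Second, and more substantively, your add-and-subtract leaves behind the companion term $-\frac{\alpha\beta_1}{1-\beta_1}\frac{1}{N}\sum_i (m_{t,k-1,i}-M_{r-1,i})\odot \eta_{t,k-1,i}$, and the lemma only holds because this term vanishes: for $k\ge 2$ each summand is zero since $m_{t,k-1,i}=M_{r-1,i}$, while at $k=1$ the broadcast of $\hat{v}_{t-1}$ makes $\eta_{t,0,i}$ identical across clients, so it factors out of the sum and $\sum_i(m_{t,0,i}-M_{r-1,i})=0$ completes the cancellation. This is the one genuinely non-mechanical observation in the proof (the paper's step (a)); your write-up asserts that the regrouping "produces the correction" without supplying this justification, so you should state it explicitly before the bookkeeping can be called complete.
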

\begin{proof}
We have
\begin{align}
&Z_{r+1} - Z_{r} = \bar{X}_{r+1} + \frac{\beta_{1}}{1 -\beta_{1}}(\bar{X}_{r+1} - \bar{X}_{r}) -\bar{X}_{r} - \frac{\beta_{1}}{1 -\beta_{1}}(\bar{X}_r - \bar{X}_{r-1})\\
&=\frac{1}{1 -\beta_{1}}(\bar{X}_{r+1} - \bar{X}_{r})  - \frac{\beta_{1}}{1 -\beta_{1}}(\bar{X}_r - \bar{X}_{r-1})\\
&=-\frac{\alpha}{1 -\beta_{1}}\frac{1}{N}\sum_{i}\od{ M_{r,i}}{\theta_{r,i}}+\frac{\alpha \beta_{1}}{1 -\beta_{1}}\frac{1}{N}\sum_{i}\od{ M_{r-1,i}}{\theta_{r-1,i}}\\
&=-\frac{\alpha}{1 -\beta_{1}}\frac{1}{N}\sum_{i}\od{(\beta_{1} m_{t,k-1,i} + (1 -\beta_{1}) G_{r,i})}{\theta_{r,i}}+\frac{\alpha}{1 -\beta_{1}}\frac{1}{N}\sum_{i}\od{\beta_{1} M_{r-1,i}}{\theta_{r-1,i}}\\
&=\frac{\alpha \beta_1}{1 - \beta_{1}} \frac{1}{N} \sum_{i} M_{r-1,i} \odot (\theta_{r-1,i} - \theta_{r,i}) -\frac{\alpha \beta_{1}}{1 -\beta_{1}}\frac{1}{N}\sum_{i}\od{( m_{t,k-1,i} -M_{r-1,i}) }{\theta_{r,i}} \nonumber \\
&- \alpha \frac{1}{N} \sum_{i} G_{r,i} \odot \theta_{r,i}\\
&=\frac{\alpha \beta_1}{1 - \beta_{1}} \frac{1}{N} \sum_{i} M_{r-1,i} \odot (\theta_{r-1,i} - \theta_{r,i}) -\frac{\alpha \beta_{1}}{1 -\beta_{1}}\frac{1}{N}\sum_{i}\od{( m_{t,k-1,i} -M_{r-1,i}) }{(\theta_{r,i}-\eta_{t,k-1,i})} \nonumber \\
& -\frac{\alpha \beta_{1}}{1 -\beta_{1}}\frac{1}{N}\sum_{i}\od{( m_{t,k-1,i} -M_{r-1,i}) }{\eta_{t,k-1,i}}   - \alpha \frac{1}{N} \sum_{i} G_{r,i} \odot \theta_{r,i}\\
&\overset{(a)}{=} \frac{\alpha \beta_1}{1 - \beta_{1}} \frac{1}{N} \sum_{i} M_{r-1,i} \odot (\theta_{r-1,i} - \theta_{r,i}) -\frac{\alpha \beta_{1}}{1 -\beta_{1}}\frac{1}{N}\sum_{i}\od{( m_{t,k-1,i} -M_{r-1,i}) }{(\theta_{r,i}-\eta_{t,k-1,i})} \nonumber \\
&   - \alpha \frac{1}{N} \sum_{i} G_{r,i} \odot \theta_{r,i}.
\end{align}

 The relationship (a) follows from two situations. The first is that at time   $r-1$, the algorithm does not synchronizes the states($k \ge 1$ for $r= t\times K +k$) and we have $m_{t,k-1,i} -M_{r-1,i} =0$. The second is that it synchronizes the states at time   $r-1$, leading   $\eta_{t,k-1,i}$ are the same for all node $i$(when it synchronizes, $k-1=0$). Then, we have $\sum_{i}\; \od{( m_{t,k-1,i} -M_{r-1,i}) }{\eta_{t,k-1,i}} =\od{ (\sum_{i} m_{t,k-1,i} -\sum_{i} M_{r-1,i})}{\eta_{t,k-1,i}} =0$ since $\sum_{i}( m_{t,k-1,i} -M_{r-1,i}) =0$ for all $t\ge 1$.  
\end{proof}

\begin{theoremnonum}
In Algorithm \ref{alg:local-AMSGrad-final}, we update the parameters with full clients participation update. 
Under the Assumptions \ref{Assumption: smoothness},\ref{Assumption:bounded variance},\ref{Assumption:bounded gd}, given that $\alpha \leq \frac{3\epsilon}{20L}$ and $K_{t} = K$ is a fixed constant,  we have
\begin{align*}
\mathbb{E}\left[\frac{\sum_{t=0}^{T-1}\sum_{k=1}^{K}\Vert \nabla f(\bar{x}_{t,k}) \Vert^2}{KT}\right] \leq \frac{2G_{\infty}(f(Z_{1}) - f^*)}{\alpha KT} + \Phi,
\end{align*}
where $K$ is the period of the local updates, $T$ is the iteration number of the global synchronization,   $\bar{x}_{t,k}=\frac{1}{N}\sum_{i=1}^{N}x_{t,k,i}$, $N$ is the number of the clients, and
\begin{align*}
\Phi \!= & 2G_{\infty}\left( \Big(\frac{2 L^{2} \beta_{1}^{2} G_{\infty}^2 d}{(1 -\beta_{1})^{2} \epsilon^{4}}
\!+\!  \frac{K^{2} L^{2} G_{\infty}^{2}}{\epsilon^{4}}(1+ 4K^{2}(1\!-\!\beta_{1})^{2} d) \Big)\alpha^{2} \right. \nonumber\\
& + \Big((2-\beta_1 )\frac{G_{\infty}^{2} Kd(G_{\infty}^{2} -\epsilon^{2})}{ (1-\beta_1)\epsilon^{3}}
+ \frac{3d(G_{\infty}^2 - \epsilon^{2}) G^{2}_{\infty}}{2 \epsilon^3 (1 - \beta_{1})} \Big)\frac{1}{T}\nonumber\\
&+ \Big(\frac{5L G^{2}_{\infty} d(G_{\infty}^{2}-\epsilon^{2})^{2}}{8 \epsilon^{6} (1 - \beta_{1})^2}(  2\beta_1^2+(1 - \beta_{1})^2 )\nonumber\\
&+ \left.\frac{5L K G_{\infty}^{2} d^{2}(G_{\infty}^2 - \epsilon^{2})^{2} }{2 \epsilon^6}  \Big) \frac{\alpha N}{T} +\frac{5L d\sigma^2}{4 \epsilon^2} \frac{\alpha}{N} \right).
\end{align*}
\end{theoremnonum}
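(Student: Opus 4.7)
The plan is to track descent along the auxiliary sequence $Z_r$ introduced in Lemma~\ref{lemma:init} and apply $L$-smoothness of $f$ to bound $f(Z_{r+1})-f(Z_r)$, then carefully dissect the three error sources arising from momentum lag, per-client learning-rate inconsistency, and local drift between synchronizations. First I would write
\[
f(Z_{r+1}) \le f(Z_r) + \langle \nabla f(Z_r),\, Z_{r+1}-Z_r\rangle + \tfrac{L}{2}\|Z_{r+1}-Z_r\|^2
\]
and substitute the decomposition of $Z_{r+1}-Z_r$ from Lemma~\ref{lemma:init}. The dominant descent contribution comes from $-\alpha\langle \nabla f(Z_r),\, \tfrac{1}{N}\sum_i G_{r,i}\odot\theta_{r,i}\rangle$; after trading $\nabla f(Z_r)$ for $\nabla f(\bar X_r)$ (the gap is $O(\alpha)$ via $L$-smoothness and the definition of $Z_r$) and $\theta_{r,i}$ for $\bar\theta_r$, this should yield a clean descent term proportional to $\|\nabla f(\bar X_r)\|^2/G_\infty$ since $\theta_{r,i}\ge 1/G_\infty$ coordinate-wise, which after translation to $\nabla f(\bar x_{t,k})$ is exactly what appears on the left-hand side of the target bound.

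The principal difficulty will be decoupling the stochastic gradient $G_{r,i}$ from the adaptive step $\theta_{r,i}=1/\sqrt{\hat V_{r,i}}$, since $\hat V_{r,i}$ depends on $G_{r,i}$ itself. The fix is a delayed-expectation argument: replace $\theta_{r,i}$ by $\eta_{t,k-1,i}$, which is measurable with respect to the past and makes $\mathbb{E}[G_{r,i}\odot\eta_{t,k-1,i}]=\nabla f_i(X_{r,i})\odot\eta_{t,k-1,i}$. The residual $\theta_{r,i}-\eta_{t,k-1,i}$ is controlled because $\hat V_{r,i}$ is monotone non-decreasing and confined to $[\epsilon^2, G_\infty^2]$, so its cumulative coordinate-wise variation telescopes to at most $d(G_\infty^2-\epsilon^2)/\epsilon^3$. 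An identical telescoping handles the inter-client heterogeneity $\|\theta_{r,i}-\bar\theta_r\|$, because each synchronization re-aligns $\hat v$ across clients and the total drift over $T$ rounds is bounded by the same telescoping quantity. These two observations account directly for the $1/T$ and $\alpha N/T$ blocks of $\Phi$.

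For the quadratic term $\tfrac{L}{2}\|Z_{r+1}-Z_r\|^2$, I would apply the triangle inequality to the three-term decomposition from Lemma~\ref{lemma:init} and bound each piece using $\|M_{r-1,i}\|_\infty\le G_\infty$, $\|\theta_{r,i}\|_\infty\le 1/\epsilon$, and Assumption~\ref{Assumption:bounded gd}. The stochastic descent contribution $-\alpha\sum_i G_{r,i}\odot\theta_{r,i}$ further splits into its mean plus a zero-mean noise whose variance, by independence across clients, contributes $L\alpha^2\sigma^2/(N\epsilon^2)$, producing the $\alpha/N$ term in $\Phi$. I also need an auxiliary bound $\mathbb{E}\|\bar X_r - X_{r,i}\|^2\le O(\alpha^2 K^2 G_\infty^2 d/\epsilon^2)$ on local drift over the $K$ unsynchronized steps, which allows replacing $\nabla f(\bar x_{t,k})$ by $\nabla f(\bar X_r)$ at cost $O(L^2\alpha^2 K^2 G_\infty^2 d/\epsilon^4)$ and accounts for the $\alpha^2$ piece in $\Phi$. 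Summing from $r=1$ to $KT$, telescoping $f(Z_1)-f(Z_{KT+1})\le f(Z_1)-f^*$, and multiplying through by $2G_\infty/(\alpha KT)$ identifies each residual with the matching term in $\Phi$. The main effort lies in the bookkeeping across many cross-terms and in verifying that the delayed-expectation substitution does not accumulate uncontrolled error over the $K$ local steps before the next synchronization resets the inconsistency budget.
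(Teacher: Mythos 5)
Your proposal is correct and follows essentially the same route as the paper's proof: descent along the potential $Z_r$ from Lemma~\ref{lemma:init}, the delayed-expectation swap of $\theta_{r,i}$ for the past-measurable $\theta_{r-1,i}$, telescoping of the monotone bounded $\hat V$ sequences to control both the per-step and inter-client learning-rate discrepancies, the local-drift bound $\mathbb{E}\|\bar X_r - X_{r,i}\|^2 = O(\alpha^2 K^2)$, and the variance averaging giving the $\alpha/N$ term. Each block you identify maps onto the corresponding term of $\Phi$ exactly as in the paper's A1--A4/B1--B8 decomposition.
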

\begin{proof}
Using the L-smoothness of $f$, we obtain 
\begin{align}
&f(Z_{r+1}) - f(Z_{r}) \leq \langle \nabla f(Z_r),Z_{r+1}-Z_{r} \rangle + \frac{L}{2}\Vert Z_{r+1}-Z_{r} \Vert^2 \\
&= - \alpha \langle \nabla f(Z_r),\frac{1}{N} \sum_{i} G_{r,i} \odot \theta_{r,i} \rangle + \frac{\alpha \beta_1}{1 - \beta_{1}} \langle \nabla f(Z_r),\frac{1}{N} \sum_{i} M_{r-1,i} \odot (\theta_{r-1,i} - \theta_{r,i}) \rangle \nonumber \\
&\;\;\;\; + \frac{L}{2}\Vert Z_{r+1}-Z_{r} \Vert^2 + \frac{\alpha \beta_1}{1 - \beta_{1}} \ip{\nabla f(Z_{r})}{\frac{1}{N}\sum_{i}\od{( M_{r-1,i} -m_{t,k-1,i})}{(\theta_{r,i}-\eta_{t,k-1,i})}}.
\end{align}

Taking expectation on both sides with respect to previous states yields
\begin{align}
&\mathbb{E}_{\xi_r|\xi_{1:r-1}}f(Z_{r+1}) - \mathbb{E}_{\xi_r|\xi_{1:r-1}}f(Z_{r}) \leq  \underbrace{- \alpha \mathbb{E}_{\xi_r|\xi_{1:r-1}}\langle \nabla f(Z_r),\frac{1}{N} \sum_{i} G_{r,i} \odot \theta_{r,i} \rangle}_{A1} \nonumber \\
&\;\;\;\;+ \underbrace{\frac{\alpha \beta_1}{1 - \beta_{1}} \mathbb{E}_{\xi_r|\xi_{1:r-1}}\langle \nabla f(Z_r),\frac{1}{N} \sum_{i} M_{r-1,i} \odot (\theta_{r-1,i} - \theta_{r,i}) \rangle}_{A2}  + \underbrace{\frac{L}{2}\mathbb{E}_{\xi_r|\xi_{1:r-1}}\Vert Z_{r+1}-Z_{r} \Vert^2}_{A3}\nonumber \\
&+ \underbrace{\frac{\alpha \beta_1}{1 - \beta_{1}} \EEstr \ip{\nabla f(Z_{r})}{\frac{1}{N}\sum_{i}\od{(M_{r-1,i} - m_{t,k-1,i} ) }{(\theta_{r,i}-\eta_{t,k-1,i})}} }_{A4}, \label{form:exp}
\end{align}

where the four terms A1,A2,A3,A4 will be calculated in the next section. We   directly present the results here.
 For A1, the first term on RHS of Formula (\ref{form:exp}), is bounded as 
\begin{align}
&\;\;\;\;- \alpha \mathbb{E}_{\xi_r|\xi_{1:r-1}}\langle \nabla f(Z_r),\frac{1}{N} \sum_{i} G_{r,i} \odot \theta_{r,i} \rangle \nonumber\\
&\leq \frac{\alpha \lambda^2 L^2}{2 \epsilon}\norm{}{Z_{r}-\bar{X}_r} + \frac{\alpha}{2\lambda^2} \norm{}{ \od{ \frac{1}{N} \sum_{i} \nabla f_{i}(X_{r,i})}{\sqrt{\bar{\theta}_{r-1}}}} - \frac{\alpha }{2}\Vert \nabla f(\bar{X}_r) \odot  \sqrt{\bar{\theta}_{r-1}}\Vert^2 - \frac{\alpha }{2} \Vert \frac{1}{N} \sum_{i} \nabla f_{i}(X_{r,i}) \odot  \sqrt{\bar{\theta}_{r-1}} \Vert^2 \nonumber \\
&   + \frac{\alpha L^{2}}{2 N \epsilon} \sum_{i} \Vert \bar{X}_r -  X_{r,i}\Vert^2 + \frac{\alpha G_{\infty}^{2}}{2N \epsilon^{3}} \sum_{i} \norm[]{1}{\hat{V}_{r-1,i}-\bar{\hat{V}}_{r-1}}  +
 \frac{\alpha G^{2}_{\infty}}{2N \epsilon^3} \EEstr  \sum_{i}  \norm[]{1}{\hat{V}_{r,i}-\hat{V}_{r-1,i}}.
 \end{align}

Then we   bound A2, the second term on RHS of Formula (\ref{form:exp}), in the following formula 
\begin{align}
&\;\;\;\;\frac{\alpha \beta_1}{1 - \beta_{1}} \EEstr \langle \nabla f(Z_r),\frac{1}{N} \sum_{i} M_{r-1,i} \odot (\theta_{r-1,i} - \theta_{r,i}) \rangle \leq \frac{\alpha \beta_1  G^{2}_{\infty}}{2(1 - \beta_{1})N\epsilon^3} \EEstr \sum_{i} \norm[]{1}{\hat{V}_{r,i} - \hat{V}_{r-1,i}}.
\end{align}

Next, the third term on RHS of Formula (\ref{form:exp}) A3, is bounded as 
\begin{align}
&\;\;\;\;\frac{L}{2} \EEstr \Vert Z_{r+1}-Z_{r} \Vert^2 \leq \frac{L}{2} \EEstr  (1+\mu) (\frac{\alpha \beta_1}{1 - \beta_{1}})^2 \frac{G^{2}_{\infty}}{4 \epsilon^{6} N} \sum_{i}\norm{}{\hat{V}_{r,i} - \hat{V}_{r-1,i}} \nonumber \\
&+(1+\frac{1}{\mu})\alpha^2 \frac{L}{2} (\frac{4}{\epsilon} \norm{}{\od{\frac{1}{N} \sum_{i} \nabla f_{i}(X_{r,i})}{\sqrt{\bar{\theta}_{r-1}}}} 
 +\frac{ G_{\infty}^{2}}{N \epsilon^6}\sum_{i} \norm{}{\hat{V}_{r-1,i}-\bar{\hat{V}}_{r-1}} +  \frac{2d\sigma^2}{N\epsilon^2} \nonumber\\
 &+ \frac{G^{2}_{\infty}}{2N\epsilon^6}\EEstr (\sum_{i}\norm{}{\hat{V}_{r,i} - \hat{V}_{r-1,i}})).
\end{align}

Finally,  the fourth term on RHS of Formula (\ref{form:exp}) A4, is bounded as 
\begin{align}
&\frac{\alpha \beta_1}{1 - \beta_{1}} \EEstr \ip{\nabla f(Z_{t})}{\frac{1}{N}\sum_{i}\od{( M_{r-1,i} -m_{t,k-1,i})}{(\theta_{r,i}-\eta_{t,k-1,i})}}\nonumber \\
&\leq \frac{\alpha \beta_1}{1 - \beta_{1}} \frac{ G_{\infty}^{2} }{N \epsilon^{3}} \EEstr\sum_{i} \norm[]{1}{\hat{V}_{r,i}-\bar{\hat{V}}_{r-1}}.
\end{align}

Hence, we  can take  the expectation of the Formula(\ref{form:exp}) to get
\begin{align}
&\;\;\;\;\mathbb{E}f(Z_{r+1}) - \mathbb{E}f(Z_{r}) \nonumber\\
&\leq \frac{\alpha \lambda^2 L^2}{2 \epsilon} \EE \norm{}{Z_{r}-\bar{X}_r} + \frac{\alpha}{2\lambda^2} \EE \norm{}{ \od{ \frac{1}{N} \sum_{i} \nabla f_{i}(X_{r,i})}{\sqrt{\bar{\theta}_{r-1}}}} - \frac{\alpha }{2} \EE \Vert \nabla f(\bar{X}_r) \odot  \sqrt{\bar{\theta}_{r-1}}\Vert^2 \nonumber \\
&\;\;\;\;  - \frac{\alpha }{2} \EE \Vert \frac{1}{N} \sum_{i} \nabla f_{i}(X_{r,i}) \odot  \sqrt{\bar{\theta}_{r-1}} \Vert^2 + \frac{\alpha L^{2}}{2 N \epsilon} \EE  \sum_{i} \Vert \bar{X}_r -  X_{r,i}\Vert^2)    + \frac{\alpha G_{\infty}^{2}}{2N \epsilon^{3}} \EE \sum_{i} \norm[]{1}{\hat{V}_{r-1,i}-\bar{\hat{V}}_{r-1}}  \nonumber\\
 &\;\;\;\;  +
 \frac{\alpha G^{2}_{\infty}}{2N \epsilon^3} \EE  \sum_{i}  \norm[]{1}{\hat{V}_{r,i}-\hat{V}_{r-1,i}} +\frac{\alpha \beta_1  G^{2}_{\infty}}{2(1 - \beta_{1})N\epsilon^3} \EE \sum_{i} \norm[]{1}{\hat{V}_{r,i} - \hat{V}_{r-1,i}} \nonumber\\
 &\;\;\;\;+\frac{L}{2} \EE (1+\mu) (\frac{\alpha \beta_1}{1 - \beta_{1}})^2 \frac{G^{2}_{\infty}}{4 \epsilon^{6} N} \sum_{i}\norm{}{\hat{V}_{r,i} - \hat{V}_{r-1,i}} \nonumber \\
&\;\;\;\;+(1+\frac{1}{\mu})\alpha^2 \frac{L}{2} (\frac{4}{\epsilon} \EE  \norm{}{\od{\frac{1}{N} \sum_{i} \nabla f_{i}(X_{r,i})}{\sqrt{\bar{\theta}_{r-1}}}} 
 +\frac{ G_{\infty}^{2}}{N \epsilon^6}\EE \sum_{i} \norm{}{\hat{V}_{r-1,i}-\bar{\hat{V}}_{r-1}} +  \frac{2d\sigma^2}{N\epsilon^2} \nonumber\\
 &\;\;\;\;+ \frac{G^{2}_{\infty}}{2N\epsilon^6}\EE (\sum_{i}\norm{}{\hat{V}_{r,i} - \hat{V}_{r-1,i}})) + \frac{\alpha \beta_1}{1 - \beta_{1}} \frac{ G_{\infty}^{2} }{N \epsilon^{3}} \EE \sum_{i} \norm[]{1}{\hat{V}_{r,i}-\bar{\hat{V}}_{r-1}}.
\label{form:exp6}
\end{align}

Re-arranging the Formula (\ref{form:exp6}) implies
\begin{align}
&\;\;\;\;\frac{\alpha}{2}\mathbb{E}\Vert \nabla f(\bar{X}_r) \odot  \sqrt{\bar{\theta}_{r}}\Vert^2\nonumber \\
&\leq (\mathbb{E}f(Z_{t}) - \mathbb{E}f(Z_{r+1}) ) + \frac{\alpha \lambda^2 L^{2}}{2 \epsilon^{2}}\mathbb{E}\Vert Z_t - \bar{X}_r  \Vert^2 + \frac{L^{2} \alpha}{2 N \epsilon^{3}} \mathbb{E}\sum_{i} \Vert \bar{X}_t - X_{r,i}  \Vert^2 \nonumber\\
&\;\;\;\; + \frac{\alpha G_{\infty}^{2}}{2N \epsilon^{3}} \EE \sum_{i} \norm[]{1}{\hat{V}_{r-1,i}-\bar{\hat{V}}_{r-1}}  +
 \frac{\alpha G^{2}_{\infty}}{2N \epsilon^2} \EE  \sum_{i}  \norm[]{1}{\hat{V}_{r,i}-\hat{V}_{r-1,i}} \nonumber\\ 
 &\;\;\;\; + (\frac{\alpha}{2\lambda^2} - \frac{\alpha}{2} +(1+\frac{1}{\mu}) \frac{2L \alpha^2}{\epsilon} )\mathbb{E}\Vert \frac{1}{N} \sum_{i} \nabla f_{i}(X_{r,i}) \odot \sqrt{\bar{\theta}_{r}}  \Vert^2\nonumber\\
  &\;\;\;\; +\frac{\alpha \beta_1  G^{2}_{\infty}}{2(1 - \beta_{1})N\epsilon^3} \EE \sum_{i} \norm[]{1}{\hat{V}_{r,i} - \hat{V}_{r-1,i}}+\frac{L}{2} \EE (1+\mu) (\frac{\alpha \beta_1}{1 - \beta_{1}})^2 \frac{G^{2}_{\infty}}{4 \epsilon^{6} N} \sum_{i}\norm{}{\hat{V}_{r,i} - \hat{V}_{r-1,i}} \nonumber \\
&\;\;\;\;+(1+\frac{1}{\mu})\alpha^2 \frac{L}{2} (\frac{ G_{\infty}^{2}}{N \epsilon^6}\EE \sum_{i} \norm{}{\hat{V}_{r-1,i}-\bar{\hat{V}}_{r-1}} +  \frac{2d\sigma^2}{N\epsilon^2} + \frac{G^{2}_{\infty}}{2N\epsilon^6}\EE \sum_{i}\norm{}{\hat{V}_{r,i} - \hat{V}_{r-1,i}})\nonumber \\
&\;\;\;\;+ \frac{\alpha \beta_1}{1 - \beta_{1}} \frac{ G_{\infty}^{2} }{N \epsilon^{3}} \EE \sum_{i} \norm[]{1}{\hat{V}_{r,i}-\bar{\hat{V}}_{r-1}}.
 %
\label{form:exp7}
\end{align}

Dividing the Formula (\ref{form:exp7}) by $\alpha$ on the both sides and noticing $\Vert \nabla f(\bar{X}_r) \odot  \sqrt{\bar{\theta}_{r}}\Vert^2 \ge \frac{1}{G_{\infty}}\Vert \nabla f(\bar{X}_r) \Vert^2$, the LHS of the Formula (\ref{form:exp7}) reads 
\begin{align}
&\;\;\;\;\frac{1}{2G_{\infty}}\mathbb{E}\Vert \nabla f(\bar{X}_r) \Vert^2 \nonumber\\
&\leq \frac{\mathbb{E}f(Z_{r}) - \mathbb{E}f(Z_{r+1})}{\alpha} + \frac{ \lambda^2 L^{2}}{2 \epsilon^{2}}\mathbb{E}\Vert Z_r - \bar{X}_r  \Vert^2 + \frac{L^{2} }{2 N \epsilon^{2}} \mathbb{E}\sum_{i} \Vert \bar{X}_r - X_{r,i}  \Vert^2 \nonumber\\
&\;\;\;\; + \frac{(2-\beta_{1})G_{\infty}^{2}}{2(1-\beta_{1})N \epsilon^{3}} \EE \sum_{i} \norm[]{1}{\hat{V}_{r-1,i}-\bar{\hat{V}}_{r-1}}  +
 \frac{ G^{2}_{\infty}}{2N \epsilon^3} \EE  \sum_{i}  \norm[]{1}{\hat{V}_{r,i}-\hat{V}_{r-1,i}} \nonumber\\ 
 &\;\;\;\; + (\frac{1}{2\lambda^2} - \frac{1}{2} +(1+\frac{1}{\mu}) \frac{2L \alpha}{\epsilon} )\mathbb{E}\Vert \frac{1}{N} \sum_{i} \nabla f_{i}(X_{r,i}) \odot \sqrt{\bar{\theta}_{r}}  \Vert^2\nonumber\\
 &\;\;\;\; +\frac{\beta_1  G^{2}_{\infty}}{2(1 - \beta_{1})N\epsilon^3} \EE \sum_{i} \norm[]{1}{\hat{V}_{r,i} - \hat{V}_{r-1,i}}+\frac{L}{2} \EE (1+\mu) \frac{\alpha \beta_1^2}{(1 - \beta_{1})^2} \frac{G^{2}_{\infty}}{4 \epsilon^{6} N} \sum_{i}\norm{}{\hat{V}_{r,i} - \hat{V}_{r-1,i}} \nonumber \\
&\;\;\;\;+(1+\frac{1}{\mu})\alpha \frac{L}{2} (\frac{ G_{\infty}^{2}}{N \epsilon^6}\EE \sum_{i} \norm{}{\hat{V}_{r-1,i}-\bar{\hat{V}}_{r-1}} +  \frac{2d\sigma^2}{N\epsilon^2} + \frac{G^{2}_{\infty}}{2N\epsilon^6}\EE \sum_{i}\norm{}{\hat{V}_{r,i} - \hat{V}_{r-1,i}}).
%
\label{form:exp8}
\end{align}

Summing over $t \in \{0,1,2,...,T-1\}, k \in \{1,2,...,K \}$ and dividing both side by $KT$ yield
\begin{align}
&\;\;\;\;\frac{1}{2G_{\infty}}\mathbb{E}[\sum_{r=1}^{KT}\frac{\Vert \nabla f(\bar{X}_r) \Vert^2}{KT}]\nonumber\\
&\leq \frac{\mathbb{E}f(Z_{1}) - \mathbb{E}f(Z_{KT+1})}{\alpha KT} + \frac{ \lambda^2 L^{2}}{2 KT\epsilon^{2}}\mathbb{E}\sum_{r=1}^{KT}\Vert Z_r - \bar{X}_r  \Vert^2 + \frac{L^{2} }{2 N KT\epsilon^{2}} \mathbb{E}\sum_{r=1}^{KT}\sum_{i} \Vert \bar{X}_r - X_{r,i}  \Vert^2 \nonumber\\
&\;\;\;\; + \frac{(2-\beta_1 )G_{\infty}^{2}}{2(1-\beta_1 )NKT \epsilon^{3}} \EE \sum_{r=1}^{KT}\sum_{i} \norm[]{1}{\hat{V}_{r-1,i}-\bar{\hat{V}}_{r-1}}  +
 \frac{ G^{2}_{\infty}}{2NKT \epsilon^3} \EE  \sum_{r=1}^{KT}\sum_{i}  \norm[]{1}{\hat{V}_{r,i}-\hat{V}_{r-1,i}} \nonumber\\ 
 &\;\;\;\; + (\frac{1}{2\lambda^2} - \frac{1}{2} +(1+\frac{1}{\mu}) \frac{2L \alpha}{\epsilon} )\frac{1}{KT}\sum_{r=1}^{KT} \mathbb{E}\Vert \frac{1}{N} \sum_{i} \nabla f_{i}(X_{r,i}) \odot \sqrt{\bar{\theta}_{r}}  \Vert^2\nonumber\\
  &\;\;\;\; +\frac{\beta_1  G^{2}_{\infty}}{2(1 - \beta_{1})NKT\epsilon^3} \EE \sum_{r=1}^{KT}\sum_{i} \norm[]{1}{\hat{V}_{r,i} - \hat{V}_{r-1,i}}+\frac{L}{2} (1+\mu) \frac{\alpha \beta_1^2}{(1 - \beta_{1})^2} \frac{G^{2}_{\infty}}{4 \epsilon^{6} NKT} \EE  \sum_{r=1}^{KT}\sum_{i}\norm{}{\hat{V}_{r,i} - \hat{V}_{r-1,i}} \nonumber \\
&\;\;\;\;+(1+\frac{1}{\mu})\alpha \frac{L}{2} (\frac{ G_{\infty}^{2}}{NKT \epsilon^6}\EE \sum_{r=1}^{KT}\sum_{i} \norm{}{\hat{V}_{r-1,i}-\bar{\hat{V}}_{r-1}} +  \frac{2d\sigma^2}{N\epsilon^2} + \frac{G^{2}_{\infty}}{2NKT\epsilon^6}\EE \sum_{r=1}^{KT}\sum_{i}\norm{}{\hat{V}_{r,i} - \hat{V}_{r-1,i}})\\
&= \underbrace{\frac{\mathbb{E}f(Z_{1}) - \mathbb{E}f(Z_{KT+1})}{\alpha KT}}_{B1} + \underbrace{\frac{ \lambda^2 L^{2}}{2 KT\epsilon^{2}}\mathbb{E}\sum_{r=1}^{KT}\Vert Z_r - \bar{X}_r  \Vert^2}_{B2} + \underbrace{\frac{L^{2} }{2 N KT\epsilon^{2}} \mathbb{E}\sum_{r=1}^{KT}\sum_{i} \Vert \bar{X}_r - X_{r,i}  \Vert^2}_{B3} \nonumber\\
&\;\;\;\; + \underbrace{\frac{(2-\beta_1)G_{\infty}^{2}}{2(1-\beta_1)NKT \epsilon^{3}} \EE \sum_{r=1}^{KT}\sum_{i} \norm[]{1}{\hat{V}_{r-1,i}-\bar{\hat{V}}_{r-1}}}_{B4}  +
\underbrace{\frac{ G^{2}_{\infty}}{2NKT \epsilon^3 (1 - \beta_{1})} \EE  \sum_{r=1}^{KT}\sum_{i}  \norm[]{1}{\hat{V}_{r,i}-\hat{V}_{r-1,i}}}_{B5} \nonumber\\ 
&\;\;\;\; + \underbrace{(\frac{1}{2\lambda^2} - \frac{1}{2} +(1+\frac{1}{\mu}) \frac{2L \alpha}{\epsilon} )\frac{1}{KT}\sum_{r=1}^{KT} \mathbb{E}\Vert \frac{1}{N} \sum_{i} \nabla f_{i}(X_{r,i}) \odot \sqrt{\bar{\theta}_{r}}  \Vert^2}_{B6}\nonumber\\
&\;\;\;\; + \underbrace{(\frac{L}{2}  (1+\mu) \frac{\alpha \beta_1^2}{(1 - \beta_{1})^2} \frac{G^{2}_{\infty}}{4 \epsilon^{6} NKT} 
+ (1+\frac{1}{\mu})\alpha \frac{L}{2} \frac{G^{2}_{\infty}}{2NKT\epsilon^6})\EE \sum_{r=1}^{KT}\sum_{i}\norm{}{\hat{V}_{r,i} - \hat{V}_{r-1,i}}}_{B7} \nonumber \\
&\;\;\;\;+(1+\frac{1}{\mu})\frac{L}{2} (\underbrace{\frac{\alpha  G_{\infty}^{2}}{NKT \epsilon^6}\EE \sum_{r=1}^{KT}\sum_{i} \norm{}{\hat{V}_{r-1,i}-\bar{\hat{V}}_{r-1}}}_{B8} +  \frac{2d\alpha  \sigma^2}{N\epsilon^2} ).
%
\label{form:exp9}
\end{align}

In the sequel, let us bound each term on the RHS of the Formula (\ref{form:exp9}).

The first term on the first line on the RHS of the Formula (\ref{form:exp9}), B1, is bounded as 
\begin{align}
\frac{\mathbb{E}f(Z_{1}) - \mathbb{E}f(Z_{KT+1})}{\alpha KT} \leq \frac{\mathbb{E}f(Z_{1}) - f^{*}}{\alpha KT} =\frac{f(Z_{1}) - f^{*}}{\alpha KT}.
\end{align}

The second term on the first line on the RHS of the Formula (\ref{form:exp9}), B2, is bounded as
\begin{align}
&\frac{1}{KT}\mathbb{E}\sum_{r=1}^{KT} \Vert Z_r - \bar{X}_r  \Vert^2 \leq \frac{\alpha^{2}\beta_{1}^{2} G_{\infty}^2 d}{(1 -\beta_{1})^{2} \epsilon^{2}} 
\end{align}
which follows from Lemma \ref{lemma:z-xbar}.

The   term  B3  is bounded as
\begin{align}
&\frac{1}{NKT} \sum_{r=1}^{KT}\mathbb{E}\sum_{i} \Vert \bar{X}_r - X_{r,i}  \Vert^2 \leq \frac{2K^{2} G_{\infty}^{2}\alpha^{2}}{\epsilon^{2}}(1+ 4K^{2}(1-\beta_{1})^{2} d) 
\end{align}
which holds due to    Lemma \ref{lemma:xbar-x}.

The   term   B4  is bounded as
\begin{align}
&\frac{1}{NKT} \EE \sum_{r=1}^{KT}\sum_{i} \norm[]{1}{\hat{V}_{r-1,i}-\bar{\hat{V}}_{r-1}} = \frac{1}{NKT} \EE \sum_{r=1}^{KT-1}\sum_{i} \norm[]{1}{\hat{V}_{r,i}-\bar{\hat{V}}_{r}} \leq \frac{1}{NKT} \EE \sum_{r=1}^{KT}\sum_{i} \norm[]{1}{\hat{V}_{r,i}-\bar{\hat{V}}_{r}} \nonumber\\
&\overset{(a)}{\leq} \frac{2(N-1)Kd(G_{\infty} -\epsilon)}{NKT}\leq \frac{2Kd(G_{\infty}^{2} -\epsilon^{2})}{KT},
\end{align}
where  (a) follows from Lemma \ref{lemma:sumt-l1-v-vbar}.

The  term B5  is bounded as
\begin{align}
\frac{ 1}{NKT} \EE  \sum_{r=1}^{KT}\sum_{i}  \norm[]{1}{\hat{V}_{r,i}-\hat{V}_{r-1,i}} \overset{(a)}{\leq} \frac{ (3N-2)d(G_{\infty}^2 - \epsilon^{2}) }{NKT} \leq \frac{ 3d(G_{\infty}^2 - \epsilon^{2}) }{KT},
\end{align}
where (a) follows from Lemma \ref{lemma:sumt-l1-v-t-1}. 

For the term   B6, we choose $\lambda^{2}=4$,$\mu=4$ and $\alpha \leq \frac{3\epsilon}{20L}$,then $(\frac{1}{2\lambda^2} - \frac{1}{2} +(1+\frac{1}{\mu}) \frac{2L \alpha}{\epsilon} )\leq 0$. The term is smaller than 0, so we can throw away it. 

For   B7, we have
\begin{align}
\frac{\alpha}{NKT} \EE \sum_{r=1}^{KT}\sum_{i}\norm{}{\hat{V}_{r,i} - \hat{V}_{r-1,i}}\overset{(a)}{\leq} \frac{2\alpha (N-1)d(G_{\infty}^{2}-\epsilon^{2})^{2}}{KT}\leq  \frac{2\alpha Nd(G_{\infty}^{2}-\epsilon^{2})^{2}}{KT},
\end{align}
where (a) follows from Lemma \ref{lemma:sumt-l2-square-v-t-1}.

For   B8, we have
\begin{align}
\frac{\alpha}{NKT} \EE \sum_{r=1}^{KT}\sum_{i}\norm{}{\hat{V}_{r-1,i} - \bar{\hat{V}}_{r-1}} \overset{(a)}{\leq} \frac{ 4K \alpha (N-1)^{2}d^{2}(G_{\infty}^2 - \epsilon^{2})^{2} }{NKT}\leq \frac{ 4K \alpha Nd^{2}(G_{\infty}^2 - \epsilon^{2})^{2} }{KT},
\end{align}
where (a) follows from Lemma \ref{lemma:sumt-l2-square-v-vbar}.

Choosing $\lambda^{2}=4$, $\mu=4$, the Formula (\ref{form:exp9})    reads
\begin{align}
&\;\;\;\;\frac{1}{2G_{\infty}}\mathbb{E}[\sum_{r=1}^{KT}\frac{\Vert \nabla f(\bar{X}_r) \Vert^2}{KT}]\nonumber\\
&\leq  \frac{f(Z_{1}) - f^{*}}{\alpha KT} + \frac{ 2 L^{2}}{\epsilon^{2}} \frac{\alpha^{2}\beta_{1}^{2} G_{\infty}^2 d}{(1 -\beta_{1})^{2} \epsilon^{2}} +  \frac{L^{2} }{2\epsilon^{2}}\frac{2K^{2} G_{\infty}^{2}\alpha^{2}}{\epsilon^{2}}(1+ 4K^{2}(1-\beta_{1})^{2} d)\nonumber\\
& + \frac{(2-\beta_1 )G_{\infty}^{2}}{2(1-\beta_1 ) \epsilon^{3}}\frac{2Kd(G_{\infty}^{2} -\epsilon^{2})}{KT} + \frac{ G^{2}_{\infty}}{2 \epsilon^3 (1 - \beta_{1})}\frac{ 3d(G_{\infty}^2 - \epsilon^{2}) }{KT}\nonumber\\
&+\frac{5L G^{2}_{\infty} d(G_{\infty}^{2}-\epsilon^{2})^{2}}{8 \epsilon^{6} (1 - \beta_{1})^2}(  2\beta_1^2+(1 - \beta_{1})^2 )\frac{\alpha N}{KT} +\frac{5L K G_{\infty}^{2} d^{2}(G_{\infty}^2 - \epsilon^{2})^{2} }{2 \epsilon^6} \frac{  \alpha N }{KT} + \frac{5L d\sigma^2}{4 \epsilon^2} \frac{ \alpha}{N}\\
&= \frac{f(Z_{1}) - f^{*}}{\alpha KT}+ \frac{5L d\sigma^2}{4 \epsilon^2} \frac{ \alpha}{N} + (\frac{2 L^{2} \beta_{1}^{2} G_{\infty}^2 d}{(1 -\beta_{1})^{2} \epsilon^{4}}  +  \frac{K^{2} L^{2} G_{\infty}^{2}}{\epsilon^{4}}(1+ 4K^{2}(1-\beta_{1})^{2} d))\alpha^{2}\nonumber\\
& + (\frac{(2-\beta_1 )G_{\infty}^{2} Kd(G_{\infty}^{2} -\epsilon^{2})}{(1-\beta_1 )\epsilon^{3}} + \frac{3d(G_{\infty}^2 - \epsilon^{2}) G^{2}_{\infty}}{2 \epsilon^3 (1 - \beta_{1})})\frac{1}{KT}\nonumber\\
&+(\frac{5L G^{2}_{\infty} d(G_{\infty}^{2}-\epsilon^{2})^{2}}{8 \epsilon^{6} (1 - \beta_{1})^2}(  2\beta_1^2+(1 - \beta_{1})^2 )+ \frac{5L K G_{\infty}^{2} d^{2}(G_{\infty}^2 - \epsilon^{2})^{2} }{2 \epsilon^6})\frac{\alpha N}{KT}. \label{form:exp92}
\end{align}

Therefore, this completes the proof. 
\end{proof}

Moreover, when taking $\alpha \leq \sqrt{\frac{N}{KT}}$ into the Formula (\ref{form:exp92}), we obtain 
\begin{align}
\mathbb{E}[\sum_{r=1}^{KT}\frac{\Vert \nabla f(\bar{X}_r) \Vert^2}{KT}] \leq C_{1}\frac{1}{\sqrt{NKT}} + C_{2} \frac{N}{KT} + C_{3} \frac{1}{KT} + C_{4} (\frac{N}{KT})^{1.5},
\end{align}
where $C_{1},C_{2},C_{3},C_{4}$ are constants, given as
\begin{align}
&C_{1} = 2G_{\infty} (f(Z_{1}) - f^{*} + \frac{5L d\sigma^2}{4 \epsilon^2}) \\
&C_{2} = 2G_{\infty} (\frac{2 L^{2} \beta_{1}^{2} G_{\infty}^2 d}{(1 -\beta_{1})^{2} \epsilon^{4}}  +  \frac{K^{2} L^{2} G_{\infty}^{2}}{\epsilon^{4}}(1+ 4K^{2}(1-\beta_{1})^{2} d))\\
&C_{3} = 2G_{\infty} (\frac{(2-\beta_1 )G_{\infty}^{2} Kd(G_{\infty}^{2} -\epsilon^{2})}{(1-\beta_1 ) \epsilon^{3}} + \frac{3d(G_{\infty}^2 - \epsilon^{2}) G^{2}_{\infty}}{2 \epsilon^3 (1 - \beta_{1})})\\
&C_{4} = 2G_{\infty} (\frac{5L G^{2}_{\infty} d(G_{\infty}^{2}-\epsilon^{2})^{2}}{8 \epsilon^{6} (1 - \beta_{1})^2}(  2\beta_1^2+(1 - \beta_{1})^2 )+ \frac{5L K G_{\infty}^{2} d^{2}(G_{\infty}^2 - \epsilon^{2})^{2} }{2 \epsilon^6}).
\end{align}
 This implies the linear speedup stated in the Corollary 1.

------------------------------------------------------------------------------------------------------------------------

\subsection{Bounding A1, A2, A3, A4}
\label{appendix1-1}
\subsubsection{Bounding A1}
We will bound three terms on the RHS of the Formula  (\ref{form:exp}) in the following parts, respectively. First, we bound the the first term on RHS of the Formula  (\ref{form:exp}) A1. Noting that $\mathbb{E}_{\xi_r|\xi_{1:r-1}} G_{r,i} = \nabla f_{i}(X_{r,i})$, we have
\begin{align}
 &- \alpha \mathbb{E}_{\xi_r|\xi_{1:r-1}}\langle \nabla f(Z_r),\frac{1}{N} \sum_{i} G_{r,i} \odot \theta_{t,i} \rangle \\
 &= -\alpha \mathbb{E}_{\xi_r|\xi_{1:r-1}} \langle \nabla f(Z_r),\frac{1}{N} \sum_{i}  G_{r,i} \odot \theta_{r-1,i} \rangle  
 -\alpha \mathbb{E}_{\xi_r|\xi_{1:r-1}} \langle \nabla f(Z_r),\frac{1}{N} \sum_{i}  G_{r,i} \odot (\theta_{r,i} -\theta_{r-1,i})\rangle \\
 &\overset{(a)}{=} -\alpha  \langle \nabla f(Z_r),\frac{1}{N} \sum_{i} \nabla f_{i}(X_{r,i})  \odot \theta_{r-1,i} \rangle 
 -\alpha \mathbb{E}_{\xi_r|\xi_{1:r-1}} \langle \nabla f(Z_r),\frac{1}{N} \sum_{i}  G_{r,i} \odot (\theta_{r,i} -\theta_{r-1,i})\rangle \\
&= \underbrace{-\alpha  \langle \nabla f(Z_r),\frac{1}{N} \sum_{i} \nabla f_{i}(X_{r,i}) \odot \bar{\theta}_{r-1} \rangle}_{A1.1} 
\underbrace{-\alpha  \langle \nabla f(Z_r),\frac{1}{N} \sum_{i} \nabla f_{i}(X_{r,i}) \odot (\theta_{r-1,i}-\bar{\theta}_{r-1}) \rangle}_{A1.2} \nonumber \\
&\;\;\;\;  \underbrace{-\alpha \mathbb{E}_{\xi_r|\xi_{1:r-1}} \langle \nabla f(Z_r),\frac{1}{N} \sum_{i}  G_{r,i} \odot (\theta_{r,i} -\theta_{r-1,i})\rangle}_{A1.3},
\label{form:exp2}
\end{align}
where (a) follows from that $\nabla f(Z_r),\theta_{r-1,i}$ are constants when taking expectation on previous states. The first term on RHS of the Formula (\ref{form:exp2}) A1.1 can be re-written as
\begin{align}
& \;\;\;\;-\alpha  \langle \nabla f(Z_r),\frac{1}{N} \sum_{i} \nabla f_{i}(X_{r,i}) \odot \bar{\theta}_{r-1} \rangle \nonumber \\
&=\underbrace{ -\alpha  \langle \nabla f(\bar{X}_r),\frac{1}{N} \sum_{i} \nabla f_{i}(X_{r,i}) \odot \bar{\theta}_{r-1} \rangle }_{A1.1.1}
\underbrace{-\alpha \langle \nabla f(Z_r)-\nabla f(\bar{X}_r),\frac{1}{N} \sum_{i} \nabla f_{i}(X_{r,i}) \odot \bar{\theta}_{r-1}}_{A1.1.2} \rangle.
\label{form:exp3}
\end{align}

Then, we bound the first term and the second term on RHS of the Formula (\ref{form:exp3}) A1.1.1 and A1.1.2 respectively. To bound  A1.1.2, according to Lemma \ref{lemma:innerineq}, we get the following inequality with $\lambda (\lambda >0)$ being a parameter which will be confirmed in the following part 
\begin{align}
&\;\;\;\;- \alpha \langle \nabla f(Z_r) - \nabla f(\bar{X}_r) ,\frac{1}{N} \sum_{i} \nabla f_{i}(X_{r,i}) \odot \bar{\theta}_{r-1,i} \rangle  \nonumber \\
&= - \alpha \langle (\nabla f(Z_r) - \nabla f(\bar{X}_r)) \odot \sqrt{\bar{\theta}_{r-1}} ,\frac{1}{N} \sum_{i} \nabla f_{i}(X_{r,i}) \odot \sqrt{\bar{\theta}_{r-1}} \rangle\\
&=  \langle - \sqrt{\alpha}(\nabla f(Z_r) - \nabla f(\bar{X}_r)) \odot \sqrt{\bar{\theta}_{r-1}} , \sqrt{\alpha} \frac{1}{N} \sum_{i} \nabla f_{i}(X_{r,i}) \odot \sqrt{\bar{\theta}_{r-1}} \rangle\\
&\overset{(a)}{\leq} \frac{\alpha \lambda^2}{2}\Vert (\nabla f(Z_r) - \nabla f(\bar{X}_r)) \odot \sqrt{\bar{\theta}_{r-1}} \Vert^2 + \frac{\alpha}{2\lambda^2}\Vert \frac{1}{N} \sum_{i} \nabla f_{i}(X_{r,i}) \odot \sqrt{\bar{\theta}_{r-1}}  \Vert^2\\
&\overset{(b)}{\leq} \frac{\alpha \lambda^2}{2 \epsilon}\norm{}{\nabla f(Z_r) - \nabla f(\bar{X}_r)) } + \frac{\alpha}{2\lambda^2}\Vert \frac{1}{N} \sum_{i} \nabla f_{i}(X_{r,i}) \odot \sqrt{\bar{\theta}_{r-1}}  \Vert^2\\
&\overset{(c)}{\leq} \frac{\alpha \lambda^2 L^2}{2 \epsilon}\norm{}{Z_{r}-\bar{X}_r} + \frac{\alpha}{2\lambda^2} \norm{}{ \od{ \frac{1}{N} \sum_{i} \nabla f_{i}(X_{r,i})}{\sqrt{\bar{\theta}_{r-1}}}},
\label{form:exp4}
\end{align}
where (a) follows from Lemma \ref{lemma:innerineq}, (b) follows from Lemma \ref{lemma:baseineq2} and $\Vert \sqrt{\bar{\eta}_{t-1}} \Vert_{\infty} \leq \frac{1}{\sqrt{\epsilon}}$ , (c) follows from L-smooth assumption. Then, we bound the second term on RHS of the Formula (\ref{form:exp3}), A1.1.2,
\begin{align}
&\;\;\;\;- \alpha \langle \nabla f(\bar{X}_t) ,\frac{1}{N} \sum_{i} \nabla f_{i}(X_{r,i}) \odot \bar{\theta}_{r-1} \rangle  \nonumber \\
&= - \alpha \langle \nabla f(\bar{X}_t) \odot \sqrt{\bar{\theta}_{r-1}} ,\frac{1}{N} \sum_{i} \nabla f_{i}(X_{r,i}) \odot \sqrt{\bar{\theta}_{r-1}} \rangle   \\
&\overset{(a)}{=} - \frac{\alpha }{2}(\Vert \nabla f(\bar{X}_r) \odot \sqrt{\bar{\theta}_{r-1}}\Vert^2 + \Vert \frac{1}{N} \sum_{i} \nabla f_{i}(X_{r,i}) \odot \sqrt{\bar{\theta}_{r-1}} \Vert^2-\Vert (\nabla f(\bar{X}_r) - \frac{1}{N} \sum_{i} \nabla f_{i}(X_{r,i}))\odot \sqrt{\bar{\theta}_{r-1}} \Vert^2) \\
&\overset{(b)}{=} - \frac{\alpha }{2}(\Vert \nabla f(\bar{X}_r) \odot \sqrt{\bar{\theta}_{r-1}}\Vert^2 + \Vert \frac{1}{N} \sum_{i} \nabla f_{i}(X_{r,i}) \odot \sqrt{\bar{\theta}_{r-1}} \Vert^2-\Vert \frac{1}{N} \sum_{i}(\nabla f_{i}(\bar{X}_r) -  \nabla f_{i}(X_{r,i}))\odot \sqrt{\bar{\theta}_{r-1}} \Vert^2) \\
&= - \frac{\alpha }{2}(\Vert \nabla f(\bar{X}_r) \odot \sqrt{\bar{\theta}_{r-1}}\Vert^2 + \Vert \frac{1}{N} \sum_{i} \nabla f_{i}(X_{t,i}) \odot \sqrt{\bar{\theta}_{r-1}} \Vert^2- \frac{1}{N^2}  \Vert \sum_{i}(\nabla f_{i}(\bar{x}_r) -  \nabla f_{i}(X_{r,i}))\odot \sqrt{\bar{\theta}_{r-1}} \Vert^2) \\
&\overset{(c)}{\leq}  - \frac{\alpha }{2}\Vert \nabla f(\bar{X}_r) \odot  \sqrt{\bar{\theta}_{r-1}}\Vert^2 - \frac{\alpha }{2} \Vert \frac{1}{N} \sum_{i} \nabla f_{i}(X_{r,i}) \odot  \sqrt{\bar{\theta}_{t-1}} \Vert^2 + \frac{\alpha }{2N} \sum_{i} \Vert (\nabla f_{i}(\bar{X}_r) -  \nabla f_{i}(X_{r,i})) \odot  \sqrt{\bar{\theta}_{r-1}} \Vert^2)\\
&\overset{(d)}{\leq}  - \frac{\alpha }{2}\Vert \nabla f(\bar{X}_r) \odot  \sqrt{\bar{\theta}_{r-1}}\Vert^2 - \frac{\alpha }{2} \Vert \frac{1}{N} \sum_{i} \nabla f_{i}(X_{r,i}) \odot  \sqrt{\bar{\theta}_{r-1}} \Vert^2 + \frac{\alpha }{2 N \epsilon} \sum_{i} \Vert \nabla f_{i}(\bar{X}_r) -  \nabla f_{i}(X_{r,i})  \Vert^2)\\
&\overset{(e)}{\leq}  - \frac{\alpha }{2}\Vert \nabla f(\bar{X}_r) \odot  \sqrt{\bar{\theta}_{r-1}}\Vert^2 - \frac{\alpha }{2} \Vert \frac{1}{N} \sum_{i} \nabla f_{i}(X_{r,i}) \odot  \sqrt{\bar{\theta}_{r-1}} \Vert^2 + \frac{\alpha L^{2}}{2 N \epsilon} \sum_{i} \Vert \bar{X}_r -  X_{r,i}\Vert^2),
\label{form:exp5}
\end{align}
where (a) follows from Lemma \ref{lemma:innereq}, (b) follows from $\nabla f(x) = \frac{1}{N} \sum_{i} \nabla f_{i}(x) $, (c) follows from Lemma \ref{lemma:baseineq1}, (d) follows from Lemma \ref{lemma:baseineq2} and $\Vert \sqrt{\bar{\theta}_{r-1}} \Vert_{\infty} \leq \frac{1}{\sqrt{\epsilon}}$ , (e) follows from L-smooth assumption.  

Combining Formula (\ref{form:exp4}) and Formula (\ref{form:exp5}) with Formula (\ref{form:exp3}) yields
\begin{align}
LHS &\leq \frac{\alpha \lambda^2 L^2}{2 \epsilon}\norm{}{Z_{r}-\bar{X}_r} + \frac{\alpha}{2\lambda^2} \norm{}{ \od{ \frac{1}{N} \sum_{i} \nabla f_{i}(X_{r,i})}{\sqrt{\bar{\theta}_{r-1}}}}  \nonumber \\
 &\;\;\;\; - \frac{\alpha }{2}\Vert \nabla f(\bar{X}_r) \odot  \sqrt{\bar{\theta}_{r-1}}\Vert^2 - \frac{\alpha }{2} \Vert \frac{1}{N} \sum_{i} \nabla f_{i}(X_{r,i}) \odot  \sqrt{\bar{\theta}_{r-1}} \Vert^2 + \frac{\alpha L^{2}}{2 N \epsilon} \sum_{i} \Vert \bar{X}_r -  X_{r,i}\Vert^2),
\end{align}
which bounds the A1.1.

The second term on RHS of the Formula (\ref{form:exp2}) A1.2 can be re-written as
\begin{align}
&\;\;\;\;-\alpha \langle \nabla f(Z_r),\frac{1}{N} \sum_{i} \nabla f_{i}(X_{r,i}) \odot (\theta_{r-1,i}-\bar{\theta}_{r-1}) \rangle \nonumber \\
&\overset{(a)}{\leq} \alpha \sum_{j} |(\nabla f(Z_r))_{(j)}|\times|(\frac{1}{N} \sum_{i} \nabla f_{i}(X_{r,i}) \odot (\theta_{r-1,i}-\bar{\theta}_{r-1}))_{j} |\\
&\overset{(b)}{\leq} \alpha G_{\infty}  \sum_{j} |(\frac{1}{N} \sum_{i} \nabla f_{i}(X_{r,i}) \odot (\theta_{r-1,i}-\bar{\theta}_{r-1}))_{j} |\\
&\overset{(c)}{\leq}   \frac{\alpha G_{\infty}}{N} \sum_{j} \sum_{i} | (\nabla f_{i}(X_{r,i}) \odot (\theta_{r-1,i}-\bar{\theta}_{r-1}))_{j} |\\
&=  \frac{\alpha G_{\infty}}{N} \sum_{j} \sum_{i} | (\nabla f_{i}(X_{r,i}) )_{j} \times (\theta_{r-1,i}-\bar{\theta}_{r-1})_{j} |\\
&\overset{(d)}{\leq} \frac{\alpha G_{\infty}^{2}}{N}  \sum_{j}  \sum_{i}  |(\theta_{r-1,i}-\bar{\theta}_{r-1})_{j} |\\
&= \frac{\alpha G_{\infty}^{2}}{N} \sum_{i} \norm[]{1}{\theta_{r-1,i}-\bar{\theta}_{r-1}}\\
&\overset{(e)}{\leq} \frac{\alpha G_{\infty}^{2}}{2N \epsilon^{3}} \sum_{i} \norm[]{1}{\hat{V}_{r-1,i}-\bar{\hat{V}}_{r-1}}, 
\end{align}
where (a) follows from inner product, (b) follows from bounded gradient assumption, (c) follows from $|x+y| \leq |x| + |y|$, (d) follows from bounded gradient assumption, (e) follows from Remark \ref{lemma:l1-t-bar}.

The third term on RHS of the Formula (\ref{form:exp2}) A1.3 can be   bounded as
\begin{align}
     &\;\;\;\;-\alpha \mathbb{E}_{\xi_r|\xi_{1:r-1}} \langle \nabla f(Z_r),\frac{1}{N} \sum_{i}  G_{r,i} \odot (\theta_{r,i} -\theta_{r-1,i})\rangle \nonumber \\
     &= \alpha \EEstr \sum_{j} (\nabla f(Z_r))_{(j)}\times (\frac{1}{N} \sum_{i} G_{r,i} \odot (\theta_{r-1,i}-\theta_{r,i}))_{j} \\
     &\overset{(a)}{\leq} \alpha G_{\infty} \EEstr \sum_{j} |(\frac{1}{N} \sum_{i} G_{r,i} \odot (\theta_{r-1,i}-\theta_{r,i}))_{j}| \\
     &\overset{(b)}{\leq} \alpha G_{\infty} \EEstr \sum_{j} \frac{1}{N} \sum_{i} |( G_{r,i} \odot (\theta_{r-1,i}-\theta_{r,i}))_{j}| \\
     &= \alpha G_{\infty} \EEstr \sum_{j} \frac{1}{N} \sum_{i} |( G_{r,i})_{j}| \times |(\theta_{r-1,i}-\theta_{r,i})_{j}| \\
     &\overset{(c)}{\leq} \frac{\alpha G^{2}_{\infty}}{N} \EEstr \sum_{j} \sum_{i}  |(\theta_{r-1,i}-\theta_{r,i})_{j}| \\
     &= \frac{\alpha G^{2}_{\infty}}{N} \EEstr  \sum_{i} \norm[]{1}{\theta_{r-1,i}-\theta_{r,i}}\\
     &\overset{(d)}{\leq} \frac{\alpha G^{2}_{\infty}}{2N \epsilon^3} \EEstr  \sum_{i}  \norm[]{1}{\hat{V}_{r,i}-\hat{V}_{r-1,i}},
\end{align}
where (a) follows from bounded gradients assumption, (b) follows from $|x+y| \leq |x| + |y|$, (c) follows from bounded gradients assumption, (d) follows from Lemma \ref{lemma:l1-t-t-1}.

Hence, the first term on RHS of Formula (\ref{form:exp}) A1 is bounded as 
\begin{align}
&\;\;\;\;- \alpha \mathbb{E}_{\xi_r|\xi_{1:r-1}}\langle \nabla f(Z_t),\frac{1}{N} \sum_{i} G_{r,i} \odot \theta_{r,i} \rangle \nonumber\\
&\leq \frac{\alpha \lambda^2 L^2}{2 \epsilon}\norm{}{Z_{r}-\bar{X}_r} + \frac{\alpha}{2\lambda^2} \norm{}{ \od{ \frac{1}{N} \sum_{i} \nabla f_{i}(X_{r,i})}{\sqrt{\bar{\theta}_{r-1}}}}  \nonumber \\
& - \frac{\alpha }{2}\Vert \nabla f(\bar{X}_r) \odot  \sqrt{\bar{\theta}_{r-1}}\Vert^2 - \frac{\alpha }{2} \Vert \frac{1}{N} \sum_{i} \nabla f_{i}(X_{r,i}) \odot  \sqrt{\bar{\theta}_{r-1}} \Vert^2 + \frac{\alpha L^{2}}{2 N \epsilon} \sum_{i} \Vert \bar{X}_r -  X_{r,i}\Vert^2)  \nonumber\\
 &\;\;\;\;+ \frac{\alpha G_{\infty}^{2}}{2N \epsilon^{3}} \sum_{i} \norm[]{1}{\hat{V}_{r-1,i}-\bar{\hat{V}}_{r-1}}  +
 \frac{\alpha G^{2}_{\infty}}{2N \epsilon^3} \EEstr  \sum_{i}  \norm[]{1}{\hat{V}_{r,i}-\hat{V}_{r-1,i}}.
 \end{align}

\subsubsection{Bounding A2}
Second, we bound the second term on RHS of Formula (\ref{form:exp}) A2 as follows
\begin{align}
&\;\;\;\;\frac{\alpha \beta_1}{1 - \beta_{1}} \EEstr \langle \nabla f(Z_r),\frac{1}{N} \sum_{i} M_{r-1,i} \odot (\theta_{r-1,i} - \theta_{r,i}) \rangle \nonumber \\
&= \frac{\alpha \beta_1}{1 - \beta_{1}} \EEstr \sum_{j=1}^{d} (\nabla f(Z_r))_{j} \times (\frac{1}{N} \sum_{i} M_{r-1,i} \odot (\theta_{r-1,i} - \theta_{r,i}) )_{j} \\
&\overset{(a)}{\leq} \frac{\alpha \beta_1  G_{\infty}}{1 - \beta_{1}} \EEstr \sum_{j=1}^{d} |(\frac{1}{N} \sum_{i} M_{r-1,i} \odot (\theta_{r-1,i} - \theta_{r,i}) )_{j}|\\
&\overset{(b)}{\leq} \frac{\alpha \beta_1  G_{\infty}}{1 - \beta_{1}} \EEstr \sum_{j=1}^{d} \frac{1}{N} \sum_{i} |( M_{r-1,i} \odot (\theta_{r-1,i} - \theta_{r,i}) )_{j}|\\
&= \frac{\alpha \beta_1  G_{\infty}}{1 - \beta_{1}} \EEstr \sum_{j=1}^{d} \frac{1}{N} \sum_{i} |( M_{r-1,i})_{j}| \times |(\theta_{r-1,i} - \theta_{r,i})_{j}|\\
&\overset{(c)}{\leq} \frac{\alpha \beta_1  G^{2}_{\infty}}{(1 - \beta_{1})N} \EEstr \sum_{j=1}^{d}  \sum_{i}  |(\theta_{r-1,i} - \theta_{r,i})_{j}|\\
&= \frac{\alpha \beta_1  G^{2}_{\infty}}{(1 - \beta_{1})N} \EEstr \sum_{i} \norm[]{1}{\theta_{r-1,i} - \theta_{r,i}}\\
&\overset{(d)}{\leq} \frac{\alpha \beta_1  G^{2}_{\infty}}{2(1 - \beta_{1})N\epsilon^3} \EEstr \sum_{i} \norm[]{1}{\hat{V}_{r,i} - \hat{V}_{v-1,i}},
\end{align}
where (a) follows from bounded gradient assumption, (b) follows from $|x+y|<|x|+|y|$, (c) follows from bounded gradient assumption and (d) follows from Lemma \ref{lemma:l1-t-t-1}.

\subsubsection{Bounding A3}
Thirdly, we bound the   term on RHS of Formula (\ref{form:exp}) A3.
\begin{align}
&\frac{L}{2} \EEstr \Vert Z_{r+1}-Z_{r} \Vert^2\\
&= \frac{L}{2} \EEstr \Vert \frac{\alpha \beta_1}{1 - \beta_{1}} \frac{1}{N} \sum_{i} M_{r-1,i} \odot (\theta_{r-1,i} - \theta_{r,i}) - \alpha \frac{1}{N} \sum_{i} G_{r,i} \odot \theta_{r,i} \Vert^2 \nonumber\\
&\overset{(a)}{\leq} \frac{L}{2} \EEstr ( (1+\mu) \norm{}{\frac{\alpha \beta_1}{1 - \beta_{1}} \frac{1}{N} \sum_{i} M_{r-1,i} \odot (\theta_{r-1,i} - \theta_{r,i})} + (1+\frac{1}{\mu}) \norm{}{\alpha \frac{1}{N} \sum_{i} G_{r,i} \odot \theta_{r,i}})\\
&=\frac{L}{2} \EEstr ( (1+\mu)(\frac{\alpha \beta_1}{1 - \beta_{1}})^2 \frac{1}{N^2} \norm{}{ \sum_{i} M_{r-1,i} \odot (\theta_{r-1,i} - \theta_{r,i})} + (1+\frac{1}{\mu}) \alpha^{2} \norm{}{\frac{1}{N} \sum_{i} G_{r,i} \odot \theta_{r,i}})\\
&\overset{(b)}{\leq} \frac{L}{2} \EEstr ( (1+\mu) (\frac{\alpha \beta_1}{1 - \beta_{1}})^2 \frac{1}{N} \sum_{i} \Vert M_{r-1,i} \odot (\theta_{r-1,i} - \theta_{r,i}) \Vert^2 + (1+\frac{1}{\mu})\alpha^2 \Vert  \frac{1}{N} \sum_{i} G_{r,i} \odot \theta_{r,i} \Vert^2)\\
&\overset{(c)}{\leq} \frac{L}{2} \EEstr ( (1+\mu) (\frac{\alpha \beta_1}{1 - \beta_{1}})^2 \frac{ G^{2}_{\infty}}{N} \sum_{i} \Vert \theta_{r-1,i} - \theta_{r,i} \Vert^2 + (1+\frac{1}{\mu})\alpha^2 \Vert  \frac{1}{N} \sum_{i} G_{r,i} \odot \theta_{r,i} \Vert^2)\\
&\overset{(d)}{\leq} \frac{L}{2} \EEstr ( (1+\mu) (\frac{\alpha \beta_1}{1 - \beta_{1}})^2 \frac{G^{2}_{\infty}}{4 \epsilon^{6} N} \sum_{i}\norm{}{\hat{V}_{r,i} - \hat{V}_{r-1,i}} + (1+\frac{1}{\mu})\alpha^2 \Vert  \frac{1}{N} \sum_{i} G_{r,i} \odot \theta_{r,i} \Vert^2),
\end{align}
where (a) follows from Lemma \ref{lemma:normineq} and $\mu$ is a parameter which will be in the following part, (b) follows from Lemma \ref{lemma:baseineq1}, (c) follows from Lemma \ref{lemma:baseineq2} and bounded gradients assumption, (d) follows from Remark \ref{lemma:l2square-t-t-1}.

To move forward, we need the following technical lemma.
\begin{lemma}
We have
\begin{align}
&\mathbb{E}_{\xi_r|\xi_{1:r-1}}  \Vert \frac{1}{N} \sum_{i} G_{r,i} \odot \theta_{r,i} \Vert^2 \nonumber \\
&\leq \frac{4}{\epsilon} \norm{}{\od{\frac{1}{N} \sum_{i} \nabla f_{i}(X_{r,i})}{\sqrt{\bar{\theta}_{r-1}}}} 
+\frac{ G_{\infty}^{2}}{N \epsilon^6}\sum_{i} \norm{}{\hat{V}_{r-1,i}-\bar{\hat{V}}_{r-1}} +  \frac{2d\sigma^2}{N\epsilon^2} + \frac{G^{2}_{\infty}}{2N\epsilon^6}\EEstr (\sum_{i}\norm{}{\hat{V}_{r,i} - \hat{V}_{r-1,i}} ).
\end{align}
\end{lemma}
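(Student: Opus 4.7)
The plan is to decompose both $G_{r,i}$ and $\theta_{r,i}$ around quantities that are measurable with respect to $\xi_{1:r-1}$, so that the conditional expectation can be evaluated using the unbiasedness and independence across clients. The key difficulty is that $\theta_{r,i} = 1/\sqrt{\hat{V}_{r,i}}$ depends on $G_{r,i}$ through $\hat{V}_{r,i}$, so $G_{r,i}$ and $\theta_{r,i}$ are not independent; I would circumvent this by substituting $\theta_{r,i}$ with the $\xi_{1:r-1}$-measurable surrogate $\bar{\theta}_{r-1}$ and tracking the two correction terms.

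First, I would write
$$\frac{1}{N}\sum_{i} G_{r,i} \odot \theta_{r,i} \;=\; \frac{1}{N}\sum_{i} G_{r,i} \odot \bar{\theta}_{r-1} \;+\; \frac{1}{N}\sum_{i} G_{r,i} \odot (\theta_{r-1,i}-\bar{\theta}_{r-1}) \;+\; \frac{1}{N}\sum_{i} G_{r,i} \odot (\theta_{r,i}-\theta_{r-1,i}),$$
and apply a weighted triangle inequality $\Vert a+b+c\Vert^2 \le (1+\mu_1)\Vert a\Vert^2+(1+\tfrac{1}{\mu_1})(1+\mu_2)\Vert b\Vert^2+(1+\tfrac{1}{\mu_1})(1+\tfrac{1}{\mu_2})\Vert c\Vert^2$ with tuned $\mu_1,\mu_2$ to recover precisely the four constants in the statement. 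For the leading term, independence of $G_{r,i}$ from $\bar{\theta}_{r-1}$ lets me decompose further as $G_{r,i}=\nabla f_i(X_{r,i})+(G_{r,i}-\nabla f_i(X_{r,i}))$; the mean part yields $\tfrac{4}{\epsilon}\Vert (\tfrac{1}{N}\sum_i \nabla f_i(X_{r,i}))\odot\sqrt{\bar{\theta}_{r-1}}\Vert^2$ after factoring out the uniform bound $\Vert\bar{\theta}_{r-1}\Vert_\infty\le 1/\epsilon$, while the cross term vanishes in conditional expectation and the noise part is bounded by $\tfrac{2d\sigma^2}{N\epsilon^2}$ using pairwise independence of $G_{r,i}-\nabla f_i(X_{r,i})$ across $i$ and the variance assumption.

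For the two correction terms, I would pull out $\Vert G_{r,i}\Vert_\infty \le G_\infty$ (Assumption \ref{Assumption:bounded gd}) coordinate-wise, reducing matters to bounding $\Vert\theta_{r-1,i}-\bar{\theta}_{r-1}\Vert^2$ and $\Vert\theta_{r,i}-\theta_{r-1,i}\Vert^2$. Using the coordinate-wise identity
$$\frac{1}{\sqrt{u}}-\frac{1}{\sqrt{v}}=\frac{v-u}{\sqrt{uv}(\sqrt{u}+\sqrt{v})}$$
together with the uniform lower bound $\hat{V}\ge\epsilon^2$, each difference satisfies $|(\theta-\theta')_j|\le \tfrac{1}{2\epsilon^3}|(\hat{V}-\hat{V}')_j|$, so the two correction pieces are controlled by $\tfrac{G_\infty^2}{N\epsilon^6}\sum_i\Vert\hat{V}_{r-1,i}-\bar{\hat{V}}_{r-1}\Vert^2$ and $\tfrac{G_\infty^2}{2N\epsilon^6}\EEstr\sum_i\Vert\hat{V}_{r,i}-\hat{V}_{r-1,i}\Vert^2$ after using $\Vert\tfrac{1}{N}\sum_i a_i\Vert^2\le \tfrac{1}{N}\sum_i\Vert a_i\Vert^2$.

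The main obstacle will be choosing $\mu_1,\mu_2$ (and the $\lambda$-style split inside the leading piece when isolating the mean-versus-noise decomposition) so that the coefficients collapse to exactly $4/\epsilon$, $1/(N\epsilon^6)$, $2d\sigma^2/(N\epsilon^2)$, and $1/(2N\epsilon^6)$ respectively. A secondary subtlety is that the noise term requires second moment $\E\Vert (G_{r,i}-\nabla f_i(X_{r,i}))\odot\sqrt{\bar{\theta}_{r-1}}\Vert^2$ rather than just $\sigma^2$; this is where the factor $d$ enters, via the coordinate sum $\sum_j (\bar{\theta}_{r-1})_j \le d/\epsilon$ combined with Assumption \ref{Assumption:bounded variance} per client and averaging the $N$ independent noises to gain a $1/N$ factor.
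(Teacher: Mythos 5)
Your decomposition is essentially the paper's: the same three pieces ($\bar{\theta}_{r-1}$, $\theta_{r-1,i}-\bar{\theta}_{r-1}$, $\theta_{r,i}-\theta_{r-1,i}$), the same exact mean/noise split on the $\bar\theta$-measurable part with the cross term vanishing and independence across clients giving the $1/N$, and the same coordinate-wise bound $|(\theta-\theta')_j|\le \tfrac{1}{2\epsilon^3}|(\hat V-\hat V')_j|$ for the two correction pieces. So the substance is right.

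The one concrete issue is the obstacle you yourself flag: a single up-front weighted three-way split cannot reproduce the stated constants. To get $\tfrac{2d\sigma^2}{N\epsilon^2}$ you need the coefficient on the leading piece to be at most $2$, forcing $\mu_1\le 1$ and hence a factor of at least $2$ on both correction pieces; then matching $\tfrac{G_\infty^2}{N\epsilon^6}$ on the $\hat V_{r-1,i}-\bar{\hat V}_{r-1}$ term forces $\mu_2\le 1$, which makes the coefficient on the $\hat V_{r,i}-\hat V_{r-1,i}$ term at least $2(1+1/\mu_2)\ge 4$ times $\tfrac{G_\infty^2}{4N\epsilon^6}$, i.e.\ $\tfrac{G_\infty^2}{N\epsilon^6}$ rather than the stated $\tfrac{G_\infty^2}{2N\epsilon^6}$. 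The paper avoids this by splitting sequentially: first peel off only the $\theta_{r,i}-\theta_{r-1,i}$ correction with a plain factor of $2$ (so that term ends at $2\cdot\tfrac{G_\infty^2}{4N\epsilon^6}$), then perform the exact mean/noise decomposition on the remaining $\theta_{r-1,i}$ term (no constant loss, and the noise keeps its factor $2$), and only afterwards split the \emph{mean} part into $\bar\theta_{r-1}$ plus deviation with a second factor of $2$, so only the gradient term and the $\hat V_{r-1,i}-\bar{\hat V}_{r-1}$ term accumulate the factor $4$. Your version proves the lemma with a constant worse by a factor of $2$ on the last term, which is harmless for the theorem but is not literally the stated bound; reordering the splits as above fixes it. A second, minor bookkeeping point: the dimension factor $d$ in the noise term enters in the paper through $\Vert\theta_{r-1,i}\Vert^2\le d/\epsilon^2$ applied via $\Vert x\odot y\Vert^2\le\Vert x\Vert^2\Vert y\Vert^2$, not through $\sum_j(\bar\theta_{r-1})_j\le d/\epsilon$; the latter would give $d\sigma^2/\epsilon$ scaling rather than $d\sigma^2/\epsilon^2$.
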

\begin{proof}
By the definition, we have 

\begin{align}
&\;\;\;\;\mathbb{E}_{\xi_r|\xi_{1:r-1}}  \Vert \frac{1}{N} \sum_{i} G_{r,i} \odot \theta_{r,i} \Vert^2 \nonumber\\
&=\EEstr \norm{}{\frac{1}{N}\sum_{i} \od{G_{r,i}}{\theta_{r-1,i}} + \frac{1}{N}\sum_{i} \od{G_{r,i}}{(\theta_{r,i}-\theta_{r-1,i})} } \\
&\overset{(a)}{\leq} 2\EEstr (\norm{}{\frac{1}{N}\sum_{i} \od{G_{r,i}}{\theta_{r-1,i}}  } + \norm{}{ \frac{1}{N}\sum_{i} \od{G_{r,i}}{(\theta_{r,i}-\theta_{r-1,i})} } )\\
&=  2\EEstr (\norm{}{ \frac{1}{N}\sum_{i} \od{G_{r,i}}{\theta_{r-1,i}}  } + \frac{1}{N^2}\norm{}{ \sum_{i} \od{G_{r,i}}{(\theta_{r,i}-\theta_{r-1,i})} } )\\
&\overset{(b)}{\leq}  2\EEstr (  \norm{}{\frac{1}{N}\sum_{i} \od{G_{r,i}}{\theta_{r-1,i}}} +\frac{1}{N}  \sum_{i}\norm{}{\od{G_{r,i}}{(\theta_{r,i}-\theta_{r-1,i})} } )\\
&\overset{(c)}{\leq}  2\EEstr (\norm{}{\frac{1}{N} \sum_{i}  \od{G_{r,i}}{\theta_{r-1,i}}} +  \frac{G^{2}_{\infty}}{N}\sum_{i}\norm{}{\theta_{r,i}-\theta_{r-1,i}} )\\
&\overset{(d)}{\leq} 2\EEstr (\norm{}{\frac{1}{N} \sum_{i} \od{G_{r,i}}{\theta_{r-1,i}}} +  \frac{G^{2}_{\infty}}{4N\epsilon^6}\sum_{i}\norm{}{\hat{V}_{r,i} - \hat{V}_{r-1,i} } )\\
&\overset{(e)}{=}  2\norm{}{\frac{1}{N} \sum_{i} \od{\nabla f_{i}(X_{r,i})}{\theta_{r-1,i}}} + 2\EEstr (\norm{}{\frac{1}{N} \sum_{i} \od{(G_{r,i}-\nabla f_{i}(X_{r,i}))}{\theta_{r-1,i}}} +  \frac{G^{2}_{\infty}}{4N\epsilon^6}\sum_{i}\norm{}{\hat{V}_{r,i} - \hat{V}_{r-1,i} } )\\
&=  2\norm{}{\frac{1}{N} \sum_{i} \od{\nabla f_{i}(X_{r,i})}{\theta_{r-1,i}}} + 2\EEstr (\frac{1}{N^2}\norm{}{ \sum_{i} \od{(G_{r,i}-\nabla f_{i}(X_{r,i}))}{\theta_{r-1,i}}} +  \frac{G^{2}_{\infty}}{4N\epsilon^6}\sum_{i}\norm{}{\hat{V}_{r,i} - \hat{V}_{r-1,i} } )\\
&\overset{(f)}{=} 2\norm{}{\frac{1}{N} \sum_{i} \od{\nabla f_{i}(X_{r,i})}{\theta_{r-1,i}}} +  2\EEstr (\frac{1}{N^2}\sum_{i} \norm{}{ \od{(G_{r,i}-\nabla f_{i}(X_{r,i}))}{\theta_{r-1,i}}} +  \frac{G^{2}_{\infty}}{4N\epsilon^6}\sum_{i}\norm{}{\hat{V}_{r,i} - \hat{V}_{r-1,i} } )\\
&\overset{(g)}{\leq} 2\norm{}{\frac{1}{N} \sum_{i} \od{\nabla f_{i}(X_{r,i})}{\theta_{r-1,i}}} +  2\EEstr (\frac{1}{N^2}\sum_{i} \norm{}{G_{r,i}-\nabla f_{i}(X_{r,i})} \norm{}{\theta_{r-1,i}} +   \frac{G^{2}_{\infty}}{4N\epsilon^6}\sum_{i}\norm{}{\hat{V}_{r,i} - \hat{V}_{r-1,i} } )\\
&\overset{(h)}{=} 2\norm{}{\frac{1}{N} \sum_{i} \od{\nabla f_{i}(X_{r,i})}{\theta_{r-1,i}}} +  \frac{2\sigma^2}{N^2}\sum_{i}\norm{}{\theta_{r-1,i}} + 2\EEstr (  \frac{G^{2}_{\infty}}{4N\epsilon^6}\sum_{i}\norm{}{\hat{V}_{r,i} - \hat{V}_{r-1,i} } )\\
&= 2\norm{}{\frac{1}{N} \sum_{i} \od{\nabla f_{i}(X_{r,i})}{\bar{\theta}_{r-1}} +\frac{1}{N} \sum_{i} \od{\nabla f_{i}(X_{r,i})}{(\theta_{r-1,i}-\bar{\theta}_{r-1})}} +  \frac{2\sigma^2}{N^2}\sum_{i}\norm{}{\theta_{r-1,i}} \nonumber \\
&\;\;\;\;+ \frac{G^{2}_{\infty}}{2N\epsilon^6}\EEstr (\sum_{i}\norm{}{\hat{V}_{r,i} - \hat{V}_{r-1,i}} )\\
&\overset{(i)}{\leq} 4\norm{}{\frac{1}{N} \sum_{i} \od{\nabla f_{i}(X_{r,i})}{\bar{\theta}_{r-1}}} +4\norm{}{\frac{1}{N} \sum_{i} \od{\nabla f_{i}(X_{r,i})}{(\theta_{r-1,i}-\bar{\theta}_{r-1})}} +  \frac{2\sigma^2}{N^2}\sum_{i}\norm{}{\theta_{r-1,i}} \nonumber \\
&\;\;\;\;+ \frac{G^{2}_{\infty}}{2N\epsilon^6}\EEstr (\sum_{i}\norm{}{\hat{V}_{r,i} - \hat{V}_{r-1,i}} )\\
&= 4\norm{}{\od{(\od{\frac{1}{N} \sum_{i} \nabla f_{i}(X_{r,i})}{\sqrt{\bar{\theta}_{r-1}}})}{\sqrt{\bar{\theta}_{r-1}}}} +\frac{4}{N^2}\norm{}{ \sum_{i} \od{\nabla f_{i}(X_{r,i})}{(\theta_{r-1,i}-\bar{\theta}_{r-1})}} +  \frac{2\sigma^2}{N^2}\sum_{i}\norm{}{\theta_{r-1,i}} \nonumber \\
&\;\;\;\;+ \frac{G^{2}_{\infty}}{2N\epsilon^6}\EEstr (\sum_{i}\norm{}{\hat{V}_{r,i} - \hat{V}_{r-1,i}} )\\
&\overset{(j)}{\leq} 4\norm{}{\od{\frac{1}{N} \sum_{i} \nabla f_{i}(X_{r,i})}{\sqrt{\bar{\theta}_{r-1}}}} \times \norm{\infty}{\sqrt{\bar{\theta}_{r-1}}} +\frac{4}{N^2}\norm{}{ \sum_{i} \od{\nabla f_{i}(X_{r,i})}{(\theta_{r-1,i}-\bar{\theta}_{r-1})}} +  \frac{2\sigma^2}{N^2}\sum_{i}\norm{}{\theta_{r-1,i}} \nonumber \\
&\;\;\;\;+ \frac{G^{2}_{\infty}}{2N\epsilon^6}\EEstr (\sum_{i}\norm{}{\hat{V}_{r,i} - \hat{V}_{r-1,i}} )\\
&\overset{(k)}{\leq} \frac{4}{\epsilon}\norm{}{\od{\frac{1}{N} \sum_{i} \nabla f_{i}(X_{r,i})}{\sqrt{\bar{\theta}_{r-1}}}} +\frac{4}{N^2}\norm{}{ \sum_{i} \od{\nabla f_{i}(X_{r,i})}{(\theta_{r-1,i}-\bar{\theta}_{r-1})}} +  \frac{2\sigma^2}{N^2}\sum_{i}\norm{}{\theta_{r-1,i}} \nonumber \\
&\;\;\;\;+ \frac{G^{2}_{\infty}}{2N\epsilon^6}\EEstr (\sum_{i}\norm{}{\hat{V}_{r,i} - \hat{V}_{r-1,i}} )\\
&\overset{(l)}{\leq} \frac{4}{\epsilon}\norm{}{\od{\frac{1}{N} \sum_{i} \nabla f_{i}(X_{r,i})}{\sqrt{\bar{\theta}_{r-1}}}} +\frac{4}{N}\sum_{i} \norm{}{ \od{\nabla f_{i}(X_{r,i})}{(\theta_{r-1,i}-\bar{\theta}_{r-1})}} +  \frac{2\sigma^2}{N^2}\sum_{i}\norm{}{\theta_{r-1,i}} \nonumber \\
&\;\;\;\;+ \frac{G^{2}_{\infty}}{2N\epsilon^6}\EEstr (\sum_{i}\norm{}{\hat{V}_{r,i} - \hat{V}_{r-1,i}} )\\
&\overset{(m)}{\leq} \frac{4}{\epsilon} \norm{}{\od{\frac{1}{N} \sum_{i} \nabla f_{i}(X_{r,i})}{\sqrt{\bar{\theta}_{r-1}}}} 
 +\frac{4}{N}\sum_{i} \norm{\infty}{\nabla f_{i}(X_{r,i})} \times \norm{}{\theta_{r-1,i}-\bar{\theta}_{r-1}} +  \frac{2\sigma^2}{N^2}\sum_{i}\norm{}{\theta_{r-1,i}} \nonumber \\
&\;\;\;\;+ \frac{G^{2}_{\infty}}{2N\epsilon^6}\EEstr (\sum_{i}\norm{}{\hat{V}_{r,i} - \hat{V}_{r-1,i}} )\\
&\overset{(n)}{\leq} \frac{4}{\epsilon} \norm{}{\od{\frac{1}{N} \sum_{i} \nabla f_{i}(X_{r,i})}{\sqrt{\bar{\theta}_{r-1}}}} 
 +\frac{4 G_{\infty}^{2}}{N}\sum_{i} \norm{}{\theta_{r-1,i}-\bar{\theta}_{r-1}} +  \frac{2\sigma^2}{N^2}\sum_{i}\norm{}{\theta_{r-1,i}} \nonumber \\
&\;\;\;\;+ \frac{G^{2}_{\infty}}{2N\epsilon^6}\EEstr (\sum_{i}\norm{}{\hat{V}_{r,i} - \hat{V}_{r-1,i}} )\\
&\overset{(o)}{\leq} \frac{4}{\epsilon} \norm{}{\od{\frac{1}{N} \sum_{i} \nabla f_{i}(X_{r,i})}{\sqrt{\bar{\theta}_{r-1}}}} 
 +\frac{ G_{\infty}^{2}}{N \epsilon^6}\sum_{i} \norm{}{\hat{V}_{r-1,i}-\bar{\hat{v}}_{t-1}} +  \frac{2\sigma^2}{N^2}\sum_{i}\norm{}{\theta_{r-1,i}} \nonumber \\
&\;\;\;\;+ \frac{G^{2}_{\infty}}{2N\epsilon^6}\EEstr (\sum_{i}\norm{}{\hat{V}_{r,i} - \hat{V}_{r-1,i}} )\\
&\overset{(p)}{\leq} \frac{4}{\epsilon} \norm{}{\od{\frac{1}{N} \sum_{i} \nabla f_{i}(X_{r,i})}{\sqrt{\bar{\theta}_{r-1}}}} 
 +\frac{ G_{\infty}^{2}}{N \epsilon^6}\sum_{i} \norm{}{\hat{V}_{r-1,i}-\bar{\hat{v}}_{t-1}} +  \frac{2d\sigma^2}{N\epsilon^2} + \frac{G^{2}_{\infty}}{2N\epsilon^6}\EEstr (\sum_{i}\norm{}{\hat{V}_{r,i} - \hat{V}_{r-1,i}} ),
\end{align}
where (a) follows from Lemma \ref{lemma:baseineq1}, (b) follows from Lemma \ref{lemma:baseineq1}, (c) follows from Lemma \ref{lemma:baseineq2} and bounded gradients assumption, (d) follows from Remark \ref{lemma:l2square-t-t-1}, (e) follows from $\EE[] [\norm{}{z}] = \EE[] [\norm{}{z -\EE[] [z] }] + \norm{}{\EE z} $, (f) follows from Lemma \ref{lemma:baseeq}, (g) follows from Lemma \ref{lemma:baseineq2}, (h) follows from bounded variances assumption, (i) follows from Lemma \ref{lemma:baseineq1}, (j) follows from Lemma \ref{lemma:baseineq2}, (k) follows from Lemma \ref{lemma:boundeta}, (l) follows from Lemma \ref{lemma:baseineq1}, (m) follows from Lemma \ref{lemma:baseineq2}, (n) follows from bounded stochastic gradients assumption, (o) follows from Remark \ref{lemma:l2square-t-square}, (p) follows from Lemma \ref{lemma:boundeta}.
\end{proof}
So the third term on RHS of Formula (\ref{form:exp}) A3 is bounded as 
\begin{align}
&\;\;\;\;\frac{L}{2} \EEstr \Vert Z_{r+1}-Z_{r} \Vert^2 \nonumber \\
&\leq \frac{L}{2} \EEstr  (1+\mu) (\frac{\alpha \beta_1}{1 - \beta_{1}})^2 \frac{G^{2}_{\infty}}{4 \epsilon^{6} N} \sum_{i}\norm{}{\hat{V}_{r,i} - \hat{V}_{r-1,i}} \nonumber \\
&+(1+\frac{1}{\mu})\alpha^2 \frac{L}{2} (\frac{4}{\epsilon} \norm{}{\od{\frac{1}{N} \sum_{i} \nabla f_{i}(X_{r,i})}{\sqrt{\bar{\theta}_{r-1}}}} 
 +\frac{ G_{\infty}^{2}}{N \epsilon^6}\sum_{i} \norm{}{\hat{V}_{r-1,i}-\bar{\hat{V}}_{r-1}} +  \frac{2d\sigma^2}{N\epsilon^2} \nonumber\\
 &+ \frac{G^{2}_{\infty}}{2N\epsilon^6}\EEstr (\sum_{i}\norm{}{\hat{V}_{r,i} - \hat{V}_{r-1,i}}))
\end{align}

\subsubsection{Bounding A4}
We bound the   term on RHS of Formula (\ref{form:exp}), A4, as follows
\begin{align}
&\frac{\alpha \beta_1}{1 - \beta_{1}} \EEstr \ip{\nabla f(Z_{t})}{\frac{1}{N}\sum_{i}\od{( M_{r-1,i} -m_{t,k-1,i})}{(\theta_{r,i}-\eta_{t,k-1,i})}}\nonumber \\
&=\frac{\alpha \beta_1}{1 - \beta_{1}} \EEstr \sum_{j=1}^{d} (\nabla f(Z_{t}))_{(j)} \times (\frac{1}{N}\sum_{i}\od{( M_{r-1,i} -m_{t,k-1,i})}{(\theta_{r,i}-\eta_{t.k-1,i})})_{(j)}\\
&=\frac{\alpha \beta_1}{1 - \beta_{1}} \EEstr \frac{1}{N}\sum_{i}\sum_{j=1}^{d} (\nabla f(Z_{t}))_{(j)} \times (\od{( M_{r-1,i} -m_{t,k-1,i})}{(\theta_{r,i}-\eta_{t,k-1,i})})_{(j)}\\
&\leq\frac{\alpha \beta_1}{1 - \beta_{1}}  \frac{G_{\infty}}{N} \EEstr\sum_{i}\sum_{j=1}^{d}  |(\od{( M_{r-1,i} -m_{t,k-1,i})}{(\theta_{r,i}-\eta_{t,k-1,i})})_{(j)}|\\
&\leq \frac{\alpha \beta_1}{1 - \beta_{1}}  \frac{G_{\infty}}{N} \EEstr\sum_{i}\sum_{j=1}^{d}  |( M_{r-1,i} -m_{t,k-1,i})_{(j)}| \times |(\theta_{r,i}-\eta_{t,k-1,i})_{(j)}|\\
&\leq \frac{\alpha \beta_1}{1 - \beta_{1}}  \frac{2G_{\infty}^{2}}{N} \EEstr\sum_{i}\sum_{j=1}^{d}   |(\theta_{r,i}-\eta_{t,k-1,i})_{(j)}|\\
&= \frac{\alpha \beta_1}{1 - \beta_{1}} \frac{2 G_{\infty}^{2} }{N} \EEstr\sum_{i} \norm[]{1}{\theta_{r,i}-\eta_{t,k-1,i}}\\
&\leq \frac{\alpha \beta_1}{1 - \beta_{1}} \frac{ G_{\infty}^{2} }{N \epsilon^{3}} \EEstr\sum_{i} \norm[]{1}{\hat{V}_{r,i}-\hat{v}_{t,k-1,i}}\\
&\leq \frac{\alpha \beta_1}{1 - \beta_{1}} \frac{ G_{\infty}^{2} }{N \epsilon^{3}} \EEstr\sum_{i} \norm[]{1}{\hat{V}_{r,i}-\bar{\hat{V}}_{r-1}}.
\end{align}




\subsection{Technical Lemmas}
\label{appendix1-2}
\begin{lemma}[Bounded momenta] 
The term $m_{t,i}$  is bounded, i.e., $ \Vert  m_{i,t}\Vert_{\infty} \leq G_{\infty} $.
\label{lemma:boundm}
\plabel{lemma:boundm}
\end{lemma}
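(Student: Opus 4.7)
The plan is a straightforward double induction exploiting the fact that, inside each local step, the momentum update is a convex combination of two quantities whose $\ell_\infty$ norms are already bounded by $G_\infty$. I will induct on the global round index $t$, with a nested induction on the local step index $k$, and then close the loop by checking that the server-side averaging preserves the bound.

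For the base case, the algorithm initializes $m_{-1} = 0$, so $\|m_{-1}\|_\infty = 0 \leq G_\infty$ trivially. For the inductive step, suppose that $\|m_{t-1}\|_\infty \leq G_\infty$. After the server broadcasts, every client starts its local loop with $m_{t,0,i} = m_{t-1}$, so $\|m_{t,0,i}\|_\infty \leq G_\infty$. Now assume $\|m_{t,k-1,i}\|_\infty \leq G_\infty$; the update rule
\begin{equation*}
m_{t,k,i} = \beta_1\, m_{t,k-1,i} + (1-\beta_1)\, g_{t,k,i}
\end{equation*}
combined with the triangle inequality and Assumption~\ref{Assumption:bounded gd} (which gives $\|g_{t,k,i}\|_\infty \leq G_\infty$) yields
\begin{equation*}
\|m_{t,k,i}\|_\infty \leq \beta_1 \|m_{t,k-1,i}\|_\infty + (1-\beta_1)\|g_{t,k,i}\|_\infty \leq \beta_1 G_\infty + (1-\beta_1) G_\infty = G_\infty,
\end{equation*}
since $\beta_1 \in [0,1)$ makes this a convex combination. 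Hence $\|m_{t,k,i}\|_\infty \leq G_\infty$ for every local step $k \in \{1,\dots,K\}$ and every client $i$.

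Finally, to close the outer induction, the server update is $m_t = \frac{1}{N}\sum_{i=1}^N m_{t,K,i}$, so by the triangle inequality (taking $\|\cdot\|_\infty$ coordinatewise),
\begin{equation*}
\|m_t\|_\infty \leq \frac{1}{N}\sum_{i=1}^N \|m_{t,K,i}\|_\infty \leq G_\infty,
\end{equation*}
which restores the inductive hypothesis at round $t+1$. I do not anticipate any real obstacle here; the argument is essentially the standard ``convex combination preserves sup-norm bounds'' observation, with the only mild bookkeeping being the interleaving of local iterations and server averaging across the two nested loops. The bound holds uniformly for all $(t,k,i)$ appearing in Algorithm~\ref{alg:local-AMSGrad-final}.
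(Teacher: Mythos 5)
Your proposal is correct and follows essentially the same route as the paper's proof: induction over local steps using the fact that the momentum update is a convex combination of an already-bounded momentum and a gradient bounded by Assumption~\ref{Assumption:bounded gd}, together with the observation that server-side averaging preserves the $\ell_\infty$ bound. Your write-up is merely a more careful and explicit version of the same argument.
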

\begin{proof}
Because of Bounded stochastic assumption, we have $$\Vert  g_{i,t}\Vert_{\infty} \leq G_{\infty}.$$
At the first time of updating $m$, it follows
$m_{0,1,i}=\beta_{1} m_{0,0,i} +(1- \beta_{1}) g_{0,1,i}$. 
Since $m_{0,0,i}=0$ and $\Vert g_{0,1,i}\Vert_{\infty} \leq G_{\infty}$, we have $ \Vert  m_{i,0,1}\Vert_{\infty} \leq G_{\infty}$.
When $\Vert  m_{t,k-1,i}\Vert_{\infty} \leq G_{\infty}$ and $\Vert g_{t,k,i}\Vert_{\infty} \leq G_{\infty}$, we have $\Vert  m_{t,k,i}\Vert_{\infty} \leq G_{\infty}$.
When we update $m$, $m_{t}=\frac{1}{N}\sum_{i} m_{t,K,i}$, we can get $\Vert m_{t}\Vert_{\infty} \leq G_{\infty}$.
For all $m_{t,k,i}$, we have  $ \Vert  m_{i,t}\Vert_{\infty} \leq G_{\infty} $.
\end{proof}

\begin{lemma}[Bounded second order momentum] The term $\hat{v}_{t,i}$  is bounded, i.e., $ \epsilon^{2} \leq ( \hat{v}_{i,t})_{j} \leq G^{2}_{\infty} $.
\label{lemma:boundv}
\plabel{lemma:boundv}
\end{lemma}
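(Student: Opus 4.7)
The plan is to prove both bounds by induction on the (global) time index $t$, treating the lower and upper bound separately but using the same inductive structure.

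For the lower bound $(\hat{v}_{t,k,i})_j \geq \epsilon^2$, I would start from the initialization $\hat{v}_{-1} = \epsilon^2$ and observe that every local update takes the coordinate-wise maximum $\hat{v}_{t,k,i} = \max(\hat{v}_{t,k-1,i}, v_{t,k,i})$. Hence no coordinate of $\hat{v}$ ever decreases during local steps, and in particular $(\hat{v}_{t,k,i})_j \geq (\hat{v}_{t,0,i})_j = (\hat{v}_{t-1})_j$. The only remaining issue is the server aggregation $\hat{v}_{t} = \frac{1}{N}\sum_{i=1}^{N} \hat{v}_{t,K,i}$: since an average of quantities bounded below by $\epsilon^2$ is still bounded below by $\epsilon^2$, the lower bound is preserved across the synchronization step. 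Induction closes the argument. (The same argument works for the $\max$-aggregation variant in the corresponding corollary.)

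For the upper bound $(\hat{v}_{t,k,i})_j \leq G_\infty^2$, the key observation is the convex-combination update $v_{t,k,i} = \beta_2 v_{t,k-1,i} + (1-\beta_2)[g_{t,k,i}]^2$. If $(v_{t,k-1,i})_j \leq G_\infty^2$, then Assumption~\ref{Assumption:bounded gd} gives $[(g_{t,k,i})_j]^2 \leq G_\infty^2$, so
\begin{equation*}
(v_{t,k,i})_j \leq \beta_2 G_\infty^2 + (1-\beta_2) G_\infty^2 = G_\infty^2.
\end{equation*}
Hence $(\hat{v}_{t,k,i})_j = \max((\hat{v}_{t,k-1,i})_j, (v_{t,k,i})_j) \leq G_\infty^2$ whenever both $\hat{v}_{t,k-1,i}$ and $v_{t,k,i}$ satisfy the bound. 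The server step preserves this bound by the same averaging argument as for the lower bound. Finally, at the top of each global round one sets $v_{t,0,i} = \hat{v}_{t,0,i} = \hat{v}_{t-1}$, so the bound is transferred cleanly from one global round to the next.

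There is essentially no obstacle here; the only subtle point is to be careful about the reinitialization $v_{t,0,i} = \hat{v}_{t-1}$ at the beginning of each global round, so that the induction on $v$ can restart from an already-bounded quantity rather than from $0$. A base-case check that $\epsilon^2 \leq G_\infty^2$ (which is implicit since $\epsilon$ is taken small and $G_\infty \geq 0$, and can be enforced as a mild convention) suffices to make the two bounds simultaneously valid at initialization.
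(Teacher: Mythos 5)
Your proof is correct and follows essentially the same route the paper intends: the paper omits this proof, stating it is analogous to the induction used for Lemma~\ref{lemma:boundm}, which is exactly your argument (induction over local updates via the convex combination $\beta_2 v_{t,k-1,i}+(1-\beta_2)[g_{t,k,i}]^2\le G_\infty^2$, the coordinate-wise $\max$, and preservation of both bounds under server averaging). Your explicit handling of the lower bound from the initialization $\hat{v}_{-1}=\epsilon^2$ and the remark that $\epsilon^2\le G_\infty^2$ is needed are reasonable details the paper leaves implicit.
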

\begin{proof}
The proof of this lemma is similar as the previous Lemma \ref{lemma:boundm}. We omit it.
\end{proof}

\begin{lemma}
$\eta$  is bounded, i.e.,  $ \frac{1}{G_{\infty}} \leq (\eta_{i,t})_{j} \leq \frac{1}{\epsilon}$.
\label{lemma:boundeta}
\plabel{lemma:boundeta}
\end{lemma}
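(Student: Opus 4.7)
The statement to prove, Lemma \ref{lemma:boundeta}, is an immediate corollary of the previous Lemma \ref{lemma:boundv}, and the plan is essentially a one-line derivation. Recall that in Algorithm \ref{alg:local-AMSGrad-final} the learning rate is defined coordinate-wise as $\eta_{t,k,i} = 1/\sqrt{\hat{v}_{t,k,i}}$, i.e., $(\eta_{t,k,i})_j = 1/\sqrt{(\hat{v}_{t,k,i})_j}$ for every coordinate $j \in \{1,\dots,d\}$. So the claim reduces to bounding the positive scalar $(\hat{v}_{t,k,i})_j$ from above and below, and then taking square roots and reciprocals.

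The plan is as follows. First, I would invoke Lemma \ref{lemma:boundv}, which gives $\epsilon^{2} \leq (\hat{v}_{t,k,i})_j \leq G_{\infty}^{2}$ for every $t,k,i,j$. Since the map $x \mapsto 1/\sqrt{x}$ is strictly decreasing on $(0,\infty)$, applying it to this two-sided bound yields
\begin{equation*}
\frac{1}{G_{\infty}} \;=\; \frac{1}{\sqrt{G_{\infty}^{2}}} \;\leq\; \frac{1}{\sqrt{(\hat{v}_{t,k,i})_j}} \;=\; (\eta_{t,k,i})_j \;\leq\; \frac{1}{\sqrt{\epsilon^{2}}} \;=\; \frac{1}{\epsilon},
\end{equation*}
which is exactly the desired conclusion. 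The lower bound $\epsilon^{2} \leq (\hat{v}_{t,k,i})_j$ in Lemma \ref{lemma:boundv} itself is inherited from the initialization $\hat{v}_{-1} = \epsilon^{2}$ combined with the monotone update $\hat{v}_{t,k,i} = \max(\hat{v}_{t,k-1,i}, v_{t,k,i})$, while the upper bound $(\hat{v}_{t,k,i})_j \leq G_{\infty}^{2}$ uses the bounded stochastic gradient assumption $\Vert g \Vert_{\infty} \leq G_{\infty}$ together with the convex-combination structure of the second moment update $v_{t,k,i} = \beta_{2} v_{t,k-1,i} + (1-\beta_{2}) [g_{t,k,i}]^{2}$ and the fact that the averaging/max operations on the server do not enlarge the coordinate-wise maximum.

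There is no genuine obstacle here; the only thing to be careful about is that all operations ($\sqrt{\cdot}$, reciprocal, $\max$, averaging, squaring) are performed element-wise, so the argument applies coordinate-by-coordinate and then for all $j$ simultaneously. Consequently the full proof can simply read: ``By Lemma \ref{lemma:boundv}, $\epsilon^{2} \leq (\hat{v}_{t,k,i})_j \leq G_{\infty}^{2}$; taking the reciprocal square root of each side coordinate-wise gives $1/G_{\infty} \leq (\eta_{t,k,i})_j \leq 1/\epsilon$.''
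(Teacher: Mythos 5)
Your proposal is correct and follows exactly the paper's own argument: the paper likewise derives the bound in one line from Lemma \ref{lemma:boundv} and the definition $(\eta_{i,t})_{j} = 1/\sqrt{(\hat{v}_{i,t})_{j}}$, using the monotonicity of the reciprocal square root coordinate-wise. Your additional recap of why Lemma \ref{lemma:boundv} itself holds is accurate but not needed for this step.
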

\begin{proof}
As the Lemma \ref{lemma:boundv} holds and $(\eta_{i,t})_{j} = \frac{1}{\sqrt{( \hat{v}_{i,t})_{j}}}$, we have $ \frac{1}{G_{\infty}} \leq (\eta_{i,t})_{j} \leq \frac{1}{\epsilon}$.
\end{proof}

\begin{lemma}
For n vectors $z_{1}, z_{2}, z_{3},..., z_{n}$, we have
\begin{align}
    \Vert \sum_{i=1}^{n} z_{i} \Vert^{2} \leq n\times  \sum_{i=1}^{n} \Vert z_{i}\Vert^2.
\end{align}
\label{lemma:baseineq1}
\plabel{lemma:baseineq1}
\end{lemma}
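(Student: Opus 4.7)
The claim is the elementary vector inequality $\|\sum_{i=1}^n z_i\|^2 \leq n \sum_{i=1}^n \|z_i\|^2$, which is just a consequence of convexity of $\|\cdot\|^2$ (equivalently, Cauchy--Schwarz). My plan is to give a one-line proof via Jensen's inequality, with an alternative route via Cauchy--Schwarz if more granularity is wanted. There is no real obstacle here; the only thing to be careful about is handling $n=0$ trivially and making sure the norm being used is a genuine Hilbert-space norm (which is the case throughout the paper, where $\|\cdot\|$ denotes the $L^2$ norm).

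The primary approach is as follows. Since the map $u \mapsto \|u\|^2$ is convex on the underlying Euclidean space, Jensen's inequality applied to the uniform average of $z_1,\dots,z_n$ gives
\begin{equation*}
\left\| \frac{1}{n}\sum_{i=1}^n z_i \right\|^2 \;\leq\; \frac{1}{n}\sum_{i=1}^n \|z_i\|^2.
\end{equation*}
Multiplying both sides by $n^2$ yields exactly $\|\sum_{i=1}^n z_i\|^2 \leq n \sum_{i=1}^n \|z_i\|^2$, which is the claimed bound.

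As a backup (or if the author prefers an elementary expansion), I would expand the inner product and invoke $2\langle z_i, z_j\rangle \leq \|z_i\|^2 + \|z_j\|^2$:
\begin{equation*}
\Big\| \sum_{i=1}^n z_i \Big\|^2 \;=\; \sum_{i=1}^n \sum_{j=1}^n \langle z_i, z_j\rangle \;\leq\; \sum_{i=1}^n \sum_{j=1}^n \tfrac{1}{2}\bigl(\|z_i\|^2 + \|z_j\|^2\bigr) \;=\; n \sum_{i=1}^n \|z_i\|^2,
\end{equation*}
where the last equality counts that each $\|z_i\|^2$ appears $n$ times as the first index and $n$ times as the second, contributing $2n \cdot \tfrac{1}{2} = n$ copies total. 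Either route is two lines and closes the lemma; the Jensen version is cleanest and is what I would submit.
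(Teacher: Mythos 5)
Your proof is correct and matches the paper's approach: the paper's entire proof is a one-line appeal to the Cauchy--Schwarz inequality, which is exactly what your expansion via $2\langle z_i, z_j\rangle \leq \|z_i\|^2 + \|z_j\|^2$ makes explicit (and your Jensen variant is an equivalent repackaging). Both of your routes are valid and more detailed than what the paper actually writes.
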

\begin{proof}
This inequality can be derived from the Cauchy–Schwarz inequality.
\end{proof}

\begin{lemma}
Given $n$ independent   random vectors $z_{1}, z_{2}, z_{3},..., z_{n}\in \R^{d}$, suppose their mean is zero, we have
\begin{align}
    \mathbb{E}[\Vert \sum_{i=1}^{n} z_{i} \Vert^{2}] = \mathbb{E} [\sum_{i=1}^{n} \Vert z_{i}\Vert^2].
\end{align}
\label{lemma:baseeq}
\plabel{lemma:baseeq}
\end{lemma}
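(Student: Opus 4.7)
The plan is to expand the squared norm of the sum by bilinearity of the inner product and then exploit independence together with the zero-mean assumption to kill every cross term. Concretely, I would write
\begin{equation*}
\Bigl\Vert \sum_{i=1}^{n} z_{i} \Bigr\Vert^{2} = \Bigl\langle \sum_{i=1}^{n} z_{i},\; \sum_{j=1}^{n} z_{j} \Bigr\rangle = \sum_{i=1}^{n} \Vert z_{i} \Vert^{2} + \sum_{i\neq j} \langle z_{i}, z_{j} \rangle,
\end{equation*}
and then take expectation of both sides, pushing $\mathbb{E}$ inside the finite sums by linearity.

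For the diagonal terms nothing needs to be done; they already sum to $\mathbb{E}\bigl[\sum_{i}\Vert z_{i}\Vert^{2}\bigr]$. For each off-diagonal term with $i\neq j$, I would use independence of $z_{i}$ and $z_{j}$ to split the expectation of the inner product as $\langle \mathbb{E}[z_{i}], \mathbb{E}[z_{j}] \rangle$, which vanishes by the zero-mean hypothesis. (More rigorously, writing the inner product coordinatewise as $\sum_{\ell=1}^{d} (z_{i})_{\ell} (z_{j})_{\ell}$ reduces the argument to the scalar factorization $\mathbb{E}[(z_{i})_{\ell}(z_{j})_{\ell}] = \mathbb{E}[(z_{i})_{\ell}]\,\mathbb{E}[(z_{j})_{\ell}] = 0$, so independence in the vector sense suffices.) Summing the surviving diagonal terms yields the stated identity.

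There is essentially no obstacle here: the only point worth being careful about is ensuring that the cross-term cancellation is justified for vector-valued variables, which is handled by the coordinatewise reduction above, and that all quantities are integrable so that linearity of expectation and Fubini-type exchange of sum and expectation are valid. Since the lemma is invoked in the body of the paper with $z_{i}$ being bounded by quantities like $G_{\infty}/\epsilon$, integrability is immediate in every application and need not be stressed in the proof itself.
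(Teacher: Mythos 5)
Your proposal is correct and matches the paper's own argument: both expand the squared norm into diagonal and cross terms (the paper does this coordinatewise, exactly as in your parenthetical remark) and then use independence plus the zero-mean hypothesis to annihilate the cross terms. No meaningful difference in approach.
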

\begin{proof}
We have
\begin{align}
&\mathbb{E}[\Vert \sum_{i=1}^{n} z_{i} \Vert^{2}]=\mathbb{E}[\sum_{j=1}^{d}(\sum_{a=1}^{n} (z_{a})_{j})^{2}]=\mathbb{E}[\sum_{j=1}^{d}(\sum_{a=1}^{n} (z_{a})_{j}^{2} + \sum_{\substack{1\leq a,b \leq n\\ a\ne b}} (z_{a})_{j}(z_{b})_{j})]\\
&=\mathbb{E} [\sum_{i=1}^{n} \Vert z_{i}\Vert^2] + \sum_{\substack{1\leq a,b \leq n\\ a\ne b}} (\mathbb{E} z_{a})(\mathbb{E} z_{b})=\mathbb{E} [\sum_{i=1}^{n} \Vert z_{i}\Vert^2].
\end{align}
\end{proof}

\begin{lemma}
For any vector $x$,$y\in \R^{d}$, we have
\begin{align}
    \Vert x\odot y \Vert^2 \leq \Vert x \Vert^{2} \times \Vert y \Vert_{\infty}^{2}  \leq \norm{}{x}\times \norm{}{y}.
\end{align}
\label{lemma:baseineq2}
\plabel{lemma:baseineq2}
\end{lemma}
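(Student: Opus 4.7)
The plan is to prove both inequalities by a direct coordinate-wise expansion of the Hadamard product. For the first inequality, I would expand the squared norm of $x\odot y$ as a sum over coordinates:
\begin{equation*}
\Vert x\odot y\Vert^{2}=\sum_{j=1}^{d}(x_{j} y_{j})^{2}=\sum_{j=1}^{d}x_{j}^{2}\,y_{j}^{2}.
\end{equation*}
Then I would bound each $y_{j}^{2}$ by $\max_{k}y_{k}^{2}=\Vert y\Vert_{\infty}^{2}$, pull this scalar out of the sum, and what remains is exactly $\sum_{j}x_{j}^{2}=\Vert x\Vert^{2}$. This yields $\Vert x\odot y\Vert^{2}\le \Vert x\Vert^{2}\,\Vert y\Vert_{\infty}^{2}$, matching the first inequality in the statement.

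For the second inequality, I would simply observe the standard finite-dimensional norm comparison $\Vert y\Vert_{\infty}\le \Vert y\Vert_{2}$, which follows because $\max_{k}y_{k}^{2}\le \sum_{k}y_{k}^{2}$ (each term of the max appears as a non-negative summand). Squaring gives $\Vert y\Vert_{\infty}^{2}\le \Vert y\Vert^{2}$, so multiplying by the non-negative factor $\Vert x\Vert^{2}$ preserves the inequality and completes the chain. Under the paper's convention $\texttt{\textbackslash norm\{\}\{\cdot\}}=\Vert\cdot\Vert^{2}$, the right-hand side $\Vert x\Vert^{2}\,\Vert y\Vert^{2}$ is exactly what the statement asks for.

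There is essentially no obstacle here; this is a purely mechanical estimate that proceeds from the definitions of the $\ell_{2}$ and $\ell_{\infty}$ norms together with the pointwise definition of $\odot$. The only mild subtlety is keeping the notation consistent with the macro convention used throughout the appendix, where $\Vert\cdot\Vert$ without subscript denotes the Euclidean norm and the $\texttt{\textbackslash norm}$ macro already squares it, so the chain $\Vert x\odot y\Vert^{2}\le \Vert x\Vert^{2}\Vert y\Vert_{\infty}^{2}\le \Vert x\Vert^{2}\Vert y\Vert^{2}$ should be written carefully to match the displayed form in the statement.
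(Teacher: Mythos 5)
Your proof is correct and follows exactly the paper's own argument: a coordinate-wise expansion $\sum_j x_j^2 y_j^2 \le \Vert y\Vert_\infty^2 \sum_j x_j^2$ for the first inequality and the standard comparison $\Vert y\Vert_\infty^2 \le \Vert y\Vert^2$ for the second. You also correctly handle the paper's \verb|\norm{}{\cdot}| macro, which denotes the squared Euclidean norm.
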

\begin{proof}
The first inequality can be derived from that $\sum_{i=1}^{d} (x_{i}^{2}y_{i}^{2}) \leq \sum_{i=1}^{d} (x_{i}^{2} \Vert y \Vert_{\infty}^{2}) $.
The second inequality follows from that $ \Vert y \Vert_{\infty}^{2}  \leq \norm{}{y}$. 
\end{proof}

\begin{lemma}
Given two vectors $a$, $b\in \R^{d}$, we have
$\langle a , b\rangle \leq \frac{\lambda^2}{2} \Vert a \Vert^{2} +\frac{1}{2 \lambda^{2}} \Vert b \Vert^{2} $  for parameter $\lambda$, $\forall \lambda \in (1, + \infty )$. 
\label{lemma:innerineq}
\plabel{lemma:innerineq}
\end{lemma}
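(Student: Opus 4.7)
This is the standard Young-type inequality for inner products, and the plan is to derive it from the non-negativity of a squared norm. Specifically, I would introduce the auxiliary vector $\lambda a - \frac{1}{\lambda} b \in \R^d$ (well defined since $\lambda > 1 > 0$), observe that its squared Euclidean norm is non-negative, and then expand.

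Expanding the square gives
\begin{equation*}
0 \;\le\; \Vert \lambda a - \tfrac{1}{\lambda} b \Vert^{2} \;=\; \lambda^{2}\Vert a\Vert^{2} \;-\; 2\langle a,b\rangle \;+\; \tfrac{1}{\lambda^{2}}\Vert b\Vert^{2},
\end{equation*}
so that $2\langle a,b\rangle \le \lambda^{2}\Vert a\Vert^{2} + \tfrac{1}{\lambda^{2}}\Vert b\Vert^{2}$. Dividing by $2$ yields the claim. Note that the hypothesis $\lambda \in (1,+\infty)$ is not actually essential for the algebraic identity itself (any $\lambda \neq 0$ works); the restriction matters only because in the applications above (e.g.\ bounding term A1.1.2) the constants $\frac{1}{2\lambda^{2}} - \frac{1}{2}$ must be made non-positive so as to absorb a term of the form $\Vert \frac{1}{N}\sum_{i}\nabla f_{i}(X_{r,i}) \odot \sqrt{\bar{\theta}_{r-1}}\Vert^{2}$.

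There is no real obstacle here: the proof is one line of completing the square. The only thing to be careful about is to state the strict algebraic identity before invoking the $\lambda$-range, and to remark (as above) that the parametric freedom in $\lambda$ is exactly what lets the caller tune the trade-off between $\Vert a\Vert^{2}$ and $\Vert b\Vert^{2}$ in the larger convergence argument. No auxiliary lemma from the paper is needed, and no technical machinery beyond the definition of the Euclidean inner product and norm enters.
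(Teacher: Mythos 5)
Your proof is correct. The paper establishes the same bound by applying the AM--GM inequality coordinate-wise, namely $\frac{\lambda^2}{2}(a)_j^2+\frac{1}{2\lambda^2}(b)_j^2\ge |(a)_j|\,|(b)_j|\ge (a)_j(b)_j$ summed over $j$, whereas you complete the square globally via $\Vert \lambda a-\tfrac{1}{\lambda}b\Vert^2\ge 0$; these are equivalent one-line arguments, and your remark that the restriction $\lambda\in(1,+\infty)$ is inessential to the inequality itself (it only matters for how the lemma is invoked when bounding B6) is accurate.
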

\begin{proof}
\begin{align}
    RHS =  \frac{\lambda^2}{2}  \sum_{j=1}^{d} (a)_{j}^2 + \frac{1}{2 \lambda^{2}} \sum_{j=1}^{d} (b)_{j}^2 \ge \sum_{j=1}^{d} 2\sqrt{ \frac{\lambda^2}{2} (a)_{j}^2 \times    \frac{1}{2 \lambda^{2}} (b)_{j}^2}= \sum_{j=1}^{d}   |(a)_{j}| \times |(b)_{j}| \ge LHS.
\end{align}
\end{proof}

\begin{lemma}
Given two vectors $a$, $b\in \R^{d}$,  $\norm{}{a+b} \leq (1+\mu )\norm{}{a} +(1+\frac{1}{\mu})\norm{}{b}  $  for parameter $\mu $, $\forall \mu\in (0, + \infty )$.  
\label{lemma:normineq}
\plabel{lemma:normineq}
\end{lemma}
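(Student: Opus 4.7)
The plan is to recognize that the notation $\norm{}{\cdot}$ in this paper denotes the squared Euclidean norm (the macro \verb|\norm| defaults to $\|\cdot\|^2_2$), so the inequality to prove is the familiar Young-type bound
\[
\|a+b\|^2 \;\le\; (1+\mu)\,\|a\|^2 + \Bigl(1+\tfrac{1}{\mu}\Bigr)\,\|b\|^2, \qquad \mu>0.
\]
First I would expand the left-hand side using the inner-product identity $\|a+b\|^2 = \|a\|^2 + 2\langle a,b\rangle + \|b\|^2$. The only nontrivial term is the cross term $2\langle a,b\rangle$, which I would bound by Young's inequality in the form $2\langle a,b\rangle \le \mu\|a\|^2 + \tfrac{1}{\mu}\|b\|^2$. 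This Young step itself follows from the scalar AM-GM inequality applied to $\sqrt{\mu}\,a$ and $b/\sqrt{\mu}$: since $\|\sqrt{\mu}\,a - b/\sqrt{\mu}\|^2 \ge 0$, expanding gives exactly the needed bound on the cross term.

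Combining, $\|a+b\|^2 \le \|a\|^2 + \mu\|a\|^2 + \tfrac{1}{\mu}\|b\|^2 + \|b\|^2 = (1+\mu)\|a\|^2 + (1+\tfrac{1}{\mu})\|b\|^2$, which is the claim. There is no main obstacle here; the lemma is an elementary quadratic identity combined with a single application of AM-GM, and the condition $\mu \in (0,+\infty)$ is exactly what is needed to make both Young coefficients positive and finite. I would present this as a three-line computation without subclaims.
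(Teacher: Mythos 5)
Your proof is correct and is essentially the same argument as the paper's: the paper expands coordinate-wise and applies AM--GM to $\mu (a)_j^2 + \tfrac{1}{\mu}(b)_j^2$ on each coordinate, which is exactly your Young's-inequality bound on the cross term $2\langle a,b\rangle$ written out componentwise. You also correctly identified that $\norm{}{\cdot}$ here denotes the squared Euclidean norm, so the statement is the standard Peter--Paul inequality.
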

\begin{proof}
\begin{align}
    RHS &=  (1+\mu )  \sum_{j=1}^{d} (a)_{j}^2 +(1+\frac{1}{\mu})\sum_{j=1}^{d} (b)_{j}^2
    = \sum_{j=1}^{d} ((a)_{j}^2+(b)_{j}^2 + \mu (a)_{j}^2+ \frac{1}{\mu}(b)_{j}^2)\nonumber \\ 
    &\ge \sum_{j=1}^{d} ((a)_{j}^2+(b)_{j}^2 + 2\sqrt{\mu (a)_{j}^2  \frac{1}{\mu}(b)_{j}^2}) \ge \sum_{j=1}^{d} ((a)_{j} +(b)_{j} )^{2} = LHS .
\end{align}
\end{proof}

\begin{lemma}
Given two vectors $a$, $b\in \R^{d}$, $\langle a , b\rangle = \frac{1}{2} (\Vert a \Vert^{2}+ \Vert b \Vert^{2} -\Vert a -b \Vert^{2} )$.
\label{lemma:innereq}
\plabel{lemma:innereq}
\end{lemma}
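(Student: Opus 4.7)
The statement is the standard polarization identity for the Euclidean inner product, so the plan is a one-line algebraic identity rather than anything structural. My approach would be to start from the squared norm of the difference $a-b$ and expand it using bilinearity of the inner product, then solve for $\langle a, b\rangle$.

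Concretely, I would first write $\Vert a - b \Vert^{2} = \langle a-b, a-b\rangle$ and expand this as $\langle a,a\rangle - \langle a,b\rangle - \langle b,a\rangle + \langle b,b\rangle$. Using symmetry of the real inner product, $\langle a,b\rangle = \langle b,a\rangle$, this collapses to $\Vert a\Vert^{2} + \Vert b\Vert^{2} - 2\langle a,b\rangle$. Rearranging yields $2\langle a,b\rangle = \Vert a\Vert^{2} + \Vert b\Vert^{2} - \Vert a-b\Vert^{2}$, and dividing by $2$ gives the claim.

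There is essentially no obstacle here: the identity is purely algebraic and holds coordinate-wise in $\R^{d}$ with the standard inner product. The only care needed is to invoke symmetry of $\langle \cdot, \cdot \rangle$ so that the cross terms combine into $-2\langle a, b\rangle$ rather than being left as two separate terms. No assumption from the paper (smoothness, bounded gradient, etc.) is used; the lemma is a self-contained tool that the earlier technical lemmas (e.g., Lemma \ref{lemma:innereq}'s use in bounding A1.1) rely on.
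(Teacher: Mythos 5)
Your proof is correct and is essentially the same computation as the paper's: the paper expands $\Vert a\Vert^{2}+\Vert b\Vert^{2}-\Vert a-b\Vert^{2}$ coordinate-by-coordinate, while you expand $\Vert a-b\Vert^{2}=\langle a-b,a-b\rangle$ by bilinearity and symmetry, which is the same algebra phrased slightly more abstractly. No gap; nothing further needed.
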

\begin{proof}
\begin{align}
    RHS =  \frac{1}{2}\sum_{j=1}^{d}((a)_{j}^{2} +(b)_{j}^{2} -((a)_{j}-(b)_{j})^{2}) =\sum_{j=1}^{d}(a)_{j}\times (b)_{j}= LHS.
\end{align}
\end{proof}

\begin{lemma}
For $\theta_{r-1,i}$, $\theta_{r,i}$, $\hat{V}_{r,i}$, $\hat{V}_{r-1,i}$ given in the algorithm, they satisfy that $ \norm[]{1}{\theta_{r-1,i} - \theta_{r,i}} \leq \frac{1}{2\epsilon^{3}} \norm[]{1}{\hat{V}_{r,i} - \hat{V}_{r-1,i}}$.
\label{lemma:l1-t-t-1}
\plabel{lemma:l1-t-t-1}
\end{lemma}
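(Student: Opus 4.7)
The plan is to prove the inequality coordinate-by-coordinate and then sum up. Recall that $(\theta_{r,i})_j = 1/\sqrt{(\hat{V}_{r,i})_j}$, so the task is really to compare $1/\sqrt{a} - 1/\sqrt{b}$ with $b-a$ for positive scalars $a,b$ that are both at least $\epsilon^2$ (using Lemma \ref{lemma:boundv}, which gives $(\hat{V}_{r,i})_j \geq \epsilon^2$).

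The key algebraic identity I would use is
\begin{equation*}
\frac{1}{\sqrt{a}} - \frac{1}{\sqrt{b}} \;=\; \frac{\sqrt{b}-\sqrt{a}}{\sqrt{a}\sqrt{b}} \;=\; \frac{b-a}{(\sqrt{a}+\sqrt{b})\sqrt{a}\sqrt{b}}.
\end{equation*}
Taking absolute values and applying Lemma \ref{lemma:boundv} to lower-bound $\sqrt{a},\sqrt{b}\geq \epsilon$, the denominator $(\sqrt{a}+\sqrt{b})\sqrt{a}\sqrt{b}$ is at least $2\epsilon\cdot\epsilon\cdot\epsilon = 2\epsilon^{3}$. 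This gives the coordinate-wise bound
\begin{equation*}
\left|\frac{1}{\sqrt{a}} - \frac{1}{\sqrt{b}}\right| \;\leq\; \frac{|b-a|}{2\epsilon^{3}}.
\end{equation*}

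Specializing with $a=(\hat{V}_{r-1,i})_j$ and $b=(\hat{V}_{r,i})_j$, I obtain $|(\theta_{r-1,i})_j - (\theta_{r,i})_j| \leq \frac{1}{2\epsilon^{3}}|(\hat{V}_{r,i})_j - (\hat{V}_{r-1,i})_j|$ for every coordinate $j$. Summing over $j=1,\dots,d$ then yields $\|\theta_{r-1,i}-\theta_{r,i}\|_{1} \leq \frac{1}{2\epsilon^{3}}\|\hat{V}_{r,i}-\hat{V}_{r-1,i}\|_{1}$, which is the desired inequality.

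There is no serious obstacle here: the proof is essentially a one-line calculus exercise combined with the uniform lower bound $\hat{V} \succeq \epsilon^{2}$ guaranteed by the AMSGrad max-operation in Algorithm \ref{alg:local-AMSGrad-final} together with the initialization $\hat{v}_{-1}=\epsilon^{2}$. The only thing to be careful about is to invoke Lemma \ref{lemma:boundv} explicitly so that the factor $2\epsilon^{3}$ in the denominator is justified uniformly over $r$ and $i$; no assumption that $\hat{V}_{r,i}\geq \hat{V}_{r-1,i}$ coordinate-wise is needed because the identity and the bound are symmetric in $a,b$.
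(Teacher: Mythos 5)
Your proposal is correct and follows essentially the same route as the paper's proof: both rewrite $\bigl|1/\sqrt{a}-1/\sqrt{b}\bigr|$ as $|b-a|/\bigl(\sqrt{a}\sqrt{b}(\sqrt{a}+\sqrt{b})\bigr)$ coordinate-wise, lower-bound the denominator by $2\epsilon^{3}$ via the bound $\hat{V}\geq\epsilon^{2}$ from Lemma \ref{lemma:boundv}, and sum over coordinates. No gap to report.
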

\begin{proof}
\begin{align}
     &\norm[]{1}{\theta_{r-1,i} - \theta_{r,i}} = \sum_{j=1}^{d}|(\theta_{r-1,i})_{j} - (\theta_{r,i})_{j} |
     =\sum_{j=1}^{d}|\frac{1}{\sqrt{(\hat{V}_{r-1,i})_{j}}} - \frac{1}{\sqrt{(\hat{V}_{r,i})_{j}}} | \\
    &=  \sum_{j=1}^{d}|
     \frac{ (\hat{V}_{r,i})_{j}-(\hat{V}_{r-1,i})_{j} }
     {
    \sqrt{(\hat{V}_{r-1,i})_{j}} \sqrt{(\hat{V}_{r,i})_{j}}           (\sqrt{(\hat{V}_{r-1,i})_{j}}+\sqrt{(\hat{V}_{r,i})_{j}})
     }| \\
    &\overset{(a)}{\leq}  \sum_{j=1}^{d} \frac{1}{2\epsilon^{3}} |(\hat{V}_{r,i})_{j}-(\hat{V}_{r-1,i})_{j} |\\
    &=\frac{1}{2\epsilon^{3}} \norm[]{1}{\hat{V}_{r,i} - \hat{V}_{r-1,i}},
\end{align}
where (a) follows from $\frac{1}{\sqrt{(\hat{V}_{r,i})_{j}}} \leq \frac{1}{\epsilon}$ and $\frac{1}{\sqrt{(\hat{V}_{r-1,i})_{j}}+\sqrt{(\hat{V}_{r,i})_{j}}} \leq \frac{1}{2\epsilon}$.
\end{proof}

\begin{remark}
For $\theta_{r-1,i}$, $\theta_{r,i}$, $\hat{V}_{r,i}$, $\hat{V}_{r-1,i}$ given in the algorithm, they satisfy that $ \norm{1}{\theta_{r-1,i} - \theta_{r,i}}   \leq \frac{1}{4\epsilon^{6}} \norm{1}{\hat{V}_{r,i} - \hat{V}_{r-1,i}}$.
\label{lemma:l1square-t-t-1}
\plabel{lemma:l1square-t-t-1}
\end{remark}

\begin{remark}
For $\theta_{r-1,i}$, $\theta_{r,i}$, $\hat{V}_{r,i}$, $\hat{V}_{r-1,i}$ given in the algorithm, they satisfy that $ \norm{}{\theta_{r-1,i} - \theta_{r,i}}   \leq \frac{1}{4\epsilon^{6}} \norm{}{\hat{V}_{r,i} - \hat{V}_{r-1,i}}$.
\label{lemma:l2square-t-t-1}
\plabel{lemma:l2square-t-t-1}
\end{remark}

\begin{remark}
For $\bar{\theta}_{r}$, $\theta_{r,i}$, $\hat{V}_{r,i}$, $\bar{\hat{V}}_{r}$ given in the algorithm, they satisfy that $ \norm[]{1}{\theta_{r,i} - \bar{\theta}_{r}}\leq \frac{1}{ 2 \epsilon^{3}}\norm[]{1}{\hat{V}_{r,i} - \bar{\hat{V}}_{r}}$.
\label{lemma:l1-t-bar}
\plabel{lemma:l1-t-bar}
\end{remark}

\begin{remark}
For $\bar{\theta}_{r}$, $\theta_{r,i}$, $\hat{V}_{r,i}$, $\bar{\hat{V}}_{r}$ given in the algorithm, they satisfy that $ \Vert \theta_{r,i} - \bar{\theta}_{r} \Vert^2 \leq \frac{1}{ 4 \epsilon^{6}} \norm[2]{}{\hat{V}_{r,i} - (\bar{\hat{V}}_{r})_{j}}$.
\label{lemma:l2square-t-square}
\plabel{lemma:l2square-t-square}
\end{remark}

\begin{lemma}
For any $N$ non-negative numbers $a_{i}$, $i \in {1,2,3,...,N}$, $a_{i} \ge \tilde{a} \ge 0$, $N\ge 1$ where $\tilde{a}$ is a constant number that is independent to $a_{i}$, we define $\bar{a}=\frac{1}{N}\sum_{i=1}^{N}a_{i}$, then we have
\begin{align}
    \sum_{i=1}^{N} |a_{i} - \bar{a}| \leq 2(N-1)(\bar{a} - \tilde{a}).
\end{align}
\label{lemma:a-abar}
\plabel{lemma:a-abar}
\end{lemma}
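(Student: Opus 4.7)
The plan is to prove this elementary deviation inequality by splitting the indices according to whether $a_i$ sits above or below the mean, exploiting the fact that positive and negative deviations from the mean cancel, and then using the lower bound $\tilde{a}$ to control the ``below-mean'' side.

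First I would observe that $\bar{a} \geq \tilde{a}$, since averaging preserves the pointwise lower bound $a_i \geq \tilde{a}$. Next, I would partition the indices as $S^+ = \{i : a_i \geq \bar{a}\}$ and $S^- = \{i : a_i < \bar{a}\}$, and set $k = |S^+|$. From the identity $\sum_{i=1}^N (a_i - \bar{a}) = 0$ one gets the classical equality of positive and negative deviations, hence
\begin{equation*}
\sum_{i=1}^{N} |a_i - \bar{a}| \;=\; 2 \sum_{i \in S^+} (a_i - \bar{a}).
\end{equation*}

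The key computation is then to rewrite the right-hand side using the complementary sum. Since $\sum_{i \in S^+} a_i = N\bar{a} - \sum_{i \in S^-} a_i$, we obtain
\begin{equation*}
\sum_{i \in S^+} (a_i - \bar{a}) \;=\; (N-k)\bar{a} - \sum_{i \in S^-} a_i \;\leq\; (N-k)\bar{a} - (N-k)\tilde{a} \;=\; (N-k)(\bar{a} - \tilde{a}),
\end{equation*}
where the inequality uses $a_i \geq \tilde{a}$ applied to each of the $N-k$ indices in $S^-$.

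Finally, I would handle the range of $k$: if all $a_i$ are equal then $|a_i - \bar{a}| = 0$ and both sides of the claim are $0$ (using $\bar{a} \geq \tilde{a}$ for the RHS), so the inequality holds trivially. Otherwise $S^+$ is a strict subset of $\{1,\ldots,N\}$ (there must exist an index with $a_i < \bar{a}$), so $k \leq N-1$, giving $N - k \leq N - 1$ and therefore
\begin{equation*}
\sum_{i=1}^{N} |a_i - \bar{a}| \;\leq\; 2(N-k)(\bar{a} - \tilde{a}) \;\leq\; 2(N-1)(\bar{a} - \tilde{a}).
\end{equation*}
There is no substantive obstacle here; the only subtlety worth flagging is the boundary case where all entries coincide with the mean, which must be isolated so that the step ``$k \leq N-1$'' is legitimate. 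The inequality is tight, as seen by taking $a_1 = \cdots = a_{N-1} = \tilde{a}$ and $a_N$ arbitrary, which confirms that the factor $N-1$ cannot be improved.
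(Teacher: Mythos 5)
Your proof is correct and takes a genuinely different, and in fact cleaner, route than the paper's. The paper argues by an extremal rearrangement: it builds an auxiliary sequence $b$ that collects all above-mean excess onto a single coordinate, then a second sequence $c$ that pushes every other coordinate down to $\tilde{a}$, computes $\sum_i |c_i-\bar{c}| = 2(N-1)(\bar{a}-\tilde{a})$ for that extremal configuration, and argues that each transformation can only increase the sum of absolute deviations. Your argument instead uses the standard identity $\sum_{i=1}^{N}|a_i-\bar{a}| = 2\sum_{i\in S^+}(a_i-\bar{a})$ (positive and negative deviations from the mean cancel) and then bounds the positive part directly via $\sum_{i\in S^+}(a_i-\bar{a}) = (N-k)\bar{a}-\sum_{i\in S^-}a_i \le (N-k)(\bar{a}-\tilde{a})$. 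This avoids the case analysis and the somewhat delicate coordinate-wise comparison $|c_i-\bar{c}|\ge|b_i-\bar{b}|$ that the paper asserts (with an unjustified strict inequality, no less), and it also identifies the tight configuration explicitly. What the paper's construction buys is a template that it reuses almost verbatim for the squared version (its Lemma on $\sum_i(a_i-\bar{a})^2 \le N(N-1)(\bar{a}-\tilde{a})^2$); your identity-based argument would need a different second step there.

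One small slip to fix in your last paragraph: you write that $S^+$ being a strict subset gives $k\le N-1$, ``giving $N-k\le N-1$.'' That implication is backwards --- $k\le N-1$ yields $N-k\ge 1$. The bound you actually need is $N-k\le N-1$, i.e.\ $k\ge 1$, which holds unconditionally because the maximum of the $a_i$ is always at least $\bar{a}$, so $S^+$ is never empty. With that justification substituted, the argument is complete (and the separate treatment of the all-equal case becomes unnecessary).
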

\begin{proof}
Because $\exists i \in \{ 1,2,3,...,N\}$ satisfies $a_{i}\ge \bar{a}$. We assume $a_{1}\ge \bar{a}$ to simplify the proof without loss of generality.

We assume that there are $l$ numbers satisfy $a_{\phi(i)}\ge \bar{a}, i \in \{ 1,2,3,...,l\}$.  $\phi()$ defines the subscript function.

A new number sequence $b_{i}$ for $i\in \{1,2,3,...,N \}$ is defined as 
\begin{align}
    b_{i} =
    \begin{cases}
        a_{1}+\sum_{i=1}^{l}(a_{\phi(i)}-\bar{a}) &i = 1\\
        \bar{a} &i \in Ran(\phi)\\
        a_{i} & i\neq 1 \& i \notin Ran(\phi) \\
    \end{cases}
\end{align}
We define $\bar{b}=\frac{1}{N}\sum_{i=1}^{N}b_{i}$, and $\bar{b} = \bar{a}$.
We get 
\begin{align}
\sum_{i=1}^{N} |b_{i} - \bar{b}| =\sum_{i=1}^{N} |a_{i} - \bar{a}|. 
\end{align}

An another new number sequence $c_{i}$ for $i\in \{1,2,3,...,N \}$  is defined as 
\begin{align}
    c_{i} =
    \begin{cases}
        b_{1}+\sum_{i=2}^{N}(b_{i}-\tilde{a}) &i = 1\\
        \tilde{a} & i\neq 1 \\
    \end{cases}
\end{align}
We define $\bar{c}=\frac{1}{N}\sum_{i=1}^{N}c_{i}$, then $\bar{c}=\bar{b}=\bar{a}$. $c_{1}$ can be re-write as $c_{1} = \bar{a} + (n-1)(\bar{a} -\tilde{a})$.
So 
\begin{align} 
\sum_{i=1}^{N} |c_{i} - \bar{c}| =2(N-1)(\bar{a} -\tilde{a}).
\end{align}
Because $|c_{i} - \bar{c}|>|b_{i} - \bar{b}|$ for $\forall i \in \{1,2,3,...,N\}$.
We get 
\begin{align}  
\sum_{i=1}^{N} |b_{i} - \bar{b}| \leq \sum_{i=1}^{N} |c_{i} - \bar{c}|.
\end{align}
So 
\begin{align}  
\sum_{i=1}^{N} |a_{i} - \bar{a}|=\sum_{i=1}^{N} |b_{i} - \bar{b}| \leq \sum_{i=1}^{N} |c_{i} - \bar{c}| = 2(N-1)(\bar{a} -\tilde{a}),
\end{align}
which completes the proof.
\end{proof}

\begin{lemma}
For n numbers $a_i$ and an independent number $\tilde{a}$, they satisfy that $a_{i} \ge \tilde{a}$, $\forall i$. We define $\bar{a} = \frac{1}{n}\sum_{i} a_{i}$. We have
\begin{align}
\sum_{i} |a_{i} - \tilde{a}| = n |\bar{a} - \tilde{a}|.
\end{align}
\label{lemma:baseeq3}
\plabel{lemma:baseeq3}
\end{lemma}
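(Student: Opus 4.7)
The plan is to exploit the sign information given by the hypothesis $a_i \geq \tilde{a}$ to strip away all the absolute values on both sides, reducing the claimed identity to a one-line arithmetic computation involving the definition of $\bar{a}$.

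First I would observe that the assumption $a_i \geq \tilde{a}$ for every $i$ immediately gives $a_i - \tilde{a} \geq 0$, so $|a_i - \tilde{a}| = a_i - \tilde{a}$. Next, since $\bar{a} = \frac{1}{n}\sum_i a_i$ is a convex combination (in fact the uniform average) of the $a_i$, and each $a_i \geq \tilde{a}$, we also have $\bar{a} \geq \tilde{a}$, hence $|\bar{a} - \tilde{a}| = \bar{a} - \tilde{a}$. These two observations remove every absolute value in the statement.

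With the absolute values removed, the identity becomes
\begin{equation*}
\sum_{i=1}^{n}(a_i - \tilde{a}) = n(\bar{a} - \tilde{a}),
\end{equation*}
which follows from $\sum_i a_i = n\bar{a}$ by the definition of $\bar{a}$, together with $\sum_i \tilde{a} = n\tilde{a}$ since $\tilde{a}$ does not depend on $i$.

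There is essentially no obstacle here; the only subtlety worth flagging is the justification that $\bar{a} \geq \tilde{a}$, which is just monotonicity of the average. So the proof is a two-step sign analysis followed by linearity of the sum, and the whole argument fits in a couple of lines.
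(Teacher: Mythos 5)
Your proof is correct: the hypothesis $a_i \ge \tilde{a}$ (and hence $\bar{a} \ge \tilde{a}$ by averaging) lets you drop all absolute values, after which the identity is just linearity of the sum. The paper states this lemma without providing a proof, and your two-step argument is exactly the intended justification.
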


\begin{lemma}
$(\bar{\hat{V}}_{r})_{j}$ is an increasing sequence for any $ j\in [1,d]$, i.e., $ (\bar{\hat{v}}_{\tau})_{j} \ge (\bar{\hat{v}}_{t})_{j}$ if $\tau \ge t$, for $\tau ,t \in [1,T]$. For simplifying the representation, we define $a_{x}>a_{y}$, if $\forall j\in[1,d]$,  $(a_{x})_{j}>(a_{y})_{j} $ and $a_{x},a_{y} \in \R^d$. For a sequence $\{a_{i}\} , a_{i}\in \R^d$, if $a_{x} \leq a_{y}$ for $x \leq y$, then it is increasing.
\label{lemma:increasingv}
\plabel{lemma:increasingv}
\end{lemma}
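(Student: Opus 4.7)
The plan is to argue coordinate-wise monotonicity by a simple induction on $r$, splitting into two cases depending on whether $r$ and $r-1$ lie in the same communication window or straddle a synchronization boundary. It suffices to establish that $(\bar{\hat{V}}_{r})_{j}\ge(\bar{\hat{V}}_{r-1})_{j}$ for every $r$ and every coordinate $j$; the full claim then follows by transitivity.

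First, I would handle the \emph{within-window} case. If $r$ and $r-1$ lie in the same time window $[tK+1,(t+1)K]$, i.e., $r=tK+k$ with $k\ge 2$, then for every client $i$ the update $\hat{v}_{t,k,i}=\max(\hat{v}_{t,k-1,i},\,v_{t,k,i})$ gives $\hat{V}_{r,i}\ge \hat{V}_{r-1,i}$ coordinate-wise. Averaging over $i$ yields $\bar{\hat{V}}_{r}\ge\bar{\hat{V}}_{r-1}$ immediately.

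Second, I would handle the \emph{sync-boundary} case, where $r=tK+1$ for some $t\ge 1$ so that $r-1=tK$ corresponds to the last local step of the previous window. Here the server averaging step sets $\hat{v}_{t-1}=\frac{1}{N}\sum_{i}\hat{v}_{t-1,K,i}=\bar{\hat{V}}_{r-1}$, and the algorithm re-initializes $\hat{v}_{t,0,i}=\hat{v}_{t-1}$ for every client $i$. After one local step, $\hat{V}_{r,i}=\max(\hat{v}_{t,0,i},\,v_{t,1,i})\ge \hat{v}_{t-1}=\bar{\hat{V}}_{r-1}$ coordinate-wise. Averaging over $i$ again preserves the inequality, giving $\bar{\hat{V}}_{r}\ge \bar{\hat{V}}_{r-1}$.

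Combining the two cases, the sequence $\{(\bar{\hat{V}}_{r})_{j}\}_{r}$ is non-decreasing for every coordinate $j$, and iterating from $t$ to $\tau$ yields $(\bar{\hat{v}}_{\tau})_{j}\ge(\bar{\hat{v}}_{t})_{j}$ whenever $\tau\ge t$. There is no substantive obstacle here: the only subtlety is correctly treating the server's averaging/broadcast step, where the crucial observation is that the re-initialized $\hat{v}_{t,0,i}$ equals the previous average \emph{exactly}, so the subsequent max-based local update can only increase each coordinate.
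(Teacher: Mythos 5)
Your proof is correct and follows the same route as the paper's, which simply observes that $(\bar{\hat{V}}_{r_1})_j \le (\bar{\hat{V}}_{r_1+1})_j \le \cdots \le (\bar{\hat{V}}_{r_2})_j$ ``because of the updating rule of $\hat{v}$.'' Your version merely spells out the two cases (the within-window $\max$ update and the sync-boundary re-initialization $\hat{v}_{t,0,i}=\bar{\hat{V}}_{tK}$) that the paper leaves implicit, and both are handled correctly.
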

\begin{proof}
When $r_{1} \leq r_{2}$, we have $(\bar{\hat{V}}_{r_{1}})_{j} \leq (\bar{\hat{V}}_{r_{1}+1})_{j} \leq ... \leq (\bar{\hat{V}}_{r_{2}})_{j} $ because of the updating rule of $\hat{v}$.
\end{proof}

\begin{lemma}
For an increasing sequence $\{a_{i}\}$, $a_{i}\in \R^d$,
\begin{align}
\sum_{i=1}^{n-1} \norm[2]{1}{a_{i+1}-a_{i}} \leq \norm[2]{1}{a_{n}-a_{1}}.
\end{align}
\label{lemma:baseineq3}
\plabel{lemma:baseineq3}
\end{lemma}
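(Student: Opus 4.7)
My plan is to reduce the inequality to the elementary fact that for a list of nonnegative reals, the sum of squares is bounded by the square of the sum. The key observation is that an increasing sequence makes the $\ell^1$ norms telescope.

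First I would fix coordinates. By the hypothesis that $\{a_i\}$ is increasing in the coordinatewise sense defined in Lemma \ref{lemma:increasingv}, for every $j\in[1,d]$ and every $i$ we have $(a_{i+1})_j \geq (a_i)_j$. Therefore
\begin{align*}
\|a_{i+1} - a_i\|_1 \;=\; \sum_{j=1}^d \bigl|(a_{i+1})_j - (a_i)_j\bigr| \;=\; \sum_{j=1}^d \bigl((a_{i+1})_j - (a_i)_j\bigr),
\end{align*}
and the same identity with the absolute value dropped holds for $\|a_n - a_1\|_1$. Summing the first identity over $i=1,\dots,n-1$ produces a telescoping sum inside, which yields the scalar identity
\begin{align*}
\sum_{i=1}^{n-1} \|a_{i+1} - a_i\|_1 \;=\; \sum_{j=1}^{d} \bigl((a_n)_j - (a_1)_j\bigr) \;=\; \|a_n - a_1\|_1.
\end{align*}

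Second, I would set $b_i := \|a_{i+1}-a_i\|_1 \geq 0$, so that the previous step reads $\sum_{i=1}^{n-1} b_i = \|a_n - a_1\|_1$. The desired inequality becomes $\sum_{i=1}^{n-1} b_i^2 \leq \bigl(\sum_{i=1}^{n-1} b_i\bigr)^2$, which follows from expanding the square and discarding the nonnegative cross terms $2\sum_{i<k} b_i b_k \geq 0$.

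There is no real obstacle here; the only thing to be careful about is making the coordinatewise monotonicity explicit so that the absolute values can be removed before the telescoping step. Once that is done, the remaining inequality $\sum b_i^2 \leq (\sum b_i)^2$ for nonnegative $b_i$ is immediate, and combining the two gives the claim.
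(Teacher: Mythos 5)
Your proof is correct and rests on the same two facts as the paper's: coordinatewise monotonicity makes the $\ell^1$ norms of the increments telescope to $\Vert a_n - a_1\Vert_1$, and the cross terms in the square of a sum of nonnegative numbers are nonnegative. The paper packages this as a two-term inequality $\Vert a_{i_3}-a_{i_2}\Vert_1^2 + \Vert a_{i_2}-a_{i_1}\Vert_1^2 \leq \Vert a_{i_3}-a_{i_1}\Vert_1^2$ applied iteratively, whereas you telescope all $n-1$ increments at once and invoke $\sum_i b_i^2 \leq \bigl(\sum_i b_i\bigr)^2$ directly — the same argument, organized more cleanly.
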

\begin{proof}
For $i_{3},i_{2},i_{1} \in [ 1, n]$ and $i_{3}>i_{2}>i_{1}$, we have the following result.
\begin{align}
\norm[2]{1}{a_{i_{3}}-a_{i_{2}}} + \norm[2]{1}{a_{i_{2}}-a_{i_{1}}}\leq \norm[2]{1}{a_{i_{2}}-a_{i_{1}}}. 
\end{align}
This formula is equal to 
\begin{align}
(\sum_{j=1}^{d} (a_{i_{3}})_{j}-\sum_{j=1}^{d}(a_{i_{2}})_{j})^{2} + 
(\sum_{j=1}^{d} (a_{i_{2}})_{j}-\sum_{j=1}^{d}(a_{i_{1}})_{j})^{2}
\leq (\sum_{j=1}^{d} (a_{i_{3}})_{j}-\sum_{j=1}^{d}(a_{i_{1}})_{j})^{2}
\end{align}
\begin{align}
\Longleftrightarrow (\sum_{j=1}^{d} (a_{i_{3}})_{j})^{2}+(\sum_{j=1}^{d}(a_{i_{2}})_{j})^{2} -2(\sum_{j=1}^{d} (a_{i_{3}})_{j})(\sum_{j=1}^{d}(a_{i_{2}})_{j})+
(\sum_{j=1}^{d} (a_{i_{2}})_{j})^{2}+(\sum_{j=1}^{d}(a_{i_{1}})_{j})^{2} -2(\sum_{j=1}^{d} (a_{i_{2}})_{j})(\sum_{j=1}^{d}(a_{i_{1}})_{j})\\ \nonumber
\leq (\sum_{j=1}^{d} (a_{i_{3}})_{j})^{2}+(\sum_{j=1}^{d}(a_{i_{1}})_{j})^{2} -2(\sum_{j=1}^{d} (a_{i_{3}})_{j})(\sum_{j=1}^{d}(a_{i_{1}})_{j})
\end{align}
\begin{align}
\Longleftrightarrow 0\leq 
(\sum_{j=1}^{d}(a_{i_{2}})_{j}-\sum_{j=1}^{d}(a_{i_{1}})_{j})
(\sum_{j=1}^{d}(a_{i_{3}})_{j}-\sum_{j=1}^{d}(a_{i_{2}})_{j}).
\end{align}
The last inequality is hold because $\{a_{i}\}$ is an increasing sequence.

Then we have
\begin{align}
\sum_{i=1}^{n-1} \norm[2]{1}{a_{i+1}-a_{i}} \leq \sum_{i=3}^{n-1} \norm[2]{1}{a_{i+1}-a_{i}} + \norm[2]{1}{a_{3}-a_{1}}
\leq \sum_{i=4}^{n-1} \norm[2]{1}{a_{i+1}-a_{i}} + \norm[2]{1}{a_{4}-a_{1}}
\leq ...
\leq \norm[2]{1}{a_{n}-a_{1}}. 
\end{align}
\end{proof}

\begin{lemma}
For an increasing sequence $\{a_{i}\}$, $a_{i}\in \R^d$, for $ t_{1} \leq t_{2} \leq t_{3} $, we have
\begin{align}
\norm[]{1}{a_{t_{3}}-a_{t_{2}}} +\norm[]{1}{a_{t_{2}}-a_{t_{1}}} =\norm[]{1}{a_{t_{3}}-a_{t_{1}}}.
\end{align} 
\label{lemma:baseeq2}
\plabel{lemma:baseeq2}
\end{lemma}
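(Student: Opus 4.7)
The plan is to exploit the componentwise monotonicity of $\{a_i\}$ as defined in Lemma \ref{lemma:increasingv}, namely that $(a_x)_j \leq (a_y)_j$ for every coordinate $j$ whenever $x \leq y$. This hypothesis is precisely what allows us to strip the absolute values from each $\ell_1$ norm in the identity and then telescope coordinate-wise in a single step, so the entire proof reduces to elementary algebra rather than any delicate estimation.

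First, I would expand $\Vert a_{t_3} - a_{t_2} \Vert_1 = \sum_{j=1}^{d} \lvert (a_{t_3})_j - (a_{t_2})_j \rvert$, and since $t_2 \leq t_3$ forces $(a_{t_3})_j \geq (a_{t_2})_j$ for every $j$ by monotonicity, the absolute values drop out to leave $\sum_{j=1}^{d}\bigl((a_{t_3})_j - (a_{t_2})_j\bigr)$. The same rewriting applies to $\Vert a_{t_2} - a_{t_1}\Vert_1$ and to $\Vert a_{t_3} - a_{t_1}\Vert_1$ using the inequalities $t_1 \leq t_2$ and $t_1 \leq t_3$ respectively. Adding the first two expressions coordinate-wise, the middle contributions $(a_{t_2})_j$ cancel, leaving $\sum_{j=1}^{d}\bigl((a_{t_3})_j - (a_{t_1})_j\bigr)$, which coincides exactly with the coordinate-wise expansion of $\Vert a_{t_3} - a_{t_1}\Vert_1$. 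This closes the argument.

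There is essentially no serious obstacle; the identity is a direct algebraic consequence of componentwise monotonicity together with the fact that the $\ell_1$ norm of a vector with non-negative entries acts as a linear functional on those entries. The only point deserving any care is to verify that ``increasing'' in Lemma \ref{lemma:increasingv} is indeed interpreted componentwise (as the explicit definition $a_x > a_y$ iff $(a_x)_j > (a_y)_j$ for all $j$ makes clear), so that absolute values may be removed uniformly across all $d$ coordinates in a single step rather than one sign pattern at a time. Once this is confirmed, the telescoping is immediate.
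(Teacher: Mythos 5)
Your proposal is correct and matches the paper's own proof: both expand each $\ell_1$ norm coordinate-wise, use componentwise monotonicity to drop the absolute values, and telescope the sums. Your write-up is simply a more explicit version of the one-line computation given in the paper.
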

\begin{proof}
\begin{align}
LHS= \sum_{j=1}^{d} ((a_{t_{3}})_{j}-(a_{t_{2}})_{j} + (a_{t_{2}})_{j}-(a_{t_{1}})_{j}) =\sum_{j=1}^{d} ((a_{t_{3}})_{j}-(a_{t_{1}})_{j}) =RHS.
\end{align}
\end{proof}

\begin{remark}
$ \{\bar{\hat{v}}_{t}\}$ is an increasing sequence, for $ t_{1} \leq t_{2} \leq t_{3} $. They satisfy that
\begin{align}
\norm[]{1}{\bar{\hat{v}}_{t_{3}}-\bar{\hat{v}}_{t_{2}}} +\norm[]{1}{\bar{\hat{v}}_{t_{2}}-\bar{\hat{v}}_{t_{1}}} =\norm[]{1}{\bar{\hat{v}}_{t_{3}}-\bar{\hat{v}}_{t_{1}}}.
\end{align}
\label{lemma:baseeq2-vbar}
\plabel{lemma:baseeq2-vbar}
\end{remark}

\begin{remark}
$ \{\hat{v}_{t,i}\}$ is an increasing sequence, when $t \in(kh,kh+h]$ for $\forall i \in [1,N]$.
For $ t_{1} \leq t_{2} \leq t_{3} $, we have
\begin{align}
\norm[]{1}{\hat{v}_{t_{3},i}-\hat{v}_{t_{2}}} +\norm[]{1}{\hat{v}_{t_{2},i}-\hat{v}_{t_{1},i}} =\norm[]{1}{\hat{v}_{t_{3},i}-\hat{v}_{t_{1},i}}.
\end{align}
\label{lemma:baseeq2-v}
\plabel{lemma:baseeq2-v}
\end{remark}

\begin{lemma}
When $r=t\times K +k$, and $k \in [1, K]$, then we have
\begin{align}
\sum_{i=1}^{n}\norm[]{1}{\hat{V}_{r,i} - \bar{\hat{V}}_{r}}\leq 2(N-1)\norm[]{1}{\bar{\hat{V}}_{(t+1)K}-\bar{\hat{V}}_{tK}}.
\end{align}
\label{lemma:l1-v-vbar}
\plabel{lemma:l1-v-vbar}
\end{lemma}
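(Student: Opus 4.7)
The plan is to reduce the claim to a coordinate-wise application of Lemma \ref{lemma:a-abar}, using two monotonicity facts about $\hat{V}$: (i) within a single communication round the sequence $\{\hat{V}_{r,i}\}$ is non-decreasing in $r$ by the $\max$ update rule, and (ii) at $r=tK$ all clients agree on the synchronized value $\bar{\hat{V}}_{tK}$, which equals $\hat{v}_{t-1}$ by the server's averaging step.

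First I would identify the natural lower bound. For any $r\in[tK+1,(t+1)K]$, any client $i$, and any coordinate $j$, the local update $\hat{v}_{t,k,i}=\max(\hat{v}_{t,k-1,i},v_{t,k,i})$ together with the initialization $\hat{v}_{t,0,i}=\hat{v}_{t-1}$ yields $(\hat{V}_{r,i})_j \ge (\hat{v}_{t-1})_j = (\bar{\hat{V}}_{tK})_j$. Averaging over $i$ also gives $(\bar{\hat{V}}_r)_j\ge (\bar{\hat{V}}_{tK})_j$. This puts us exactly in the setting of Lemma \ref{lemma:a-abar} with $a_i=(\hat{V}_{r,i})_j$, $\bar{a}=(\bar{\hat{V}}_r)_j$, and $\tilde{a}=(\bar{\hat{V}}_{tK})_j$.

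Next I would apply Lemma \ref{lemma:a-abar} coordinate-wise and sum over $j\in\{1,\dots,d\}$, swapping the order of summation on the left to recover $\sum_i\|\hat{V}_{r,i}-\bar{\hat{V}}_r\|_1$ and collapsing the right-hand side into an $\ell_1$ norm (which is legitimate because $(\bar{\hat{V}}_r)_j-(\bar{\hat{V}}_{tK})_j\ge 0$ by the previous step). This yields the intermediate bound
\begin{equation*}
\sum_{i=1}^{N}\|\hat{V}_{r,i}-\bar{\hat{V}}_r\|_1 \;\le\; 2(N-1)\,\|\bar{\hat{V}}_r-\bar{\hat{V}}_{tK}\|_1.
\end{equation*}

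Finally, I would invoke Lemma \ref{lemma:increasingv}, which states that $\bar{\hat{V}}_r$ is coordinate-wise non-decreasing in $r$. Since $r\le (t+1)K$, we have $(\bar{\hat{V}}_{(t+1)K})_j-(\bar{\hat{V}}_{tK})_j \ge (\bar{\hat{V}}_r)_j-(\bar{\hat{V}}_{tK})_j \ge 0$ for every coordinate, so summing gives $\|\bar{\hat{V}}_r-\bar{\hat{V}}_{tK}\|_1 \le \|\bar{\hat{V}}_{(t+1)K}-\bar{\hat{V}}_{tK}\|_1$. Chaining this with the previous display completes the proof. The only subtle step is verifying the lower bound $(\hat{V}_{r,i})_j\ge(\bar{\hat{V}}_{tK})_j$ which requires tracking the initialization at the start of a round through the server averaging rule; the rest is a routine reduction to the already-proved scalar inequality in Lemma \ref{lemma:a-abar}.
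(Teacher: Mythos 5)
Your proposal is correct and follows essentially the same route as the paper's own proof: establish the coordinate-wise lower bound $(\hat{V}_{r,i})_j \ge (\bar{\hat{V}}_{tK})_j$ from the $\max$ update and the synchronized initialization, apply Lemma \ref{lemma:a-abar} coordinate-by-coordinate to get $\sum_i\|\hat{V}_{r,i}-\bar{\hat{V}}_r\|_1 \le 2(N-1)\|\bar{\hat{V}}_r-\bar{\hat{V}}_{tK}\|_1$, and then use the monotonicity of $\bar{\hat{V}}_r$ (Lemma \ref{lemma:increasingv}) to replace $\bar{\hat{V}}_r$ by $\bar{\hat{V}}_{(t+1)K}$. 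Your write-up is merely more explicit about the intermediate steps that the paper compresses into a single display.
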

\begin{proof}
For $r \in [t\times K +1, t\times K+K]$, we note that $\hat{V}_{r,i}=\hat{v}_{t,k,i} \ge \hat{v}_{t,0,i} = \bar{\hat{V}}_{tk}$.
Then we have
\begin{align}
    \sum_{i=1}^{n}\norm[]{1}{\hat{V}_{r,i} - \bar{\hat{V}}_{r}} \overset{(a)}{\leq}2(N-1)\norm[]{1}{\bar{\hat{V}}_{r}-\bar{\hat{V}}_{tK}} \overset{(b)}{\leq}2(N-1)\norm[]{1}{\bar{\hat{V}}_{(t+1)K}-\bar{\hat{V}}_{tK}},
\end{align}
where (a) follows from Lemma \ref{lemma:a-abar} for each coordinate, (b) follows from Lemma \ref{lemma:increasingv}. Please note that we let $\hat{V}_{i,KT+1}=\hat{V}_{i,KT+2}=...=\hat{V}_{i,KT+K}=\hat{V}_{i,KT}$ to simplify our proof.
\end{proof}

\begin{lemma}
\begin{align}
\sum_{r=1}^{KT}\sum_{i=1}^{N}\norm[]{1}{\hat{V}_{r,i} - \bar{\hat{V}}_{r}}\leq 2(N-1)K\norm[]{1}{\bar{\hat{V}}_{(T+1)K}-\bar{\hat{V}}_{0}}\leq 2(N-1)Kd(G_{\infty}^{2} -\epsilon^{2}).
\end{align}
\label{lemma:sumt-l1-v-vbar}
\plabel{lemma:sumt-l1-v-vbar}
\end{lemma}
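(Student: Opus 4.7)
The plan is to split the outer sum $\sum_{r=1}^{KT}$ into $T$ consecutive blocks of length $K$, apply the previous lemma (Lemma \ref{lemma:l1-v-vbar}) block by block, and then telescope across blocks using monotonicity of $\{\bar{\hat{V}}_r\}$.

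More concretely, first I would write
\begin{align*}
\sum_{r=1}^{KT}\sum_{i=1}^{N}\norm[]{1}{\hat{V}_{r,i} - \bar{\hat{V}}_{r}}
= \sum_{t=0}^{T-1}\sum_{k=1}^{K}\sum_{i=1}^{N}\norm[]{1}{\hat{V}_{tK+k,i} - \bar{\hat{V}}_{tK+k}}.
\end{align*}
For each fixed $t$ and each $k\in\{1,\dots,K\}$, Lemma \ref{lemma:l1-v-vbar} yields $\sum_{i=1}^{N}\norm[]{1}{\hat{V}_{tK+k,i} - \bar{\hat{V}}_{tK+k}} \le 2(N-1)\norm[]{1}{\bar{\hat{V}}_{(t+1)K}-\bar{\hat{V}}_{tK}}$. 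Since this right-hand side does not depend on $k$, summing over $k$ introduces a factor $K$, giving
\begin{align*}
\sum_{r=1}^{KT}\sum_{i=1}^{N}\norm[]{1}{\hat{V}_{r,i} - \bar{\hat{V}}_{r}}
\le 2(N-1)K\sum_{t=0}^{T-1}\norm[]{1}{\bar{\hat{V}}_{(t+1)K}-\bar{\hat{V}}_{tK}}.
\end{align*}

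Next, I would use Remark \ref{lemma:baseeq2-vbar} (the telescoping identity for the monotone sequence $\{\bar{\hat{V}}_r\}$, which is increasing by Lemma \ref{lemma:increasingv}) iteratively to collapse the sum: $\sum_{t=0}^{T-1}\norm[]{1}{\bar{\hat{V}}_{(t+1)K}-\bar{\hat{V}}_{tK}} = \norm[]{1}{\bar{\hat{V}}_{TK}-\bar{\hat{V}}_{0}} \le \norm[]{1}{\bar{\hat{V}}_{(T+1)K}-\bar{\hat{V}}_{0}}$, which gives the first inequality of the lemma.

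Finally, for the second inequality, I would invoke Lemma \ref{lemma:boundv} coordinate-wise: each entry of $\bar{\hat{V}}_{(T+1)K}$ is at most $G_\infty^2$ and each entry of $\bar{\hat{V}}_0=\epsilon^2\mathbf{1}$ equals $\epsilon^2$. Hence $\norm[]{1}{\bar{\hat{V}}_{(T+1)K}-\bar{\hat{V}}_{0}}\le d(G_\infty^2-\epsilon^2)$, which combined with the previous bound yields the claim. There is no real obstacle here; the only mild subtlety is handling the edge case where $r=KT$ hits the end of the last block and ensuring the convention $\hat{V}_{i,KT+1}=\cdots=\hat{V}_{i,KT+K}=\hat{V}_{i,KT}$ (introduced in the proof of Lemma \ref{lemma:l1-v-vbar}) lets us apply that lemma uniformly for $t=T-1$.
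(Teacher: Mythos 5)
Your proposal is correct and follows essentially the same route as the paper's proof: decompose the sum into $T$ blocks of length $K$, apply Lemma \ref{lemma:l1-v-vbar} within each block (picking up the factor $K$ since the bound is $k$-independent), telescope via Remark \ref{lemma:baseeq2-vbar}, and finish with the coordinate-wise bound from Lemma \ref{lemma:boundv}. Your remark about the end-of-horizon convention $\hat{V}_{i,KT+1}=\cdots=\hat{V}_{i,KT+K}=\hat{V}_{i,KT}$ matches the convention the paper itself invokes.
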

\begin{proof}
For $r \in (tK, tK+K]$, 
\begin{align}
\sum_{r=tK+1}^{tK+K}\sum_{i=1}^{n}\norm[]{1}{\hat{V}_{i,r} - \bar{\hat{V}}_{r}}  \overset{(a)}{\leq}2(N-1) \sum_{r=tK+1}^{tK+K}\norm[]{1}{\bar{\hat{V}}_{(t+1)K}-\bar{\hat{V}}_{tK}}=2(N-1)K\norm[]{1}{\bar{\hat{V}}_{tK+K}-\bar{\hat{V}}_{tK}},
\end{align}
where (a) follows from Lemma \ref{lemma:l1-v-vbar}.
 Please note that we let $\hat{v}_{i,KT+1}=\hat{v}_{i,KT+2}=...=\hat{v}_{i,KT+K}=\hat{v}_{i,KT}$ if necessary.
\begin{align}
\sum_{r=1}^{KT}\sum_{i=1}^{n}\norm[]{1}{\hat{V}_{i,r} - \bar{\hat{V}}_{r}}
\overset{(a)}{=}\sum_{t=0}^{T-1}\sum_{r=tK+1}^{tK+K}\sum_{i=1}^{n}\norm[]{1}{\hat{V}_{i,r} - \bar{\hat{V}}_{r}} \overset{(b)}{\leq}2(N-1)K\sum_{t=0}^{T-1}\norm[]{1}{\bar{\hat{V}}_{(t+1)K}-\bar{\hat{V}}_{tK}} \nonumber \\
\overset{(c)}{=} 2(N-1)K\norm[]{1}{\bar{\hat{V}}_{KT}-\bar{\hat{V}}_{0}}\overset{(d)}{\leq} 2(N-1)Kd(G_{\infty}^{2} -\epsilon^{2}),
\end{align}
where (a) follows from decoupling the sum, (b) follows from Lemma \ref{lemma:l1-v-vbar}, (c) follows from Remark \ref{lemma:baseeq2-vbar}, (d) follows from Lemma \ref{lemma:boundv}.
\end{proof}


\begin{lemma}
We have
\begin{align}
 \sum_{r=1}^{KT}\sum_{i}  \norm[]{1}{\hat{V}_{r,i}-\hat{V}_{r-1,i}} \leq (3N-2)d(G_{\infty}^2 - \epsilon^{2}).
\end{align}
\label{lemma:sumt-l1-v-t-1}
\plabel{lemma:sumt-l1-v-t-1}
\end{lemma}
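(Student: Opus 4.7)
My plan is to decompose the double sum by synchronization round and exploit the monotonicity built into the $\hat{V}$-recursion. Within any single round $t$, the iterates $\hat{V}_{tK+1,i},\hat{V}_{tK+2,i},\ldots,\hat{V}_{(t+1)K,i}$ are coordinate-wise non-decreasing in $r$ because each local update is a coordinate-wise max, so the within-round consecutive differences collapse in $\ell_1$:
\[
\sum_{r=tK+2}^{(t+1)K}\norm[]{1}{\hat{V}_{r,i}-\hat{V}_{r-1,i}} \;=\; \norm[]{1}{\hat{V}_{(t+1)K,i}-\hat{V}_{tK+1,i}}.
\]
The only non-trivial term per round is the boundary piece $\norm[]{1}{\hat{V}_{tK+1,i}-\hat{V}_{tK,i}}$, where $\hat{V}_{tK,i}=\hat{v}_{t-1,K,i}$ is client $i$'s last value from the previous round while $\hat{V}_{tK+1,i}$ is the first iterate after the server's synchronized broadcast $\bar{\hat{V}}_{tK}$.

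To handle this jump I will bridge through $\bar{\hat{V}}_{tK}$ via the triangle inequality,
\[
\norm[]{1}{\hat{V}_{tK+1,i}-\hat{V}_{tK,i}} \;\leq\; \norm[]{1}{\hat{V}_{tK+1,i}-\bar{\hat{V}}_{tK}} + \norm[]{1}{\bar{\hat{V}}_{tK}-\hat{V}_{tK,i}}.
\]
Because $\hat{V}_{tK+1,i}\geq \bar{\hat{V}}_{tK}$ coordinate-wise (the max rule again) and $\hat{V}_{(t+1)K,i}\geq \hat{V}_{tK+1,i}$, the first summand glues onto the telescoped within-round sum to yield, per client and per round, a single signed difference $\norm[]{1}{\hat{V}_{(t+1)K,i}-\bar{\hat{V}}_{tK}}$. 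Since $\tfrac{1}{N}\sum_i \hat{V}_{(t+1)K,i}=\bar{\hat{V}}_{(t+1)K}$, summing this piece over $i$ gives exactly $N\,\norm[]{1}{\bar{\hat{V}}_{(t+1)K}-\bar{\hat{V}}_{tK}}$, and a final telescoping over $t$ using Lemma \ref{lemma:boundv} bounds this contribution by $Nd(G_{\infty}^{2}-\epsilon^{2})$.

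The remaining deviation $\sum_{i}\norm[]{1}{\bar{\hat{V}}_{tK}-\hat{V}_{tK,i}}$ is exactly of the form controlled by Lemma \ref{lemma:l1-v-vbar} applied at the last index of round $t-1$, giving the bound $2(N-1)\,\norm[]{1}{\bar{\hat{V}}_{tK}-\bar{\hat{V}}_{(t-1)K}}$; and it vanishes at $t=0$ because $\hat{V}_{0,i}=\bar{\hat{V}}_{0}=\epsilon^{2}$. Telescoping over $t$ contributes at most $2(N-1)d(G_{\infty}^{2}-\epsilon^{2})$, and summing the two pieces yields the advertised constant $N+2(N-1)=3N-2$. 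The main obstacle is conceptually modest but notationally delicate: the synchronization reset at each round boundary breaks the global monotonicity of $\hat{V}_{r,i}$ in $r$, so one cannot telescope across all $KT$ indices in a single stroke; bridging through $\bar{\hat{V}}_{tK}$ is what restores the telescoping structure while absorbing the leftover clients-deviation into Lemma \ref{lemma:l1-v-vbar}.
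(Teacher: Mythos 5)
Your proposal is correct and follows essentially the same route as the paper's own proof: decompose by round, telescope the within-round $\ell_1$ differences using the coordinate-wise monotonicity of $\hat{V}$, bridge the round-boundary jump through $\bar{\hat{V}}_{tK}$ by the triangle inequality, convert $\sum_i\norm[]{1}{\hat{V}_{(t+1)K,i}-\bar{\hat{V}}_{tK}}$ into $N\norm[]{1}{\bar{\hat{V}}_{(t+1)K}-\bar{\hat{V}}_{tK}}$, and control the residual client deviation by Lemma \ref{lemma:l1-v-vbar}, yielding $N+2(N-1)=3N-2$. No substantive differences from the paper's argument.
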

\begin{proof}
By the definition, we have
\begin{align}
&\sum_{t=1}^{KT}\sum_{i}  \norm[]{1}{\hat{V}_{r,i}-\hat{V}_{r-1,i}} = \sum_{i} \sum_{t=0}^{T-1}\sum_{r=tK+1}^{tK+K}\norm[]{1}{\hat{V}_{i,r} - \hat{V}_{r-1,i}}\\
&=\sum_{i} \sum_{t=0}^{T-1}(\sum_{r=tK+2}^{tK+K}\norm[]{1}{\hat{V}_{i,r} - \hat{V}_{r-1,i}}+\norm[]{1}{\hat{V}_{tK+1,i} - \hat{V}_{tK,i}})\\
&\overset{(a)}{\leq}\sum_{i} \sum_{t=0}^{T-1}(\sum_{r=tK+2}^{tK+K}\norm[]{1}{\hat{V}_{i,r} - \hat{V}_{r-1,i}}+\norm[]{1}{\hat{V}_{tK+1,i} - \hat{v}_{t,0,i}} +\norm[]{1}{\hat{v}_{t,0,i}-\hat{V}_{tK,i} })\\
&\overset{(b)}{=}\sum_{i} \sum_{t=0}^{T-1}(\norm[]{1}{\hat{V}_{tK+K,i} - \hat{v}_{t,0,i}}+\norm[]{1}{\hat{v}_{t,0,i} - \hat{V}_{tK,i}})\\
&\overset{(c)}{=} \sum_{t=0}^{T-1}(N \norm[]{1}{\bar{\hat{V}}_{tK+K} - \bar{\hat{V}}_{tK}}+ \sum_{i}\norm[]{1}{\hat{v}_{t,0,i} - \hat{V}_{tK,i}})\\
&\overset{(d)}{=} \sum_{t=0}^{T-1}(N \norm[]{1}{\bar{\hat{V}}_{tK +K} - \bar{\hat{V}}_{tK}}+ \sum_{i}\norm[]{1}{\bar{\hat{V}}_{tK}- \hat{V}_{tK,i}})\\
&\overset{(e)}{\leq} N \norm[]{1}{\bar{\hat{V}}_{KT} - \bar{\hat{V}}_{0}}+ \sum_{t=0}^{T-1} \sum_{i}\norm[]{1}{\bar{\hat{V}}_{tK} - \hat{V}_{tK,i}}\\
&\overset{(f)}{\leq} Nd(G_{\infty}^2 - \epsilon^{2})+ \sum_{t=0}^{T -1} \sum_{i}\norm[]{1}{\bar{\hat{V}}_{tK} - \hat{V}_{tK,i}}\\
&=Nd(G_{\infty}^2 - \epsilon^{2})+ \sum_{t=1}^{T-1} \sum_{i}\norm[]{1}{\bar{\hat{V}}_{tK} - \hat{V}_{tK,i}}\\
&\overset{(g)}{\leq}Nd(G_{\infty}^2 - \epsilon^{2})+ 2(N-1)\sum_{t=1}^{T-1} \norm[]{1}{\bar{\hat{V}}_{tK} - \bar{\hat{V}}_{tK-K}}\\
&\overset{(h)}{\leq}Nd(G_{\infty}^2 - \epsilon^{2})+ 2(N-1)  \norm[]{1}{\bar{\hat{V}}_{(T-1)K} - \bar{\hat{V}}_{0}}\\
&\overset{(i)}{\leq}(3N-2)d(G_{\infty}^2 - \epsilon^{2}),
\end{align}
where (a) follows from $|a-b|\leq|c-a|+|b-c|$ for any number $a,b,c$, (b)  follows from Remark \ref{lemma:baseeq2-v}, (c) follows from Lemma \ref{lemma:baseeq3},
(d) follows from $\hat{v}_{t,0,i} = \bar{\hat{V}}_{tK}$, 
(e) follows from Remark \ref{lemma:baseeq2-vbar}, (f) follows from Lemma \ref{lemma:boundv}, (g) follows from Lemma \ref{lemma:l1-v-vbar}, (h) follows from Remark \ref{lemma:baseeq2-vbar}, (i) follows from Lemma \ref{lemma:boundv}. 
\end{proof}

\begin{lemma}
For n numbers $a_{i}$ and an independent number $\tilde{a}$,  they satisfy that $a_{i}\ge \tilde{a}\ge 0 $. We have
\begin{align}
\sum_{i}(a_{i} - \bar{a})^{2} \leq n(n-1)(\bar{a} - \tilde{a})^{2}.
\end{align}
\label{lemma:a-abar-square}
\plabel{lemma:a-abar-square}
\end{lemma}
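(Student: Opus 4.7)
The plan is to reduce the statement to a bound about nonnegative numbers by the shift $b_i := a_i - \tilde a \geq 0$. Under this substitution we have $\bar b = \bar a - \tilde a$ and $a_i - \bar a = b_i - \bar b$, so the inequality to prove becomes
\begin{equation*}
\sum_i (b_i - \bar b)^2 \;\leq\; n(n-1)\,\bar b^{\,2}, \qquad b_i \geq 0.
\end{equation*}

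Next I would apply the standard variance identity $\sum_i (b_i - \bar b)^2 = \sum_i b_i^2 - n\bar b^{\,2}$, so the claim is equivalent to $\sum_i b_i^2 \leq n^2 \bar b^{\,2} = \bigl(\sum_i b_i\bigr)^2$. Expanding the right-hand side gives
\begin{equation*}
\Bigl(\sum_i b_i\Bigr)^2 \;=\; \sum_i b_i^2 \;+\; 2\!\!\sum_{1\leq i<j\leq n}\!\! b_i b_j,
\end{equation*}
and the cross terms $b_i b_j$ are all nonnegative because $b_i \geq 0$, so the desired inequality follows immediately.

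There is essentially no obstacle here: once the shift $b_i = a_i - \tilde a$ is performed, the proof is a one-line consequence of the expansion of $(\sum_i b_i)^2$. The only thing to double-check is that the hypotheses of the lemma ($a_i \geq \tilde a$ and $\tilde a \geq 0$) yield $b_i \geq 0$, which is exactly what the substitution requires; the additional assumption $\tilde a \geq 0$ is not strictly needed for this argument but is consistent with the context (bounding $\hat V$-type quantities against $\epsilon^2$). It is also worth noting that the bound is tight, attained when one $a_i$ equals $\bar a + (n-1)(\bar a - \tilde a)$ and the remaining $n-1$ values equal $\tilde a$; this matches the extremal configuration used implicitly in Lemma~\ref{lemma:a-abar}, which explains why the same factor $n(n-1)$ appears (versus the factor $2(n-1)$ obtained there for the $\ell_1$ deviation).
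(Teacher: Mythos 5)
Your proof is correct, and it takes a genuinely different route from the paper's. The paper proves Lemma~\ref{lemma:a-abar-square} by an extremal rearrangement: it constructs an intermediate sequence that collapses all above-average values onto the mean while piling the excess onto a single index, then a second sequence that pushes every other entry down to $\tilde{a}$, arguing at each step (via $x^{2}+y^{2}\leq(x+y)^{2}$ for $x,y\geq 0$) that the sum of squared deviations can only increase, and finally evaluating the bound on the fully concentrated configuration $c_{1}=\bar{a}+(n-1)(\bar{a}-\tilde{a})$, $c_{i}=\tilde{a}$ for $i\neq 1$. Your argument replaces all of this with the shift $b_{i}=a_{i}-\tilde{a}\geq 0$, the variance identity $\sum_{i}(b_{i}-\bar{b})^{2}=\sum_{i}b_{i}^{2}-n\bar{b}^{2}$, and the observation that $\sum_{i}b_{i}^{2}\leq\bigl(\sum_{i}b_{i}\bigr)^{2}=n^{2}\bar{b}^{2}$ because the cross terms are nonnegative; this is shorter, avoids the case analysis and the somewhat delicate claim that each rearrangement step is monotone, and makes the role of the hypothesis $a_{i}\geq\tilde{a}$ (namely $b_{i}\geq 0$) completely transparent. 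Your tightness remark is also accurate and identifies exactly the extremal configuration that the paper's proof uses as its terminal object, which explains why both arguments land on the same constant $n(n-1)$; the paper's rearrangement style has the minor advantage of running in parallel with its proof of the $\ell_{1}$ analogue, Lemma~\ref{lemma:a-abar}, but your derivation is the cleaner one for this particular statement.
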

\begin{proof}
Because $\exists i \in \{ 1,2,3,...,N\}$ satisfies $a_{i}\ge \bar{a}$. We assume $a_{1}\ge \bar{a}$ to simplify the proof without loss of generality.

We assume that there are $l$ numbers satisfy $a_{\phi(i)}\ge \bar{a}, i \in \{ 1,2,3,...,l\}$.  $\phi()$ defines the subscript function.

A new number sequence $b_{i}$ for $i\in \{1,2,3,...,N \}$ is defined as 
\begin{align}
    b_{i} =
    \begin{cases}
        a_{1}+\sum_{i=1}^{l}(a_{\phi(i)}-\bar{a}) &i = 1\\
        \bar{a} &i \in Ran(\phi)\\
        a_{i} & i\neq 1 \& i \notin Ran(\phi) \\
    \end{cases}
\end{align}
We define $\bar{b}=\frac{1}{N}\sum_{i=1}^{N}b_{i}$, and $\bar{b} = \bar{a}$. We get 
\begin{align}
\sum_{i=1}^{N} (a_{i} - \bar{a})^{2} \leq \sum_{i=1}^{N} (b_{i} - \bar{b})^{2}, 
\end{align}
because $x^{2} + y^{2} \leq (x+y)^{2}$ when $x,y \ge 0$.

An another new number sequence $c_{i}$ for $i\in \{1,2,3,...,N \}$  is defined as 
\begin{align}
    c_{i} =
    \begin{cases}
        b_{1}+\sum_{i=2}^{N}(b_{i}-\tilde{a}) &i = 1\\
        \tilde{a} & i\neq 1 \\
    \end{cases}
\end{align}
We define $\bar{c}=\frac{1}{N}\sum_{i=1}^{N}c_{i}$, then $\bar{c}=\bar{b}=\bar{a}$. $c_{1}$ can be re-write as $c_{1} = \bar{a} + (n-1)(\bar{a} -\tilde{a})$.
So 
\begin{align} 
\sum_{i=1}^{N} (c_{i} - \bar{c})^{2} =N(N-1)(\bar{a} -\tilde{a})^{2}.
\end{align}
Because $(c_{i} - \bar{c})^{2}>(b_{i} - \bar{b})^{2}$ for $\forall i \in \{1,2,3,...,N\}$.
We get 
\begin{align} 
\sum_{i=1}^{N} (b_{i} - \bar{b})^{2} \leq \sum_{i=1}^{N} (c_{i} - \bar{c})^{2}.
\end{align}
So
\begin{align} 
\sum_{i=1}^{N} (a_{i} - \bar{a})^{2} \leq \sum_{i=1}^{N} (b_{i} - \bar{b})^{2} \leq \sum_{i=1}^{N} (c_{i} - \bar{c})^{2} = N(N-1)(\bar{a} -\tilde{a})^{2},
\end{align} 
which completes the proof.
\end{proof}

\begin{lemma}
For n vectors $a_{i}$ and an independent vector $\tilde{a}$, $a_{i}$ and $\tilde{a} \in \R^d$, they satisfy that $a_{i}\ge \tilde{a} $ (the ``$\ge$'' is defined in the lemma \ref{lemma:increasingv}).
We have
\begin{align}
\sum_{i}\norm{}{a_{i} - \bar{a}} \leq n(n-1) \norm{}{\bar{a} - \tilde{a}}.
\end{align}
\label{lemma:l2-square-a-abar}
\plabel{lemma:l2-square-a-abar}
\end{lemma}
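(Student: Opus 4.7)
The plan is to reduce this vector statement directly to its scalar predecessor, Lemma \ref{lemma:a-abar-square}, via coordinate-wise decomposition. Two structural observations make the lift immediate. First, the partial ordering $a_i \ge \tilde a$ on vectors is defined coordinate-wise (per the convention introduced in Lemma \ref{lemma:increasingv}), so for every $j \in \{1,\ldots,d\}$ we have $(a_i)_j \ge (\tilde a)_j$. Second, both the squared Euclidean norm and the arithmetic mean decouple across coordinates: $\Vert v \Vert^2 = \sum_{j=1}^d (v)_j^2$, and $(\bar a)_j = \tfrac{1}{n}\sum_{i=1}^n (a_i)_j$, so averaging commutes with projection onto each coordinate.

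Given these, I would fix an arbitrary coordinate $j$ and invoke Lemma \ref{lemma:a-abar-square} on the $n$ scalars $(a_1)_j,\ldots,(a_n)_j$ together with the independent scalar $(\tilde a)_j$, with the implicit nonnegativity $(\tilde a)_j \ge 0$ inherited from the way $\tilde a$ is used in this section (e.g.\ as a componentwise lower bound such as $\bar{\hat V}_0$ whose entries equal $\epsilon^2$). This produces the scalar inequality
\begin{equation*}
\sum_{i=1}^n \bigl((a_i)_j - (\bar a)_j\bigr)^2 \;\le\; n(n-1)\bigl((\bar a)_j - (\tilde a)_j\bigr)^2.
\end{equation*}
Summing over $j = 1,\ldots,d$ and swapping the order of summation on the left-hand side then yields $\sum_{i=1}^n \Vert a_i - \bar a \Vert^2 \le n(n-1)\,\Vert \bar a - \tilde a \Vert^2$, which is the claim.

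There is no substantive obstacle: the combinatorial rearrangement that drives the bound was already carried out in the scalar Lemma \ref{lemma:a-abar-square}, and the present vector version is a routine coordinatewise lift. The only small consistency check is that the hypotheses of the scalar lemma apply in each coordinate separately, which follows immediately from the componentwise definition of $\ge$ used throughout this appendix; no additional vector-level geometry (triangle inequality, Cauchy--Schwarz, etc.) is needed.
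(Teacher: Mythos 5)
Your proposal is correct and follows essentially the same route as the paper: a coordinate-wise decomposition of the squared norm followed by an application of the scalar Lemma \ref{lemma:a-abar-square} in each coordinate and a sum over $j$. Your explicit remark that the nonnegativity hypothesis $(\tilde a)_j \ge 0$ must be checked in each coordinate is a small but welcome point of care that the paper's one-line proof glosses over.
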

\begin{proof}
\begin{align}
\sum_{i}\norm{}{a_{i} - \bar{a}} = \sum_{j=1}^{d} \sum_{i}((a_{i})_{j} - (\bar{a})_{j})^{2} \overset{(a)}{\leq} \sum_{j=1}^{d} n(n-1) \sum_{i}( (\bar{a})_{j} -(\tilde{a})_{j} )^{2}=n(n-1) \norm{}{\bar{a} - \tilde{a}},
\end{align}
where (a) follows from Lemma \ref{lemma:a-abar-square}.
\end{proof}

\begin{lemma}
For n vectors $a_{i}$ and an independent vector $\tilde{a}$, $a_{i}$ and $\tilde{a} \in \R^d$, they satisfy that $a_{i}\ge \tilde{a} $ (the ``$\ge$'' is defined in the Lemma \ref{lemma:increasingv}).
We have
\begin{align}
\sum_{i}\norm{}{a_{i} - \tilde{a}} \leq n^2 \norm{}{\bar{a} - \tilde{a}},
\end{align}
where $\bar{a}=\frac{1}{n}\sum_{i} a_{i}$.
\label{lemma:l2-square-a-amin}
\plabel{lemma:l2-square-a-amin}
\end{lemma}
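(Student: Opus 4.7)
Throughout the plan I read $\|\cdot\|$ as the squared $\ell_2$ norm, consistent with the paper's \verb|\norm| macro and with the previous lemma \ref{lemma:l2-square-a-abar} whose conclusion is also squared. So the target inequality is
\[
\sum_{i=1}^{n}\|a_i-\tilde a\|_2^{2}\;\le\;n^{2}\,\|\bar a-\tilde a\|_2^{2}.
\]

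The plan is to reduce to a one-dimensional, non-negative problem. First I would translate by $\tilde a$, setting $b_i:=a_i-\tilde a$ and $\bar b:=\bar a-\tilde a=\tfrac{1}{n}\sum_i b_i$. The hypothesis $a_i\ge\tilde a$ (coordinate-wise, in the sense of Lemma \ref{lemma:increasingv}) becomes $(b_i)_j\ge 0$ for every $i$ and every coordinate $j\in\{1,\dots,d\}$, and the target becomes $\sum_i\|b_i\|_2^{2}\le n^{2}\|\bar b\|_2^{2}$.

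Second, I would work coordinate by coordinate. Fix $j$, write $x_i:=(b_i)_j\ge 0$, and use the elementary fact that for non-negative reals
\[
\sum_{i=1}^{n} x_i^{2}\;\le\;\Bigl(\sum_{i=1}^{n} x_i\Bigr)^{2}
\;=\;n^{2}\bigl(\bar b\bigr)_j^{2},
\]
which follows from expanding the square and dropping the non-negative cross terms $2\sum_{i<k} x_i x_k\ge 0$. Summing this inequality over $j=1,\dots,d$ gives exactly
\[
\sum_{i=1}^{n}\|b_i\|_2^{2}\;=\;\sum_{j=1}^{d}\sum_{i=1}^{n}(b_i)_j^{2}\;\le\;\sum_{j=1}^{d} n^{2}(\bar b)_j^{2}\;=\;n^{2}\|\bar b\|_2^{2},
\]
and undoing the translation finishes the proof.

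There is essentially no serious obstacle here; the only content is the non-negativity induced by $a_i\ge\tilde a$, which allows us to discard the cross terms in the expansion of $(\sum_i x_i)^{2}$. As a sanity check, one could alternatively combine Lemma \ref{lemma:l2-square-a-abar} (which gives $\sum_i\|a_i-\bar a\|^{2}\le n(n-1)\|\bar a-\tilde a\|^{2}$) with the splitting $\|a_i-\tilde a\|^{2}\le 2\|a_i-\bar a\|^{2}+2\|\bar a-\tilde a\|^{2}$, but this route only yields the weaker constant $2n^{2}$; the coordinate-wise argument above is the cleanest way to obtain the stated $n^{2}$.
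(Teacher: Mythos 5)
Your proof is correct and is essentially the paper's own argument: the paper writes $\sum_{i}\Vert a_{i}-\tilde{a}\Vert^{2}\leq\Vert\sum_{i}a_{i}-n\tilde{a}\Vert^{2}=n^{2}\Vert\bar{a}-\tilde{a}\Vert^{2}$, justifying the first step by the same coordinate-wise fact you use (for non-negative entries, dropping the cross terms in the expansion of the square). Your reading of $\Vert\cdot\Vert$ as the squared $\ell_2$ norm is also the intended one.
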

\begin{proof}
\begin{align}
\sum_{i}\norm{}{a_{i} - \tilde{a}} \overset{(a)}{\leq}\norm{}{\sum_{i}a_{i} - n\tilde{a}} = n^2 \norm{}{\bar{a} - \tilde{a}},
\end{align}
where (a) follows from $\norm{}{x}+\norm{}{y}\leq \norm{}{x+y}$ when $x,y\ge 0$.
\end{proof}

\begin{lemma}
We have
\begin{align}
\sum_{r=1}^{KT}\sum_{i}\norm{}{\hat{V}_{r-1,i} - \bar{\hat{V}}_{r-1}} \leq  4K(N-1)^{2}d^{2}(G_{\infty}^2 - \epsilon^{2})^{2}.
\end{align}
\label{lemma:sumt-l2-square-v-vbar}
\plabel{lemma:sumt-l2-square-v-vbar}
\end{lemma}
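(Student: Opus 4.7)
The plan is to mirror the structure of Lemma \ref{lemma:sumt-l1-v-vbar}, but carried out with the squared $L^2$ norm in place of the $L^1$ norm. First I would split the outer sum $\sum_{r=1}^{KT}$ into $T$ blocks of length $K$: for $r\in[tK+1,tK+K]$, every client began its local epoch from the common state $\bar{\hat V}_{tK}$, and the elementwise max update in Algorithm \ref{alg:local-AMSGrad-final} guarantees $\hat V_{r-1,i}\ge \bar{\hat V}_{tK}$ coordinatewise. This is exactly the hypothesis of Lemma \ref{lemma:l2-square-a-abar}, so with $a_i=\hat V_{r-1,i}$, $\bar a=\bar{\hat V}_{r-1}$, and $\tilde a=\bar{\hat V}_{tK}$ I obtain
$$\sum_{i=1}^{N}\norm{}{\hat V_{r-1,i}-\bar{\hat V}_{r-1}}\;\le\;N(N-1)\,\norm{}{\bar{\hat V}_{r-1}-\bar{\hat V}_{tK}}.$$

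Next I would use the coordinatewise monotonicity of $\{\bar{\hat V}_r\}$ (Lemma \ref{lemma:increasingv}): since each coordinate of $\bar{\hat V}_{r-1}-\bar{\hat V}_{tK}$ lies between $0$ and the corresponding coordinate of $\bar{\hat V}_{(t+1)K}-\bar{\hat V}_{tK}$, I get $\norm{}{\bar{\hat V}_{r-1}-\bar{\hat V}_{tK}}\le \norm{}{\bar{\hat V}_{(t+1)K}-\bar{\hat V}_{tK}}$, so summing the previous display over the $K$ values of $r$ in the block collapses to a factor $K$:
$$\sum_{r=tK+1}^{tK+K}\sum_{i=1}^{N}\norm{}{\hat V_{r-1,i}-\bar{\hat V}_{r-1}}\;\le\;K\,N(N-1)\,\norm{}{\bar{\hat V}_{(t+1)K}-\bar{\hat V}_{tK}}.$$

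Then I would telescope across the $T$ blocks. The telescoping tool in the appendix (Lemma \ref{lemma:baseineq3}) is stated for squared $L^1$ norms, so I first pass from $\|\cdot\|_2^2$ to $\|\cdot\|_1^2$ using the elementary inequality $\|x\|_2^2\le\|x\|_1^2$, and then apply Lemma \ref{lemma:baseineq3} to the increasing sequence $\{\bar{\hat V}_{tK}\}_{t=0}^{T}$ (increasing by Lemma \ref{lemma:increasingv} again):
$$\sum_{t=0}^{T-1}\norm{}{\bar{\hat V}_{(t+1)K}-\bar{\hat V}_{tK}}\;\le\;\sum_{t=0}^{T-1}\norm[2]{1}{\bar{\hat V}_{(t+1)K}-\bar{\hat V}_{tK}}\;\le\;\norm[2]{1}{\bar{\hat V}_{KT}-\bar{\hat V}_{0}}.$$
Finally, the uniform bound $\epsilon^2\le(\hat v)_j\le G_\infty^2$ from Lemma \ref{lemma:boundv} yields $\norm[2]{1}{\bar{\hat V}_{KT}-\bar{\hat V}_0}\le d^2(G_\infty^2-\epsilon^2)^2$. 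Combining the three steps (and absorbing $N(N-1)\le 2(N-1)^2$) produces a bound of the form $C\,K(N-1)^2 d^2(G_\infty^2-\epsilon^2)^2$ claimed in the statement.

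The only subtlety I expect is the interface between the vector norm $\norm{}{\cdot}=\|\cdot\|_2^2$ in which the lemma is stated and the norm $\norm[2]{1}{\cdot}=\|\cdot\|_1^2$ in which the telescoping Lemma \ref{lemma:baseineq3} is available; the move $\|x\|_2^2\le\|x\|_1^2$ absorbs this gap cleanly. Apart from that, everything is a bookkeeping reduction to lemmas already proved in Subsection \ref{appendix1-2}, so no genuinely new analytic ingredient is needed.
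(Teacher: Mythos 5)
Your route is sound and, modulo one indexing slip, arrives at the stated bound (in fact a slightly sharper one, $KN(N-1)d^{2}(G_{\infty}^{2}-\epsilon^{2})^{2}$, since $N(N-1)\le 4(N-1)^{2}$). It differs from the paper's proof in a mildly instructive way: the paper first relaxes $\Vert\cdot\Vert^{2}$ to $\Vert\cdot\Vert_{1}^{2}$, then uses the crude step $\sum_{i}\Vert\cdot\Vert_{1}^{2}\le\bigl(\sum_{i}\Vert\cdot\Vert_{1}\bigr)^{2}$ so that it can invoke the $L^{1}$ dispersion bound of Lemma \ref{lemma:l1-v-vbar} and square it, which is where the factor $4(N-1)^{2}$ comes from; you instead invoke the squared-$L^{2}$ dispersion bound of Lemma \ref{lemma:l2-square-a-abar} directly and only pass to $\Vert\cdot\Vert_{1}^{2}$ at the very end, where Lemma \ref{lemma:baseineq3} does the telescoping. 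Both proofs then use monotonicity (Lemma \ref{lemma:increasingv}), pick up a factor of $K$ per block, and finish with Lemma \ref{lemma:boundv}. Your version avoids the wasteful $(\sum_{i})^{2}$ step, which is exactly where the paper loses the extra constant relative to yours.

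The one thing you must repair is the hypothesis check at the block boundary. For $r\in[tK+1,tK+K]$ you assert $\hat V_{r-1,i}\ge\bar{\hat V}_{tK}$ coordinatewise, but at $r=tK+1$ the quantity $\hat V_{tK,i}=\hat v_{t-1,K,i}$ is client $i$'s \emph{pre-averaging} state from round $t-1$, and an individual client's $\hat v$ can lie strictly below the average $\bar{\hat V}_{tK}$; so Lemma \ref{lemma:l2-square-a-abar} does not apply to that term with $\tilde a=\bar{\hat V}_{tK}$. The paper sidesteps this by first shifting the summation index, writing $\sum_{r=1}^{KT}\sum_{i}\Vert\hat V_{r-1,i}-\bar{\hat V}_{r-1}\Vert^{2}$ as a sum over $\hat V_{r,i}-\bar{\hat V}_{r}$ for $r$ up to $KT-1$ (the $r=0$ term vanishes because all clients start from the same $\hat v_{-1}=\epsilon^{2}$), and only then blocking by the round containing $r$ itself. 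With that re-indexing your argument goes through verbatim and the constants are unaffected.
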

\begin{proof}
By the definition, we have
\begin{align}
&\sum_{r=1}^{KT}\sum_{i}\norm{}{\hat{V}_{r-1,i} - \bar{\hat{V}}_{r-1}} \leq \sum_{r=1}^{KT}\sum_{i}\norm{1}{\hat{V}_{r-1,i} - \bar{\hat{V}}_{r-1}}=\sum_{r=1}^{KT-1}\sum_{i}\norm{1}{\hat{V}_{r,i} - \bar{\hat{V}}_{r}}  \leq \sum_{r=1}^{KT-1} (\sum_{i}\norm[]{1}{\hat{V}_{r,i} - \bar{\hat{V}}_{r}})^{2} \\
&\overset{(a)}{\leq} \sum_{r=1}^{KT-1}(2(N-1)\norm[]{1}{\bar{\hat{V}}_{tK+K}-\bar{\hat{V}}_{tK}})^{2} \overset{(b)}{\leq}  4(N-1)^2\sum_{t=0}^{T -1}\sum_{r=tK+1}^{tK+K}\norm[2]{1}{\bar{\hat{V}}_{tK+K}-\bar{\hat{V}}_{tK}}\\
&\overset{(c)}{\leq}  4K(N-1)^2\sum_{t=0}^{T -1}\norm[2]{1}{\bar{\hat{V}}_{tK+K}-\bar{\hat{V}}_{tK}}\overset{(d)}{\leq}  4K(N-1)^2 \norm[2]{1}{\bar{\hat{V}}_{KT}-\bar{\hat{V}}_{0}}\overset{(e)}{\leq}  4K(N-1)^{2}d^{2}(G_{\infty}^2 - \epsilon^{2})^{2},
\end{align}
where (a) follows from Lemma \ref{lemma:l1-v-vbar}, (b) follows from decoupling the sum, (c) follows from Lemma \ref{lemma:increasingv},
(d) follows from Lemma \ref{lemma:increasingv} and Lemma \ref{lemma:baseineq3}, (e) follows from Lemma \ref{lemma:boundv}.
\end{proof}

\begin{lemma}
We have
\begin{align}
\sum_{r=1}^{KT}\sum_{i}\norm{}{\hat{V}_{r,i} - \hat{V}_{r-1,i}}\leq 2N(2N-1) d (G_{\infty}^{2}-\epsilon^{2})^{2}.
\end{align}
\label{lemma:sumt-l2-square-v-t-1}
\plabel{lemma:sumt-l2-square-v-t-1}
\end{lemma}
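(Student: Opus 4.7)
My plan is to reduce the claim to Lemma \ref{lemma:sumt-l1-v-t-1}, which already bounds the analogous $L^{1}$-norm sum, by exploiting that each coordinate increment of $\hat{V}$ is uniformly bounded.

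First, I would invoke Lemma \ref{lemma:boundv} to note that every coordinate of both $\hat{V}_{r,i}$ and $\hat{V}_{r-1,i}$ lies in $[\epsilon^{2}, G_{\infty}^{2}]$, so that for every global time $r$, client $i$, and coordinate $j$,
\[
|(\hat{V}_{r,i})_{j} - (\hat{V}_{r-1,i})_{j}| \leq G_{\infty}^{2} - \epsilon^{2}.
\]
This uniform bound does not require monotonicity, which is crucial here: it covers both the in-interval steps (where $\hat{V}_{r,i}$ is non-decreasing by the $\max$ rule) and the synchronization step $r = tK+1$, at which $\hat{V}_{r,i}$ is reset using $\bar{\hat{V}}_{tK}$ and hence may actually decrease in some coordinates relative to $\hat{V}_{tK,i}$.

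Second, I would apply the elementary inequality $x^{2} \leq M\,|x|$ (for $|x|\leq M$) coordinatewise with $M = G_{\infty}^{2} - \epsilon^{2}$ and sum over $j$ to obtain
\[
\norm{}{\hat{V}_{r,i} - \hat{V}_{r-1,i}} \leq (G_{\infty}^{2} - \epsilon^{2})\,\norm[]{1}{\hat{V}_{r,i} - \hat{V}_{r-1,i}}
\]
for every $r$ and $i$. Summing over $r \in \{1,\dots,KT\}$ and $i \in \{1,\dots,N\}$ and plugging in Lemma \ref{lemma:sumt-l1-v-t-1} then gives
\[
\sum_{r=1}^{KT}\sum_{i}\norm{}{\hat{V}_{r,i} - \hat{V}_{r-1,i}} \leq (3N-2)\,d\,(G_{\infty}^{2} - \epsilon^{2})^{2},
\]
which is in fact tighter than the stated bound and, since $3N-2 \leq 2N(2N-1)$ for every $N \geq 1$ (the quadratic $4N^{2}-5N+2$ has negative discriminant), implies it.

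I do not anticipate any real obstacle in this plan: the combinatorial heavy lifting is already done inside Lemma \ref{lemma:sumt-l1-v-t-1}, which telescopes the monotone pieces of $\hat{V}$ within each local interval and absorbs the synchronization jumps via Lemma \ref{lemma:baseeq3}. My only new ingredient is the coordinatewise conversion $x^{2} \leq M\,|x|$ that turns the squared-$L^{2}$-norm sum into the $L^{1}$-norm sum already handled there, and which is valid precisely because the boundedness of $\hat{V}$ gives a sign-free uniform bound on the per-coordinate increment.
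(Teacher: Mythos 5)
Your proposal is correct, and it takes a genuinely different and noticeably shorter route than the paper's. The paper proves this lemma by rerunning the entire telescoping argument directly in the squared $L^2$ norm: it splits each local window at the synchronization point, inserts $\hat{v}_{t,0,i}=\bar{\hat{V}}_{tK}$ via the triangle inequality, invokes the squared-norm deviation lemmas (Lemma~\ref{lemma:l2-square-a-amin} and Lemma~\ref{lemma:l2-square-a-abar}) together with the monotonicity of $\{\bar{\hat{V}}_r\}$ (Lemma~\ref{lemma:increasingv}), and telescopes over rounds to reach $2N(2N-1)d(G_{\infty}^2-\epsilon^2)^2$. You instead observe that Lemma~\ref{lemma:boundv} gives the sign-free coordinatewise bound $|(\hat{V}_{r,i})_j-(\hat{V}_{r-1,i})_j|\le G_{\infty}^2-\epsilon^2$ — valid in particular at the reset step $r=tK+1$, where the increment need not be nonnegative — and then apply $x^2\le M|x|$ coordinatewise to get $\Vert \hat{V}_{r,i}-\hat{V}_{r-1,i}\Vert^2\le (G_{\infty}^2-\epsilon^2)\,\Vert \hat{V}_{r,i}-\hat{V}_{r-1,i}\Vert_1$, reducing the whole sum to the already-established $L^1$ bound of Lemma~\ref{lemma:sumt-l1-v-t-1}. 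This yields $(3N-2)d(G_{\infty}^2-\epsilon^2)^2$, which implies the stated bound since $2N(2N-1)-(3N-2)=4N^2-5N+2>0$ for all $N\ge 1$, and is in fact tighter by a factor of order $N$. What your approach buys is the elimination of an entire duplicated combinatorial argument (and of the auxiliary squared-norm lemmas it relies on) at the cost of one elementary inequality; the paper's approach buys nothing extra here, since its larger constant is not needed anywhere downstream.
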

\begin{proof}
By the definition, we have
\begin{align}
&\sum_{r=1}^{KT}\sum_{i}\norm{}{\hat{V}_{r,i} - \hat{V}_{r-1,i}}\leq \sum_{t=0}^{T -1}\sum_{r=tK+1}^{tK+K}\norm{}{\hat{V}_{r,i} - \hat{V}_{r-1,i}}\\ &=\sum_{i}\sum_{t=0}^{T -1}(\sum_{r=tK+2}^{tK+K}\norm{}{\hat{V}_{r,i} - \hat{V}_{r-1,i}}+\norm{}{\hat{V}_{tK+1,i} - \hat{V}_{tK,i}})\\
&=\sum_{i}\sum_{t=0}^{T -1}(\sum_{r=tK+2}^{tK+K}\norm{}{\hat{V}_{r,i} - \hat{V}_{r-1,i}}+\norm{}{\hat{V}_{tK+1,i} -\hat{v}_{t,0,i} + \hat{v}_{t,0,i}- \hat{V}_{tK,i}})\\
&\leq \sum_{i}\sum_{t=0}^{T-1}(\sum_{r=tK+2}^{tK+K}\norm{}{\hat{V}_{r,i} - \hat{V}_{r-1,i}}+2\norm{}{\hat{V}_{tK+1,i} - \hat{v}_{t,0,i}}+ 2\norm{}{\hat{v}_{t,0,i} - \hat{V}_{tK,i}})\\
&= \sum_{t=0}^{T-1}(\sum_{i}(\norm{}{\hat{V}_{tK+1,i} - \hat{v}_{t,0,i}}+\sum_{r=tK+2}^{tK+K}\norm{}{\hat{V}_{r,i} - \hat{V}_{r-1,i}})+\sum_{i}\norm{}{\hat{V}_{tK+1,i} - \hat{v}_{t,0,i}}+ 2\sum_{i}\norm{}{\hat{v}_{t,0,i} - \hat{V}_{tK,i}})\\
&\overset{(a)}{\leq} \sum_{t=0}^{T-1}(\sum_{i}\norm{}{\hat{V}_{tK+K,i} - \hat{v}_{t,0,i}}+\sum_{i}\norm{}{\hat{V}_{tK+1,i} - \hat{v}_{t,0,i}}+ 2\sum_{i}\norm{}{\hat{v}_{t,0,i} - \hat{V}_{tK,i}})\\
&\overset{(b)}{\leq} \sum_{t=0}^{T-1}(N^{2} \norm{}{\bar{\hat{V}}_{tK+K} - \hat{v}_{t,0,i}}+\sum_{i}\norm{}{\hat{V}_{tK+1,i} - \hat{v}_{t,0,i}}+ 2\sum_{i}\norm{}{\hat{v}_{t,0,i} - \hat{V}_{tK,i}})\\
&=\sum_{t=0}^{T-1}(N^{2} \norm{}{\bar{\hat{v}}_{tK+K} - \bar{\hat{V}}_{tK}}+\sum_{i}\norm{}{\hat{V}_{tK+1,i} - \hat{v}_{t,0,i}}+ 2\sum_{i}\norm{}{\hat{v}_{t,0,i} - \hat{V}_{tK,i}})\\
&\overset{(c)}{\leq}\sum_{t=0}^{T-1}(N^{2} \norm{}{\bar{\hat{V}}_{tK+K} - \bar{\hat{V}}_{tK}}+N^{2} \norm{}{\bar{\hat{V}}_{tK+1} - \hat{v}_{t,0,i}}+ 2\sum_{i}\norm{}{\hat{v}_{t,0,i} - \hat{V}_{tK,i}})\\
&\overset{(d)}{\leq}\sum_{t=0}^{T-1}(N^{2} \norm{}{\bar{\hat{V}}_{tK+K} - \bar{\hat{V}}_{tK}}+N^{2} \norm{}{\bar{\hat{V}}_{tK+K} - \hat{v}_{t,0,i}}+ 2\sum_{i}\norm{}{\hat{v}_{t,0,i} - \hat{V}_{tK,i}})\\
&=\sum_{t=0}^{T-1}(2N^{2} \norm{}{\bar{\hat{V}}_{tK+K} - \bar{\hat{V}}_{tK}}+ 2\sum_{i}\norm{}{\hat{v}_{t,0,i} - \hat{V}_{tK,i}})\\
&\overset{(e)}{=} 2N^{2} \sum_{t=0}^{T-1}\norm{}{\bar{\hat{V}}_{tK+K} - \bar{\hat{V}}_{tK}}+ 2\sum_{t=1}^{T-1}\sum_{i}\norm{}{\bar{\hat{V}}_{tK} - \hat{V}_{tK,i}}\\
&\overset{(f)}{\leq} 2N^{2} \sum_{t=0}^{T-1}\norm{}{\bar{\hat{V}}_{tK+K} - \bar{\hat{V}}_{tK}}+ 2N(N-1) \sum_{t=1}^{T-1}\norm{}{\bar{\hat{V}}_{tK} -\bar{\hat{V}}_{tK-K}}\\
&\overset{(g)}{\leq} 2N^{2} \norm{}{\bar{\hat{V}}_{KT} - \bar{\hat{V}}_{0}}+ 2N(N-1) \norm{}{\bar{\hat{V}}_{KT} -\bar{\hat{V}}_{0}}\\
&= 2N(2N-1) \norm{}{\bar{\hat{V}}_{KT} - \bar{\hat{V}}_{0}}\\
&\overset{(h)}{\leq}2N(2N-1) d (G_{\infty}^{2}-\epsilon^{2})^{2},
\end{align}
where (a) follows from $\sum_{i=1}^{n}\norm[2]{}{x_{i}} \leq \norm[2]{}{\sum_{i=1}^{n} x_{i}}, x_{i} \in \R^d , x_{i}\ge 0$  and $\hat{V}_{r,i} - \hat{V}_{r-1,i}\ge0 $ for $r\in[tK+2,tK+K]$, $\hat{V}_{tK+1,i} - \hat{v}_{t,0,i} \ge 0$, (b) follows from Lemma \ref{lemma:l2-square-a-amin}, (c) follows from Lemma \ref{lemma:l2-square-a-amin}, (d) follows from Lemma \ref{lemma:increasingv}, (e) follows from $\hat{v}_{t,0,i} = \bar{\hat{V}}_{tK} $ which is the updating rule, (f) follows from Lemma \ref{lemma:l2-square-a-abar}, (g) follows from Lemma \ref{lemma:increasingv} and $\sum_{i=1}^{n}\norm[2]{}{x_{i}} \leq \norm[2]{}{\sum_{i=1}^{n} x_{i}}, x_{i} \in \R^d , x_{i}\ge 0$, (h) follows from Lemma \ref{lemma:boundv}.
\end{proof}

\begin{lemma}
We have
\begin{align}
\frac{1}{KT}\mathbb{E}\sum_{r=1}^{KT} \Vert Z_r - \bar{X}_r  \Vert^2 \leq \frac{\alpha^{2}\beta_{1}^{2} G_{\infty}^2 d}{(1 -\beta_{1})^{2} \epsilon^{2}}.
\end{align}
\label{lemma:z-xbar}
\plabel{lemma:z-xbar}
\end{lemma}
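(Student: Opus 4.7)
My plan is to unwind the definition of the auxiliary sequence $Z_r$ and directly control the one-step change $\bar{X}_r - \bar{X}_{r-1}$ using the boundedness of the momentum and preconditioner coordinates. From $Z_r = \bar{X}_r + \tfrac{\beta_1}{1-\beta_1}(\bar{X}_r - \bar{X}_{r-1})$, the term we need to bound is simply $\tfrac{\beta_1^2}{(1-\beta_1)^2}\|\bar{X}_r-\bar{X}_{r-1}\|^2$, so the entire lemma reduces to a uniform (in $r$) bound on $\|\bar{X}_r-\bar{X}_{r-1}\|^2$.

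Next, I would use the fact that between consecutive time indices each client performs a single local AMSGrad step $x_{t,k+1,i}=x_{t,k,i}-\alpha\, m_{t,k,i}\odot\eta_{t,k,i}$, which in the unified notation gives $\bar{X}_r-\bar{X}_{r-1} = -\tfrac{\alpha}{N}\sum_{i=1}^{N} M_{r-1,i}\odot\theta_{r-1,i}$. Averaging is preserved by the synchronization step because $\bar{X}_{tK}$ equals the server average. Applying Jensen's inequality (or Lemma~\ref{lemma:baseineq1} with a factor-$N$ rescaling) yields
\begin{equation*}
\|\bar{X}_r-\bar{X}_{r-1}\|^2 \;\leq\; \frac{\alpha^2}{N}\sum_{i=1}^{N}\bigl\|M_{r-1,i}\odot\theta_{r-1,i}\bigr\|^2.
\end{equation*}

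Now I would invoke the coordinate-wise bounds already established: Lemma~\ref{lemma:boundm} gives $\|M_{r-1,i}\|_\infty\leq G_\infty$, and Lemma~\ref{lemma:boundeta} gives $\|\theta_{r-1,i}\|_\infty\leq 1/\epsilon$. Hence every coordinate of $M_{r-1,i}\odot\theta_{r-1,i}$ is bounded in absolute value by $G_\infty/\epsilon$, so $\|M_{r-1,i}\odot\theta_{r-1,i}\|^2 \leq d G_\infty^2/\epsilon^2$. Substituting back gives $\|\bar{X}_r-\bar{X}_{r-1}\|^2 \leq \alpha^2 d G_\infty^2/\epsilon^2$, and therefore $\|Z_r-\bar{X}_r\|^2 \leq \tfrac{\alpha^2 \beta_1^2 d G_\infty^2}{(1-\beta_1)^2 \epsilon^{2}}$ for every $r$. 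Averaging over $r=1,\dots,KT$ (and treating the initial index $r=1$ by the convention $M_{0,i}=0$, which only makes the bound tighter) and taking expectation preserves the bound since it is deterministic.

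The proof is essentially routine once the auxiliary sequence is unpacked; there is no real obstacle, only the bookkeeping of confirming that $\bar{X}_r-\bar{X}_{r-1}$ is well-defined across synchronization boundaries (the averaging of the server step coincides with averaging the local updates in the last local step of the previous round, so no correction term appears) and that the $\ell_\infty$ bounds on $M$ and $\theta$ hold uniformly in $(t,k,i)$, both of which follow directly from the cited lemmas.
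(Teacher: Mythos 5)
Your proposal is correct and follows essentially the same route as the paper's proof: reduce $\|Z_r-\bar X_r\|^2$ to $\tfrac{\beta_1^2}{(1-\beta_1)^2}\|\bar X_r-\bar X_{r-1}\|^2$, write the averaged displacement as the average of the local AMSGrad steps (using that the server averaging preserves the mean across synchronization boundaries), apply Jensen, and invoke the coordinate-wise bounds $\|m\|_\infty\le G_\infty$ and $\|\eta\|_\infty\le 1/\epsilon$ to get $dG_\infty^2/\epsilon^2$ per client. No gaps.
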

\begin{proof}
In this Lemma, we bound the second term on the first line on the RHS of the Formula (\ref{form:exp9}):
\begin{align}
&\frac{1}{KT}\mathbb{E}\sum_{r=1}^{KT} \Vert Z_r - \bar{X}_r  \Vert^2 \overset{(a)}{=} \frac{1}{KT}\mathbb{E}\sum_{r=1}^{KT} \Vert \frac{\beta_{1}}{1 -\beta_{1}}(\bar{X}_r - \bar{X}_{r-1})  \Vert^2 =  \frac{\beta_{1}^{2}}{(1 -\beta_{1})^{2} KT}\mathbb{E}\sum_{r=1}^{KT} \Vert \frac{1}{N}\sum_{i}(X_{r,i} - X_{r-1,i}) \Vert^2 \nonumber \\
&\overset{(b)}{=}  \frac{\beta_{1}^{2}}{(1 -\beta_{1})^{2} KT}\mathbb{E}\sum_{r=1}^{KT} \Vert \frac{1}{N}\sum_{i}(X_{r,i} - x_{t(r),k(r)-1,i})  \Vert^2 = \frac{\beta_{1}^{2}}{(1 -\beta_{1})^{2} KT}\mathbb{E}\sum_{r=1}^{KT}\frac{1}{N^2} \Vert \sum_{i}(X_{r,i} - x_{t(r),k(r)-1,i})  \Vert^2 \nonumber \\
& \leq \frac{\beta_{1}^{2}}{(1 -\beta_{1})^{2} NKT}\mathbb{E}\sum_{r=1}^{KT} \sum_{i}\Vert (X_{r,i} - x_{t(r),k(r)-1,i})  \Vert^2 = \frac{\beta_{1}^{2}}{(1 -\beta_{1})^{2} NKT}\mathbb{E}\sum_{r=1}^{KT} \sum_{i} \Vert \alpha m_{t,k-1,i} \odot \eta_{t,k-1,i} \Vert^2  \nonumber \\
&= \frac{\alpha^{2}\beta_{1}^{2}}{(1 -\beta_{1})^{2} NKT}\mathbb{E}\sum_{r=1}^{KT} \sum_{i} \sum_{j} (m_{t,k-1,i})_{j}^2  (\eta_{t,k-1,i})_{j}^2 \overset{(c)}{\leq}  \frac{\alpha^{2}\beta_{1}^{2}}{(1 -\beta_{1})^{2} NKT}\mathbb{E}\sum_{r=1}^{KT} \sum_{i} d G_{\infty}^2  (\frac{1}{\epsilon})^2 = \frac{\alpha^{2}\beta_{1}^{2} G_{\infty}^2 d}{(1 -\beta_{1})^{2} \epsilon^{2}},
\end{align}
where (a) follows from Lemma \ref{lemma:init}, (b) follows from $\sum_{i} X_{r-1,i} = \sum_{i} x_{t,k-1,i}$.
\end{proof}

\begin{lemma}
We have
\begin{align}
\frac{1}{NKT} \sum_{r=1}^{KT}\mathbb{E}\sum_{i} \Vert \bar{X}_r - X_{r,i}  \Vert^2 \leq \frac{2\alpha^{2}K^{2} dG_{\infty}^{2}}{\epsilon^{2}} + \frac{8\alpha^{2}K^{4}(1-\beta_{1})^{2} d G_{\infty}^{2}}{\epsilon^{2}}.
\end{align}
\label{lemma:xbar-x}
\plabel{lemma:xbar-x}
\end{lemma}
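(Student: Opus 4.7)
The key observation is that at the start of every synchronization window, i.e.\ at $r=tK+1$, all clients hold the identical synchronized iterate, so $X_{r,i}=\bar X_r$ and the drift is zero. For any later index $r=tK+k$ with $2\le k\le K$ inside the same window, unrolling the local update $X_{l+1,i}=X_{l,i}-\alpha\,M_{l,i}\odot\theta_{l,i}$ gives
\begin{align*}
X_{r,i}-\bar X_r=-\alpha\sum_{l=tK+1}^{r-1}\!\Bigl(M_{l,i}\odot\theta_{l,i}-\tfrac{1}{N}\sum_{i'}M_{l,i'}\odot\theta_{l,i'}\Bigr).
\end{align*}
So the first step is to apply Lemma \ref{lemma:baseineq1} to this telescope to extract a factor $(r-tK-1)\le K$, reducing the problem to bounding the per-step dispersion of the preconditioned momentum $M_{l,i}\odot\theta_{l,i}$ across clients.

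The second step is the central decomposition. Because the momentum is synchronized at the window boundary, the recurrence $M_{l,i}=\beta_1 M_{l-1,i}+(1-\beta_1)G_{l,i}$ unrolls inside the window as
\begin{align*}
M_{l,i}=\beta_1^{l-tK}\,m_{t}\;+\;(1-\beta_1)\!\!\sum_{s=tK+1}^{l}\!\beta_1^{l-s}\,G_{s,i},
\end{align*}
where the first summand is \emph{common} to all clients and the second is the only source of client-level heterogeneity in the momentum. Inserting this splitting into $M_{l,i}\odot\theta_{l,i}-\overline{M\odot\theta}_l$ separates the dispersion into (a) a contribution proportional to $\beta_1^{l-tK}m_t\odot(\theta_{l,i}-\bar\theta_l)$, whose magnitude is controlled crudely by $\|M\|_\infty\le G_\infty$ (Lemma \ref{lemma:boundm}) and $\|\theta\|_\infty\le 1/\epsilon$ (Lemma \ref{lemma:boundeta}), and (b) a contribution from the fresh gradient accumulation $(1-\beta_1)\sum_{s}\beta_1^{l-s}G_{s,i}\odot\theta_{l,i}$ minus its client average. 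Part (a) contributes a baseline term whose total norm after the outer Cauchy-Schwarz and summation over $l$ gives the $\tfrac{2\alpha^2 K^2 dG_\infty^2}{\epsilon^2}$ piece of the bound. Part (b), when squared, produces an extra factor of $(1-\beta_1)^2$ and an extra nested sum of length up to $K$ (from the $s$-range) which, after being telescoped with the outer sum of length $K$, yields the additional $\tfrac{8\alpha^2 K^4(1-\beta_1)^2 dG_\infty^2}{\epsilon^2}$ term.

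The third step is bookkeeping: sum the resulting per-window, per-coordinate bound over $l\in[tK+1,r-1]$, over $k=1,\dots,K$, over $t=0,\dots,T-1$, and over $i=1,\dots,N$. Because the pointwise bound does not depend on $i$ or on the particular window $t$ once the worst-case $\|M\|_\infty,\|\theta\|_\infty$ bounds are plugged in, the averaging $\tfrac{1}{NKT}\sum_{r,i}$ produces exactly the stated right-hand side. The main obstacle is the dispersion bookkeeping in part (b): writing the fresh-gradient contribution as a double sum and applying $\|\cdot\|^2\le(\text{length})\sum\|\cdot\|^2$ twice (once on the outer telescope and once on the inner momentum unroll) is what produces the quartic factor $K^4$; keeping the constants at $2$ and $8$ requires using the $\|a-\bar a\|^2\le 2\|a\|^2+2\|\bar a\|^2$ inequality rather than the coarser $\|a-\bar a\|^2\le 2\|a\|^2+2\|\bar a\|^2\le 4\max\|a\|^2$ bound that would leak extra multiplicative constants.
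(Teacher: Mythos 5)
Your proposal follows essentially the same route as the paper's proof: unroll the local updates and the momentum recursion within a synchronization window, exploit that $m_{t,0,i}$ is identical across clients to split the drift into a common-momentum part (yielding the $2\alpha^{2}K^{2}dG_{\infty}^{2}/\epsilon^{2}$ term) and a fresh-gradient dispersion part (yielding the $8\alpha^{2}K^{4}(1-\beta_{1})^{2}dG_{\infty}^{2}/\epsilon^{2}$ term after two applications of Cauchy--Schwarz), with everything bounded crudely via $G_{\infty}$ and $1/\epsilon$. The decomposition, the source of each factor of $K$, and the constants all match the paper's argument, so no further comparison is needed.
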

\begin{proof}
We let $r=t\times K +k$ where $k\in [1,K]$.
We denote that $t=t(r)$ and $k=k(r)$ for the previous formula. For simplify our proof we use $r,k$ without misunderstanding.
\begin{align}
X_{r,i} = x_{t,0,i}- \sum_{\kappa=1}^{k-1}(\alpha m_{t,\kappa,i} \odot \eta_{t,\kappa,i})
\end{align}
\begin{align}
&m_{t,k,i} = \beta_{1}^{k}m_{t,0,i} + (1-\beta_{1})\sum_{\kappa = 1}^{k}\beta_{1}^{k-\kappa} g_{t,\kappa,i}.
\end{align}
Please note $m_{t,0,1}=m_{t,0,2}=...=m_{t,0,N}=m_{t,0}$,
so we get 
\begin{align}
X_{r,i} = x_{t,0,i} - \alpha\sum_{\kappa=1}^{k-1} (\beta_{1}^{\kappa}m_{t,0,i}\odot \eta_{t,\kappa,i} + (1-\beta_{1})\sum_{\kappa^{'}=1}^{\kappa} \beta_{1}^{\kappa-\kappa^{'}} g_{t,\kappa^{'},i}\odot \eta_{t,\kappa,i}).
\end{align}
We then calculate the average of the $\bar{X}_{r,i}$.
\begin{align}  
    &\bar{X}_{r} = x_{t,0} - \frac{\alpha}{N}\sum_{i=1}^{N}\sum_{\kappa=1}^{k-1} (\beta_{1}^{\kappa}m_{t,0}\odot \eta_{t,k,i} + (1-\beta_{1})\sum_{\kappa^{'}=1}^{\kappa}\beta_{1}^{\kappa-\kappa^{'}} g_{t,\kappa^{'},i}\odot \eta_{t,\kappa,i}).
\end{align}

The third term on the first line on the RHS of the Formula (\ref{form:exp9}) is bounded as
\begin{align}
&\frac{1}{NKT} \sum_{r=1}^{KT}\mathbb{E}\sum_{i} \Vert \bar{X}_r - X_{r,i}  \Vert^2 \nonumber \\
=&\frac{1}{NKT} \mathbb{E}\sum_{r=1}^{KT}\sum_{i} \Vert \alpha\sum_{\kappa=1}^{k(r)-1} (\beta_{1}^{\kappa}m_{t(r),0}\odot (\frac{1}{N}\sum_{i^{'}=1}^{N}\eta_{t(r),\kappa,i^{'}}- \eta_{t(r),\kappa,i}) \nonumber \\
&+ (1-\beta_{1})\sum_{\kappa^{'}=1}^{\kappa}\beta_{1}^{\kappa-\kappa^{'}} (\frac{1}{N}\sum_{i^{'}=1}^{N}g_{t,\kappa^{'},i^{'}}\odot \eta_{t,\kappa,i^{'}} -g_{t,\kappa^{'},i}\odot \eta_{t,\kappa,i})) \Vert^2\\
\overset{(a)}{\leq} &\frac{2\alpha^{2}}{NKT} \mathbb{E}\sum_{r=1}^{KT}\sum_{i} (\Vert \sum_{\kappa=1}^{k(r)-1} \beta_{1}^{\kappa}m_{t(r),0}\odot (\frac{1}{N}\sum_{i^{'}=1}^{N}\eta_{t(r),\kappa,i^{'}}- \eta_{t(r),\kappa,i})\Vert^2 \nonumber\\
&+ (1-\beta_{1})^{2}\Vert \sum_{\kappa=1}^{k(r)-1} \sum_{\kappa^{'}=1}^{\kappa}\beta_{1}^{\kappa-\kappa^{'}} (\frac{1}{N}\sum_{i^{'}=1}^{N}g_{t,\kappa^{'},i^{'}}\odot \eta_{t,\kappa,i^{'}} -g_{t,\kappa^{'},i}\odot \eta_{t,\kappa,i})) \Vert^2),\label{form:exp91}
\end{align}
where (a) follows from Lemma \ref{lemma:baseineq1}.

The first term on the RHS of the Formula (\ref{form:exp91}) is bounded as
\begin{align}
&\;\;\;\;\frac{2\alpha^{2}}{NKT} \mathbb{E}\sum_{r=1}^{KT}\sum_{i} \Vert \sum_{\kappa=1}^{k(r)-1} \beta_{1}^{\kappa}m_{t(r),0}\odot (\frac{1}{N}\sum_{i^{'}=1}^{N}\eta_{t(r),\kappa,i^{'}}- \eta_{t(r),\kappa,i})\Vert^2 \\
& \overset{(a)}{\leq} \frac{2\alpha^{2}}{NKT} \mathbb{E}\sum_{r=1}^{KT}\sum_{i} \sum_{\kappa=1}^{k(r)-1} (r-1-tK)\Vert \beta_{1}^{\kappa}m_{t(r),0}\odot (\frac{1}{N}\sum_{i^{'}=1}^{N}\eta_{t(r),\kappa,i^{'}}- \eta_{t(r),\kappa,i})\Vert^2 \\
& \overset{(b)}{\leq} \frac{2\alpha^{2}K}{NKT} \mathbb{E}\sum_{r=1}^{KT}\sum_{i} \sum_{\kappa=1}^{k(r)-1} \Vert \beta_{1}^{\kappa}m_{t,0}\odot (\frac{1}{N}\sum_{i^{'}=1}^{N}\eta_{t(r),\kappa,i^{'}}- \eta_{t(r),\kappa,i})\Vert^2 \\
& \overset{(c)}{\leq} \frac{2\alpha^{2}K}{NKT} \mathbb{E}\sum_{r=1}^{KT}\sum_{i} \sum_{\kappa=1}^{k(r)-1}\beta_{1}^{2\kappa} \norm{}{m_{t,0}} \times \norm{\infty}{\frac{1}{N}\sum_{i^{'}=1}^{N}\eta_{t(r),\kappa,i^{'}}- \eta_{t(r),\kappa,i}} \\
& \overset{(d)}{\leq} \frac{2\alpha^{2}K}{NKT} \mathbb{E}\sum_{r=1}^{KT}\sum_{i} \sum_{\kappa=1}^{k(r)-1} \frac{d G_{\infty}^{2}}{\epsilon^{2}} \\
&\leq\frac{2\alpha^{2}K^{2} dG_{\infty}^{2}}{\epsilon^{2}},
\end{align}
where (a) follows from Lemma \ref{lemma:baseineq1}, (b) follows from $r-1-tK \leq K $, (c) follows from Lemma \ref{lemma:baseineq2}, (d) follows from Lemma \ref{lemma:boundm} and Lemma \ref{lemma:boundeta}.

The second term on the RHS of the Formula (\ref{form:exp91}) is bounded as
\begin{align}
&\frac{2\alpha^{2}(1-\beta_{1})^{2}}{NKT} \mathbb{E}[\sum_{r=1}^{KT}\sum_{i=1}^{N}\Vert \sum_{\kappa=1}^{k(r)-1} \sum_{\kappa^{'}=1}^{\kappa}\beta_{1}^{\kappa-\kappa^{'}} (\frac{1}{N}\sum_{i^{'}=1}^{N}g_{t,\kappa^{'},i^{'}}\odot \eta_{t,\kappa,i^{'}} -g_{t,\kappa^{'},i}\odot \eta_{t,\kappa,i}) \Vert^2]\\
\overset{(a)}{\leq}&\frac{2\alpha^{2}(1-\beta_{1})^{2}}{NKT}\mathbb{E}[ \sum_{r=1}^{KT}\sum_{i=1}^{N}(r-1-tK) \sum_{\kappa=1}^{k(r)-1}\Vert  \sum_{\kappa^{'}=1}^{\kappa}\beta_{1}^{\kappa-\kappa^{'}} (\frac{1}{N}\sum_{i^{'}=1}^{N}g_{t,\kappa^{'},i^{'}}\odot \eta_{t,\kappa,i^{'}} -g_{t,\kappa^{'},i}\odot \eta_{t,\kappa,i}) \Vert^2]\\
\overset{(b)}{\leq}&\frac{2\alpha^{2}(1-\beta_{1})^{2}}{NKT}\mathbb{E}[\sum_{r=1}^{KT}\sum_{i=1}^{N}(r-1-tK) \sum_{\kappa=1}^{k(r)-1} \kappa \sum_{\kappa^{'}=+1}^{\kappa}\Vert  \beta_{1}^{\kappa-\kappa^{'}} (\frac{1}{N}\sum_{i^{'}=1}^{N}g_{t,\kappa^{'},i^{'}}\odot \eta_{t,\kappa,i^{'}} -g_{t,\kappa^{'},i}\odot \eta_{t,\kappa,i}) \Vert^2]\\
\overset{(c)}{\leq}&\frac{2\alpha^{2}K^{2}(1-\beta_{1})^{2}}{NKT}\mathbb{E}[ \sum_{r=1}^{KT}\sum_{i=1}^{N} \sum_{\kappa=1}^{k(r)-1} \sum_{\kappa^{'}=1}^{\kappa}\Vert  \beta_{1}^{\kappa-\kappa '} (\frac{1}{N}\sum_{i^{'}=1}^{N}g_{t,\kappa^{'},i^{'}}\odot \eta_{t,\kappa,i^{'}} -g_{t,\kappa^{'},i}\odot \eta_{t,\kappa,i}) \Vert^2]\\
\overset{(d)}{\leq}&\frac{4\alpha^{2}K^{2}(1-\beta_{1})^{2}}{NKT}\mathbb{E}[ \sum_{r=1}^{KT}\sum_{i=1}^{N} \sum_{\kappa=1}^{k(r)-1} \sum_{\kappa^{'}=1}^{\kappa}\beta_{1}^{2(\kappa-\kappa ')}(\Vert   \frac{1}{N}\sum_{i^{'}=1}^{N}g_{t,\kappa^{'},i^{'}}\odot \eta_{t,\kappa,i^{'}}\Vert^2 +\Vert g_{t,\kappa^{'},i}\odot \eta_{t,\kappa,i} \Vert^2)]\\
\overset{(e)}{\leq}&\frac{4\alpha^{2}K^{2}(1-\beta_{1})^{2}}{NKT}\mathbb{E}[ \sum_{r=1}^{KT}\sum_{i=1}^{N} \sum_{\kappa=1}^{k(r)-1} \sum_{\kappa^{'}=1}^{\kappa}\beta_{1}^{2(\kappa-\kappa ')}(\frac{1}{N}\sum_{i^{'}=1}^{N}\Vert   g_{t,\kappa^{'},i^{'}}\odot \eta_{t,\kappa,i^{'}}\Vert^2 +\Vert g_{t,\kappa^{'},i}\odot \eta_{t,\kappa,i} \Vert^2)]\\
\overset{(f)}{\leq}&\frac{8\alpha^{2}K^{2}(1-\beta_{1})^{2}}{NKT}\mathbb{E}[ \sum_{r=1}^{KT}\sum_{i=1}^{N} \sum_{\kappa=1}^{k(r)-1} \sum_{\kappa^{'}=1}^{\kappa}\beta_{1}^{2(\kappa-\kappa ')}\frac{d G_{\infty}^{2}}{\epsilon^{2}}]\\
\leq&\frac{8\alpha^{2}K^{4}(1-\beta_{1})^{2} d G_{\infty}^{2}}{\epsilon^{2}},
\end{align}
where (a),(b),(d),(e) follows from Lemma \ref{lemma:baseineq1},  (c) follows from $\kappa \leq K$ and $r-1-tK \leq K$, (f) follows from bounded stochastic assumption and Lemma \ref{lemma:boundeta}.

\end{proof}

\section{Appendix 2: Proof of Corollary 2 and Corollary 3}
\label{appendix2}
\subsection{Proof of Corollary 2}
\begin{algorithm}[h]
\caption{\ \methodname\ with\ the\ restart\ momentum\ strategy}
\small
\label{alg:local-AMSGrad-restart}
\Indentp{-0.75em}
\KwIn{Initial parameters $x_{0}$, $m_{-1}=0$, $\hat{v}_{-1}=\epsilon^{2} $,  learning rate $ \alpha $ and momentum parameters $ \beta_{1} $, $\beta_{2}$.}
\KwOut{Optimized parameter $x_{T+1}$}
\Indentp{0.75em}
\For{iteration t $\in$ $\{0, 1, 2, ..., T-1 \}$}{
    \For{client\ $i  \in \{1, 2, 3, ..., N \}$  in\ parallel}{
        $x_{t,1,i}\!=\!x_{t}, m_{t,0,i}\!=\!0, v_{t,0,i}\!=\!\hat{v}_{t,0,i}\!=\!\hat{v}_{t-1}$\;
        \For{local iteration $k= 1, 2, ..., K_{t}$}{
            $g_{t,k,i}= \nabla f(x_{t,k,i},\xi_{t,k,i})$\;
            $m_{t,k,i} = \beta_1 m_{t,k-1,i} +(1- \beta_1 ) g_{t,k,i}$\;
            $v_{t,i} = \beta_2 v_{t,k-1,i} +(1- \beta_2) [g_{t,k,i}]^{2}$\;
            $\hat{v}_{t,k,i} = \max(\hat{v}_{t,k-1,i}, \;v_{t,k,i})$\;
            
            $\eta_{t,k}={1}{/}{\sqrt{\hat{v}_{t,k,i}}}$\;
            $x_{t,k+1,i} = x_{t,k,i} - \alpha m_{t,k,i} \odot \eta_{t,k,i}$\;
        }
    }
    At server:\\
    \Indp
    Receive $x_{t,K+1,i},  \hat{v}_{t,K,i}$ from clients;\\
    Update $x_{t+1} = \frac{1}{N}\sum_{i=1}^{N} x_{t,K+1,i} $;\\
    \Indp\Indp
    $\hat{v}_{t} = \frac{1}{N}\sum_{i=1}^{N} \hat{v}_{t,K,i}$;\\
   \Indm \Indm
    Broadcast $x_{t+1},\hat{v}_{t}$ to clients;
}
\end{algorithm}
In the restart momentum strategy, clients set the momentum as 0 when they start the local training. The clients and the server do not send the information about the momentum.

The difference of the convergence analysis between the original method and the restart momentum is that the Lemma \ref{lemma:xbar-x} needs to be changed.
The first term on the RHS of the Formula (\ref{form:exp91}) is 0 because $m_{t,0} = 0$.

Then the Lemma \ref{lemma:xbar-x} can be rewrited as 
\begin{lemma}
We have
\begin{align}
\frac{1}{NKT} \sum_{r=1}^{KT}\mathbb{E}\sum_{i} \Vert \bar{X}_r - X_{r,i}  \Vert^2 \leq \frac{8\alpha^{2}K^{4}(1-\beta_{1})^{2} d G_{\infty}^{2}}{\epsilon^{2}}.
\end{align}
\label{lemma:xbar-x-re}
\plabel{lemma:xbar-x-re}
\end{lemma}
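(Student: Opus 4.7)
The plan is to mirror the proof of Lemma \ref{lemma:xbar-x} almost verbatim, exploiting the single structural change introduced by the restart-momentum strategy: namely, $m_{t,0,i}=0$ for every client $i$ at the start of each global round $t$.

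First I would expand $X_{r,i}$ and $\bar{X}_r$ exactly as in the proof of Lemma \ref{lemma:xbar-x}, using the local update rule $x_{t,k+1,i}=x_{t,k,i}-\alpha\, m_{t,k,i}\odot\eta_{t,k,i}$ together with the unrolled momentum recursion $m_{t,k,i}=\beta_1^{k}m_{t,0,i}+(1-\beta_1)\sum_{\kappa=1}^{k}\beta_1^{k-\kappa}g_{t,\kappa,i}$. Substituting $m_{t,0,i}=0$ (the restart condition) immediately kills the $\beta_1^{\kappa}m_{t,0,i}$ contribution, so the only surviving piece is the gradient-accumulation term. Then, applying Lemma \ref{lemma:baseineq1} to split $\|\bar{X}_r-X_{r,i}\|^2$ reproduces Formula (\ref{form:exp91}) but with its first summand identically zero.

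Next I would bound the remaining (second) term exactly as in the original Lemma \ref{lemma:xbar-x}: peel off the outer sum using Lemma \ref{lemma:baseineq1} and $k(r)-1\le K$, apply Lemma \ref{lemma:baseineq1} a second time with $\kappa\le K$, then split the averaged and per-client contributions via Lemma \ref{lemma:baseineq1}, and finally use the bounded stochastic gradient assumption together with Lemma \ref{lemma:boundeta} (giving $\|\eta_{t,\kappa,i}\|_\infty\le 1/\epsilon$) to obtain the uniform bound $\frac{dG_{\infty}^2}{\epsilon^2}$ on each summand. Summing the geometric series in $\beta_1^{2(\kappa-\kappa')}$ and collecting constants yields exactly $\frac{8\alpha^{2}K^{4}(1-\beta_1)^{2} d G_{\infty}^2}{\epsilon^2}$, matching the stated bound.

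There is no real obstacle here—the heavy lifting has already been done in Lemma \ref{lemma:xbar-x}. The only subtlety worth double-checking is that the restart condition $m_{t,0,i}=0$ is applied uniformly across clients (so that $\frac{1}{N}\sum_{i'}m_{t,0,i'}-m_{t,0,i}=0$ as well), ensuring that the first term in the decomposition truly vanishes rather than merely reducing. Once that is confirmed, the proof reduces to citing the second-term bound from the proof of Lemma \ref{lemma:xbar-x}.
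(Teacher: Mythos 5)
Your proposal is correct and follows exactly the paper's own argument: the paper proves this lemma by observing that under the restart strategy $m_{t,0}=0$, so the first summand in the decomposition of Formula (\ref{form:exp91}) from the proof of Lemma \ref{lemma:xbar-x} vanishes, leaving only the second term with its bound $\frac{8\alpha^{2}K^{4}(1-\beta_{1})^{2}dG_{\infty}^{2}}{\epsilon^{2}}$. Your additional check that the restart is applied uniformly across clients is a reasonable sanity check but introduces nothing beyond what the paper does.
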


We rewrite the Formula (\ref{form:exp92}) as
\begin{align}
&\;\;\;\;\frac{1}{2G_{\infty}}\mathbb{E}[\sum_{r=1}^{KT}\frac{\Vert \nabla f(\bar{X}_r) \Vert^2}{KT}]\nonumber\\
&\leq \frac{f(Z_{1}) - f^{*}}{\alpha KT}+ \frac{5L d\sigma^2}{4 \epsilon^2} \frac{ \alpha}{N} + (\frac{2 L^{2} \beta_{1}^{2} G_{\infty}^2 d}{(1 -\beta_{1})^{2} \epsilon^{4}}  +  \frac{K^{2} L^{2} G_{\infty}^{2}}{\epsilon^{4}} 4K^{2}(1-\beta_{1})^{2} d)\alpha^{2}\nonumber\\
& + (\frac{(2-\beta_1 )G_{\infty}^{2} Kd(G_{\infty}^{2} -\epsilon^{2})}{(1-\beta_1 )\epsilon^{3}} + \frac{3d(G_{\infty}^2 - \epsilon^{2}) G^{2}_{\infty}}{2 \epsilon^3 (1 - \beta_{1})})\frac{1}{KT}\nonumber\\
&+(\frac{5L G^{2}_{\infty} d(G_{\infty}^{2}-\epsilon^{2})^{2}}{8 \epsilon^{6} (1 - \beta_{1})^2}(  2\beta_1^2+(1 - \beta_{1})^2 )+ \frac{5L K G_{\infty}^{2} d^{2}(G_{\infty}^2 - \epsilon^{2})^{2} }{2 \epsilon^6})\frac{\alpha N}{KT}.
\end{align}

When we take $\alpha \leq \sqrt{\frac{N}{KT}}$, we get 

\begin{align}
\mathbb{E}[\sum_{r=1}^{KT}\frac{\Vert \nabla f(\bar{X}_r) \Vert^2}{KT}] \leq C_{1}\frac{1}{\sqrt{NKT}} + C_{2} \frac{N}{KT} + C_{3} \frac{1}{KT} + C_{4} (\frac{N}{KT})^{1.5},
\end{align}
where $C_{1},C_{2},C_{3},C_{4}$ are constants.
\begin{align}
&C_{1} = 2G_{\infty} (f(Z_{1}) - f^{*} + \frac{5L d\sigma^2}{4 \epsilon^2}) \\
&C_{2} = 2G_{\infty} (\frac{2 L^{2} \beta_{1}^{2} G_{\infty}^2 d}{(1 -\beta_{1})^{2} \epsilon^{4}}  +  \frac{K^{2} L^{2} G_{\infty}^{2}}{\epsilon^{4}} 4K^{2}(1-\beta_{1})^{2} d)\\
&C_{3} = 2G_{\infty} (\frac{(2-\beta_1 )G_{\infty}^{2} Kd(G_{\infty}^{2} -\epsilon^{2})}{(1-\beta_1 ) \epsilon^{3}} + \frac{3d(G_{\infty}^2 - \epsilon^{2}) G^{2}_{\infty}}{2 \epsilon^3 (1 - \beta_{1})})\\
&C_{4} = 2G_{\infty} (\frac{5L G^{2}_{\infty} d(G_{\infty}^{2}-\epsilon^{2})^{2}}{8 \epsilon^{6} (1 - \beta_{1})^2}(  2\beta_1^2+(1 - \beta_{1})^2 )+ \frac{5L K G_{\infty}^{2} d^{2}(G_{\infty}^2 - \epsilon^{2})^{2} }{2 \epsilon^6}).
\end{align}

\subsection{Proof of Corollary 3}
\begin{algorithm}[h]
\caption{\ \methodname\ with\ maximizing\ the\ second\ order\ momentum}
\small
\label{alg:local-AMSGrad-max}
\Indentp{-0.75em}
\KwIn{Initial parameters $x_{0}$, $m_{-1}=0$, $\hat{v}_{-1}=\epsilon^{2} $,  learning rate $ \alpha $ and momentum parameters $ \beta_{1} $, $\beta_{2}$.}
\KwOut{Optimized parameter $x_{T+1}$}
\Indentp{0.75em}
\For{iteration t $\in$ $\{0, 1, 2, ..., T-1 \}$}{
    \For{client\ $i  \in \{1, 2, 3, ..., N \}$  in\ parallel}{
        $x_{t,1,i}\!=\!x_{t}, m_{t,0,i}\!=\!m_{t-1}, v_{t,0,i}\!=\!\hat{v}_{t,0,i}\!=\!\hat{v}_{t-1}$\;
        \For{local iteration $k= 1, 2, ..., K_{t}$}{
            $g_{t,k,i}= \nabla f(x_{t,k,i},\xi_{t,k,i})$\;
            $m_{t,k,i} = \beta_1 m_{t,k-1,i} +(1- \beta_1 ) g_{t,k,i}$\;
            $v_{t,i} = \beta_2 v_{t,k-1,i} +(1- \beta_2) [g_{t,k,i}]^{2}$\;
            $\hat{v}_{t,k,i} = \max(\hat{v}_{t,k-1,i}, \;v_{t,k,i})$\;
            $\eta_{t,k}={1}{/}{\sqrt{\hat{v}_{t,k,i}}}$\;
            $x_{t,k+1,i} = x_{t,k,i} - \alpha m_{t,k,i} \odot \eta_{t,k,i}$\;
        }
    }
    At server:\\
    \Indp
    Receive $x_{t,K+1,i}, m_{t,K,i}, \hat{v}_{t,K,i}$ from clients;\\
    Update $x_{t+1} = \frac{1}{N}\sum_{i=1}^{N} x_{t,K+1,i} $;\\
    \Indp\Indp$m_{t} = \frac{1}{N}\sum_{i=1}^{N} m_{t,K,i} $;\\
    $\hat{v}_{t} = max\ \hat{v}_{t,K,i}$;\\
   \Indm \Indm
    Broadcast $x_{t+1},m_{t},\hat{v}_{t}$ to clients;
}
\end{algorithm}
When we analysis the convergence of the method, the Formula (\ref{form:exp9}) can be derived same as the original method.

The difference is that how to bound B4, B5, B7 and B8.

When we bound B4, B8, it is same as the original analysis. 
Please note when applying Lemma \ref{lemma:a-abar} and Lemma \ref{lemma:l1-v-vbar}, we always have $\bar{\hat{V}}_{tK} \leq \hat{v}_{t+1,0,i}$ as $\hat{v}_{t+1,0,i} = max\ \hat{v}_{t,K,i}$.

For bounding B5, we give the following lemma to replace Lemma \ref{lemma:sumt-l1-v-t-1}.

\begin{lemma}
We have
\begin{align}
 \sum_{r=1}^{KT}\sum_{i}  \norm[]{1}{\hat{V}_{r,i}-\hat{V}_{r-1,i}} \leq (3N-2)d(G_{\infty}^2 - \epsilon^{2}).
\end{align}
\label{lemma:sumt-l1-v-t-1-max}
\plabel{lemma:sumt-l1-v-t-1-max}
\end{lemma}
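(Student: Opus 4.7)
The plan is to exploit the fact that the maximization aggregation preserves monotonicity of the second-order moment on every client, which collapses the telescoping sum and gives an even sharper bound than the one claimed. In the averaging variant (Lemma \ref{lemma:sumt-l1-v-t-1}) the synchronization step $\hat v_{t,0,i} = \bar{\hat V}_{tK}$ may \emph{decrease} client $i$'s second-order moment relative to $\hat V_{tK,i}$, which forced the original proof to split the sum into in-interval increments (which telescope) and boundary jumps (which are controlled by Lemmas \ref{lemma:l1-v-vbar} and \ref{lemma:baseeq3}). Under the max rule $\hat v_{t,0,i} = \max_j \hat V_{tK,j}$, this complication disappears: no client's coordinatewise value ever drops, so the whole sequence in $r$ is monotone nondecreasing and the sum telescopes over the entire horizon.

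First I would establish the key monotonicity claim: for each fixed client $i$ and each coordinate $j$, the sequence $r \mapsto (\hat V_{r,i})_j$ is nondecreasing. Within a local interval $[tK+1,tK+K]$ this is immediate from the update $\hat v_{t,k,i} = \max(\hat v_{t,k-1,i}, v_{t,k,i})$. Across a synchronization boundary $r = tK \to r = tK+1$, note that in the max variant $\hat v_{t+1,0,i} = \max_j \hat V_{tK,j} \ge \hat V_{tK,i}$, and then $\hat V_{tK+1,i} = \max(\hat v_{t+1,0,i}, v_{t+1,1,i}) \ge \hat v_{t+1,0,i} \ge \hat V_{tK,i}$. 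Thus monotonicity is preserved across every boundary; this is the \emph{only} step that distinguishes the max proof from the averaging proof.

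Once monotonicity is in hand, the sum telescopes coordinate-by-coordinate:
\begin{align*}
\sum_{r=1}^{KT}\norm[]{1}{\hat V_{r,i}-\hat V_{r-1,i}}
 &= \sum_{r=1}^{KT}\sum_{j=1}^{d} \bigl((\hat V_{r,i})_j - (\hat V_{r-1,i})_j\bigr) \\
 &= \sum_{j=1}^{d}\bigl((\hat V_{KT,i})_j - (\hat V_{0,i})_j\bigr) \le d(G_\infty^2 - \epsilon^2),
\end{align*}
where the last inequality uses Lemma \ref{lemma:boundv}. Summing over $i\in\{1,\dots,N\}$ gives $\sum_{r=1}^{KT}\sum_{i}\norm[]{1}{\hat V_{r,i}-\hat V_{r-1,i}} \le Nd(G_\infty^2-\epsilon^2) \le (3N-2)d(G_\infty^2-\epsilon^2)$, which is the claim. (The looser constant $3N-2$ is stated only to keep the bound formally identical to Lemma \ref{lemma:sumt-l1-v-t-1}, so that the downstream convergence argument from Appendix \ref{appendix1} can be reused verbatim.)

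There is essentially no hard step in this proof: the entire difficulty that the averaging case had to absorb, namely that $\bar{\hat V}_{tK}$ can lie strictly below some $\hat V_{tK,i}$ and force a nontrivial control of $\sum_i \|\hat V_{tK,i}-\bar{\hat V}_{tK}\|$, is gone. The only thing to be careful about is the indexing convention at the boundary (distinguishing $\hat V_{tK,i}$, i.e.\ the end-of-interval client value, from $\hat v_{t,0,i}$, the post-aggregation value broadcast back), so that the monotonicity claim $\hat V_{tK+1,i} \ge \hat V_{tK,i}$ can be chained cleanly across all $t$. With that in place, the remaining bounds B4, B7, B8 used in Appendix \ref{appendix1} also go through unchanged once one observes that $\hat v_{t,0,i} \ge \bar{\hat V}_{tK}$ under the max rule, so Corollary 3 inherits the linear speedup directly from Theorem \ref{full-client-theorem}.
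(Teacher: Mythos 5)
Your proof is correct, and it takes a genuinely different route from the paper's. The paper proves this max-variant lemma by re-running the decomposition used for the averaging case (Lemma \ref{lemma:sumt-l1-v-t-1}): it isolates the boundary term $\Vert \hat V_{tK+1,i}-\hat V_{tK,i}\Vert_1$, inserts $\bar{\hat V}_{tK}$ via the triangle inequality, telescopes the in-interval increments with Remark \ref{lemma:baseeq2-v}, and controls the residual $\sum_i\Vert\bar{\hat V}_{tK}-\hat V_{tK,i}\Vert_1$ with Lemma \ref{lemma:l1-v-vbar}, arriving at $(3N-2)d(G_\infty^2-\epsilon^2)$. You instead observe that the max aggregation makes each client's $\hat V$ coordinatewise nondecreasing \emph{across} synchronization boundaries (since $\hat v_{t,0,i}=\max_j \hat V_{tK,j}\ge \hat V_{tK,i}$), so the entire sum over $r$ telescopes per client and per coordinate, yielding the sharper bound $Nd(G_\infty^2-\epsilon^2)$, which trivially implies the stated $(3N-2)d(G_\infty^2-\epsilon^2)$ for $N\ge 1$. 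Your argument is shorter, avoids the auxiliary pivot and the two supporting lemmas entirely, and makes transparent exactly which structural feature of the max rule is being exploited; the paper's approach buys only uniformity of presentation with the averaging case (and the identical constant, which, as you note, lets the downstream bounds be reused verbatim). Both proofs are valid; just make sure the boundary indexing convention $\hat V_{tK,i}=\hat v_{t-1,K,i}$ versus $\hat v_{t,0,i}$ is stated explicitly, since the whole telescoping hinges on the chain $\hat V_{tK+1,i}\ge \hat v_{t,0,i}\ge \hat V_{tK,i}$.
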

\begin{proof}
By the definition, we have
\begin{align}
&\sum_{t=1}^{KT}\sum_{i}  \norm[]{1}{\hat{V}_{r,i}-\hat{V}_{r-1,i}} = \sum_{i} \sum_{t=0}^{T-1}\sum_{r=tK+1}^{tK+K}\norm[]{1}{\hat{V}_{i,r} - \hat{V}_{r-1,i}}\\
&=\sum_{i} \sum_{t=0}^{T-1}(\sum_{r=tK+2}^{tK+K}\norm[]{1}{\hat{V}_{i,r} - \hat{V}_{r-1,i}}+\norm[]{1}{\hat{V}_{tK+1,i} - \hat{V}_{tK,i}})\\
&\overset{(a)}{\leq}\sum_{i} \sum_{t=0}^{T-1}(\sum_{r=tK+2}^{tK+K}\norm[]{1}{\hat{V}_{i,r} - \hat{V}_{r-1,i}}+\norm[]{1}{\hat{V}_{tK+1,i} - \bar{\hat{V}}_{tK}} +\norm[]{1}{\bar{\hat{V}}_{tK}-\hat{V}_{tK,i} })\\
&\overset{(b)}{=}\sum_{i} \sum_{t=0}^{T-1}(\norm[]{1}{\hat{V}_{tK+K,i} - \bar{\hat{V}}_{tK}}+\norm[]{1}{\bar{\hat{V}}_{tK} - \hat{V}_{tK,i}})\\
&\overset{(c)}{=} \sum_{t=0}^{T-1}(N \norm[]{1}{\bar{\hat{V}}_{tK+K} - \bar{\hat{V}}_{tK}}+ \sum_{i}\norm[]{1}{\bar{\hat{V}}_{tK} - \hat{V}_{tK,i}})\\
&\overset{(e)}{\leq} N \norm[]{1}{\bar{\hat{V}}_{KT} - \bar{\hat{V}}_{0}}+ \sum_{t=0}^{T-1} \sum_{i}\norm[]{1}{\bar{\hat{V}}_{tK} - \hat{V}_{tK,i}}\\
&\overset{(f)}{\leq} Nd(G_{\infty}^2 - \epsilon^{2})+ \sum_{t=0}^{T -1} \sum_{i}\norm[]{1}{\bar{\hat{V}}_{tK} - \hat{V}_{tK,i}}\\
&\overset{(g)}{\leq}Nd(G_{\infty}^2 - \epsilon^{2})+ 2(N-1)\sum_{t=1}^{T-1} \norm[]{1}{\bar{\hat{V}}_{tK} - \bar{\hat{V}}_{tK-K}}\\
&\overset{(h)}{\leq}Nd(G_{\infty}^2 - \epsilon^{2})+ 2(N-1)  \norm[]{1}{\bar{\hat{V}}_{(T-1)K} - \bar{\hat{V}}_{0}}\\
&\overset{(i)}{\leq}(3N-2)d(G_{\infty}^2 - \epsilon^{2}),
\end{align}
where (a) follows from $|a-b|\leq|c-a|+|b-c|$ for any number $a,b,c$, (b)  follows from Remark \ref{lemma:baseeq2-v}, (c) follows from Lemma \ref{lemma:baseeq3},
(e) follows from Remark \ref{lemma:baseeq2-vbar}, (f) follows from Lemma \ref{lemma:boundv}, (g) follows from Lemma \ref{lemma:l1-v-vbar}, (h) follows from Remark \ref{lemma:baseeq2-vbar}, (i) follows from Lemma \ref{lemma:boundv}. 
\end{proof}
The result is same as Lemma \ref{lemma:sumt-l1-v-t-1}.

For bounding B7, we have the following lemma to replace Lemma \ref{lemma:sumt-l2-square-v-t-1}.

\begin{lemma}
We have
\begin{align}
\sum_{r=1}^{KT}\sum_{i}\norm{}{\hat{V}_{r,i} - \hat{V}_{r-1,i}}\leq 2N(2N-1) d (G_{\infty}^{2}-\epsilon^{2})^{2}
\end{align}
\label{lemma:sumt-l2-square-v-t-1-max}
\plabel{lemma:sumt-l2-square-v-t-1-max}
\end{lemma}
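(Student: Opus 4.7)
The plan is to follow the structure of the proof of Lemma \ref{lemma:sumt-l2-square-v-t-1} essentially verbatim, with the key substitution that in this variant $\hat{v}_{t,0,i}$ equals $\tilde{v}_t := \max_j \hat{V}_{tK,j}$ (coordinate-wise, the same for every client $i$) rather than the coordinate-wise average $\bar{\hat{V}}_{tK}$. This substitution provides a strictly stronger dominance property: $\hat{V}_{tK,i} \le \tilde{v}_t$ coordinate-wise for every $i$, which only held in an averaged sense before. Monotonicity $\tilde{v}_t \le \tilde{v}_{t+1}$ follows at once from the within-round monotonicity $\hat{V}_{tK+K,i} \ge \hat{v}_{t,0,i} = \tilde{v}_t$, so $\tilde{v}_{t+1} = \max_i \hat{V}_{tK+K,i} \ge \tilde{v}_t$.

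First I would decompose the sum exactly as in Lemma \ref{lemma:sumt-l2-square-v-t-1}, splitting the cross-round difference $\hat{V}_{tK+1,i}-\hat{V}_{tK,i}$ through $\tilde{v}_t = \hat{v}_{t,0,i}$ via $\norm{}{a+b}\le 2\norm{}{a}+2\norm{}{b}$, obtaining
\begin{align*}
\sum_{r=1}^{KT}\sum_i \norm{}{\hat{V}_{r,i} - \hat{V}_{r-1,i}}
\le \sum_i \sum_{t=0}^{T-1}\Bigl(&\sum_{r=tK+2}^{tK+K}\norm{}{\hat{V}_{r,i} - \hat{V}_{r-1,i}} \\
&+ 2\norm{}{\hat{V}_{tK+1,i}-\tilde{v}_t} + 2\norm{}{\tilde{v}_t - \hat{V}_{tK,i}}\Bigr).
\end{align*}
Within each round the sequence $\{\hat{V}_{tK+k,i}\}_{k=0}^K$ is coordinate-wise increasing and bounded below by $\tilde{v}_t$, so the $L_2^2$ analogue of Lemma \ref{lemma:baseineq3} (valid because $\sum_k a_k^2 \le (\sum_k a_k)^2$ for non-negative scalars, applied coordinate-wise) collapses the inner bracket together with $\norm{}{\hat{V}_{tK+1,i}-\tilde{v}_t}$ into $\norm{}{\hat{V}_{tK+K,i}-\tilde{v}_t}$. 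Then the sandwich $\tilde{v}_t \le \hat{V}_{tK+K,i} \le \tilde{v}_{t+1}$ gives $\norm{}{\hat{V}_{tK+K,i}-\tilde{v}_t} \le \norm{}{\tilde{v}_{t+1}-\tilde{v}_t}$, and similarly $\tilde{v}_{t-1} \le \hat{V}_{tK,i} \le \tilde{v}_t$ (using $\hat{V}_{tK,i} \ge \hat{v}_{t-1,0,i} = \tilde{v}_{t-1}$) gives $\norm{}{\tilde{v}_t-\hat{V}_{tK,i}} \le \norm{}{\tilde{v}_t-\tilde{v}_{t-1}}$. Summing these over $i$ costs only a factor $N$ each, and a second application of the $L_2^2$ telescoping inequality to the coordinate-wise increasing sequence $\tilde{v}_0 \le \cdots \le \tilde{v}_T$ reduces the total to $O(N)\,\norm{}{\tilde{v}_T - \tilde{v}_0}$, which by Lemma \ref{lemma:boundv} is bounded by $O(N)\,d(G_\infty^2-\epsilon^2)^2$; tracking prefactors carefully yields the claimed $2N(2N-1)d(G_\infty^2-\epsilon^2)^2$.

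The main obstacle is purely bookkeeping rather than conceptual: one must verify at each use of $\norm{}{a+b}\le 2(\norm{}{a}+\norm{}{b})$ and at each swap of summations that the relevant differences are coordinate-wise non-negative so that the $L_2^2$ telescoping for non-negative increments is valid. In fact the max variant is structurally easier than the average case, because the coordinate-wise dominance $\hat{V}_{tK,i} \le \tilde{v}_t$ holds exactly, eliminating both the appeal to Lemma \ref{lemma:l2-square-a-abar} and the $N^2$ factor it would otherwise contribute; the bound in the statement is therefore loose but retains the same form as Lemma \ref{lemma:sumt-l2-square-v-t-1} so that Corollary 3's linear-speedup argument in Theorem \ref{full-client-theorem} carries over unchanged.
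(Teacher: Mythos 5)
Your proposal is correct, but it takes a genuinely different (and slightly sharper) route than the paper's. The paper proves Lemma \ref{lemma:sumt-l2-square-v-t-1-max} by reusing the averaged-aggregation argument of Lemma \ref{lemma:sumt-l2-square-v-t-1} essentially verbatim: it splits the cross-round jump through the coordinate-wise \emph{average} $\bar{\hat{V}}_{tK}$, checks that the dominance hypotheses $\hat{V}_{tK+K,i}\ge\bar{\hat{V}}_{tK}$ and $\hat{V}_{tK,i}\ge\bar{\hat{V}}_{tK-K}$ still hold under max aggregation, and then invokes Lemmas \ref{lemma:l2-square-a-amin} and \ref{lemma:l2-square-a-abar}, which is where the $N^{2}$ and $N(N-1)$ factors and hence the constant $2N(2N-1)$ come from; the payoff is that the downstream computation in Theorem \ref{full-client-theorem} carries over with no changes at all. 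You instead split through the broadcast maximum $\tilde{v}_{t}=\hat{v}_{t,0,i}$, which is the natural pivot for this variant because the exact coordinate-wise sandwich $\tilde{v}_{t-1}\le\hat{V}_{tK,i}\le\tilde{v}_{t}\le\hat{V}_{tK+K,i}\le\tilde{v}_{t+1}$ lets you replace both auxiliary lemmas by the elementary monotone comparison $\Vert a\Vert^{2}\le\Vert b\Vert^{2}$ for $0\le a\le b$, costing only a factor $N$ from the sum over clients and yielding a bound of order $4N\,d(G_{\infty}^{2}-\epsilon^{2})^{2}$, which is strictly tighter than the stated $2N(2N-1)d(G_{\infty}^{2}-\epsilon^{2})^{2}$ for $N\ge 2$ (and for $N=1$ the cross-round terms vanish so the wasteful triangle-inequality factor of $2$ is not incurred and the stated bound still holds). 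All the coordinate-wise non-negativity checks you flag do go through, so the only caveat is presentational: since your constant differs from the paper's, you would either state the sharper bound or note explicitly, as you do, that it implies the weaker one so that Corollary 3 is unaffected.
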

\begin{proof}
By the definition, we have
\begin{align}
&\sum_{r=1}^{KT}\sum_{i}\norm{}{\hat{V}_{r,i} - \hat{V}_{r-1,i}}\leq \sum_{t=0}^{T -1}\sum_{r=tK+1}^{tK+K}\norm{}{\hat{V}_{r,i} - \hat{V}_{r-1,i}}\\ &=\sum_{i}\sum_{t=0}^{T -1}(\sum_{r=tK+2}^{tK+K}\norm{}{\hat{V}_{r,i} - \hat{V}_{r-1,i}}+\norm{}{\hat{V}_{tK+1,i} - \hat{V}_{tK,i}})\\
&=\sum_{i}\sum_{t=0}^{T -1}(\sum_{r=tK+2}^{tK+K}\norm{}{\hat{V}_{r,i} - \hat{V}_{r-1,i}}+\norm{}{\hat{V}_{tK+1,i} -\bar{\hat{V}}_{tK} + \bar{\hat{V}}_{tK}- \hat{V}_{tK,i}})\\
&\leq \sum_{i}\sum_{t=0}^{T-1}(\sum_{r=tK+2}^{tK+K}\norm{}{\hat{V}_{r,i} - \hat{V}_{r-1,i}}+2\norm{}{\hat{V}_{tK+1,i} - \bar{\hat{V}}_{tK}}+ 2\norm{}{\bar{\hat{V}}_{tK} - \hat{V}_{tK,i}})\\
&= \sum_{t=0}^{T-1}(\sum_{i}(\norm{}{\hat{V}_{tK+1,i} - \bar{\hat{V}}_{tK}}+\sum_{r=tK+2}^{tK+K}\norm{}{\hat{V}_{r,i} - \hat{V}_{r-1,i}})+\sum_{i}\norm{}{\hat{V}_{tK+1,i} - \bar{\hat{V}}_{tK}}+ 2\sum_{i}\norm{}{\bar{\hat{V}}_{tK} - \hat{V}_{tK,i}})\\
&\overset{(a)}{\leq} \sum_{t=0}^{T-1}(\sum_{i}\norm{}{\hat{V}_{tK+K,i} - \bar{\hat{V}}_{tK}}+\sum_{i}\norm{}{\hat{V}_{tK+1,i} - \bar{\hat{V}}_{tK}}+ 2\sum_{i}\norm{}{\bar{\hat{V}}_{tK} - \hat{V}_{tK,i}})\\
&\overset{(b)}{\leq} \sum_{t=0}^{T-1}(N^{2} \norm{}{\bar{\hat{V}}_{tK+K} - \bar{\hat{V}}_{tK}}+\sum_{i}\norm{}{\hat{V}_{tK+1,i} - \bar{\hat{V}}_{tK}}+ 2\sum_{i}\norm{}{\bar{\hat{V}}_{tK} - \hat{V}_{tK,i}})\\
&=\sum_{t=0}^{T-1}(N^{2} \norm{}{\bar{\hat{v}}_{tK+K} - \bar{\hat{V}}_{tK}}+\sum_{i}\norm{}{\hat{V}_{tK+1,i} - \bar{\hat{V}}_{tK}}+ 2\sum_{i}\norm{}{\bar{\hat{V}}_{tK} - \hat{V}_{tK,i}})\\
&\overset{(c)}{\leq}\sum_{t=0}^{T-1}(N^{2} \norm{}{\bar{\hat{V}}_{tK+K} - \bar{\hat{V}}_{tK}}+N^{2} \norm{}{\bar{\hat{V}}_{tK+1} - \bar{\hat{V}}_{tK}}+ 2\sum_{i}\norm{}{\bar{\hat{V}}_{tK} - \hat{V}_{tK,i}})\\
&\overset{(d)}{\leq}\sum_{t=0}^{T-1}(N^{2} \norm{}{\bar{\hat{V}}_{tK+K} - \bar{\hat{V}}_{tK}}+N^{2} \norm{}{\bar{\hat{V}}_{tK+K} - \bar{\hat{V}}_{tK}}+ 2\sum_{i}\norm{}{\bar{\hat{V}}_{tK} - \hat{V}_{tK,i}})\\
&=\sum_{t=0}^{T-1}(2N^{2} \norm{}{\bar{\hat{V}}_{tK+K} - \bar{\hat{V}}_{tK}}+ 2\sum_{i}\norm{}{\bar{\hat{V}}_{tK} - \hat{V}_{tK,i}})\\
&\overset{(f)}{\leq} 2N^{2} \sum_{t=0}^{T-1}\norm{}{\bar{\hat{V}}_{tK+K} - \bar{\hat{V}}_{tK}}+ 2N(N-1) \sum_{t=1}^{T-1}\norm{}{\bar{\hat{V}}_{tK} -\bar{\hat{V}}_{tK-K}}\\
&\overset{(g)}{\leq} 2N^{2} \norm{}{\bar{\hat{V}}_{KT} - \bar{\hat{V}}_{0}}+ 2N(N-1) \norm{}{\bar{\hat{V}}_{KT} -\bar{\hat{V}}_{0}}\\
&= 2N(2N-1) \norm{}{\bar{\hat{V}}_{KT} - \bar{\hat{V}}_{0}}\\
&\overset{(h)}{\leq}2N(2N-1) d (G_{\infty}^{2}-\epsilon^{2})^{2},
\end{align}
where (a) follows from $\sum_{i=1}^{n}\norm[2]{}{x_{i}} \leq \norm[2]{}{\sum_{i=1}^{n} x_{i}}, x_{i} \in \R^d , x_{i}\ge 0$  and $\hat{V}_{r,i} - \hat{V}_{r-1,i}\ge0 $ for $r\in[tK+2,tK+K]$, $\hat{V}_{tK+1,i} - \hat{v}_{t,0,i} \ge 0$, (b) follows from Lemma \ref{lemma:l2-square-a-amin}, (c) follows from Lemma \ref{lemma:l2-square-a-amin}, (d) follows from Lemma \ref{lemma:increasingv},
(f) follows from Lemma \ref{lemma:l2-square-a-abar}, (g) follows from Lemma \ref{lemma:increasingv} and $\sum_{i=1}^{n}\norm[2]{}{x_{i}} \leq \norm[2]{}{\sum_{i=1}^{n} x_{i}}, x_{i} \in \R^d , x_{i}\ge 0$, (h) follows from Lemma \ref{lemma:boundv}.
\end{proof}

This lemma has the same result as the Lemma \ref{lemma:sumt-l2-square-v-t-1}.

For now, all lemmas have the same results as the original method.
This leads that the convergence analysis having the same result as the original method.
So we can prove that this method can achieve linear speedup.

\section{Appendix 3: Proof of Theorem 2 (Adaptive interval)}
\label{appendix3}
In this section, we give an analysis of the convergence of the adaptive interval algorithm.
The proof sketch is similar to the original algorithm with some difference in the details.
Some lemmas need to be re-written.
In this part we denote that $\bar{X}_{r} =\frac{1}{N}\sum_{i=1}^{N}X_{r,i}= \frac{1}{N}\sum_{i=1}^{N}x_{t,k,i}$, in the equation $r=\sum_{t=0}^{t(r)-1}K_{t} +k$, where $1\leq k \leq K_{t(r)}$.
$t(r)$ represents that the communication round of the $r$ step.
$k(r)$ represents that the local step of the $r$ step.
We use $t$, $k$ when there is no misleading.
Similarly, we let $M_{r,i} = m_{t,k,i}$,$G_{r,i} = g_{t,k,i}$,$V_{r,i}=v_{t,k,i}$,,$\hat{V}_{r,i}=\hat{v}_{t,k,i}$ and  $\theta_{r,i} = \eta_{t,k,i}$.
For simplifying the expression, we denote that $K_{-1}=0$ if necessary.

We define $(\bar{\theta}_r)_{j} = \frac{1}{\sqrt{\frac{1}{N}\sum_{i=1}^{N}(\hat{V}_{r,i})_{j}}}$ as an auxiliary sequence.
The Lemma \ref{lemma:init} is also hold.
\begin{theoremnonum}
Under the Assumptions 1,2,3, we take $\alpha = \min(\sqrt{\frac{N}{\sum_{t=0}^{T-1} K_{t}}},\frac{3\epsilon}{20L})$ and full clients participation in Algorithm \ref{alg:local-AMSGrad-final}. The adaptive local update is set as $ K_{t}<O(log t)$. We have
\begin{align*}
\mathbb{E}\left[\frac{\sum_{t=0}^{T-1}\sum_{k=1}^{K_{t}}\Vert \nabla f(\bar{x}_{t,k}) \Vert^2} {\sum_{t=0}^{T-1} K_{t}}\right] = O\left(\frac{1}{\sqrt{N\sum_{t=0}^{T-1} K_{t}}}\right),
\end{align*}
where $N$ is the number of the clients, $K_{t}$ is the period of the local updates, $T$ is the iteration number of the global synchronization. 
\end{theoremnonum}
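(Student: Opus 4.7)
The plan is to follow the same virtual-sequence framework used for Theorem~\ref{full-client-theorem}, but carry the local interval $K_{t}$ through the argument rather than a constant $K$. As before, I would introduce the auxiliary iterate $Z_{r}=\bar{X}_{r}+\frac{\beta_{1}}{1-\beta_{1}}(\bar{X}_{r}-\bar{X}_{r-1})$, apply $L$-smoothness to obtain
\[
\mathbb{E}f(Z_{r+1})-\mathbb{E}f(Z_{r})\le A1+A2+A3+A4,
\]
and bound the four terms exactly as in Subsection~\ref{appendix1-1}. None of those four decompositions uses constancy of $K$, so they carry over verbatim with $K_{t(r)}$ replacing $K$ wherever the interval enters.

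Next, I would re-derive the technical lemmas in Subsection~\ref{appendix1-2} that were stated for fixed $K$, replacing $K$ by $K_{t(r)}$. The cleanest route is to re-index by communication round: summing the per-round telescoping bounds gives, for example,
\[
\sum_{r=1}^{\sum_{t} K_{t}}\sum_{i}\|\hat{V}_{r,i}-\bar{\hat{V}}_{r}\|_{1}\le 2(N-1)\sum_{t=0}^{T-1}K_{t}\,\|\bar{\hat{V}}_{(t+1)}-\bar{\hat{V}}_{t}\|_{1},
\]
which is still bounded by $2(N-1)K_{\max}\,d(G_{\infty}^{2}-\epsilon^{2})$ where $K_{\max}=\max_{t}K_{t}$. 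Similarly, the consensus-error Lemma~\ref{lemma:xbar-x} becomes
\[
\tfrac{1}{N\sum_{t}K_{t}}\sum_{r}\sum_{i}\|\bar{X}_{r}-X_{r,i}\|^{2}=O\!\left(\tfrac{\alpha^{2}G_{\infty}^{2}d}{\epsilon^{2}}\cdot\tfrac{\sum_{t}K_{t}^{3}(1+K_{t}^{2}(1-\beta_{1})^{2}d)}{\sum_{t}K_{t}}\right),
\]
and analogous modifications apply to the $Z_{r}-\bar{X}_{r}$ bound and to the second-order moment drift bounds.

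Assembling these pieces into the analogue of Formula~\eqref{form:exp9}, dividing by $\sum_{t}K_{t}$, and choosing $\lambda^{2}=4$, $\mu=4$, $\alpha\le 3\epsilon/(20L)$ to eliminate the $\|\frac{1}{N}\sum_{i}\nabla f_{i}(X_{r,i})\odot\sqrt{\bar{\theta}_{r-1}}\|^{2}$ term as in the fixed-interval proof, I would arrive at a bound of the form
\[
\mathbb{E}\!\left[\frac{\sum_{r}\|\nabla f(\bar{X}_{r})\|^{2}}{\sum_{t}K_{t}}\right]\le \frac{C_{0}}{\alpha \sum_{t}K_{t}}+C_{1}\frac{\alpha}{N}+C_{2}\,\alpha^{2}\rho_{1}(\{K_{t}\})+C_{3}\frac{\rho_{2}(\{K_{t}\})}{\sum_{t}K_{t}}+C_{4}\frac{\alpha N\,\rho_{3}(\{K_{t}\})}{\sum_{t}K_{t}},
\]
where $\rho_{1},\rho_{2},\rho_{3}$ are polynomials in $K_{\max}$ (coming from $K_{t}^{2}$, $K_{t}^{4}$ factors in Lemmas~\ref{lemma:xbar-x} and \ref{lemma:sumt-l2-square-v-t-1}). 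Plugging $\alpha=\min(\sqrt{N/\sum_{t}K_{t}},3\epsilon/(20L))$ makes the first two terms collapse to $O(1/\sqrt{N\sum_{t}K_{t}})$, which is the claimed rate.

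The main obstacle is controlling the higher-order $K_{\max}$ factors. In the fixed-interval proof the $K^{2}$ and $K^{4}$ terms are absorbed into constants; here they grow with $t$. This is precisely where the hypothesis $K_{t}<O(\log t)$ is needed: it gives $K_{\max}=O(\log T)$, so $\rho_{1},\rho_{2},\rho_{3}=\mathrm{polylog}(T)$ and each non-leading term is of lower order than $1/\sqrt{N\sum_{t}K_{t}}$ (since $\sum_{t}K_{t}\ge T$ and polylogs are dominated by any $T^{\delta}$). Verifying this domination carefully—checking that $\alpha^{2}\,\mathrm{polylog}(T)$ and $\alpha N\cdot\mathrm{polylog}(T)/\sum_{t}K_{t}$ are both $o(1/\sqrt{N\sum_{t}K_{t}})$ under the chosen $\alpha$—is the only delicate step, and the logarithmic growth rate of $K_{t}$ is exactly sharp enough to make it go through.
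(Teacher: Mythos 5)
Your proposal is correct and follows essentially the same route as the paper's own proof: the same auxiliary sequence $Z_r$, the same A1--A4 decomposition carried over with re-indexed rounds, re-derivation of the drift and consensus lemmas with the maximal interval (the paper uses $K_{T-1}$ where you use $K_{\max}$) replacing the fixed $K$, the same parameter choices $\lambda^2=4$, $\mu=4$, $\alpha\le 3\epsilon/(20L)$, and the same concluding observation that $K_t=O(\log t)$ keeps the polylogarithmic factors subdominant so that only the $1/\sqrt{N\sum_t K_t}$ term survives. No substantive differences.
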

\begin{proof}
Using the L-smoothness of $f$, we obtain 
\begin{align}
&f(Z_{r+1}) - f(Z_{r}) \leq \langle \nabla f(Z_r),Z_{r+1}-Z_{r} \rangle + \frac{L}{2}\Vert Z_{r+1}-Z_{r} \Vert^2 \\
&= - \alpha \langle \nabla f(Z_r),\frac{1}{N} \sum_{i} G_{r,i} \odot \theta_{r,i} \rangle + \frac{\alpha \beta_1}{1 - \beta_{1}} \langle \nabla f(Z_r),\frac{1}{N} \sum_{i} M_{r-1,i} \odot (\theta_{r-1,i} - \theta_{r,i}) \rangle \nonumber \\
&\;\;\;\; + \frac{L}{2}\Vert Z_{r+1}-Z_{r} \Vert^2 + \frac{\alpha \beta_1}{1 - \beta_{1}} \ip{\nabla f(Z_{r})}{\frac{1}{N}\sum_{i}\od{( M_{r-1,i} -m_{t,k-1,i})}{(\theta_{r,i}-\eta_{t,k-1,i})}}.
\end{align}

Taking expectation on both sides with respect to previous states yields
\begin{align}
&\mathbb{E}_{\xi_r|\xi_{1:r-1}}f(Z_{r+1}) - \mathbb{E}_{\xi_r|\xi_{1:r-1}}f(Z_{r}) \leq  \underbrace{- \alpha \mathbb{E}_{\xi_r|\xi_{1:r-1}}\langle \nabla f(Z_r),\frac{1}{N} \sum_{i} G_{r,i} \odot \theta_{r,i} \rangle}_{A1} \nonumber \\
&\;\;\;\;+ \underbrace{\frac{\alpha \beta_1}{1 - \beta_{1}} \mathbb{E}_{\xi_r|\xi_{1:r-1}}\langle \nabla f(Z_r),\frac{1}{N} \sum_{i} M_{r-1,i} \odot (\theta_{r-1,i} - \theta_{r,i}) \rangle}_{A2}  + \underbrace{\frac{L}{2}\mathbb{E}_{\xi_r|\xi_{1:r-1}}\Vert Z_{r+1}-Z_{r} \Vert^2}_{A3}\nonumber \\
&+ \underbrace{\frac{\alpha \beta_1}{1 - \beta_{1}} \EEstr \ip{\nabla f(Z_{r})}{\frac{1}{N}\sum_{i}\od{(M_{r-1,i} - m_{t,k-1,i} ) }{(\theta_{r,i}-\eta_{t,k-1,i})}} }_{A4}. \label{form:exp-adap}
\end{align}

A1, A2, A3, A4 can be conducted by the same method as the original algorithm.
We have the following formula which is same as the Formula \ref{form:exp8} while the conduction is the same as the original algorithm. 
\begin{align}
&\;\;\;\;\frac{1}{2G_{\infty}}\mathbb{E}\Vert \nabla f(\bar{X}_r) \Vert^2 \nonumber\\
&\leq \frac{\mathbb{E}f(Z_{r}) - \mathbb{E}f(Z_{r+1})}{\alpha} + \frac{ \lambda^2 L^{2}}{2 \epsilon^{2}}\mathbb{E}\Vert Z_r - \bar{X}_r  \Vert^2 + \frac{L^{2} }{2 N \epsilon^{2}} \mathbb{E}\sum_{i} \Vert \bar{X}_r - X_{r,i}  \Vert^2 \nonumber\\
&\;\;\;\; + \frac{(2-\beta_{1})G_{\infty}^{2}}{2(1-\beta_{1})N \epsilon^{3}} \EE \sum_{i} \norm[]{1}{\hat{V}_{r-1,i}-\bar{\hat{V}}_{r-1}}  +
 \frac{ G^{2}_{\infty}}{2N \epsilon^3} \EE  \sum_{i}  \norm[]{1}{\hat{V}_{r,i}-\hat{V}_{r-1,i}} \nonumber\\ 
 &\;\;\;\; + (\frac{1}{2\lambda^2} - \frac{1}{2} +(1+\frac{1}{\mu}) \frac{2L \alpha}{\epsilon} )\mathbb{E}\Vert \frac{1}{N} \sum_{i} \nabla f_{i}(X_{r,i}) \odot \sqrt{\bar{\theta}_{r}}  \Vert^2\nonumber\\
 &\;\;\;\; +\frac{\beta_1  G^{2}_{\infty}}{2(1 - \beta_{1})N\epsilon^3} \EE \sum_{i} \norm[]{1}{\hat{V}_{r,i} - \hat{V}_{r-1,i}}+\frac{L}{2} \EE (1+\mu) \frac{\alpha \beta_1^2}{(1 - \beta_{1})^2} \frac{G^{2}_{\infty}}{4 \epsilon^{6} N} \sum_{i}\norm{}{\hat{V}_{r,i} - \hat{V}_{r-1,i}} \nonumber \\
&\;\;\;\;+(1+\frac{1}{\mu})\alpha \frac{L}{2} (\frac{ G_{\infty}^{2}}{N \epsilon^6}\EE \sum_{i} \norm{}{\hat{V}_{r-1,i}-\bar{\hat{V}}_{r-1}} +  \frac{2d\sigma^2}{N\epsilon^2} + \frac{G^{2}_{\infty}}{2N\epsilon^6}\EE \sum_{i}\norm{}{\hat{V}_{r,i} - \hat{V}_{r-1,i}}).
%
\label{form:exp8-adap}
\end{align}

Summing over $t \in \{0,1,2,...,T-1\}, k_{t} \in \{K_{0},K_{1},...,K_{T-1} \}$ and dividing both side by $\sum K_{t}$ yield
\begin{align}
&\;\;\;\;\frac{1}{2G_{\infty}}\mathbb{E}[\sum_{r=1}^{\sum K_{t}}\frac{\Vert \nabla f(\bar{X}_r) \Vert^2}{\sum_{t=0}^{T-1} K_{t}}]\nonumber\\
&\leq \frac{\mathbb{E}f(Z_{1}) - \mathbb{E}f(Z_{KT+1})}{\alpha \sum K_{t}} + \frac{ \lambda^2 L^{2}}{2 \sum K_{t}\epsilon^{2}}\mathbb{E}\sum_{r=1}^{\sum K_{t}}\Vert Z_r - \bar{X}_r  \Vert^2 + \frac{L^{2} }{2 N \sum K_{t}\epsilon^{2}} \mathbb{E}\sum_{r=1}^{\sum K_{t}}\sum_{i} \Vert \bar{X}_r - X_{r,i}  \Vert^2 \nonumber\\
&\;\;\;\; + \frac{(2-\beta_1 )G_{\infty}^{2}}{2(1-\beta_1 )N \sum K_{t} \epsilon^{3}} \EE \sum_{r=1}^{\sum K_{t}}\sum_{i} \norm[]{1}{\hat{V}_{r-1,i}-\bar{\hat{V}}_{r-1}}  +
 \frac{ G^{2}_{\infty}}{2N\sum K_{t} \epsilon^3} \EE  \sum_{r=1}^{\sum K_{t}}\sum_{i}  \norm[]{1}{\hat{V}_{r,i}-\hat{V}_{r-1,i}} \nonumber\\ 
 &\;\;\;\; + (\frac{1}{2\lambda^2} - \frac{1}{2} +(1+\frac{1}{\mu}) \frac{2L \alpha}{\epsilon} )\frac{1}{\sum K_{t}}\sum_{r=1}^{\sum K_{t}} \mathbb{E}\Vert \frac{1}{N} \sum_{i} \nabla f_{i}(X_{r,i}) \odot \sqrt{\bar{\theta}_{r}}  \Vert^2\nonumber\\
  &\;\;\;\; +\frac{\beta_1  G^{2}_{\infty}}{2(1 - \beta_{1})N\sum K_{t}\epsilon^3} \EE \sum_{r=1}^{\sum K_{t}}\sum_{i} \norm[]{1}{\hat{V}_{r,i} - \hat{V}_{r-1,i}}+\frac{L}{2} (1+\mu) \frac{\alpha \beta_1^2}{(1 - \beta_{1})^2} \frac{G^{2}_{\infty}}{4 \epsilon^{6} N\sum K_{t}} \EE  \sum_{r=1}^{\sum K_{t}}\sum_{i}\norm{}{\hat{V}_{r,i} - \hat{V}_{r-1,i}} \nonumber \\
&\;\;\;\;+(1+\frac{1}{\mu})\alpha \frac{L}{2} (\frac{ G_{\infty}^{2}}{N\sum K_{t} \epsilon^6}\EE \sum_{r=1}^{\sum K_{t}}\sum_{i} \norm{}{\hat{V}_{r-1,i}-\bar{\hat{V}}_{r-1}} +  \frac{2d\sigma^2}{N\epsilon^2} + \frac{G^{2}_{\infty}}{2N\sum K_{t}\epsilon^6}\EE \sum_{r=1}^{\sum K_{t}}\sum_{i}\norm{}{\hat{V}_{r,i} - \hat{V}_{r-1,i}})\\
&= \underbrace{\frac{\mathbb{E}f(Z_{1}) - \mathbb{E}f(Z_{1+\sum K_{t}})}{\alpha \sum K_{t}}}_{B1} + \underbrace{\frac{ \lambda^2 L^{2}}{2 \sum K_{t}\epsilon^{2}}\mathbb{E}\sum_{r=1}^{\sum K_{t}}\Vert Z_r - \bar{X}_r  \Vert^2}_{B2} + \underbrace{\frac{L^{2} }{2 N KT\epsilon^{2}} \mathbb{E}\sum_{r=1}^{\sum K_{t}}\sum_{i} \Vert \bar{X}_r - X_{r,i}  \Vert^2}_{B3} \nonumber\\
&\;\;\;\; + \underbrace{\frac{(2-\beta_1)G_{\infty}^{2}}{2(1-\beta_1)N\sum K_{t} \epsilon^{3}} \EE \sum_{r=1}^{\sum K_{t}}\sum_{i} \norm[]{1}{\hat{V}_{r-1,i}-\bar{\hat{V}}_{r-1}}}_{B4}  +
\underbrace{\frac{ G^{2}_{\infty}}{2N\sum K_{t} \epsilon^3 (1 - \beta_{1})} \EE  \sum_{r=1}^{\sum K_{t}}\sum_{i}  \norm[]{1}{\hat{V}_{r,i}-\hat{V}_{r-1,i}}}_{B5} \nonumber\\ 
&\;\;\;\; + \underbrace{(\frac{1}{2\lambda^2} - \frac{1}{2} +(1+\frac{1}{\mu}) \frac{2L \alpha}{\epsilon} )\frac{1}{\sum K_{t}}\sum_{r=1}^{\sum K_{t}} \mathbb{E}\Vert \frac{1}{N} \sum_{i} \nabla f_{i}(X_{r,i}) \odot \sqrt{\bar{\theta}_{r}}  \Vert^2}_{B6}\nonumber\\
&\;\;\;\; + \underbrace{(\frac{L}{2}  (1+\mu) \frac{\alpha \beta_1^2}{(1 - \beta_{1})^2} \frac{G^{2}_{\infty}}{4 \epsilon^{6} N\sum K_{t}} 
+ (1+\frac{1}{\mu})\alpha \frac{L}{2} \frac{G^{2}_{\infty}}{2N\sum K_{t}\epsilon^6})\EE \sum_{r=1}^{\sum K_{t}}\sum_{i}\norm{}{\hat{V}_{r,i} - \hat{V}_{r-1,i}}}_{B7} \nonumber \\
&\;\;\;\;+(1+\frac{1}{\mu})\frac{L}{2} (\underbrace{\frac{\alpha  G_{\infty}^{2}}{N\sum K_{t} \epsilon^6}\EE \sum_{r=1}^{\sum K_{t}}\sum_{i} \norm{}{\hat{V}_{r-1,i}-\bar{\hat{V}}_{r-1}}}_{B8} +  \frac{2d\alpha  \sigma^2}{N\epsilon^2} ).
%
\label{form:exp9-adap}
\end{align}

In the sequel, let us bound each term on the RHS of the Formula (\ref{form:exp9-adap}).

The term B1 is bounded as 
\begin{align}
\frac{\mathbb{E}f(Z_{1}) - \mathbb{E}f(Z_{1+\sum K_{t}})}{\alpha \sum K_{t}} \leq \frac{\mathbb{E}f(Z_{1}) - f^{*}}{\alpha \sum K_{t}} =\frac{f(Z_{1}) - f^{*}}{\alpha \sum K_{t}}.
\end{align}

The term B2 is bounded as
\begin{align}
&\frac{1}{\sum K_{t}}\mathbb{E}\sum_{r=1}^{\sum K_{t}} \Vert Z_r - \bar{X}_r  \Vert^2 \leq \frac{\alpha^{2}\beta_{1}^{2} G_{\infty}^2 d}{(1 -\beta_{1})^{2} \epsilon^{2}},
\end{align}
which follows from Lemma \ref{lemma:z-xbar-adap}

The term B3 is bounded as
\begin{align}
    &\frac{1}{N\sum K_{t}} \sum_{r=1}^{\sum K_{t}}\mathbb{E}\sum_{i} \Vert \bar{X}_r - X_{r,i}  \Vert^2 \leq \frac{2K^{2}_{T-1} G_{\infty}^{2}\alpha^{2}}{\epsilon^{2}}(1+ 4K^{2}_{T-1}(1-\beta_{1})^{2} d),
\end{align}
which follows from Lemma \ref{lemma:xbar-x-adap}

The term B4 is bounded as
\begin{align}
&\frac{1}{N\sum_{\tau=0}^{T-1} K_{\tau}} \EE \sum_{r=1}^{\sum_{\tau=0}^{T-1} K_{\tau}}\sum_{i} \norm[]{1}{\hat{V}_{r-1,i}-\bar{\hat{V}}_{r-1}} = \frac{1}{N\sum_{\tau=0}^{T-1} K_{\tau}} \EE \sum_{r=1}^{\sum_{\tau=0}^{T-1} K_{\tau}-1}\sum_{i} \norm[]{1}{\hat{V}_{r,i}-\bar{\hat{V}}_{r}}  \nonumber\\
&\leq \frac{1}{N\sum_{\tau=0}^{T-1} K_{\tau}} \EE \sum_{r=1}^{\sum_{\tau=0}^{T-1} K_{\tau}}\sum_{i} \norm[]{1}{\hat{V}_{r,i}-\bar{\hat{V}}_{r}} \overset{(a)}{\leq} \frac{2(N-1)K_{T-1}d(G_{\infty} -\epsilon)}{N\sum_{\tau=0}^{T-1} K_{\tau}}\leq \frac{2K_{T-1}d(G_{\infty}^{2} -\epsilon^{2})}{\sum_{\tau=0}^{T-1} K_{\tau}},
\end{align}
where (a) follows from Lemma \ref{lemma:sumt-l1-v-vbar-adap}.

The term B5 is bounded as
\begin{align}
\frac{ 1}{N\sum_{\tau=0}^{T-1} K_{\tau}} \EE  \sum_{r=1}^{\sum_{\tau=0}^{T-1} K_{\tau}}\sum_{i}  \norm[]{1}{\hat{V}_{r,i}-\hat{V}_{r-1,i}} \overset{(a)}{\leq} \frac{ (3N-2)d(G_{\infty}^2 - \epsilon^{2}) }{N\sum_{\tau=0}^{T-1} K_{\tau}} \leq \frac{ 3d(G_{\infty}^2 - \epsilon^{2}) }{\sum_{\tau=0}^{T-1} K_{\tau}},
\end{align}
where (a) follows from Lemma \ref{lemma:sumt-l1-v-t-1-adap}. 

For the term B6 we choose $\lambda^{2}=4$,$\mu=4$, let $\alpha \leq \frac{3\epsilon}{20L}$,then $(\frac{1}{2\lambda^2} - \frac{1}{2} +(1+\frac{1}{\mu}) \frac{2L \alpha}{\epsilon} )\leq 0$. The term is smaller than 0, so we can ignore this term. 

For the term B7, we have
\begin{align}
\frac{\alpha}{N\sum_{\tau=0}^{T-1} K_{\tau}} \EE \sum_{r=1}^{\sum_{\tau=0}^{T-1} K_{\tau}}\sum_{i}\norm{}{\hat{V}_{r,i} - \hat{V}_{r-1,i}}\overset{(a)}{\leq} \frac{2\alpha (N-1)d(G_{\infty}^{2}-\epsilon^{2})^{2}}{\sum_{\tau=0}^{T-1} K_{\tau}}\leq  \frac{2\alpha Nd(G_{\infty}^{2}-\epsilon^{2})^{2}}{\sum_{\tau=0}^{T-1} K_{\tau}},
\end{align}
where (a) follows from Lemma \ref{lemma:sumt-l2-square-v-t-1-adap}.

For the term B8, we have
\begin{align}
\frac{\alpha}{N\sum_{\tau=0}^{T-1} K_{\tau}} \EE \sum_{r=1}^{\sum_{\tau=0}^{T-1} K_{\tau}}\sum_{i}\norm{}{\hat{V}_{r-1,i} - \bar{\hat{V}}_{r-1}} \overset{(a)}{\leq} \frac{ 4K_{T-1} \alpha (N-1)^{2}d^{2}(G_{\infty}^2 - \epsilon^{2})^{2} }{N\sum_{\tau=0}^{T-1} K_{\tau}}\leq \frac{ 4K_{T-1} \alpha Nd^{2}(G_{\infty}^2 - \epsilon^{2})^{2} }{\sum_{\tau=0}^{T-1} K_{\tau}},
\end{align}
where (a) follows from Lemma \ref{lemma:sumt-l2-square-v-vbar-adap}.

Choosing $\lambda^{2}=4$,$\mu=4$,the Formula (\ref{form:exp9-adap}) is  changed as 
\begin{align}
&\;\;\;\;\frac{1}{2G_{\infty}}\mathbb{E}[\sum_{r=1}^{\sum_{\tau=0}^{T-1} K_{\tau}}\frac{\Vert \nabla f(\bar{X}_r) \Vert^2}{\sum_{\tau=0}^{T-1} K_{\tau}}]\nonumber\\
&\leq  \frac{f(Z_{1}) - f^{*}}{\alpha \sum_{\tau=0}^{T-1} K_{\tau}} + \frac{ 2 L^{2}}{\epsilon^{2}} \frac{\alpha^{2}\beta_{1}^{2} G_{\infty}^2 d}{(1 -\beta_{1})^{2} \epsilon^{2}} +  \frac{L^{2} }{2\epsilon^{2}}\frac{2K^{2}_{T-1} G_{\infty}^{2}\alpha^{2}}{\epsilon^{2}}(1+ 4K^{2}_{T-1}(1-\beta_{1})^{2} d)\nonumber\\
& + \frac{(2-\beta_1 )G_{\infty}^{2}}{2(1-\beta_1 ) \epsilon^{3}}\frac{2K_{T-1}d(G_{\infty}^{2} -\epsilon^{2})}{\sum_{\tau=0}^{T-1} K_{\tau}} + \frac{ G^{2}_{\infty}}{2 \epsilon^3 (1 - \beta_{1})}\frac{ 3d(G_{\infty}^2 - \epsilon^{2}) }{\sum_{\tau=0}^{T-1} K_{\tau}}\nonumber\\
&+\frac{5L G^{2}_{\infty} d(G_{\infty}^{2}-\epsilon^{2})^{2}}{8 \epsilon^{6} (1 - \beta_{1})^2}(  2\beta_1^2+(1 - \beta_{1})^2 )\frac{\alpha N}{\sum_{\tau=0}^{T-1} K_{\tau}} +\frac{5L K_{T-1} G_{\infty}^{2} d^{2}(G_{\infty}^2 - \epsilon^{2})^{2} }{2 \epsilon^6} \frac{  \alpha N }{\sum_{\tau=0}^{T-1} K_{\tau}} + \frac{5L d\sigma^2}{4 \epsilon^2} \frac{ \alpha}{N}\\
&= \frac{f(Z_{1}) - f^{*}}{\alpha \sum_{\tau=0}^{T-1} K_{\tau}}+ \frac{5L d\sigma^2}{4 \epsilon^2} \frac{ \alpha}{N} + (\frac{2 L^{2} \beta_{1}^{2} G_{\infty}^2 d}{(1 -\beta_{1})^{2} \epsilon^{4}}  +  \frac{K^{2}_{T-1} L^{2} G_{\infty}^{2}}{\epsilon^{4}}(1+ 4K^{2}_{T-1}(1-\beta_{1})^{2} d))\alpha^{2}\nonumber\\
& + (\frac{(2-\beta_1 )G_{\infty}^{2} K_{T-1}d(G_{\infty}^{2} -\epsilon^{2})}{(1-\beta_1 )\epsilon^{3}} + \frac{3d(G_{\infty}^2 - \epsilon^{2}) G^{2}_{\infty}}{2 \epsilon^3 (1 - \beta_{1})})\frac{1}{\sum_{\tau=0}^{T-1} K_{\tau}}\nonumber\\
&+(\frac{5L G^{2}_{\infty} d(G_{\infty}^{2}-\epsilon^{2})^{2}}{8 \epsilon^{6} (1 - \beta_{1})^2}(  2\beta_1^2+(1 - \beta_{1})^2 )+ \frac{5L K_{T-1} G_{\infty}^{2} d^{2}(G_{\infty}^2 - \epsilon^{2})^{2} }{2 \epsilon^6})\frac{\alpha N}{\sum_{\tau=0}^{T-1} K_{\tau}}. \label{form:exp92-adap}
\end{align}

When taking $\alpha \leq \sqrt{\frac{N}{\sum_{\tau=0}^{T-1} K_{\tau}}}$ into the Formula (\ref{form:exp92-adap}), we obtain 
\begin{align}
\label{form:exp93-adap}
\mathbb{E}[\sum_{r=1}^{\sum_{\tau=0}^{T-1} K_{\tau}}\frac{\Vert \nabla f(\bar{X}_r) \Vert^2}{\sum_{\tau=0}^{T-1} K_{\tau}}] &\leq C_{1}\frac{1}{\sqrt{N\sum_{\tau=0}^{T-1} K_{\tau}}} +  \frac{N}{\sum_{\tau=0}^{T-1} K_{\tau}}(C_{2,1} +C_{2,2}K_{T-1}^{2} +C_{2,3}K_{T-1}^{4})\\ \nonumber
&\;\;\;\;+  \frac{1}{\sum_{\tau=0}^{T-1} K_{\tau}}(C_{3,1}K_{T-1} +C_{3,2}) +  (\frac{N}{\sum_{\tau=0}^{T-1} K_{\tau}})^{1.5}(C_{4,1}+C_{4,2}K_{T-1}).
\end{align}
where $C_{1},C_{2,1},C_{2,2},C_{2,3},C_{3,1},C_{3,2},C_{4,1},C_{4,2}$ are constants, given as
\begin{align}
&C_{1} = 2G_{\infty} (f(Z_{1}) - f^{*} + \frac{5L d\sigma^2}{4 \epsilon^2}) \\
&C_{2,1} = 2G_{\infty} \frac{2 L^{2} \beta_{1}^{2} G_{\infty}^2 d}{(1 -\beta_{1})^{2} \epsilon^{4}}  \\
&C_{2,2} = 2G_{\infty}  \frac{ L^{2} G_{\infty}^{2}}{\epsilon^{4}}\\
&C_{2,3} = 2G_{\infty}  \frac{ L^{2} G_{\infty}^{2}}{\epsilon^{4}} 4(1-\beta_{1})^{2} d\\
&C_{3,1} = 2G_{\infty} \frac{(2-\beta_1 )G_{\infty}^{2} d(G_{\infty}^{2} -\epsilon^{2})}{(1-\beta_1 ) \epsilon^{3}} \\
&C_{3,2} = 2G_{\infty} \frac{3d(G_{\infty}^2 - \epsilon^{2}) G^{2}_{\infty}}{2 \epsilon^3 (1 - \beta_{1})}\\
&C_{4,1} = 2G_{\infty} \frac{5L G^{2}_{\infty} d(G_{\infty}^{2}-\epsilon^{2})^{2}}{8 \epsilon^{6} (1 - \beta_{1})^2}(  2\beta_1^2+(1 - \beta_{1})^2 )\\
&C_{4,2} = 2G_{\infty}  \frac{5L  G_{\infty}^{2} d^{2}(G_{\infty}^2 - \epsilon^{2})^{2} }{2 \epsilon^6}.
\end{align}

When we select $K_{t} = O(log(t))$, only the first term of RHS of the Formula \ref{form:exp93-adap} dominates.
The rest terms decrease faster than the first term when $T$ becomes very large.
This implies the linear speedup stated in the Theorem 2.
\end{proof}

\subsection{Technical Lemmas}
\begin{lemma}
When $r=\sum_{\tau=0}^{t-1} K_{\tau} +k$, and $k \in [1, K_{t}]$, then 
\begin{align}
\sum_{i=1}^{n}\norm[]{1}{\hat{V}_{r,i} - \bar{\hat{V}}_{r}}\leq 2(N-1)\norm[]{1}{\bar{\hat{V}}_{\sum_{\tau=0}^{t} K_{\tau}}-\bar{\hat{V}}_{\sum_{\tau=0}^{t-1} K_{\tau}}}
\end{align}
\label{lemma:l1-v-vbar-adap}
\plabel{lemma:l1-v-vbar-adap}
\end{lemma}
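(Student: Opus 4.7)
The plan is to mirror the proof of Lemma~\ref{lemma:l1-v-vbar} line by line, replacing the fixed interval $K$ with the adaptive interval $K_t$ and correspondingly replacing the round boundaries $tK$ and $(t+1)K$ with $\sum_{\tau=0}^{t-1} K_\tau$ and $\sum_{\tau=0}^{t} K_\tau$. Fix $r$ in the time window corresponding to communication round $t$, so that $r = \sum_{\tau=0}^{t-1} K_\tau + k$ for some $k\in[1,K_t]$.

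First I would observe the key monotonicity fact specific to the local window: by the update rule, at the start of round $t$ every client is synchronized to the same second-order moment $\hat{v}_{t,0,i} = \bar{\hat{V}}_{\sum_{\tau=0}^{t-1}K_\tau}$, and within the round each $\hat{V}_{r,i} = \hat{v}_{t,k,i}$ can only grow (Lemma~\ref{lemma:boundv}'s construction). Consequently, coordinatewise, $\hat{V}_{r,i} \geq \bar{\hat{V}}_{\sum_{\tau=0}^{t-1} K_\tau}$ for all $i$. This puts us in position to apply Lemma~\ref{lemma:a-abar} coordinatewise, with the role of $\tilde a$ played by $\bar{\hat{V}}_{\sum_{\tau=0}^{t-1} K_\tau}$ and the role of $\bar a$ played by $\bar{\hat{V}}_r$, yielding
\[
\sum_{i=1}^{N}\norm[]{1}{\hat{V}_{r,i} - \bar{\hat{V}}_{r}} \;\leq\; 2(N-1)\,\norm[]{1}{\bar{\hat{V}}_{r} - \bar{\hat{V}}_{\sum_{\tau=0}^{t-1} K_\tau}}.
\]

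Second, I would upgrade the right-hand side by pushing $r$ out to the end of the round. Since $\{\bar{\hat{V}}_s\}_s$ is an increasing sequence (Lemma~\ref{lemma:increasingv}, which applies regardless of whether the interval is fixed or adaptive) and $r \leq \sum_{\tau=0}^{t} K_\tau$, the coordinatewise ordering $\bar{\hat{V}}_r \leq \bar{\hat{V}}_{\sum_{\tau=0}^{t} K_\tau}$ together with Lemma~\ref{lemma:baseeq2} (applied to the three ordered points $\bar{\hat{V}}_{\sum_{\tau=0}^{t-1} K_\tau} \leq \bar{\hat{V}}_{r} \leq \bar{\hat{V}}_{\sum_{\tau=0}^{t} K_\tau}$) gives $\norm[]{1}{\bar{\hat{V}}_{r} - \bar{\hat{V}}_{\sum_{\tau=0}^{t-1} K_\tau}} \leq \norm[]{1}{\bar{\hat{V}}_{\sum_{\tau=0}^{t} K_\tau} - \bar{\hat{V}}_{\sum_{\tau=0}^{t-1} K_\tau}}$. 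Chaining this with the previous display delivers the claim.

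I do not anticipate any serious obstacle: the adaptive $K_t$ only changes the arithmetic of the round endpoints, not the structural facts (within-round monotonicity of each $\hat{V}_{r,i}$, global monotonicity of $\bar{\hat{V}}_r$, and the equal-initialization of $\hat{v}_{t,0,i}$ across clients) on which the fixed-interval proof rests. The only minor care point is bookkeeping of the indices $\sum_{\tau=0}^{t-1} K_\tau$ versus $\sum_{\tau=0}^{t} K_\tau$, and extending the argument to the very last round by letting $\hat{V}_{\cdot,i}$ be held constant past the final iteration (exactly as done in the fixed-interval proof of Lemma~\ref{lemma:l1-v-vbar}).
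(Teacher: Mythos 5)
Your proposal is correct and follows essentially the same route as the paper's own proof: the within-round lower bound $\hat{V}_{r,i}\ge \hat{v}_{t,0,i}=\bar{\hat{V}}_{\sum_{\tau=0}^{t-1}K_\tau}$ enables Lemma~\ref{lemma:a-abar} coordinatewise for step (a), and the monotonicity of $\{\bar{\hat{V}}_s\}$ (Lemma~\ref{lemma:increasingv}) pushes the endpoint to $\sum_{\tau=0}^{t}K_\tau$ for step (b). The only cosmetic difference is your additional invocation of Lemma~\ref{lemma:baseeq2} to justify the final inequality, which the paper subsumes under the monotonicity citation.
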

\begin{proof}
For $r \in [\sum_{\tau=0}^{t-1} K_{\tau} +1, \sum_{\tau=0}^{t-1} K_{\tau}+K]$, note that $\hat{V}_{r,i}=\hat{v}_{t,k,i} \ge \hat{v}_{t,0,i} = \bar{\hat{V}}_{\sum_{\tau=0}^{t-1} K_{\tau}}$.
\begin{align}
    \sum_{i=1}^{n}\norm[]{1}{\hat{V}_{r,i} - \bar{\hat{V}}_{r}} \overset{(a)}{\leq}2(N-1)\norm[]{1}{\bar{\hat{V}}_{r}-\bar{\hat{V}}_{\sum_{\tau=0}^{t-1} K_{\tau}}} \overset{(b)}{\leq}2(N-1)\norm[]{1}{\bar{\hat{V}}_{\sum_{\tau=0}^{t} K_{\tau}}-\bar{\hat{V}}_{\sum_{\tau=0}^{t-1} K_{\tau}}},
\end{align}
where (a) follows from Lemma \ref{lemma:a-abar} for each coordinate, (b) follows from Lemma \ref{lemma:increasingv}.
\end{proof}

\begin{lemma}
We have
\begin{align}
\sum_{r=1}^{KT}\sum_{i=1}^{N}\norm[]{1}{\hat{V}_{r,i} - \bar{\hat{V}}_{r}}\leq 2(N-1)K\norm[]{1}{\bar{\hat{V}}_{(T+1)K}-\bar{\hat{V}}_{0}}\leq 2(N-1)Kd(G_{\infty}^{2} -\epsilon^{2}).
\end{align}
\label{lemma:sumt-l1-v-vbar-adap}
\plabel{lemma:sumt-l1-v-vbar-adap}
\end{lemma}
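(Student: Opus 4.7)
The plan is to mirror the proof of Lemma \ref{lemma:sumt-l1-v-vbar} used in the fixed-interval case, but with the interval lengths allowed to vary across communication rounds, i.e., replacing each inner block of length $K$ by a block of length $K_t$. The first step is to decompose the sum over $r \in \{1,\ldots, \sum_{\tau=0}^{T-1} K_\tau\}$ into a double sum by grouping indices that belong to the same communication round: $r = \sum_{\tau=0}^{t-1} K_\tau + k$ with $k \in [1, K_t]$. This reduction makes it possible to apply the per-step bound already established in Lemma~\ref{lemma:l1-v-vbar-adap} to each summand individually.

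Next, for each fixed round $t$ and each $k \in [1, K_t]$, Lemma~\ref{lemma:l1-v-vbar-adap} gives
\[
\sum_{i=1}^{N}\norm[]{1}{\hat{V}_{r,i}-\bar{\hat{V}}_{r}} \le 2(N-1)\norm[]{1}{\bar{\hat{V}}_{\sum_{\tau=0}^{t} K_\tau}-\bar{\hat{V}}_{\sum_{\tau=0}^{t-1} K_\tau}}.
\]
Since the RHS is independent of $k$, summing over the $K_t$ local steps of round $t$ produces a factor of $K_t$. In the statement as written (which reuses the symbol $K$ from the fixed-interval notation), this factor is uniformly upper bounded by $K := \max_t K_t = K_{T-1}$, matching the form of the lemma; if one keeps $K_t$ varying, one can work with $K_t$ directly and upper bound at the end.

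The third step is the telescoping argument over $t$. Using Remark~\ref{lemma:baseeq2-vbar} (the $\ell_1$ additivity across an increasing sequence of $\bar{\hat{V}}_r$, which holds by Lemma~\ref{lemma:increasingv}), the sum
\[
\sum_{t=0}^{T-1}\norm[]{1}{\bar{\hat{V}}_{\sum_{\tau=0}^{t} K_\tau}-\bar{\hat{V}}_{\sum_{\tau=0}^{t-1} K_\tau}}
\]
collapses to $\norm[]{1}{\bar{\hat{V}}_{\sum_{\tau=0}^{T-1}K_\tau}-\bar{\hat{V}}_0}$, which gives the first claimed inequality (with $K$ interpreted as the bound on $K_t$). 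For the second inequality, the Lemma~\ref{lemma:boundv} bound $\epsilon^2 \le (\hat{V}_r)_j \le G_\infty^2$ immediately yields $\norm[]{1}{\bar{\hat{V}}_{\text{final}}-\bar{\hat{V}}_0} \le d(G_\infty^2-\epsilon^2)$, completing the proof.

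I do not anticipate a real technical obstacle: the argument is entirely a structural reorganization of the fixed-interval proof, and both ingredients (per-round bound via Lemma~\ref{lemma:l1-v-vbar-adap} and telescoping via monotonicity of $\bar{\hat{V}}$) are already in hand. The only point requiring care is notational consistency between the adaptive and fixed-interval sections—making sure that within the adaptive analysis, the symbol $K$ on the RHS is understood as (an upper bound on) $K_t$, e.g.\ $K_{T-1}$, so that pulling the $K_t$ factor out of the inner sum is legitimate.
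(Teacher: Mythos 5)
Your proposal is correct and follows essentially the same route as the paper's own proof: decompose the sum by communication round, apply Lemma~\ref{lemma:l1-v-vbar-adap} to extract a factor of $K_t$ per round, bound $K_t\leq K_{T-1}$, telescope via Remark~\ref{lemma:baseeq2-vbar}, and finish with Lemma~\ref{lemma:boundv}. Your remark that the symbol $K$ in the statement must be read as (an upper bound on) $K_t$, i.e.\ $K_{T-1}$, is also consistent with the paper, whose proof indeed concludes with the bound $2(N-1)K_{T-1}d(G_{\infty}^{2}-\epsilon^{2})$.
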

\begin{proof}
For $r \in (\sum_{\tau=0}^{t-1} K_{\tau}, \sum_{\tau=0}^{t-1} K_{\tau}+K_{t}]$, 
\begin{align}
\sum_{r=1+\sum_{\tau=0}^{t-1} K_{\tau}}^{K_{t}+\sum_{\tau=0}^{t-1} K_{\tau}}\sum_{i=1}^{n}\norm[]{1}{\hat{V}_{i,r} - \bar{\hat{V}}_{r}}  &\overset{(a)}{\leq}2(N-1) \sum_{r=1+\sum_{\tau=0}^{t-1} K_{\tau}}^{K_{t}+\sum_{\tau=0}^{t-1} K_{\tau}}\norm[]{1}{\bar{\hat{V}}_{(t+1)K}-\bar{\hat{V}}_{tK}}\\
&=2(N-1)K_{t}\norm[]{1}{\bar{\hat{V}}_{K_{t}+\sum_{\tau=0}^{t-1} K_{\tau}}-\bar{\hat{V}}_{\sum_{\tau=0}^{t-1} K_{\tau}}},
\end{align}
where (a) follows from Lemma \ref{lemma:l1-v-vbar-adap}.

\begin{align}
\sum_{r=1}^{\sum_{\tau=0}^{T-1} K_{\tau}}\sum_{i=1}^{n}\norm[]{1}{\hat{V}_{i,r} - \bar{\hat{V}}_{r}}
&\overset{(a)}{=}\sum_{t=0}^{T-1}\sum_{r=1+\sum_{\tau=0}^{t-1} K_{\tau}}^{K_{t}+\sum_{\tau=0}^{t-1} K_{\tau}}\sum_{i=1}^{n}\norm[]{1}{\hat{V}_{i,r} - \bar{\hat{V}}_{r}} \overset{(b)}{\leq}2(N-1)\sum_{t=0}^{T-1}K_{t}\norm[]{1}{\bar{\hat{V}}_{\sum_{\tau=0}^{t-1} K_{\tau}+K_{t}}-\bar{\hat{V}}_{\sum_{\tau=0}^{t-1} K_{\tau}}} \nonumber \\
&\overset{(c)}{\leq} 2(N-1)K_{T-1}\norm[]{1}{\bar{\hat{V}}_{\sum_{\tau=0}^{T-1} K_{\tau}}-\bar{\hat{V}}_{0}}\overset{(d)}{\leq} 2(N-1)K_{T-1}d(G_{\infty}^{2} -\epsilon^{2}),
\end{align}
where (a) follows from decoupling the sum, (b) follows from Lemma \ref{lemma:l1-v-vbar-adap}, (c) follows from Remark \ref{lemma:baseeq2-vbar}, (d) follows from Lemma \ref{lemma:boundv}.
\end{proof}


\begin{lemma}
We have
\begin{align}
 \sum_{r=1}^{\sum K_{t}}\sum_{i}  \norm[]{1}{\hat{V}_{r,i}-\hat{V}_{r-1,i}} \leq (3N-2)d(G_{\infty}^2 - \epsilon^{2}).
\end{align}
\label{lemma:sumt-l1-v-t-1-adap}
\plabel{lemma:sumt-l1-v-t-1-adap}
\end{lemma}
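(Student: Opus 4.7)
The plan is to mimic the proof of Lemma~\ref{lemma:sumt-l1-v-t-1} (the fixed-interval case) essentially verbatim, using the adaptive-interval analogues of the technical lemmas already proved just above (in particular Lemma~\ref{lemma:l1-v-vbar-adap} and Lemma~\ref{lemma:sumt-l1-v-vbar-adap}). The target bound is independent of the $K_t$, so the only change is notational: ``$tK$'' becomes ``$\sum_{\tau=0}^{t-1}K_{\tau}$'' everywhere, and the length-$K$ inner sum becomes a length-$K_t$ inner sum.

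First I would decompose the global time index: for $r=\sum_{\tau=0}^{t-1}K_{\tau}+k$ with $k\in[1,K_t]$, rewrite
\[
\sum_{r=1}^{\sum_{\tau}K_{\tau}}\sum_{i}\norm[]{1}{\hat{V}_{r,i}-\hat{V}_{r-1,i}}
= \sum_{i}\sum_{t=0}^{T-1}\Bigl(\sum_{k=2}^{K_t}\norm[]{1}{\hat{V}_{r,i}-\hat{V}_{r-1,i}} + \norm[]{1}{\hat{V}_{\sum_{\tau=0}^{t-1}K_{\tau}+1,\,i}-\hat{V}_{\sum_{\tau=0}^{t-1}K_{\tau},\,i}}\Bigr).
\]
The inner local-steps piece ($k\ge 2$) lives entirely between two synchronizations, so by Remark~\ref{lemma:baseeq2-v} (monotonicity of $\hat{v}_{t,k,i}$ in $k$) it telescopes to $\norm[]{1}{\hat{V}_{\sum_{\tau=0}^{t}K_{\tau},\,i}-\hat{v}_{t,0,i}}$. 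The leftover ``synchronization jump'' term I split via the triangle inequality through the synchronized state $\hat{v}_{t,0,i}=\bar{\hat{V}}_{\sum_{\tau=0}^{t-1}K_{\tau}}$, giving two pieces: one that combines with the telescoped sum to yield $\norm[]{1}{\hat{V}_{\sum_{\tau=0}^{t}K_{\tau},\,i}-\bar{\hat{V}}_{\sum_{\tau=0}^{t-1}K_{\tau}}}$, and one synchronization piece $\norm[]{1}{\bar{\hat{V}}_{\sum_{\tau=0}^{t-1}K_{\tau}}-\hat{V}_{\sum_{\tau=0}^{t-1}K_{\tau},\,i}}$.

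Next I apply Lemma~\ref{lemma:baseeq3} to collapse $\sum_i \|\hat{V}_{\sum_{\tau=0}^{t}K_{\tau},i}-\bar{\hat{V}}_{\sum_{\tau=0}^{t-1}K_{\tau}}\|_1$ to $N\,\|\bar{\hat{V}}_{\sum_{\tau=0}^{t}K_{\tau}}-\bar{\hat{V}}_{\sum_{\tau=0}^{t-1}K_{\tau}}\|_1$ (using that for each coordinate all terms dominate $\bar{\hat{V}}_{\sum_{\tau=0}^{t-1}K_{\tau}}$ by monotonicity of $\hat{v}$ within a round). Summing this telescoping piece over $t$ via Remark~\ref{lemma:baseeq2-vbar} yields $N\|\bar{\hat{V}}_{\sum_{\tau}K_{\tau}}-\bar{\hat{V}}_0\|_1\le Nd(G_\infty^2-\epsilon^2)$ by Lemma~\ref{lemma:boundv}. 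For the remaining synchronization-jump piece $\sum_{t\ge 1}\sum_i\|\bar{\hat{V}}_{\sum_{\tau=0}^{t-1}K_{\tau}}-\hat{V}_{\sum_{\tau=0}^{t-1}K_{\tau},i}\|_1$, I apply the adaptive version Lemma~\ref{lemma:l1-v-vbar-adap} (each round summand is at most $2(N-1)\|\bar{\hat{V}}_{\sum_{\tau=0}^{t-1}K_{\tau}}-\bar{\hat{V}}_{\sum_{\tau=0}^{t-2}K_{\tau}}\|_1$), then telescope once more via Remark~\ref{lemma:baseeq2-vbar} to get at most $2(N-1)d(G_\infty^2-\epsilon^2)$. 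Adding the two contributions yields $(3N-2)d(G_\infty^2-\epsilon^2)$.

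The main obstacle is cosmetic rather than substantive: carefully bookkeeping the varying interval lengths $K_t$ and the shifted indices $\sum_{\tau=0}^{t-1}K_\tau$ so that the telescoping across both inner (within a round) and outer (across rounds) sums actually lines up. Because the final bound depends only on $\bar{\hat{V}}_{\sum_\tau K_\tau}-\bar{\hat{V}}_0$, which is coordinatewise bounded by $G_\infty^2-\epsilon^2$ regardless of how the $K_t$ are chosen, the dependence on the schedule $\{K_t\}$ drops out entirely, recovering exactly the constant $(3N-2)d(G_\infty^2-\epsilon^2)$ as in the fixed-interval case.
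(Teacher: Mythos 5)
Your proposal is correct and follows essentially the same route as the paper's own proof: the same per-round decomposition with the synchronization jump split through $\hat{v}_{t,0,i}=\bar{\hat{V}}_{\sum_{\tau=0}^{t-1}K_{\tau}}$, the same within-round and across-round telescoping via Remark~\ref{lemma:baseeq2-v}, Lemma~\ref{lemma:baseeq3} and Remark~\ref{lemma:baseeq2-vbar}, and the same use of Lemma~\ref{lemma:l1-v-vbar-adap} plus Lemma~\ref{lemma:boundv} to land on $Nd(G_{\infty}^2-\epsilon^2)+2(N-1)d(G_{\infty}^2-\epsilon^2)=(3N-2)d(G_{\infty}^2-\epsilon^2)$. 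Your observation that the schedule $\{K_t\}$ drops out because everything collapses to $\bar{\hat{V}}_{\sum_{\tau}K_{\tau}}-\bar{\hat{V}}_0$ is exactly why the adaptive-interval bound matches the fixed-interval one.
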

\begin{proof}
By the definition, we have
\begin{align}
&\sum_{t=1}^{\sum K_{t}}\sum_{i}  \norm[]{1}{\hat{V}_{r,i}-\hat{V}_{r-1,i}} = \sum_{i} \sum_{t=0}^{T-1}\sum_{r=1+\sum_{\tau=0}^{t-1} K_{\tau}}^{K_{t}+\sum_{\tau=0}^{t-1} K_{\tau}}\norm[]{1}{\hat{V}_{i,r} - \hat{V}_{r-1,i}}\\
&=\sum_{i} \sum_{t=0}^{T-1}(\sum_{r=2+\sum_{\tau=0}^{t-1} K_{\tau}}^{K_{t}+\sum_{\tau=0}^{t-1} K_{\tau}}\norm[]{1}{\hat{V}_{i,r} - \hat{V}_{r-1,i}}+\norm[]{1}{\hat{V}_{1+\sum_{\tau=0}^{t-1} K_{\tau},i} - \hat{V}_{\sum_{\tau=0}^{t-1} K_{\tau},i}})\\
&\overset{(a)}{\leq}\sum_{i} \sum_{t=0}^{T-1}(\sum_{r=2+\sum_{\tau=0}^{t-1} K_{\tau}}^{K_{t}+\sum_{\tau=0}^{t-1} K_{\tau}}\norm[]{1}{\hat{V}_{i,r} - \hat{V}_{r-1,i}}+\norm[]{1}{\hat{V}_{1+\sum_{\tau=0}^{t-1} K_{\tau},i} - \hat{v}_{t,0,i}} +\norm[]{1}{\hat{v}_{t,0,i}-\hat{V}_{\sum_{\tau=0}^{t-1} K_{\tau},i} })\\
&\overset{(b)}{=}\sum_{i} \sum_{t=0}^{T-1}(\norm[]{1}{\hat{V}_{K_{t}+\sum_{\tau=0}^{t-1} K_{\tau},i} - \hat{v}_{t,0,i}}+\norm[]{1}{\hat{v}_{t,0,i} - \hat{V}_{\sum_{\tau=0}^{t-1} K_{\tau},i}})\\
&\overset{(c)}{=} \sum_{t=0}^{T-1}(N \norm[]{1}{\bar{\hat{V}}_{K_{t}+\sum_{\tau=0}^{t-1} K_{\tau}} - \bar{\hat{V}}_{\sum_{\tau=0}^{t-1} K_{\tau}}}+ \sum_{i}\norm[]{1}{\hat{v}_{t,0,i} - \hat{V}_{\sum_{\tau=0}^{t-1} K_{\tau},i}})\\
&\overset{(d)}{=} \sum_{t=0}^{T-1}(N \norm[]{1}{\bar{\hat{V}}_{K_{t}+\sum_{\tau=0}^{t-1} K_{\tau}} - \bar{\hat{V}}_{\sum_{\tau=0}^{t-1} K_{\tau}}}+ \sum_{i}\norm[]{1}{\bar{\hat{V}}_{\sum_{\tau=0}^{t-1} K_{\tau}}- \hat{V}_{\sum_{\tau=0}^{t-1} K_{\tau},i}})\\
&\overset{(e)}{\leq} N \norm[]{1}{\bar{\hat{V}}_{\sum_{t=0}^{T-1} K_{t}} - \bar{\hat{V}}_{0}}+ \sum_{t=0}^{T-1} \sum_{i}\norm[]{1}{\bar{\hat{V}}_{\sum_{\tau=0}^{t-1} K_{\tau}} - \hat{V}_{\sum_{\tau=0}^{t-1} K_{\tau},i}}\\
&\overset{(f)}{\leq} Nd(G_{\infty}^2 - \epsilon^{2})+ \sum_{t=0}^{T -1} \sum_{i}\norm[]{1}{\bar{\hat{V}}_{\sum_{\tau=0}^{t-1} K_{\tau}} - \hat{V}_{\sum_{\tau=0}^{t-1} K_{\tau},i}}\\
&=Nd(G_{\infty}^2 - \epsilon^{2})+ \sum_{t=1}^{T-1} \sum_{i}\norm[]{1}{\bar{\hat{V}}_{\sum_{\tau=0}^{t-1} K_{\tau}} - \hat{V}_{\sum_{\tau=0}^{t-1} K_{\tau},i}}\\
&\overset{(g)}{\leq}Nd(G_{\infty}^2 - \epsilon^{2})+ 2(N-1)\sum_{t=1}^{T-1} \norm[]{1}{\bar{\hat{V}}_{\sum_{\tau=0}^{t-1} K_{\tau}} - \bar{\hat{V}}_{\sum_{\tau=0}^{t-1} K_{\tau} -K_{t-1}}}\\
&\overset{(h)}{\leq}Nd(G_{\infty}^2 - \epsilon^{2})+ 2(N-1)  \norm[]{1}{\bar{\hat{V}}_{\sum_{t=0}^{T-2} K_{t}} - \bar{\hat{V}}_{0}}\\
&\overset{(i)}{\leq}(3N-2)d(G_{\infty}^2 - \epsilon^{2}),
\end{align}
where (a) follows from $|a-b|\leq|c-a|+|b-c|$ for any number $a,b,c$, (b)  follows from Remark \ref{lemma:baseeq2-v}, (c) follows from Lemma \ref{lemma:baseeq3},
(d) follows from $\hat{v}_{t,0,i} = \bar{\hat{V}}_{\sum_{\tau=0}^{t-1} K_{\tau}}$, 
(e) follows from Remark \ref{lemma:baseeq2-vbar}, (f) follows from Lemma \ref{lemma:boundv}, (g) follows from Lemma \ref{lemma:l1-v-vbar-adap}, (h) follows from Remark \ref{lemma:baseeq2-vbar}, (i) follows from Lemma \ref{lemma:boundv}. 
\end{proof}

\begin{lemma}
We have
\begin{align}
\sum_{r=1}^{\sum_{\tau=0}^{t-1} K_{\tau}}\sum_{i}\norm{}{\hat{V}_{r-1,i} - \bar{\hat{V}}_{r-1}} \leq  4K_{T-1}(N-1)^{2}d^{2}(G_{\infty}^2 - \epsilon^{2})^{2}
\end{align}
\label{lemma:sumt-l2-square-v-vbar-adap}
\plabel{lemma:sumt-l2-square-v-vbar-adap}
\end{lemma}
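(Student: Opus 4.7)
The plan is to mirror the fixed-interval counterpart (Lemma \ref{lemma:sumt-l2-square-v-vbar}) with $K$ replaced by the round-dependent $K_t$, and to use $K_{T-1}$ (equivalently $\max_t K_t$, which is $O(\log T)$ under the schedule of Theorem \ref{adaptive-interval-theorem}) wherever a uniform upper bound on the interval length is needed. First, I would pass from the squared Euclidean norm to the squared $\ell_1$ norm via $\|x\|_2^2 \le \|x\|_1^2$, which lets me invoke the adaptive per-step dispersion bound. Concretely, shifting the index by one and using Lemma \ref{lemma:l1-v-vbar-adap} gives, for every $r$ lying in the $t$-th communication round,
\begin{equation*}
\sum_{i}\|\hat{V}_{r,i}-\bar{\hat{V}}_{r}\|_1 \;\le\; 2(N-1)\bigl\|\bar{\hat{V}}_{\sum_{\tau=0}^{t}K_\tau}-\bar{\hat{V}}_{\sum_{\tau=0}^{t-1}K_\tau}\bigr\|_1,
\end{equation*}
so that after squaring and applying $\|\cdot\|_2^2 \le \|\cdot\|_1^2$ I can bound the inner double sum at time $r$ by $4(N-1)^2$ times the squared $\ell_1$ increment of $\bar{\hat{V}}$ over the current communication interval.

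Next I would decompose $\sum_{r=1}^{\sum_{\tau=0}^{T-1}K_\tau}$ as $\sum_{t=0}^{T-1}\sum_{r=1+\sum_{\tau=0}^{t-1}K_\tau}^{K_t+\sum_{\tau=0}^{t-1}K_\tau}$. Because the bound inside the inner sum does not depend on $r$, this produces a factor of $K_t$, which I would uniformly upper bound by $K_{T-1}$. The surviving outer sum takes the form $\sum_{t=0}^{T-1}\bigl\|\bar{\hat{V}}_{\sum_{\tau=0}^{t}K_\tau}-\bar{\hat{V}}_{\sum_{\tau=0}^{t-1}K_\tau}\bigr\|_1^2$, which is a sum of squared $\ell_1$ increments of a monotone sequence (Lemma \ref{lemma:increasingv} applies since $\bar{\hat{V}}_{r}$ remains coordinate-wise non-decreasing regardless of how the interval lengths are chosen).

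Then I would telescope these squared increments using Lemma \ref{lemma:baseineq3} to collapse the sum to the single term $\bigl\|\bar{\hat{V}}_{\sum_{t=0}^{T-1}K_t}-\bar{\hat{V}}_{0}\bigr\|_1^2$. The uniform second-order-moment bound $\epsilon^2 \le (\hat v)_j \le G_\infty^2$ from Lemma \ref{lemma:boundv} then gives $\|\bar{\hat{V}}_{\sum_{t=0}^{T-1}K_t}-\bar{\hat{V}}_{0}\|_1 \le d(G_\infty^2-\epsilon^2)$, and squaring yields $d^2(G_\infty^2-\epsilon^2)^2$. Putting the pieces together delivers the stated bound $4K_{T-1}(N-1)^2 d^2 (G_\infty^2-\epsilon^2)^2$.

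The main obstacle is purely bookkeeping: unlike the fixed-$K$ case where all intervals have identical length, here one must be careful that the dispersion bound produced by Lemma \ref{lemma:l1-v-vbar-adap} is aligned with the correct endpoints $\sum_{\tau=0}^{t-1}K_\tau$ and $\sum_{\tau=0}^{t}K_\tau$, and that replacing $K_t$ by $K_{T-1}$ does not destroy the telescoping step. Provided the schedule satisfies $K_t \le K_{T-1}$ for all $t$ (which is implicit in the $O(\log t)$ prescription), both issues are mechanical. The remaining ingredients---monotonicity of $\bar{\hat{V}}$, the $\ell_2$-to-$\ell_1$ passage, and the boundedness of $\hat v$---are exactly the ones used in Lemma \ref{lemma:sumt-l2-square-v-vbar}, and transfer verbatim.
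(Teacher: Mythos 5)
Your proposal is correct and follows essentially the same route as the paper's proof: pass from $\Vert\cdot\Vert^2$ to $\Vert\cdot\Vert_1^2$, apply the adaptive dispersion bound of Lemma \ref{lemma:l1-v-vbar-adap} (together with $\sum_i a_i^2\le(\sum_i a_i)^2$ for nonnegative $a_i$), decouple the sum over $r$ into rounds to extract a factor $K_t\le K_{T-1}$, telescope the squared $\ell_1$ increments via Lemmas \ref{lemma:increasingv} and \ref{lemma:baseineq3}, and finish with Lemma \ref{lemma:boundv}. No gaps.
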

\begin{proof}
By the definition, we have
\begin{align}
&\sum_{r=1}^{\sum_{\tau=0}^{T-1} K_{\tau}}\sum_{i}\norm{}{\hat{V}_{r-1,i} - \bar{\hat{V}}_{r-1}} \leq \sum_{r=1}^{\sum_{\tau=0}^{T-1} K_{\tau}}\sum_{i}\norm{1}{\hat{V}_{r-1,i} - \bar{\hat{V}}_{r-1}}=\sum_{r=1}^{\sum_{\tau=0}^{T-1} K_{\tau}-1}\sum_{i}\norm{1}{\hat{V}_{r,i} - \bar{\hat{V}}_{r}} \\
&\leq \sum_{r=1}^{\sum_{\tau=0}^{T-1} K_{\tau}-1} (\sum_{i}\norm[]{1}{\hat{V}_{r,i} - \bar{\hat{V}}_{r}})^{2} 
\overset{(a)}{\leq} \sum_{r=1}^{\sum_{\tau=0}^{T-1} K_{\tau}-1}(2(N-1)\norm[]{1}{\bar{\hat{V}}_{\sum_{\tau=0}^{t} K_{\tau}}-\bar{\hat{V}}_{\sum_{\tau=0}^{t-1} K_{\tau}}})^{2} 
\\ & \overset{(b)}{\leq}  4(N-1)^2\sum_{t=0}^{T -1}\sum_{r=1+\sum_{\tau=0}^{t-1} K_{\tau}}^{K_{t}+\sum_{\tau=0}^{t-1} K_{\tau}}\norm[2]{1}{\bar{\hat{V}}_{\sum_{\tau=0}^{t} K_{\tau}}-\bar{\hat{V}}_{\sum_{\tau=0}^{t-1} K_{\tau}}}
\overset{(c)}{\leq}  4K_{T-1}(N-1)^2\sum_{t=0}^{T -1}\norm[2]{1}{\bar{\hat{V}}_{\sum_{\tau=0}^{t} K_{\tau}}-\bar{\hat{V}}_{\sum_{\tau=0}^{t-1} K_{\tau}}}\\
&\overset{(d)}{\leq}  4K_{T-1}(N-1)^2 \norm[2]{1}{\bar{\hat{V}}_{KT}-\bar{\hat{V}}_{0}}\overset{(e)}{\leq}  4K_{T-1}(N-1)^{2}d^{2}(G_{\infty}^2 - \epsilon^{2})^{2},
\end{align}
where (a) follows from Lemma \ref{lemma:l1-v-vbar-adap}, (b) follows from decoupling the sum, (c) follows from Lemma \ref{lemma:increasingv},
(d) follows from Lemma \ref{lemma:increasingv} and Lemma \ref{lemma:baseineq3}, (e) follows from Lemma \ref{lemma:boundv}.
\end{proof}

\begin{lemma}
We have
\begin{align}
\sum_{r=1}^{\sum_{\tau=0}^{T-1} K_{\tau}}\sum_{i}\norm{}{\hat{V}_{r,i} - \hat{V}_{r-1,i}}\leq 2N(2N-1) d (G_{\infty}^{2}-\epsilon^{2})^{2}
\end{align}
\label{lemma:sumt-l2-square-v-t-1-adap}
\plabel{lemma:sumt-l2-square-v-t-1-adap}
\end{lemma}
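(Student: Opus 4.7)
The plan is to imitate the fixed-interval proof of Lemma \ref{lemma:sumt-l2-square-v-t-1} with $K$ replaced everywhere by $K_t$, exploiting the fact that the telescoping arguments are insensitive to the individual interval lengths. First, I would decompose the double sum by communication rounds:
\begin{align*}
\sum_{r=1}^{\sum_{\tau=0}^{T-1} K_{\tau}}\sum_{i}\norm{}{\hat{V}_{r,i} - \hat{V}_{r-1,i}}
= \sum_{i}\sum_{t=0}^{T-1}\Bigl(\sum_{r=2+\sum_{\tau=0}^{t-1} K_{\tau}}^{K_t+\sum_{\tau=0}^{t-1} K_{\tau}}\norm{}{\hat{V}_{r,i}-\hat{V}_{r-1,i}} + \norm{}{\hat{V}_{1+\sum_{\tau=0}^{t-1} K_{\tau},i} - \hat{V}_{\sum_{\tau=0}^{t-1} K_{\tau},i}}\Bigr),
\end{align*}
peeling off the first local step of each round, since at that step $\hat{V}_{r-1,i}$ was set by the server synchronization and thus need not be dominated by $\hat{V}_{r,i}$.

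Next, I would split the peeled term across the synchronization boundary using $\hat{v}_{t,0,i} = \bar{\hat{V}}_{\sum_{\tau=0}^{t-1} K_{\tau}}$ and $\|a+b\|^2 \le 2\|a\|^2 + 2\|b\|^2$. On the interior terms $r \in [2+\sum_{\tau=0}^{t-1} K_{\tau},\, K_t+\sum_{\tau=0}^{t-1} K_{\tau}]$, Lemma \ref{lemma:increasingv} gives monotonicity of $\hat{V}_{r,i}$, so $\sum_i \|x_i\|^2 \le \|\sum_i x_i\|^2$ (valid when all $x_i \succeq 0$) collapses the local-step sum into a single telescoped difference $\|\hat{V}_{K_t+\sum_{\tau=0}^{t-1} K_{\tau},i} - \hat{v}_{t,0,i}\|^2$. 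I would then apply Lemma \ref{lemma:l2-square-a-amin} to bound $\sum_i \|\hat{V}_{K_t+\sum_{\tau=0}^{t-1} K_{\tau},i} - \hat{v}_{t,0,i}\|^2 \le N^2 \|\bar{\hat{V}}_{\sum_{\tau=0}^{t} K_{\tau}} - \bar{\hat{V}}_{\sum_{\tau=0}^{t-1} K_{\tau}}\|^2$, and Lemma \ref{lemma:l2-square-a-abar} on the pre-sync dispersion term $\sum_i \|\bar{\hat{V}}_{\sum_{\tau=0}^{t-1} K_{\tau}} - \hat{V}_{\sum_{\tau=0}^{t-1} K_{\tau},i}\|^2 \le N(N-1)\|\bar{\hat{V}}_{\sum_{\tau=0}^{t-1} K_{\tau}} - \bar{\hat{V}}_{\sum_{\tau=0}^{t-2} K_{\tau}}\|^2$.

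Finally, summing over $t$ gives two telescoping chains over the monotone global sequence $\bar{\hat{V}}_{\sum_{\tau=0}^{t} K_{\tau}}$. Using Lemma \ref{lemma:increasingv} together with $\sum_t \|a_{t+1}-a_t\|^2 \le \|a_{T}-a_0\|^2$ for increasing sequences, both chains collapse to $\|\bar{\hat{V}}_{\sum_{\tau=0}^{T-1}K_\tau} - \bar{\hat{V}}_0\|^2$, which Lemma \ref{lemma:boundv} bounds by $d(G_\infty^2 - \epsilon^2)^2$. Combining with the $2N^2$ and $2N(N-1)$ prefactors yields exactly $2N(2N-1) d (G_\infty^2 - \epsilon^2)^2$.

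The main obstacle is simply bookkeeping: verifying that the identity $\sum_{\tau=0}^{t-1} K_{\tau} + K_t = \sum_{\tau=0}^{t} K_{\tau}$ plays the role that $tK + K = (t+1)K$ played in the fixed-interval proof, so that the boundary terms across consecutive rounds line up correctly for the telescoping step. Conceptually, the bound is independent of $K_t$ because each communication round contributes a chunk whose size is governed by the corresponding increment of the monotone envelope $\bar{\hat{V}}$, not by how many local iterations the clients performed inside the round; the global envelope is bounded by $d(G_\infty^2 - \epsilon^2)^2$ regardless of interval sizes.
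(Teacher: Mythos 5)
Your proposal follows essentially the same route as the paper's proof: the same per-round decomposition with the first local step peeled off and split across the synchronization boundary via $\hat{v}_{t,0,i}=\bar{\hat{V}}_{\sum_{\tau=0}^{t-1}K_\tau}$, the same use of monotonicity to collapse the interior local steps, the same applications of Lemma~\ref{lemma:l2-square-a-amin} and Lemma~\ref{lemma:l2-square-a-abar}, and the same pair of telescoping chains combining to $2N(2N-1)\,d\,(G_\infty^2-\epsilon^2)^2$ via Lemma~\ref{lemma:boundv}. The bookkeeping observation that $\sum_{\tau=0}^{t-1}K_\tau + K_t = \sum_{\tau=0}^{t}K_\tau$ replaces $tK+K=(t+1)K$ is exactly the adaptation the paper makes, so the argument is correct as stated.
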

\begin{proof}
By the definition, we have
\begin{align}
&\sum_{r=1}^{\sum_{\tau=0}^{T-1} K_{\tau}}\sum_{i}\norm{}{\hat{V}_{r,i} - \hat{V}_{r-1,i}}\leq \sum_{t=0}^{T -1}\sum_{r=1+\sum_{\tau=0}^{t-1} K_{\tau}}^{K_{t}+\sum_{\tau=0}^{t-1} K_{\tau}}\norm{}{\hat{V}_{r,i} - \hat{V}_{r-1,i}}\\
&=\sum_{i}\sum_{t=0}^{T -1}(\sum_{r=tK+2}^{tK+K}\norm{}{\hat{V}_{r,i} - \hat{V}_{r-1,i}}+\norm{}{\hat{V}_{tK+1,i} - \hat{V}_{tK,i}})\\
&=\sum_{i}\sum_{t=0}^{T -1}(\sum_{r=2+\sum_{\tau=0}^{t-1} K_{\tau}}^{K_{t}+\sum_{\tau=0}^{t-1} K_{\tau}}\norm{}{\hat{V}_{r,i} - \hat{V}_{r-1,i}}+\norm{}{\hat{V}_{\sum_{\tau=0}^{t-1} K_{\tau}+1,i} -\hat{v}_{t,0,i} + \hat{v}_{t,0,i}- \hat{V}_{\sum_{\tau=0}^{t-1} K_{\tau},i}})\\
&\leq \sum_{i}\sum_{t=0}^{T-1}(\sum_{r=2+\sum_{\tau=0}^{t-1} K_{\tau}}^{K_{t}+\sum_{\tau=0}^{t-1} K_{\tau}}\norm{}{\hat{V}_{r,i} - \hat{V}_{r-1,i}}+2\norm{}{\hat{V}_{1+\sum_{\tau=0}^{t-1} K_{\tau},i} - \hat{v}_{t,0,i}}+ 2\norm{}{\hat{v}_{t,0,i} - \hat{V}_{\sum_{\tau=0}^{t-1} K_{\tau},i}})\\
&= \sum_{t=0}^{T-1}(\sum_{i}(\norm{}{\hat{V}_{1+\sum_{\tau=0}^{t-1} K_{\tau},i} - \hat{v}_{t,0,i}}+\sum_{r=2+\sum_{\tau=0}^{t-1} K_{\tau}}^{K_{t}+\sum_{\tau=0}^{t-1} K_{\tau}}\norm{}{\hat{V}_{r,i} - \hat{V}_{r-1,i}})\\
&\;\;\;\;\;\;\;\;\;\;\;\;+\sum_{i}\norm{}{\hat{V}_{1+\sum_{\tau=0}^{t-1} K_{\tau},i} - \hat{v}_{t,0,i}}+ 2\sum_{i}\norm{}{\hat{v}_{t,0,i} - \hat{V}_{\sum_{\tau=0}^{t-1} K_{\tau},i}})\\
&\overset{(a)}{\leq} \sum_{t=0}^{T-1}(\sum_{i}\norm{}{\hat{V}_{K_{t}+\sum_{\tau=0}^{t-1} K_{\tau},i} - \hat{v}_{t,0,i}}+\sum_{i}\norm{}{\hat{V}_{1+\sum_{\tau=0}^{t-1} K_{\tau},i} - \hat{v}_{t,0,i}}+ 2\sum_{i}\norm{}{\hat{v}_{t,0,i} - \hat{V}_{\sum_{\tau=0}^{t-1} K_{\tau},i}})\\
&\overset{(b)}{\leq} \sum_{t=0}^{T-1}(N^{2} \norm{}{\bar{\hat{V}}_{K_{t}+\sum_{\tau=0}^{t-1} K_{\tau}} - \hat{v}_{t,0,i}}+\sum_{i}\norm{}{\hat{V}_{1+\sum_{\tau=0}^{t-1} K_{\tau},i} - \hat{v}_{t,0,i}}+ 2\sum_{i}\norm{}{\hat{v}_{t,0,i} - \hat{V}_{\sum_{\tau=0}^{t-1} K_{\tau},i}})\\
&=\sum_{t=0}^{T-1}(N^{2} \norm{}{\bar{\hat{v}}_{K_{t}+\sum_{\tau=0}^{t-1} K_{\tau}} - \bar{\hat{V}}_{\sum_{\tau=0}^{t-1} K_{\tau}}}+\sum_{i}\norm{}{\hat{V}_{1+\sum_{\tau=0}^{t-1} K_{\tau},i} - \hat{v}_{t,0,i}}+ 2\sum_{i}\norm{}{\hat{v}_{t,0,i} - \hat{V}_{\sum_{\tau=0}^{t-1} K_{\tau},i}})\\
&\overset{(c)}{\leq}\sum_{t=0}^{T-1}(N^{2} \norm{}{\bar{\hat{V}}_{K_{t}} - \bar{\hat{V}}_{\sum_{\tau=0}^{t-1} K_{\tau}}}+N^{2} \norm{}{\bar{\hat{V}}_{1+\sum_{\tau=0}^{t-1} K_{\tau}} - \hat{v}_{t,0,i}}+ 2\sum_{i}\norm{}{\hat{v}_{t,0,i} - \hat{V}_{\sum_{\tau=0}^{t-1} K_{\tau},i}})\\
&\overset{(d)}{\leq}\sum_{t=0}^{T-1}(N^{2} \norm{}{\bar{\hat{V}}_{K_{t}+\sum_{\tau=0}^{t-1} K_{\tau}} - \bar{\hat{V}}_{\sum_{\tau=0}^{t-1} K_{\tau}}}+N^{2} \norm{}{\bar{\hat{V}}_{K_{t}+\sum_{\tau=0}^{t-1} K_{\tau}} - \hat{v}_{t,0,i}}+ 2\sum_{i}\norm{}{\hat{v}_{t,0,i} - \hat{V}_{\sum_{\tau=0}^{t-1} K_{\tau},i}})\\
&=\sum_{t=0}^{T-1}(2N^{2} \norm{}{\bar{\hat{V}}_{K_{t}+\sum_{\tau=0}^{t-1} K_{\tau}} - \bar{\hat{V}}_{\sum_{\tau=0}^{t-1} K_{\tau}}}+ 2\sum_{i}\norm{}{\hat{v}_{t,0,i} - \hat{V}_{\sum_{\tau=0}^{t-1} K_{\tau},i}})\\
&\overset{(e)}{=} 2N^{2} \sum_{t=0}^{T-1}\norm{}{\bar{\hat{V}}_{K_{t}+\sum_{\tau=0}^{t-1} K_{\tau}} - \bar{\hat{V}}_{\sum_{\tau=0}^{t-1} K_{\tau}}}+ 2\sum_{t=1}^{T-1}\sum_{i}\norm{}{\bar{\hat{V}}_{\sum_{\tau=0}^{t-1} K_{\tau}} - \hat{V}_{\sum_{\tau=0}^{t-1} K_{\tau},i}}\\
&\overset{(f)}{\leq} 2N^{2} \sum_{t=0}^{T-1}\norm{}{\bar{\hat{V}}_{K_{t}+\sum_{\tau=0}^{t-1} K_{\tau}} - \bar{\hat{V}}_{\sum_{\tau=0}^{t-1} K_{\tau}}}+ 2N(N-1) \sum_{t=1}^{T-1}\norm{}{\bar{\hat{V}}_{\sum_{\tau=0}^{t-1} K_{\tau}} -\bar{\hat{V}}_{\sum_{\tau=0}^{t-1} K_{\tau}-K_{t-1}}}\\
&\overset{(g)}{\leq} 2N^{2} \norm{}{\bar{\hat{V}}_{\sum_{\tau=0}^{T-1} K_{\tau}} - \bar{\hat{V}}_{0}}+ 2N(N-1) \norm{}{\bar{\hat{V}}_{\sum_{\tau=0}^{T-1} K_{\tau}} -\bar{\hat{V}}_{0}}\\
&= 2N(2N-1) \norm{}{\bar{\hat{V}}_{\sum_{\tau=0}^{T-1} K_{\tau}} - \bar{\hat{V}}_{0}}\\
&\overset{(h)}{\leq}2N(2N-1) d (G_{\infty}^{2}-\epsilon^{2})^{2},
\end{align}
where (a) follows from $\sum_{i=1}^{n}\norm[2]{}{x_{i}} \leq \norm[2]{}{\sum_{i=1}^{n} x_{i}}, x_{i} \in \R^d , x_{i}\ge 0$  and $\hat{V}_{r,i} - \hat{V}_{r-1,i}\ge0 $ for $r\in[2+\sum_{\tau=0}^{t-1} K_{\tau},K_{t}+\sum_{\tau=0}^{t-1} K_{\tau}]$, $\hat{V}_{1+\sum_{\tau=0}^{t-1} K_{\tau},i} - \hat{v}_{t,0,i} \ge 0$, (b) follows from Lemma \ref{lemma:l2-square-a-amin}, (c) follows from Lemma \ref{lemma:l2-square-a-amin}, (d) follows from Lemma \ref{lemma:increasingv}, (e) follows from $\hat{v}_{t,0,i} = \bar{\hat{V}}_{\sum_{\tau=0}^{t-1} K_{\tau}} $ which is the updating rule, (f) follows from Lemma \ref{lemma:l2-square-a-abar}, (g) follows from Lemma \ref{lemma:increasingv} and $\sum_{i=1}^{n}\norm[2]{}{x_{i}} \leq \norm[2]{}{\sum_{i=1}^{n} x_{i}}, x_{i} \in \R^d , x_{i}\ge 0$, (h) follows from Lemma \ref{lemma:boundv}.
\end{proof}

\begin{lemma}
\begin{align}
\frac{1}{\sum K_{t}}\mathbb{E}\sum_{r=1}^{\sum K_{t}} \Vert Z_r - \bar{X}_r  \Vert^2 \leq \frac{\alpha^{2}\beta_{1}^{2} G_{\infty}^2 d}{(1 -\beta_{1})^{2} \epsilon^{2}}.
\end{align}
\label{lemma:z-xbar-adap}
\plabel{lemma:z-xbar-adap}
\end{lemma}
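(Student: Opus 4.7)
The plan is to mirror the argument used in the fixed-interval version (Lemma~\ref{lemma:z-xbar}), noting that the chain of inequalities there never exploits the fact that $K_t$ is constant: it only uses the per-step update rule, the momentum bound, and the second-moment bound, all of which still hold verbatim under adaptive intervals. So the adaptation is essentially mechanical.

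First I would use the definition $Z_r = \bar{X}_r + \frac{\beta_1}{1-\beta_1}(\bar{X}_r - \bar{X}_{r-1})$ from Lemma~\ref{lemma:init} to rewrite
\[
\Vert Z_r - \bar{X}_r\Vert^2 = \frac{\beta_1^2}{(1-\beta_1)^2}\,\Vert \bar{X}_r - \bar{X}_{r-1}\Vert^2.
\]
Next I would identify $\bar{X}_r - \bar{X}_{r-1}$ with $\frac{1}{N}\sum_i (X_{r,i} - x_{t(r),k(r)-1,i})$. This identification is exactly the place where one must be a little careful: when $r-1$ is a synchronization step (i.e.\ $k(r)=1$), $X_{r-1,i}$ coincides with the averaged $x_{t(r),0,i}$ for every $i$, so the equality holds for the same reason as in the fixed-interval proof, independent of whether the previous window had length $K_{t(r)-1}$ or something else. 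When $r-1$ is a local step, $X_{r-1,i} = x_{t(r),k(r)-1,i}$ directly.

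Then I would apply Cauchy--Schwarz (Lemma~\ref{lemma:baseineq1}) to pass the sum inside the norm, use the local update rule $X_{r,i} - x_{t,k-1,i} = -\alpha\, m_{t,k-1,i}\odot \eta_{t,k-1,i}$, and bound coordinatewise by $|m_{t,k-1,i}|_\infty \le G_\infty$ (Lemma~\ref{lemma:boundm}) and $|\eta_{t,k-1,i}|_\infty \le 1/\epsilon$ (Lemma~\ref{lemma:boundeta}). This gives $\Vert \bar{X}_r - \bar{X}_{r-1}\Vert^2 \le \alpha^2 d\, G_\infty^2/\epsilon^2$. Averaging over $r=1,\dots,\sum_{t} K_t$ and multiplying by $\beta_1^2/(1-\beta_1)^2$ yields the claimed bound.

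I do not anticipate a real obstacle here: the only subtle point is the identity $\sum_i X_{r-1,i} = \sum_i x_{t(r),k(r)-1,i}$ at synchronization boundaries, which is handled exactly as in Lemma~\ref{lemma:init} (the ``relationship (a)'' discussion) and does not require $K_t$ to be constant. Every other step is a pointwise bound on momenta and preconditioners and therefore is insensitive to the interval schedule.
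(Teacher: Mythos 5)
Your proposal is correct and follows essentially the same route as the paper's own proof: rewrite $\Vert Z_r-\bar{X}_r\Vert^2$ via Lemma~\ref{lemma:init} as $\frac{\beta_1^2}{(1-\beta_1)^2}\Vert\bar{X}_r-\bar{X}_{r-1}\Vert^2$, use $\sum_i X_{r-1,i}=\sum_i x_{t(r),k(r)-1,i}$ at synchronization boundaries, then bound each local step coordinatewise by $\alpha^2 G_\infty^2/\epsilon^2$ via Lemmas~\ref{lemma:boundm} and~\ref{lemma:boundeta}. You also correctly identify that the per-step bound is uniform in $r$, so the interval schedule $K_t$ plays no role, which is exactly why the paper's adaptive-interval version is a verbatim transcription of Lemma~\ref{lemma:z-xbar}.
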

\begin{proof}
The second term on the first line on the RHS of the Formula (\ref{form:exp9-adap}).
\begin{align}
&\frac{1}{\sum K_{t}}\mathbb{E}\sum_{r=1}^{\sum K_{t}} \Vert Z_r - \bar{X}_r  \Vert^2 \overset{(a)}{=} \frac{1}{\sum K_{t}}\mathbb{E}\sum_{r=1}^{\sum K_{t}} \Vert \frac{\beta_{1}}{1 -\beta_{1}}(\bar{X}_r - \bar{X}_{r-1})  \Vert^2 =  \frac{\beta_{1}^{2}}{(1 -\beta_{1})^{2} \sum K_{t}}\mathbb{E}\sum_{r=1}^{\sum K_{t}} \Vert \frac{1}{N}\sum_{i}(X_{r,i} - X_{r-1,i}) \Vert^2 \nonumber \\
&\overset{(b)}{=}  \frac{\beta_{1}^{2}}{(1 -\beta_{1})^{2} \sum K_{t}}\mathbb{E}\sum_{r=1}^{\sum K_{t}} \Vert \frac{1}{N}\sum_{i}(X_{r,i} - x_{t(r),k(r)-1,i})  \Vert^2 = \frac{\beta_{1}^{2}}{(1 -\beta_{1})^{2} \sum K_{t}}\mathbb{E}\sum_{r=1}^{\sum K_{t}}\frac{1}{N^2} \Vert \sum_{i}(X_{r,i} - x_{t(r),k(r)-1,i})  \Vert^2 \nonumber \\
& \leq \frac{\beta_{1}^{2}}{(1 -\beta_{1})^{2} N\sum K_{t}}\mathbb{E}\sum_{r=1}^{\sum K_{t}} \sum_{i}\Vert (X_{r,i} - x_{t(r),k(r)-1,i})  \Vert^2 = \frac{\beta_{1}^{2}}{(1 -\beta_{1})^{2} N\sum K_{t}}\mathbb{E}\sum_{r=1}^{\sum K_{t}} \sum_{i} \Vert \alpha m_{t,k-1,i} \odot \eta_{t,k-1,i} \Vert^2  \nonumber \\
&= \frac{\alpha^{2}\beta_{1}^{2}}{(1 -\beta_{1})^{2} N\sum K_{t}}\mathbb{E}\sum_{r=1}^{\sum K_{t}} \sum_{i} \sum_{j} (m_{t,k-1,i})_{j}^2  (\eta_{t,k-1,i})_{j}^2 \overset{(c)}{\leq}  \frac{\alpha^{2}\beta_{1}^{2}}{(1 -\beta_{1})^{2} N\sum K_{t}}\mathbb{E}\sum_{r=1}^{\sum K_{t}} \sum_{i} d G_{\infty}^2  (\frac{1}{\epsilon})^2 = \frac{\alpha^{2}\beta_{1}^{2} G_{\infty}^2 d}{(1 -\beta_{1})^{2} \epsilon^{2}},
\end{align}
where (a) follows from Lemma \ref{lemma:init}, (b) follows from $\sum_{i} X_{r-1,i} = \sum_{i} x_{t,k-1,i}$.
\end{proof}

\begin{lemma}
\begin{align}
\frac{1}{N\sum_{\tau=0}^{T-1} K_{\tau}} \sum_{r=1}^{\sum_{\tau=0}^{T-1} K_{\tau}}\mathbb{E}\sum_{i} \Vert \bar{X}_r - X_{r,i}  \Vert^2 \leq \frac{2\alpha^{2}K^{2}_{T-1} dG_{\infty}^{2}}{\epsilon^{2}} + \frac{8\alpha^{2}K^{4}_{T-1}(1-\beta_{1})^{2} d G_{\infty}^{2}}{\epsilon^{2}}.
\end{align}
\label{lemma:xbar-x-adap}
\plabel{lemma:xbar-x-adap}
\end{lemma}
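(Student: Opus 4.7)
My plan is to mimic the argument used for the fixed-interval case (Lemma \ref{lemma:xbar-x}), tracking carefully how the varying interval $K_t$ enters the estimates, and then bounding everything by the largest interval $K_{T-1}$ (which, under the assumption $K_t\leq O(\log t)$ with $K_t$ nondecreasing, is the maximum of all $K_t$ in the sum).

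First I would unroll the local recursion. For $r=\sum_{\tau=0}^{t-1}K_\tau+k$ with $1\leq k\leq K_t$, the AMSGrad update in Algorithm \ref{alg:local-AMSGrad-final} gives
\[
X_{r,i}=x_{t,0,i}-\alpha\sum_{\kappa=1}^{k-1}\Bigl(\beta_{1}^{\kappa}m_{t,0,i}\odot\eta_{t,\kappa,i}+(1-\beta_{1})\sum_{\kappa'=1}^{\kappa}\beta_{1}^{\kappa-\kappa'}g_{t,\kappa',i}\odot\eta_{t,\kappa,i}\Bigr),
\]
and since $x_{t,0,i}$ and $m_{t,0,i}=m_{t-1}$ are client-independent at the start of round $t$, averaging over $i$ produces the same expression with client-dependent quantities replaced by their $N$-averages. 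The difference $\bar X_r-X_{r,i}$ then decomposes into (i) a \emph{momentum} term in $m_{t,0}$ weighted by discrepancies in $\eta$, and (ii) a \emph{gradient} term of double-sum form in $g$ and $\eta$.

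Second, I would bound each piece separately. Using Lemma \ref{lemma:baseineq1} to pull out the outer sums (inserting the factor $k-1\leq K_{t(r)}-1\leq K_{T-1}-1$), then Lemma \ref{lemma:baseineq2} together with $\|m_{t,0}\|_\infty,\|g\|_\infty\leq G_\infty$ (Lemma \ref{lemma:boundm} and Assumption \ref{Assumption:bounded gd}) and $\|\eta\|_\infty\leq 1/\epsilon$ (Lemma \ref{lemma:boundeta}), the momentum piece contributes at most $\tfrac{2\alpha^{2}K_{T-1}^{2}dG_{\infty}^{2}}{\epsilon^{2}}$ and the gradient piece contributes at most $\tfrac{8\alpha^{2}K_{T-1}^{4}(1-\beta_{1})^{2}dG_{\infty}^{2}}{\epsilon^{2}}$, exactly as in the fixed-$K$ computation but with $K$ replaced uniformly by $K_{T-1}$.

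Third, I would sum over $r=1,\dots,\sum_{\tau=0}^{T-1}K_\tau$ and divide by $N\sum_{\tau=0}^{T-1}K_\tau$. Since the per-$r$ bound obtained above is independent of $r$ (after the reduction to $K_{T-1}$), the averaging is trivial and yields exactly
\[
\frac{1}{N\sum_{\tau=0}^{T-1}K_\tau}\sum_{r=1}^{\sum_{\tau=0}^{T-1}K_\tau}\mathbb{E}\sum_{i}\|\bar X_r-X_{r,i}\|^{2}\leq \frac{2\alpha^{2}K_{T-1}^{2}dG_{\infty}^{2}}{\epsilon^{2}}+\frac{8\alpha^{2}K_{T-1}^{4}(1-\beta_{1})^{2}dG_{\infty}^{2}}{\epsilon^{2}}.
\]

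The main technical obstacle, I expect, is the bookkeeping around the non-uniform local step counts: in the fixed-$K$ proof several bounds pick up an extra factor of $K$ or $K^{2}$ from crude inequalities like $r-1-tK\leq K$, and here I must verify that the analogous step $r-1-\sum_{\tau=0}^{t-1}K_\tau\leq K_t-1$ is tight enough that substituting $K_t\leq K_{T-1}$ does not produce an unnecessary extra factor. Provided $\{K_t\}$ is monotone nondecreasing (the case relevant for $K_t=K_{\mathrm{init}}+\lfloor\log_{K_\alpha}t\rfloor$), this monotonicity delivers the clean $K_{T-1}$-bound claimed, and the rest of the argument is a mechanical transcription of the proof of Lemma \ref{lemma:xbar-x}.
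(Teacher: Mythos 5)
Your proposal is correct and follows essentially the same route as the paper's own proof: unroll the local AMSGrad recursion, split $\bar X_r - X_{r,i}$ into the momentum and gradient pieces via Lemma \ref{lemma:baseineq1}, and bound each using Lemmas \ref{lemma:baseineq2}, \ref{lemma:boundm}, \ref{lemma:boundeta} with the crude step count $r-1-\sum_{\tau=0}^{t(r)-1}K_\tau \leq K_{T-1}$, yielding exactly the two stated terms. Your caveat about needing $\{K_t\}$ nondecreasing so that $K_{T-1}=\max_t K_t$ is the same implicit assumption the paper relies on.
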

\begin{proof}
We let $r=\sum_{\tau=0}^{t-1} K_{\tau} +k$ where $k\in [1,K_{t}]$.
We denote that $t=t(r)$ and $k=k(r)$ for the previous formula, for simplifying our proof we use $r,k$ without misunderstanding.
\begin{align}
X_{r,i} = x_{t,0,i}- \sum_{\kappa=1}^{k-1}(\alpha m_{t,\kappa,i} \odot \eta_{t,\kappa,i})
\end{align}
\begin{align}
&m_{t,k,i} = \beta_{1}^{k}m_{t,0,i} + (1-\beta_{1})\sum_{\kappa = 1}^{k}\beta_{1}^{k-\kappa} g_{t,\kappa,i}
\end{align}
Please note $m_{t,0,1}=m_{t,0,2}=...=m_{t,0,N}=m_{t,0}$
So we get 
\begin{align}
X_{r,i} = x_{t,0,i} - \alpha\sum_{\kappa=1}^{k-1} (\beta_{1}^{\kappa}m_{t,0,i}\odot \eta_{t,\kappa,i} + (1-\beta_{1})\sum_{\kappa^{'}=1}^{\kappa} \beta_{1}^{\kappa-\kappa^{'}} g_{t,\kappa^{'},i}\odot \eta_{t,\kappa,i}).
\end{align}
We then calculate the average of the $\bar{X}_{r,i}$.
\begin{align}  
    &\bar{X}_{r} = x_{t,0} - \frac{\alpha}{N}\sum_{i=1}^{N}\sum_{\kappa=1}^{k-1} (\beta_{1}^{\kappa}m_{t,0}\odot \eta_{t,k,i} + (1-\beta_{1})\sum_{\kappa^{'}=1}^{\kappa}\beta_{1}^{\kappa-\kappa^{'}} g_{t,\kappa^{'},i}\odot \eta_{t,\kappa,i}).
\end{align}

The third term on the first line on the RHS of the Formula (\ref{form:exp9-adap}) is bounded as
\begin{align}
&\frac{1}{N\sum_{\tau=0}^{T-1} K_{\tau}} \sum_{r=1}^{\sum_{\tau=0}^{T-1} K_{\tau}}\mathbb{E}\sum_{i} \Vert \bar{X}_r - X_{r,i}  \Vert^2 \nonumber \\
=&\frac{1}{N\sum_{\tau=0}^{T-1} K_{\tau}} \mathbb{E}\sum_{r=1}^{\sum_{\tau=0}^{T-1} K_{\tau}}\sum_{i} \Vert \alpha\sum_{\kappa=1}^{k(r)-1} (\beta_{1}^{\kappa}m_{t(r),0}\odot (\frac{1}{N}\sum_{i^{'}=1}^{N}\eta_{t(r),\kappa,i^{'}}- \eta_{t(r),\kappa,i}) \nonumber \\
&+ (1-\beta_{1})\sum_{\kappa^{'}=1}^{\kappa}\beta_{1}^{\kappa-\kappa^{'}} (\frac{1}{N}\sum_{i^{'}=1}^{N}g_{t,\kappa^{'},i^{'}}\odot \eta_{t,\kappa,i^{'}} -g_{t,\kappa^{'},i}\odot \eta_{t,\kappa,i})) \Vert^2\\
\overset{(a)}{\leq} &\frac{2\alpha^{2}}{N\sum_{\tau=0}^{T-1} K_{\tau}} \mathbb{E}\sum_{r=1}^{\sum_{\tau=0}^{T-1} K_{\tau}}\sum_{i} (\Vert \sum_{\kappa=1}^{k(r)-1} \beta_{1}^{\kappa}m_{t(r),0}\odot (\frac{1}{N}\sum_{i^{'}=1}^{N}\eta_{t(r),\kappa,i^{'}}- \eta_{t(r),\kappa,i})\Vert^2 \nonumber\\
&+ (1-\beta_{1})^{2}\Vert \sum_{\kappa=1}^{k(r)-1} \sum_{\kappa^{'}=1}^{\kappa}\beta_{1}^{\kappa-\kappa^{'}} (\frac{1}{N}\sum_{i^{'}=1}^{N}g_{t,\kappa^{'},i^{'}}\odot \eta_{t,\kappa,i^{'}} -g_{t,\kappa^{'},i}\odot \eta_{t,\kappa,i})) \Vert^2),
\label{form:exp91-adap}
\end{align}
where (a) follows from Lemma \ref{lemma:baseineq1}.

The first term on the RHS of the Formula (\ref{form:exp91-adap}) is bounded as
\begin{align}
&\;\;\;\;\frac{2\alpha^{2}}{N\sum_{\tau=0}^{t-1} K_{\tau}} \mathbb{E}\sum_{r=1}^{\sum_{\tau=0}^{t-1} K_{\tau}}\sum_{i} \Vert \sum_{\kappa=1}^{k(r)-1} \beta_{1}^{\kappa}m_{t(r),0}\odot (\frac{1}{N}\sum_{i^{'}=1}^{N}\eta_{t(r),\kappa,i^{'}}- \eta_{t(r),\kappa,i})\Vert^2 \\
& \overset{(a)}{\leq} \frac{2\alpha^{2}}{N\sum_{\tau=0}^{t-1} K_{\tau}} \mathbb{E}\sum_{r=1}^{\sum_{\tau=0}^{t-1} K_{\tau}}\sum_{i} \sum_{\kappa=1}^{k(r)-1} (r-1-\sum_{\tau=0}^{t(r)-1} K_{\tau})\Vert \beta_{1}^{\kappa}m_{t(r),0}\odot (\frac{1}{N}\sum_{i^{'}=1}^{N}\eta_{t(r),\kappa,i^{'}}- \eta_{t(r),\kappa,i})\Vert^2 \\
& \overset{(b)}{\leq} \frac{2\alpha^{2}K_{T-1}}{N\sum_{\tau=0}^{t-1} K_{\tau}} \mathbb{E}\sum_{r=1}^{\sum_{\tau=0}^{t-1} K_{\tau}}\sum_{i} \sum_{\kappa=1}^{k(r)-1} \Vert \beta_{1}^{\kappa}m_{t,0}\odot (\frac{1}{N}\sum_{i^{'}=1}^{N}\eta_{t(r),\kappa,i^{'}}- \eta_{t(r),\kappa,i})\Vert^2 \\
& \overset{(c)}{\leq} \frac{2\alpha^{2}K_{T-1}}{N\sum_{\tau=0}^{t-1} K_{\tau}} \mathbb{E}\sum_{r=1}^{\sum_{\tau=0}^{t-1} K_{\tau}}\sum_{i} \sum_{\kappa=1}^{k(r)-1}\beta_{1}^{2\kappa} \norm{}{m_{t,0}} \times \norm{\infty}{\frac{1}{N}\sum_{i^{'}=1}^{N}\eta_{t(r),\kappa,i^{'}}- \eta_{t(r),\kappa,i}} \\
& \overset{(d)}{\leq} \frac{2\alpha^{2}K_{T-1}}{N\sum_{\tau=0}^{t-1} K_{\tau}} \mathbb{E}\sum_{r=1}^{\sum_{\tau=0}^{t-1} K_{\tau}}\sum_{i} \sum_{\kappa=1}^{k(r)-1} \frac{d G_{\infty}^{2}}{\epsilon^{2}} \\
&\leq\frac{2\alpha^{2}K^{2}_{T-1} dG_{\infty}^{2}}{\epsilon^{2}},
\end{align}
where (a) follows from Lemma \ref{lemma:baseineq1}, (b) follows from $r-1-\sum_{\tau=0}^{t(r)-1} K_{\tau} \leq K_{T-1} $, (c) follows from Lemma \ref{lemma:baseineq2}, (d) follows from Lemma \ref{lemma:boundm} and Lemma \ref{lemma:boundeta}.

The second term on the RHS of the Formula (\ref{form:exp91-adap}) is bounded as
\begin{align}
&\frac{2\alpha^{2}(1-\beta_{1})^{2}}{N\sum_{\tau=0}^{T-1} K_{\tau}} \mathbb{E}[\sum_{r=1}^{\sum_{\tau=0}^{T-1} K_{\tau}}\sum_{i=1}^{N}\Vert \sum_{\kappa=1}^{k(r)-1} \sum_{\kappa^{'}=1}^{\kappa}\beta_{1}^{\kappa-\kappa^{'}} (\frac{1}{N}\sum_{i^{'}=1}^{N}g_{t,\kappa^{'},i^{'}}\odot \eta_{t,\kappa,i^{'}} -g_{t,\kappa^{'},i}\odot \eta_{t,\kappa,i}) \Vert^2]\\
\overset{(a)}{\leq}&\frac{2\alpha^{2}(1-\beta_{1})^{2}}{N\sum_{\tau=0}^{T-1} K_{\tau}}\mathbb{E}[ \sum_{r=1}^{\sum_{\tau=0}^{T-1} K_{\tau}}\sum_{i=1}^{N}(r-1-\sum_{\tau=0}^{t(r)-1} K_{\tau}) \sum_{\kappa=1}^{k(r)-1}\Vert  \sum_{\kappa^{'}=1}^{\kappa}\beta_{1}^{\kappa-\kappa^{'}} (\frac{1}{N}\sum_{i^{'}=1}^{N}g_{t,\kappa^{'},i^{'}}\odot \eta_{t,\kappa,i^{'}} -g_{t,\kappa^{'},i}\odot \eta_{t,\kappa,i}) \Vert^2]\\
\overset{(b)}{\leq}&\frac{2\alpha^{2}(1-\beta_{1})^{2}}{N\sum_{\tau=0}^{T-1} K_{\tau}}\mathbb{E}[\sum_{r=1}^{\sum_{\tau=0}^{T-1} K_{\tau}}\sum_{i=1}^{N}(r-1-\sum_{\tau=0}^{T-1} K_{\tau}) \sum_{\kappa=1}^{k(r)-1} \kappa \sum_{\kappa^{'}=+1}^{\kappa}\Vert  \beta_{1}^{\kappa-\kappa^{'}} (\frac{1}{N}\sum_{i^{'}=1}^{N}g_{t,\kappa^{'},i^{'}}\odot \eta_{t,\kappa,i^{'}} -g_{t,\kappa^{'},i}\odot \eta_{t,\kappa,i}) \Vert^2]\\
\overset{(c)}{\leq}&\frac{2\alpha^{2}K^{2}_{T-1}(1-\beta_{1})^{2}}{N\sum_{\tau=0}^{T-1} K_{\tau}}\mathbb{E}[ \sum_{r=1}^{\sum_{\tau=0}^{T-1} K_{\tau}}\sum_{i=1}^{N} \sum_{\kappa=1}^{k(r)-1} \sum_{\kappa^{'}=1}^{\kappa}\Vert  \beta_{1}^{\kappa-\kappa '} (\frac{1}{N}\sum_{i^{'}=1}^{N}g_{t,\kappa^{'},i^{'}}\odot \eta_{t,\kappa,i^{'}} -g_{t,\kappa^{'},i}\odot \eta_{t,\kappa,i}) \Vert^2]\\
\overset{(d)}{\leq}&\frac{4\alpha^{2}K^{2}_{T-1}(1-\beta_{1})^{2}}{N\sum_{\tau=0}^{T-1} K_{\tau}}\mathbb{E}[ \sum_{r=1}^{\sum_{\tau=0}^{T-1} K_{\tau}}\sum_{i=1}^{N} \sum_{\kappa=1}^{k(r)-1} \sum_{\kappa^{'}=1}^{\kappa}\beta_{1}^{2(\kappa-\kappa ')}(\Vert   \frac{1}{N}\sum_{i^{'}=1}^{N}g_{t,\kappa^{'},i^{'}}\odot \eta_{t,\kappa,i^{'}}\Vert^2 +\Vert g_{t,\kappa^{'},i}\odot \eta_{t,\kappa,i} \Vert^2)]\\
\overset{(e)}{\leq}&\frac{4\alpha^{2}K^{2}_{T-1}(1-\beta_{1})^{2}}{N\sum_{\tau=0}^{T-1} K_{\tau}}\mathbb{E}[ \sum_{r=1}^{\sum_{\tau=0}^{T-1} K_{\tau}}\sum_{i=1}^{N} \sum_{\kappa=1}^{k(r)-1} \sum_{\kappa^{'}=1}^{\kappa}\beta_{1}^{2(\kappa-\kappa ')}(\frac{1}{N}\sum_{i^{'}=1}^{N}\Vert   g_{t,\kappa^{'},i^{'}}\odot \eta_{t,\kappa,i^{'}}\Vert^2 +\Vert g_{t,\kappa^{'},i}\odot \eta_{t,\kappa,i} \Vert^2)]\\
\overset{(f)}{\leq}&\frac{8\alpha^{2}K^{2}_{T-1}(1-\beta_{1})^{2}}{N\sum_{\tau=0}^{T-1} K_{\tau}}\mathbb{E}[ \sum_{r=1}^{\sum_{\tau=0}^{T-1} K_{\tau}}\sum_{i=1}^{N} \sum_{\kappa=1}^{k(r)-1} \sum_{\kappa^{'}=1}^{\kappa}\beta_{1}^{2(\kappa-\kappa ')}\frac{d G_{\infty}^{2}}{\epsilon^{2}}]\\
\leq&\frac{8\alpha^{2}K^{4}_{T-1}(1-\beta_{1})^{2} d G_{\infty}^{2}}{\epsilon^{2}},
\end{align}
where (a),(b),(d),(e) follows from Lemma \ref{lemma:baseineq1},  (c) follows from $\kappa \leq K_{T-1}$ and $r-1-\sum_{\tau=0}^{t(r)-1} K_{\tau} \leq K_{T-1}$, (f) follows from bounded stochastic assumption and Lemma \ref{lemma:boundeta}.

\end{proof}

\subsection{Communication complexity}
Our method has a convergence rate of  $O\left(\frac{1}{\sqrt{N\sum_{t=0}^{T-1} K_{t}}}\right)$.
We take $K_{t}=\ln(t)$ for simplicity and $t$ follows from $[1,T]$ instead of $[0,T-1]$. 
After $T$ iterations, the method has
\begin{align}
O\left(\frac{1}{\sqrt{N\sum_{t=1}^{T} K_{t}}}\right)=O\left(\frac{1}{\sqrt{N\sum_{t=1}^{T} \ln(t)}}\right)=O\left(\frac{1}{\sqrt{N\int_{1}^{T} \ln(t) dt}}\right)=O\left(\frac{1}{\sqrt{N T \ln(T)}}\right).
\end{align}
We let $ \frac{1}{\sqrt{N T \ln(T)}}=\epsilon$ and $\frac{x}{W(x)}$ is the inverse function of the $y=x\ln(x)$ where $ W()$ is the Lambert W-Function.
To achieve an $O(\epsilon)$ accurate solution, it needs $\frac{1}{N \epsilon^{2} W(\frac{1}{N \epsilon^{2} })}$ iterations. 
So the communication complexity is $O\left(\frac{1}{ \epsilon^{2} W(\frac{1}{N \epsilon^{2} })}\right)$.

\end{document}